\patchcmd{\@maketitle}{\removelastskip\vskip24pt}{\removelastskip\vskip5pt}{}{\errmessage{Patch 1 failed}}
\patchcmd{\@maketitle}{\removelastskip\vskip24pt}{\removelastskip\vskip2pt}{}{\errmessage{Patch 2 failed}}
\newcommand{\cmark}{\ding{51}}%
\newcommand{\xmark}{\ding{55}}%
\newtheorem{lemma}{Lemma}
\newtheorem{assumption}{Assumption}
\newtheorem{corollary}{Corollary}
\DeclareMathSymbol{\shortminus}{\mathbin}{AMSa}{"39}
\newcommand{\argmin}{\operatorname{arg\,min}}
\newcommand{\norm}[1]{\left\lVert#1\right\rVert}
\newcommand{\explainup}[2]{\overset{\mathclap{\underset{\downarrow}{#2}}}{#1}}
\newcommand{\bbeta}{\bm{\beta}} 
\newcommand{\hbbeta}{\hat{\bm{\beta}}} 
\newcommand{\tbbeta}{\tilde{\bm{\beta}}} 
\newcommand{\bxi}{\bm{\xi}} 
\newcommand{\hbxi}{\hat{\bm{\xi}}} 
\newcommand{\tbxi}{\tilde{\bm{\xi}}} 
\newcommand{\bpsi}{\bm{\psi}} 
\newcommand{\hbpsi}{\hat{\bm{\psi}}} 
\newcommand{\tbpsi}{\tilde{\bm{\psi}}} 
\newcommand{\bu}{\bm{u}} 
\newcommand{\hbu}{\hat{\bm{u}}} 
\newcommand{\bv}{\bm{v}} 
\newcommand{\hbv}{\hat{\bm{v}}} 
\newcommand{\bnu}{\bm{\nu}} 
\newcommand{\hbnu}{\hat{\bm{\nu}}} 
\newcommand{\Rbeta}{\mathcal{R}_{\bm{\beta}}} 
\newcommand{\Rd}{\mathbb{R}^{d}} 
\newcommand{\Rdxi}{\mathbb{R}^{d_{\xi}}} 
\newcommand{\dxi}{{d_{\xi}}} 
\newcommand{\Rxi}{\mathcal{R}_{\bm{\xi}}} 
\renewcommand{\P}{\mathcal{P}} 
\else \newcommand{\P}{\mathcal{P}} \fi 
\renewcommand{\L}{\mathcal{L}} 
\else \newcommand{\L}{\mathcal{L}} \fi 
\renewcommand{\B}{\mathcal{B}} 
\else \newcommand{\B}{\mathcal{B}} \fi 
\newcommand{\Q}{\mathcal{Q}} 
\newcommand{\K}{\mathcal{K}} 
\newcommand{\G}{\mathcal{G}} 
\newcommand{\J}{\mathcal{J}} 
\newcommand{\Gj}{{\mathcal{G}_j}} 
\newsavebox\dotbox
\sbox{\dotbox}{\(\displaystyle\bigodot\)}
\newlength{\dotheight}
\DeclareMathOperator{\odotg}{\odot_{\mathcal{G}}}
\newtheorem{theorem}{Theorem}
\newtheorem{proposition}[theorem]{Proposition}%
\newtheorem{example}{Example}%
\newtheorem{remark}{Remark}%
\newtheorem{definition}{Definition}%
\def\ps@titlepage{%
  \def\@oddhead{%
    \vbox to 0pt{\vspace*{-38pt}%
      \hbox to \hsize{\hfill \hfill}}%
  }%
  \let\@evenhead\@oddhead%
  \def\@oddfoot{%
    \vbox to 24pt{%
      \vfill
      \vskip8pt
      \vbox{%
        \footnotesize
        \hbox to \textwidth{%
          This article has been accepted for publication at %
          \textit{Machine Learning}, after peer review, but is not the Version of Record.\hfill
        }%
        \hbox to \textwidth{%
          The Version of Record is available online at: %
          \url{https://doi.org/10.1007/s10994-026-06997-0}.\hfill
        }%
      }%
      \vspace{8pt} 
      \hbox to \textwidth{\hfil\thepage\hfil}
    }%
  }%
  \def\@evenfoot{}%
}
\g@addto@macro\@maketitle{\thispagestyle{titlepage}}
\begin{document}

\title[Smooth Optimization for Sparse Regularization using Hadamard Overparametrization]{\vspace{-4cm}\Large{\center{Smoothing the Edges:}\\ Smooth Optimization for Sparse Regularization using Hadamard Overparametrization}}


\author*[1,2]{\fnm{Chris} \sur{Kolb}}\email{chris.kolb@stat.uni-muenchen.de}

\author[1,2,3,4]{\fnm{Christian L.} \sur{M\"uller}}\email{christian.mueller@helmholtz-munich.de}

\author[1,2]{\fnm{Bernd} \sur{Bischl}}\email{bernd.bischl@stat.uni-muenchen.de}
\author[1,2]{\fnm{David} \sur{R\"ugamer}}\email{david.ruegamer@stat.uni-muenchen.de}

\affil*[1]{\orgdiv{\large{Department of Statistics}}, \orgname{\large{LMU Munich}}, \orgaddress{\street{\large{Ludwigstr. 33}}, \city{\large{Munich}}, \postcode{\large{80539}}, 
\country{\large{Germany}}}}

\affil[2]{\orgdiv{Munich Center for Machine Learning (MCML)}, 
 \orgaddress{\street{}, \city{Munich}, \postcode{80539}, 
 \country{Germany}}}

\affil[3]{\orgdiv{Institute of Computational Biology}, \orgname{Helmholtz Munich},\\ \orgaddress{\street{Ingolst\"adter Landstrasse 1}, \city{Neuherberg}, \postcode{85764}, 
\country{Germany}}}

\affil[4]{\orgdiv{Center for Computational Mathematics}, \orgname{Flatiron Institute},\\ \orgaddress{\street{162 5th Ave}, \city{New York}, \postcode{NY 10010}, 
\country{USA}}}


\abstract{In recent years, overparametrization has received considerable attention in various fields, shown to accelerate training and promote simplicity. However, few works study the induced sparse regularization of the original parameters that is caused by combining overparametrization with explicit smooth regularization. Here, we present a unifying framework for smooth optimization of explicitly regularized objectives for (structured) sparsity. These non-smooth and possibly non-convex problems typically rely on solvers tailored to specific models or regularizers and have not been widely adopted in deep learning.
In contrast, our method promises fully differentiable and approximation-free optimization for sparse regularizers and is thus compatible with the ubiquitous gradient descent paradigm. The proposed optimization transfer comprises overparameterization of selected parameters and a change of penalties. We prove that the surrogate objective is equivalent in the sense of identical global and local minima, thereby avoiding the introduction of spurious solutions.
We comprehensively review sparsity-inducing parametrizations across different
fields and combine them in our explicit surrogate regularization framework. We further extend their scope, point out improvements and present novel parametrizations. Numerical experiments further demonstrate the
correctness and effectiveness of our approach on several sparse learning problems from high-dimensional regression to sparse neural network training.
}

\keywords{overparametrization, sparse regularization, smooth optimization, Hadamard product parametrization, gradient descent, neural networks}



\maketitle

\section{Introduction and Background}
%
As a result of the recent proliferation of high-dimensional and unstructured data, methods for sparse or low-rank representations have become increasingly important in fields such as machine learning, statistics, and signal processing. Parsimonious models are commonly used to incorporate prior knowledge about the complexity of the underlying phenomenon, to obtain interpretable sparse approximations of non-sparse ground truths \citep{bach2012optimization}, or to regularize otherwise intractable inverse problems \citep{benning2018modern}. In deep learning (DL), the reduction in computational burden for large-scale optimization or inference is an important motivation for model sparsification, from both the perspective of efficiency and sustainability \citep{blalock2020state,hoefler2021sparsity}. 
Structured or group sparsity naturally generalizes the notion of unstructured sparsity to enable \textit{structural} prior information about parameter group complexity into the optimization problem \citep{huang2009learning, jenatton2011structured}.
\subsection{Convex and Non-Convex Sparse Regularization}
\begin{figure}[t!]
\centering
\vspace{-0.1cm}
\includegraphics[width=0.85\textwidth]{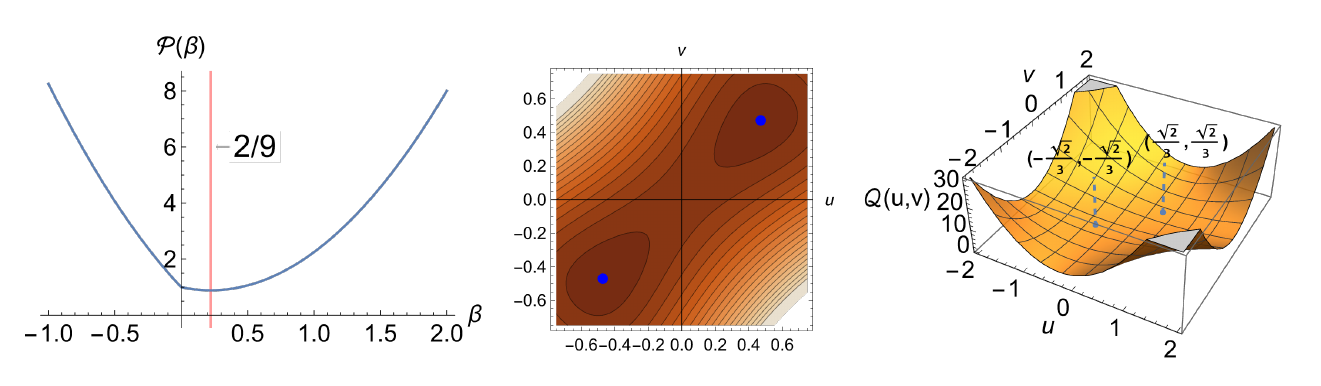}
\caption[Toy Loss Landscape for Optimization Transfer]{\small Illustration of smooth optimization transfer. \textbf{Left}: univariate lasso problem $\P(\beta)=(1-\frac{3}{2} \beta)^2+2|\beta|$ (red line indicates the global minimizer $\hat{\beta}$). \textbf{Middle}: contours of the equivalent smooth surrogate $\Q(u,v)=(1-\frac{3}{2} uv)^2+u^2+v^2$ using a Hadamard product parametrization (\ref{eq:hpp-def}) with $\K(u,v)=uv=\beta$. Both global minimizers (dots) map to $\K(\hat{u},\hat{v})=\hat{\beta}$. \textbf{Right}: non-convex surface of higher-dimensional $\Q(u,v)$.}
\label{fig:hpp-tri-1}
\vspace{-0.3cm}
\end{figure}
%
%
\textbf{$\bm{\ell_0}$ and $\bm{\ell_1}$ regularization} \, In sparse estimation problems of a parameter vector $\bm{\beta}\in\mathbb{R}^{d}$ given any objective function $\mathcal{L}:\mathbb{R}^{d} \to \mathbb{R}^{+}_{0}$, the classical optimization problem using explicit regularization is 
\begin{equation} \label{eq:standard-formulation}
    \min_{\bm{\beta} \in \mathbb{R}^{d}} \mathcal{L}(\bm{\beta}) + \lambda \mathcal{R}(\bm{\beta}) \,,
\end{equation}
with regularization or penalty function $\mathcal{R}:\mathbb{R}^{d} \to \mathbb{R}^{+}_{0}$ to $\L(\bbeta)$, whose strength is controlled by $\lambda \geq 0$. 
A natural choice for the regularizer is $\mathcal{R}(\bm{\beta}) =\Vert \bm{\beta} \Vert_0$, i.e., the cardinality of the support of $\bm{\beta}$ counting its non-zero entries. However, this best-subset approach is infeasible due to its non-convex, and non-continuous NP-hard nature \citep{natarajan1995sparse, chen2017strong}. To overcome these difficulties, convex relaxations of $\ell_0$ regularization have been proposed that enable optimization via, e.g., coordinate descent or projected gradient methods \citep{tropp2006just, schmidt2007fast}. The tightest convex relaxation of $\Vert \bm{\beta} \Vert_0$ is given by its convex envelope $\Vert \bm{\beta} \Vert_1$, resulting in
\begin{equation} \label{eq:l1-formulation}
    \min_{\bm{\beta} \in \mathbb{R}^{d}} \mathcal{L}(\bm{\beta}) + \lambda \Vert \bm{\beta} \Vert_{1} \,.
\end{equation}
This formulation is known as $\ell_{1}$ regularization today. In the context of linear models, it has been introduced as the lasso to the statistics community \citep{tibshirani1996regression} and as Basis Pursuit Denoising in signal processing \citep{chen1994basis,chen2001atomic}.
For convex $\L$, such as in linear regression, the well-developed machinery of convex optimization can be utilized to solve (\ref{eq:l1-formulation}). $\ell_1$ regularization has also been shown to have some favorable theoretical properties, such as consistent recovery of the true support of $\bm{\beta}$ under restricted conditions \citep{donoho2003optimally, zhao2006model, meinshausen2006high}. However, using $\ell_1$ regularization to achieve sparsity also comes with a disadvantage: whereas $\norm{\bbeta}_0$ is constant on the support of $\bm{\beta}$, the $\ell_1$ penalty increases linearly in the magnitude of its components. This leads to estimation bias for large parameters \citep{zhang2008sparsity} and inconsistent support recovery \citep{chartrand2007exact,xu2012l_}.
To mitigate the challenges posed by $\ell_1$ and $\ell_0$ regularization, the seminal work of \citet{fan2001variable} proposed smoothly clipped absolute deviations (SCAD), one of the earliest examples of non-convex regularizers. Another popular non-convex penalty that enables feature selection and nearly unbiased estimation is the minimax concave penalty (MCP) introduced by \citet{zhang2010nearly}.
\vspace{0.1cm}

\noindent \textbf{$\bm{\ell_{p,q}}$ regularization} \, In this work, however, we focus on a generalization of the $\ell_1$ penalty based on the $\ell_q$ quasi-norm, $\Vert \cdot \Vert_{q}$, for $0<q\leq1$. This approach was initially described by \citet{frank1993statistical} and subsequently popularized as the bridge penalty by \citet{fu1998penalized}. Non-convex bridge regularization is defined by a regularization term of the form $\mathcal{R}(\bbeta) = \Vert \bbeta \Vert_{q}^{q}$ for $0 < q < 1$. For the case of structured sparsity, $\ell_q$ regularization can be straightforwardly extended to mixed-norm $\ell_{p,q}$ regularization for $0<q<p\leq2$, studied, 
e.g., in \citet{hu2017group}. A number of important desirable theoretical results have been established for non-convex $\ell_q$ and $\ell_{p,q}$ regularization, such as requiring fewer linear measurements for support recovery and permitting sparser solutions compared to convex $\ell_1$ and $\ell_{2,1}$ (group-wise) regularization \citep{fu1998penalized, fu2000asymptotics,chartrand2007exact,  xu2012l_}. Moreover, the regularity conditions required for consistent recovery are weaker than typically required for $\ell_1$ \citep{chartrand2008restricted, loh2017support} or $\ell_{2,1}$ penalties \citep{hu2017group}. \\
\indent Optimization using $\ell_q$ regularization, however, poses a non-smooth and non-convex 
problem for $0<q<1$ and is thus difficult to solve efficiently. \citet{ge2011note} show that identification of the global minimum is strongly NP-hard. %
Still, computing local minima of the non-convex regularization problem usually performs better compared to convex regularization approaches \citep{xu2010l1, xu2012l_, lyu2013comparison, wen2018survey}. A variety of optimization techniques such as the local quadratic approximation or majorization-minimization algorithms \citep{lange2000optimization, hunter2005variable}, and  various flavors of coordinate or subgradient descent methods have been discussed in the literature %
\citep[see][for a survey of optimization with non-convex regularization]{wen2018survey}.

\vspace{0.08cm}
\noindent The need for specialized optimization routines for non-smooth and non-convex regularized optimization problems 
has arguably hindered the widespread use of $\ell_q$ and $\ell_{p,q}$ regularization, despite their favorable theoretical properties and the limitations of convex regularizers \citep[see, e.g.,][]{freijeiro2022critical}. 
In contrast, smooth first-order methods have become the go-to optimization tool for many researchers and practitioners, not limited to the field of DL anymore. This can be attributed to their applicability to a vast class of problems using automatic differentiation, their scalability to large data sets, and their surprising effectiveness despite using only cheaply computed gradient information.
While in practice, popular DL platforms offer implementations of $\ell_1$ regularization, this 
essentially reduces to applying stochastic gradient descent (SGD) to a non-differentiable problem. 
Unsurprisingly, this mismatch typically results in oscillating parameter updates, slow convergence, and a failure of parameter iterates to approach zero values (see Figure~\ref{fig:comparison-direct-gd-sparsity}).%

\subsection{Our Contributions}

To overcome the obstacles and complexities of using optimization routines tailored for specific non-smooth and potentially non-convex regularized problems, we apply a smooth variational form (SVF) that allows expressing the non-smooth regularizer as the constrained minimum of a smooth surrogate regularizer, where the constraint involves an overparametrization of model parameters. In our framework, we 
construct a general template for exact smooth surrogate optimization of non-smooth and potentially non-convex 
regularized problems. This optimization transfer is based on finding SVFs of the respective 
regularizers, which entail a smooth parametrization map together with a smooth surrogate regularizer. Combined, an equivalent smooth 
surrogate objective can be constructed. %
Specifically, we
\begin{itemize} \setlength\itemsep{0.5em}
\item provide a comprehensive 
review of the loosely connected works on Hadamard para\-metri\-zations, 
relating literature across DL, statistics, and optimization.

\item introduce a smooth surrogate optimization framework for non-smooth and non-convex regularization of arbitrary parameters, including a matching local minima property. 
While previous works often exploit properties particular to their setting, our main results (Thm.~\ref{theorem-general} and Lemma~\ref{lemma:p_to_pk}-\ref{lemma:q_to_p}) are stated broadly and hold for arbitrary losses, learning models, and regularizers given our assumptions.

\item apply our template method to a wide array of (group-)sparse $\ell_q$ and $\ell_{p,q}$ regularized problems, expanding the collection of sparsity-inducing parametrizations to Hadamard powers and shared parameters.

\item present different SVFs with variable amounts of overparametrization for the same induced regularizer, highlighting that it is not overparametrization \textit{per se} inducing sparsity, but rather its effect on the curvature of the loss landscape. 
%

\item identify parametrizations with specific neural network structures, generalizing previous findings on 
linear models 
to modular components within arbitrary networks. This enables the integration of sparse regularization into the prevalent SGD-based optimization paradigm in DL using sparse ``drop-in'' replacements. 

\item evaluate our smooth optimization transfer approach on various sparse learning applications and demonstrate its correctness and practical feasibility.
\end{itemize}

\vspace{0.18cm}
\noindent \textbf{Outline}\,
Section~\ref{sec:theory} 
establishes a set of theoretical results that prove the validity of our general framework and provide a construction template for various regularizers.
Sections~\ref{sec:l1-hpp-hdp} and~\ref{sec:had-group-lasso} apply our optimization transfer to construct equivalent smooth surrogates for convex $\ell_1$ and structured $\ell_{2,1}$ sparse regularization. 
Section~\ref{sec:hppk} discusses deeper factorizations 
involving more than two Hadamard factors, enabling smooth optimization of a restricted class of non-convex $\ell_q$ 
and $\ell_{p,q}$ 
regularized problems. Additionally, mitigation strategies to reduce the computational complexity of overparametrization, such as parameter sharing, 
are discussed.
Section~\ref{sec:hpowp} leverages the concept of Hadamard powers to broaden the expressivity of previous parametrizations, thereby lifting the aforementioned restrictions on the class of induced regularizers. 
Section~\ref{sec:optim} discusses specifics regarding the practical optimization of the constructed smooth surrogates.
Related work is discussed and compared with our approach in Section~\ref{sec:related-work}.
In Section~\ref{sec:experiments}, we showcase numerical experiments demonstrating the practical feasibility and competitiveness of our approach on a variety of model classes, ranging from sparse linear regression to (convolutional) neural network architectures. 
Section~\ref{sec:discussion} concludes by assessing the merits and limitations of our framework and identifying promising directions for future research.



\section{Set-Up for Transfer and Theoretical Results} \label{sec:theory}

\textbf{Notation}\, We represent vectors using bold lowercase letters and bold capital letters for matrices. We use 
$\bm{\beta} \in \mathbb{R}^{d}$ to denote the parameter vector which is subject to regularization, and $\bm{\psi} \in \mathbb{R}^{d_{\psi}}$ for the remaining parameters, so that all model parameters are collected in $(\bm{\psi},\bm{\beta})$. We make this notational distinction to emphasize that our approach can be applied to arbitrary subsets of parameters of an optimization problem, irrespective of the presence of other parameters or the structure of the main objective $\L$. Thus, sparse regularization using our optimization transfer framework can be applied to, e.g., specific layers of a neural network. 
Further, let $\Vert \bm{\beta} \Vert_{q} \triangleq ( \sum_{j=1}^{d} | \beta_j |^{q} )^{1/q}$
denote the $\ell_q$ norm $\forall \bm{\beta} \in \mathbb{R}^{d}, q\in(0,\infty)$. Note that for $0<q<1$, only a quasi-norm is defined as the subadditivity does not hold. For $q=0$, the $\ell_0$ ``norm'' penalty $\Vert \bm{\beta} \Vert_{0}$ 
counts the number of non-zero elements in $\bm{\beta}$. Given a partition $\mathcal{G}=\{\mathcal{G}_{1},\ldots,\mathcal{G}_{L}\}$ of $[d] \triangleq \{1,\ldots,d\}$, the $\ell_{p,q}$ group (quasi-)norm is defined as $\Vert \bm{\beta} \Vert_{p, q} \triangleq(\sum_{j=1}^{L}(\sum_{i \in \mathcal{G}_{j}} \left|\beta_{i}\right|^{p})^{q / p})^{1 / q} = (\sum_{j=1}^{L} \Vert \bm{\beta}_{j} \Vert_{p}^{q})^{1 / q} \, \forall \, \bm{\beta} \in \mathbb{R}^{d},\, p,q>0$, where $\bbeta_j$ 
contains the components corresponding to $\Gj$. The regularization term for an $\ell_{p,q}$ penalty is given by the $q$-th power of the $\ell_{p,q}$ mixed-norm,  $\norm{\bm{\beta}}_{p,q}^{q}=\sum_{j=1}^{L} \Vert \bm{\beta}_{j} \Vert_{p}^{q}$.
Further, we use various notations to define Hadamard product-like operations, introduced in the following. Let $\odot: \mathbb{R}^{d} \times \mathbb{R}^{d} \to \Rd$ denote the classical Hadamard product, defined as $(\bm{u},\bm{v}) \mapsto (u_1v_1,\ldots,u_dv_d)^{\top}$,
and $\bigodot_{l=1}^{k} \bm{u}_{l}$ the Hadamard product of $k$ vectors, for which we also use the shorthand notation $\bm{u}_{l}^{\odot k}$. For parameter vectors with more than one index, e.g., $\bm{u}_{jl}^{\odot k}$, the Hadamard product is always taken over the second index. The self-Hadamard product $\bm{u} \odot \bm{u}$ is simply written as $\bm{u}^{2}$. A generalization of the self-Hadamard product to non-integer exponents $k>0$, i.e., element-wise raising the entries of $\bm{u}$ to the $k$-th power, is denoted as $\bm{u}^{\circ k}$. Given a partition $\mathcal{G}$ of $[d]$ into $L \leq d$ subsets, we define the group Hadamard product $\odotg$ of two vectors $\bm{u} \in \mathbb{R}^{d}$ and $\bm{\nu} \in \mathbb{R}^{L}$ as $\bm{u} \odotg \bm{\nu} \triangleq (\bm{u}_{j} \nu_{j})_{j \in \mathcal{G}}$, or more explicitly as
{
\begin{equation}\label{eq:ghpp-definition}
\bm{u} \odotg \bm{\nu} \triangleq  \begin{pmatrix}
    \bm{u}_{1} \\ \vdots \\ \bm{u}_{L}
  \end{pmatrix}  \odot \begin{pmatrix} 
    \nu_1 \mathds{1}_{|\mathcal{G}_1|} \\ \vdots \\ \nu_{L} \mathds{1}_{|\mathcal{G}_L|}
  \end{pmatrix} \,, \nonumber
\end{equation}
}

\noindent where $\mathds{1}_{|\mathcal{G}_j|}$ denotes the $1$-vector of size $|\mathcal{G}_j|$. To make the distinction between vectors of size $d$ and $L$ more clear where necessary, we denote vectors in $\mathbb{R}^{d}$ as $\bm{v}$, and alternatively, use $\bm{\nu}$ for vectors in $\mathbb{R}^{L}$. In case $\bv=(\bv_1,\ldots,\bv_L)^{\top}$ is constant within groups $\Gj$, both are related as $\bm{v}_j = \nu_j \mathds{1}_{|\mathcal{G}_{j}|}$ for $j=1,\ldots,L$, and $\bu \odot \bv$ equals the group Hadamard product $\bu \odotg \bnu$. 
Further, $\mathcal{B}(\bbeta,\varepsilon)\subseteq \Rd$ is used to denote an open ball with radius $\varepsilon$ centered at $\bbeta \in \Rd$, for a Euclidean space endowed with the standard topology induced by the Euclidean metric. The non-negative reals are abbreviated as $\mathbb{R}_0^+$. Given a differentiable function $f:\mathbb{R}^{m} \to \mathbb{R},\, \bm{a} \mapsto f(\bm{a})$, the gradient $\nabla_{\bm{a}} f(\bm{a}) \in \mathbb{R}^{m}$  of $f$ at $\bm{a}$ contains partial derivatives $\partial f(\bm{a})/ \partial a_j$ for $j \in [m]$. The Hessian $\mathcal{H}_{f}(\bm{a})$ of $f$ at $\bm{a}$ is the $m \times m$ matrix containing second partial derivatives $(\mathcal{H}_{f}(\bm{a}))_{i j} \triangleq \partial^2 f / \partial a_i \partial a_j$. 
For vector-valued differentiable maps  $\bm{f}:\mathbb{R}^{m} \to \mathbb{R}^{n}, \bm{a} \mapsto \bm{f}(\bm{a})$, the Jacobian $\mathcal{J}_{\bm{f}}(\bm{a})$ of $\bm{f}$ at $\bm{a}$ is the $n \times m$ matrix containing partial derivatives $(\mathcal{J}_{\bm{f}}(\bm{a}))_{i j} \triangleq \partial \bm{f}_{i} / \partial a_j$. If we say a function is \textbf{smooth}, we require it merely to be $\mathcal{C}^r$-smooth, $r  \geq 1$, i.e., at least continuously differentiable. Local solutions to an optimization problem over $(\bpsi,\bbeta)$ are denoted by $(\hat{\bm{\psi}},\hat{\bbeta})$. Finally, the complete proofs are deferred to the appendix.
\vspace{0.15cm}
%

\noindent \textbf{Set-up}\,
Before discussing the applications of our proposed framework to specific sparse regularizers, we first provide a number of general results on the equivalence of (regularized) optimization problems under reparametrization and a change of penalties, which will be applied throughout the paper. Let 
\begin{equation} \label{eq:q-beginning}
\mathcal{P}:\mathbb{R}^{d_{\psi}}\times\mathbb{R}^d \to \mathbb{R}_{0}^{+},\,(\bm{\psi},\bm{\beta}) \mapsto \mathcal{L}(\bm{\psi}, \bm{\beta}) +  \lambda \cdot \mathcal{R}_{\bm{\beta}} (\bm{\beta}), 
\end{equation}
denote the regularized objective function in its base parametrization $(\bm{\psi}, \bm{\beta}) \in \mathbb{R}^{d_{\psi}} \times \mathbb{R}^{d}$, where $\bm{\beta}$ is an arbitrary subset of all model parameters, and $\bm{\psi}$ comprises the complementary components. 
In a typical empirical risk minimization setting, $\mathcal{L}$ can be written more explicitly as $\mathcal{L}(\bm{\psi}, \bm{\beta}) = \sum_{i=1}^{n} \mathscr{L}\left(\bm{y}_{i}, f({\bm{x}_i}|\bm{\psi},\bm{\beta}) \right)$, with independently sampled data $\mathcal{D}\triangleq\left\{\left(\boldsymbol{x}_{i}, \bm{y}_{i}\right)\right\}_{i=1}^{n}$, $(\bm{x}_i,\bm{y}_i) \in\mathcal{X} \times \mathcal{Y}$. Here,  
$\mathcal{X} \subseteq \mathbb{R}^{d_x}\,,\, \mathcal{Y}\subseteq \mathbb{R}^{d_y}$ denote generic feature and label spaces, $\mathscr{L}:\mathcal{Y}\times\mathbb{R}^{d_y} \to \mathbb{R}_{0}^{+}$ is an arbitrary loss contribution, and the (parametric) model $f: \mathcal{X} \to \mathbb{R}^{d_y}$ is 
parametrized by $(\bpsi,\bbeta)$.\\ 
%
The non-smooth and potentially non-convex regularizer $\Rbeta(\bbeta)$ is defined as $\mathcal{R}_{\bm{\beta}}: \mathbb{R}^{d} \to \mathbb{R}^{+}_{0},\, \bm{\beta} \mapsto \mathcal{R}_{\bm{\beta}}(\bm{\beta})$,
with $\lambda \geq 0$ controlling the amount of regularization. In this work, we consider classical norm-based sparsity-inducing regularizers.
\vspace{0.2cm}

\noindent \textbf{Optimization transfer}\, To transfer the optimization of $\mathcal{P}$ to a surrogate $\mathcal{Q}$, first consider a continuous and surjective parametrization of $\bm{\beta}$ defined by $\K:\Rdxi \to \Rd, \bxi \mapsto \K(\bxi)=\bbeta$. %
Moreover, we define a surrogate regularization function 
$\mathcal{R}_{\bm{\xi}}: \Rdxi \to \mathbb{R}^{+}_{0}, \bm{\xi} \mapsto \mathcal{R}_{\bm{\xi}}(\bm{\xi})$. %
Together, $(\Rbeta,\K,\Rxi)$ define our proposed two-step optimization transfer approach to construct an equivalent surrogate $\Q(\bpsi,\bxi)$ from the original objective $\P(\bpsi,\bbeta)$: 
\vspace{0.2cm}
\begin{definition}[Construction of surrogate \texorpdfstring{$\Q$}{Q}]\label{def:construction-smooth-Q}
Let the objective $\P(\bpsi,\bbeta)=\L(\bpsi,\bbeta)+\lambda \Rbeta(\bbeta)$ as in (\ref{eq:q-beginning}), $\mathcal{R}_{\bm{\beta}}(\bbeta)$ the non-smooth regularizer, $\K(\bxi)$ a parametrization of $\bbeta$, and $\Rxi(\bxi)$ a surrogate regularizer for $\bxi \in \Rdxi$. The variational surrogate $\mathcal{Q}$ can then be constructed from the tuple $(\Rbeta,\K,\Rxi)$ as follows. First, \textbf{i)} parametrize $\K(\bxi)=\bbeta$ to get a ``lifted'' $\P(\bpsi,\K(\bxi))$, and \textbf{ii)} substitute $\mathcal{R}_{\bm{\xi}}(\bm{\xi})$ for $\mathcal{R}_{\bm{\beta}}(\mathcal{K}(\bm{\xi}))$: 
\begin{equation} \label{eq:g-beginning}
\mathcal{Q}:\mathbb{R}^{d_{\psi}}\times \Rdxi \to \mathbb{R}_{0}^{+},\,(\bm{\psi},\bm{\xi}) \mapsto \mathcal{L}(\bm{\psi}, \mathcal{K}(\bm{\xi})) +  \lambda \mathcal{R}_{\bm{\xi}} (\bm{\xi}) \,.
\end{equation}

Further, if $\L$, $\K$, and $\Rxi$ are $\mathcal{C}^1$ functions, we call $\Q$ a smooth surrogate for $\P$.

\end{definition}

The next definition explicitly states our notion of equivalence between $\P$ and $\Q$:

\begin{definition}[Equivalence of optimization problems]\label{def:equivalence}
We say the two optimization problems $$\underset{\bm{\psi},\,\bm{\beta}}{\text{minimize}}\; \mathcal{P}(\bm{\psi},\bm{\beta})\quad\; \text{and}\;\quad \underset{\bm{\psi},\,\bm{\xi}}{\text{minimize}}\; \mathcal{Q}(\bm{\psi},\bm{\xi})\,,$$ are equivalent if the following conditions hold:
\begin{enumerate}[a)]
\item $\underset{\bm{\psi},\,\bm{\beta}}{\inf}\; \mathcal{P}(\bm{\psi},\bm{\beta}) = \underset{\bm{\psi},\,\bm{\xi}}{\inf}\, \mathcal{Q}(\bm{\psi},\bm{\xi})$, i.e., their globally optimal values coincide. 
\item If $(\hat{\bm{\psi}},\hat{\bm{\beta}})$ is a local minimizer of $\mathcal{P}(\bm{\psi},\bm{\beta})$, then there is a local minimizer $(\hat{\bm{\psi}},\hat{\bm{\xi}})$ of $\mathcal{Q}(\bm{\psi},\bm{\xi})$ with $\hat{\bm{\xi}}\in \mathcal{K}^{-1}(\hat{\bm{\beta}})$ and $\mathcal{Q}(\hat{\bm{\psi}}, \hat{\bm{\xi}}) = \mathcal{P}(\hat{\bm{\psi}}, \hat{\bm{\beta}})$. 
\item If $(\hat{\bm{\psi}},\hat{\bm{\xi}})$ is a local minimizer of $\mathcal{Q}(\bm{\psi},\bm{\xi})$, then $(\hat{\bm{\psi}},\hat{\bm{\beta}})$ with $\hat{\bm{\beta}}=\mathcal{K}(\hat{\bm{\xi}})$ is a local minimizer of $\mathcal{P}(\bm{\psi},\bm{\beta})$ and $\mathcal{Q}(\hat{\bm{\psi}}, \hat{\bm{\xi}}) = \mathcal{P}(\hat{\bm{\psi}}, \hat{\bm{\beta}})$.
\end{enumerate}
\end{definition}

Equivalence of local minima is particularly important for non-convex regularization, encompassed by the second and third conditions, as finding global minima in non-convex optimization is challenging and, for the most part, intractable. Moreover, in non-convex regularization, local minima have been observed to generalize similarly or even better than global minima on test data \citep{chartrand2008iteratively, chartrand2007exact, olsson2017non}. \\
This so-called matching of local minima \citep{levin2020towards} ensures that the transfer from the original objective $\P$ to a surrogate objective $\Q$ preserves all the properties of the local minima structure of $\P$. By focusing only on global minima, important information about the structure of the problem 
is neglected. Importantly, we do not 
introduce spurious local minima in $\mathcal{Q}$, which would 
artificially increase the difficulty of the optimization problem. 
Through the matching property, we can further use the surjection $\K(\bxi)$ to reconstruct all (local) minimizers of $\mathcal{P}$ from local minimizers of $\mathcal{Q}$ as $\K(\hbxi)=\hbbeta$. To guarantee this matching of local minima property for $\P(\bpsi,\bbeta)$ under the parametrization $\P(\bpsi,\K(\bxi))$, local openness of the parametrization mapping $\K(\bxi)$ at all local minimizers $\hat{\bxi}$ of $\P(\bpsi,\K(\bxi))$ is a crucial property \citep{nouiehed2022learning}. \citet{levin2024effect} show that it is both a necessary and sufficient condition for the preservation of local minima under the parametrization $\K(\bxi)=\bbeta$.\footnote{Local openness is closely related to, but distinct from the notion of continuity, which is defined as 
$\forall\, \varepsilon>0 \, \exists \, \delta>0:\,\K(\mathcal{B(\bxi)},\delta) \subseteq \mathcal{B}(\K(\bxi),\varepsilon)$ using the same notation. }
%
\begin{definition}[Local openness]\label{def:openness}
A mapping $\mathcal{K}: \Rdxi \to \mathbb{R}^{d},\, \bm{\xi} \mapsto \K(\bxi)$ is locally open at $\bxi$ if for every $\varepsilon>0$ we can find $\delta>0$ such that $\mathcal{B}(\K(\bxi), \delta) \subseteq \K(\mathcal{B}(\bxi, \varepsilon))$. Further, the map $\K$ is called globally open if it is locally open at all $\bxi \in \Rdxi$. 
\end{definition}
%
\textbf{General theoretical results}\,
Using the function characterizations encapsulated in Definition~\ref{def:construction-smooth-Q}, we can now prove the following results. Note that these hold for any potentially unregularized objective $\P(\bpsi,\bbeta)$ under reparametrization:
\vspace{0.2cm}
\begin{lemma}\label{lemma:p_to_pk} If $(\hat{\bpsi},\hat{\bbeta})$ is a local minimizer of $\P(\bpsi,\bbeta)$, and $\K(\bxi)$ is a continuous surjection, then all $(\hat{\bpsi},\hat{\bxi})$ such that $\hat{\bxi}\in\K^{-1}(\hat{\bbeta})$ are local minimizers of $\P(\bpsi,\K(\bxi))$ with $\P(\hat{\bpsi},\hat{\bbeta})=\P(\hat{\bpsi},\K(\hat{\bxi}))$.
\end{lemma}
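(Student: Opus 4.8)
The plan is to pull back the local minimality through the continuous parametrization by a direct $\varepsilon$--$\delta$ argument; no differentiability, and crucially no local openness of $\K$, is needed for this direction — only continuity of $\K$, while surjectivity is used solely to guarantee that $\K^{-1}(\hat{\bbeta})$ is nonempty, so the claim is not vacuous. First I would fix a local minimizer $(\hat{\bpsi},\hat{\bbeta})$ of $\P$, so that there exists $\varepsilon>0$ with $\P(\bpsi,\bbeta)\geq\P(\hat{\bpsi},\hat{\bbeta})$ for every $(\bpsi,\bbeta)\in\mathcal{B}\big((\hat{\bpsi},\hat{\bbeta}),\varepsilon\big)$, and then pick an arbitrary $\hat{\bxi}\in\K^{-1}(\hat{\bbeta})$, i.e.\ $\K(\hat{\bxi})=\hat{\bbeta}$.

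Next I would invoke continuity of the lifted map $\Phi\colon(\bpsi,\bxi)\mapsto(\bpsi,\K(\bxi))$ at $(\hat{\bpsi},\hat{\bxi})$: since $\K$ is continuous and the $\bpsi$-coordinate is the identity, $\Phi$ is continuous, and $\Phi(\hat{\bpsi},\hat{\bxi})=(\hat{\bpsi},\hat{\bbeta})$. Hence there is $\delta>0$ such that $(\bpsi,\bxi)\in\mathcal{B}\big((\hat{\bpsi},\hat{\bxi}),\delta\big)$ implies $\Phi(\bpsi,\bxi)=(\bpsi,\K(\bxi))\in\mathcal{B}\big((\hat{\bpsi},\hat{\bbeta}),\varepsilon\big)$. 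Combining the two bounds, for every $(\bpsi,\bxi)\in\mathcal{B}\big((\hat{\bpsi},\hat{\bxi}),\delta\big)$ we get $\P(\bpsi,\K(\bxi))\geq\P(\hat{\bpsi},\hat{\bbeta})=\P(\hat{\bpsi},\K(\hat{\bxi}))$, which is precisely the assertion that $(\hat{\bpsi},\hat{\bxi})$ is a local minimizer of $(\bpsi,\bxi)\mapsto\P(\bpsi,\K(\bxi))$; the value identity $\P(\hat{\bpsi},\hat{\bbeta})=\P(\hat{\bpsi},\K(\hat{\bxi}))$ is immediate from $\K(\hat{\bxi})=\hat{\bbeta}$. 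Since $\hat{\bxi}$ was an arbitrary element of $\K^{-1}(\hat{\bbeta})$, the conclusion holds for all of them.

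There is essentially no obstacle in this lemma — it is the ``easy'' half of the correspondence. The one point worth flagging is that the \emph{converse} direction genuinely fails without further hypotheses: a local minimizer of the lifted problem need not descend to a local minimizer of $\P$, which is exactly why local openness of $\K$ (Definition~\ref{def:openness}) will have to be assumed in the subsequent lemmas. One may also note in passing that $\P$ here need not be regularized, bounded below, or of any particular form; only local minimality of $(\hat{\bpsi},\hat{\bbeta})$ and continuity of $\K$ enter the argument.
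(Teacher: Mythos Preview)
Your proof is correct and follows essentially the same approach as the paper: both introduce the lifted map $(\bpsi,\bxi)\mapsto(\bpsi,\K(\bxi))$, use its continuity at $(\hat{\bpsi},\hat{\bxi})$ to obtain a $\delta$-ball that maps into the $\varepsilon$-ball of local minimality, and conclude by composing the two inequalities. Your remarks on surjectivity being used only for nonemptiness of $\K^{-1}(\hat{\bbeta})$ and on the converse requiring local openness are also in line with the paper's treatment.
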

\vspace{0.2cm}
\begin{lemma}\label{lemma:pk_to_p}
    If $(\hat{\bpsi},\hat{\bxi})$ is a local minimizer of $\P(\bpsi,\K(\bxi))$, and the continuous surjection $\K(\bxi)$ is locally open at $\hbxi$, then $(\hat{\bpsi},\K(\hat{\bxi}))=(\hbpsi,\hbbeta)$ is a local minimizer of $\P(\bpsi,\bbeta)$ with $\P(\hat{\bpsi},\hat{\bbeta})=\P(\hat{\bpsi},\K(\hat{\bxi}))$.
\end{lemma}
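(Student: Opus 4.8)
The plan is to prove Lemma~\ref{lemma:pk_to_p} directly from the definition of local openness (Definition~\ref{def:openness}) together with the assumed local minimality of $(\hat{\bpsi},\hat{\bxi})$ for the lifted objective $\P(\bpsi,\K(\bxi))$. First I would fix $(\hat{\bpsi},\hat{\bxi})$ a local minimizer of $\P(\bpsi,\K(\bxi))$ and set $\hbbeta\triangleq\K(\hbxi)$. By local minimality there is some $\varepsilon>0$ such that $\P(\hbpsi,\K(\hbxi))\le \P(\bpsi,\K(\bxi))$ for all $(\bpsi,\bxi)$ with $\norm{(\bpsi,\bxi)-(\hbpsi,\hbxi)}<\varepsilon$; in particular, restricting to the $\bpsi$-slice and the $\bxi$-slice separately, $\P(\hbpsi,\K(\hbxi))\le\P(\bpsi,\K(\hbxi))$ for $\bpsi\in\B(\hbpsi,\varepsilon)$ and $\P(\hbpsi,\K(\hbxi))\le\P(\hbpsi,\K(\bxi))$ for $\bxi\in\B(\hbxi,\varepsilon)$.

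The key step is to transport the second inequality to a neighborhood of $\hbbeta$ in $\bbeta$-space. Since $\K$ is locally open at $\hbxi$, for this $\varepsilon$ there is $\delta>0$ with $\B(\K(\hbxi),\delta)=\B(\hbbeta,\delta)\subseteq \K(\B(\hbxi,\varepsilon))$. Now take any $\bbeta\in\B(\hbbeta,\delta)$. Then $\bbeta=\K(\bxi)$ for some $\bxi\in\B(\hbxi,\varepsilon)$, so $\P(\hbpsi,\bbeta)=\P(\hbpsi,\K(\bxi))\ge \P(\hbpsi,\K(\hbxi))=\P(\hbpsi,\hbbeta)$. Combining with the $\bpsi$-slice inequality (and, to handle the joint neighborhood cleanly, shrinking to a common radius $\varepsilon'\le\varepsilon$ so that the product ball $\B(\hbpsi,\varepsilon')\times\B(\hbbeta,\delta')$ sits inside the region where the lifted bound applies), I would conclude $\P(\hbpsi,\hbbeta)\le\P(\bpsi,\bbeta)$ for all $(\bpsi,\bbeta)$ in a neighborhood of $(\hbpsi,\hbbeta)$, which is exactly local minimality of $(\hbpsi,\hbbeta)$ for $\P(\bpsi,\bbeta)$. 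The value identity $\P(\hbpsi,\hbbeta)=\P(\hbpsi,\K(\hbxi))$ is immediate from $\hbbeta=\K(\hbxi)$.

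The main obstacle, and the point to be careful about, is handling the \emph{joint} neighborhood in $(\bpsi,\bbeta)$ rather than treating the $\bpsi$ and $\bbeta$ directions separately: local minimality is a statement about a ball around $(\hbpsi,\hbxi)$, and openness of $\K$ only controls the $\bxi$-to-$\bbeta$ passage. The clean fix is to note that for any $(\bpsi,\bbeta)$ near $(\hbpsi,\hbbeta)$ with $\bbeta=\K(\bxi)$, $\bxi\in\B(\hbxi,\varepsilon)$, one still needs $(\bpsi,\bxi)$ jointly in the $\varepsilon$-ball around $(\hbpsi,\hbxi)$; this holds once $\norm{\bpsi-\hbpsi\|$ is small enough, since $\norm{(\bpsi,\bxi)-(\hbpsi,\hbxi)}\le\norm{\bpsi-\hbpsi}+\norm{\bxi-\hbxi}$, so choosing $\bpsi$ within $\varepsilon/2$ and $\bxi$ within $\varepsilon/2$ suffices — and the latter is arranged by invoking openness with $\varepsilon/2$ in place of $\varepsilon$ to get the corresponding $\delta$. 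A secondary subtlety is that the preimage $\bxi$ of a given $\bbeta$ is not unique, but we only need \emph{some} preimage in $\B(\hbxi,\varepsilon/2)$, which is precisely what $\B(\hbbeta,\delta)\subseteq\K(\B(\hbxi,\varepsilon/2))$ provides. No appeal to smoothness, Assumption~\ref{ass-parametrization-map}, or regularity of $\K$ is needed here — only continuity (for $\K(\bxi)$ to be well defined and for the composition to be continuous) and local openness, consistent with the lemma's stated hypotheses.
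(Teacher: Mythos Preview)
Your proof is correct and follows essentially the same approach as the paper: both use local openness to pull back a neighborhood of $\hbbeta$ to a neighborhood of $\hbxi$ and then invoke the assumed local minimality of $(\hbpsi,\hbxi)$ for the lifted objective. The only cosmetic difference is that the paper packages the joint-neighborhood issue by observing that the extended map $\tilde{\K}(\bpsi,\bxi)\triangleq(\bpsi,\K(\bxi))$ is itself locally open at $(\hbpsi,\hbxi)$ and applying openness once, whereas you handle the $\bpsi$-direction manually via the $\varepsilon/2$ splitting and the triangle inequality; the substance is identical.
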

\vspace{0.15cm}
Their proofs are given in Appendices~\ref{app:p_to_pk} and \ref{app:pk_to_p}. Together, both results show that the set of local minima of $\P(\bpsi,\bbeta)$ and $\P(\bpsi,\K(\bxi))$ are equal 
if $\K(\bxi)$ is locally open at all local minimizers of $\P(\bpsi,\K(\bxi))$, and the local minimizers are related via $(\hbpsi,\K(\hbxi))=(\hbpsi,\hbbeta)$. Still, for non-smooth regularizers, smoothly parametrizing $\bbeta$ will not result in a smooth optimization problem. Nevertheless, the results using local openness of the map $\K(\bxi)$ can be applied to guarantee matching local minima under smooth and surjective parametrizations in general problems, e.g., for parametrizations used in the implicit regularization literature. 

\noindent In our optimization transfer approach, however, we further replace the para\-met\-rized regularizer $\Rbeta(\K(\bxi))$ by $\Rxi(\bxi)$ to obtain the surrogate objective $\Q(\bpsi,\bxi)$. The surrogate penalty $\Rxi$ and the non-smooth regularizer $\Rbeta$ are related as follows:
\vspace{0.2cm}
\begin{definition}[Smooth variational form] \label{def:smooth-varform}
A (smooth) variational form 
is an expression of a function $\Rbeta(\bbeta)$ as the minimum of a (smooth) surrogate $\Rxi(\bxi)$ over a feasible set given by the fiber $\K^{-1}(\bbeta)$ of a 
surjective parametrization $\K(\bxi)$ at $\bbeta$, 
i.e.,
\vspace{-0.0cm}
\begin{equation}\label{eq:svg}\Rbeta(\bbeta)=\min_{\bxi:\K(\bxi)=\bbeta}\Rxi(\bxi)\quad \forall \bbeta \in \Rd\,.
\end{equation}
\end{definition}
\vspace{-0.2cm}
By definition, $\Rxi(\bxi)$ majorizes $\Rbeta(\K(\bxi))$, i.e.,  $\Rxi(\bxi)\geq\Rbeta(\K(\bxi))\forall\bxi \in \Rdxi$. Importantly, finding the appropriate SVF is non-trivial and will be derived for each considered regularizer $\Rbeta$ in the respective section. To give a canonical example, the $\ell_1$ penalty $\Rbeta(\bbeta)=2\|\bbeta\|_1$ can be expressed as the minimum of $\Rxi(\bu,\bv)=\|\bu\|_2^2+\|\bv\|_2^2$, where $\bxi=(\bu,\bv)^{\top} \in \mathbb{R}^{2d}$, subject to the parametrization $\K(\bu,\bv)=\bu \odot \bv=\bbeta$ for any $\bbeta \in \Rd$ (cf.~Section~\ref{sec:hpp-vanilla}). It should be noted that the SVFs for an arbitrary regularizer $\Rbeta$ are not necessarily unique, as evidenced by several distinct SVFs, defined by pairs $(\K,\Rxi)$, yielding the same induced regularizer $\Rbeta$ in Table~\ref{tab:overview}. Further, determining conditions for the existence of an SVF, defined by a pair $(\K,\Rxi)$, for \textit{arbitrary} regularizers $\Rbeta$, is challenging, primarily as the induced regularizer is determined jointly by the parametrization $\K$ and the surrogate regularizer $\Rxi$. Hence, we leave this relevant question to future research.

To preserve local minima in our approach with replaced surrogate regularization $\Rxi$, we further require stability of the solutions $\hbxi(\bbeta)$ to the SVF with respect to the regularized parameter $\bbeta$, i.e., continuous dependence of the minimizers $\hbxi \in \argmin_{\bxi:\K(\bxi)=\bbeta}\Rxi(\bxi)$ on $\bbeta$, formalized through lower hemicontinuity of the set-valued solution mapping:
\vspace{0.1cm}
\begin{definition}[Lower hemicontinuity]\label{def:lhc}
A set-valued map $\hbxi:\Rd \rightrightarrows \Rdxi,\, \bbeta \mapsto \hbxi(\bbeta)$, is said to be lower hemicontinuous (l.h.c.) at $\bbeta \in \Rd$ if 
\vspace{-0.3cm}
$$
\forall\,\bxi \in \hbxi(\bbeta)\,\forall\,\varepsilon > 0\,\exists\,\delta > 0:\,\forall\,\tbbeta \in \B(\bbeta, \delta)\,\exists\,\tbxi \in \hbxi(\tbbeta)\,\cap \B(\bxi, \varepsilon).
$$
Hence, for every point $\bxi \in \hbxi(\bbeta)$ and every $\varepsilon > 0$, there exists $\delta > 0$ such that for all $\tbbeta \in \B(\bbeta, \delta)$, the set $\hbxi(\tbbeta)$ contains at least one point in the ball $\B(\bxi, \varepsilon)$. 
\end{definition}
%

%
Instead of requiring local openness, as previously for parametrizations without a change of regularizers, we use the lower hemicontinuity of the solution map $\hbxi(\bbeta)$, a property that is easily obtained as a by-product in the construction of our smooth variational forms. For details on set-valued analysis, we refer to \citet{aubin2009set}. In the following, we state a minimal but sufficient set of assumptions to establish matching of local minima, aiming to remain as general as possible.
\vspace{0.2cm}
\begin{assumption}[Minimal assumptions for optimization transfer]\label{ass:min}
Let $\P(\bpsi,\bbeta)=\L(\bpsi,\bbeta)+\lambda \Rbeta(\bbeta)$ be the original objective (\ref{eq:q-beginning}) and $\Q(\bpsi,\bxi)=\L(\bpsi,\K(\bxi))+\lambda \Rxi(\bxi)$ the surrogate (\ref{eq:g-beginning}). Let $\K(\bxi)=\bbeta$ be a continuous surjection, $\Rxi(\bxi)$ a surrogate regularizer such that $\Rbeta(\bbeta) =\min_{\bxi: \K(\bxi)=\bbeta}\Rxi(\bxi) \, \forall \bbeta$ and all constrained minima are global, and let the set-valued solution map $\bbeta \mapsto \argmin_{\K(\bxi)=\bbeta} \Rxi(\bxi)=\hbxi(\bbeta)$ be l.h.c.
\end{assumption}
\vspace{0.2cm}
\begin{lemma}\label{lemma:p_to_q}
    If $(\hbpsi,\hbbeta)$ is a local minimizer of $\P(\bpsi,\bbeta)$,
    then all $(\hbpsi,\hbxi)$ such that $\hbxi \in \argmin_{\bxi:\K(\bxi)=\hbbeta} \Rxi(\bxi)$ are local minimizers of $\Q(\bpsi,\bxi)$ with $\Q(\hbpsi,\hbxi)=\P(\hbpsi,\hbbeta)$ under Assumption~\ref{ass:min}.
\end{lemma}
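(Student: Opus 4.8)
I would prove Lemma~\ref{lemma:p_to_q} in two moves: first pass from $\P(\bpsi,\bbeta)$ to the lifted objective $\P(\bpsi,\K(\bxi))$ using Lemma~\ref{lemma:p_to_pk}, and then pass from the lifted objective to the surrogate $\Q(\bpsi,\bxi)$ by exploiting the smooth variational form and the majorization property in Assumption~\ref{ass:svf}. Let $(\hbpsi,\hbbeta)$ be a local minimizer of $\P$, and fix any $\hbxi \in \argmin_{\bxi:\K(\bxi)=\hbbeta}\Rxi(\bxi)$. By Assumption~\ref{ass:svf}a) (the SVF), $\Rxi(\hbxi) = \Rbeta(\hbbeta)$, so
\begin{equation*}
\Q(\hbpsi,\hbxi) = \L(\hbpsi,\K(\hbxi)) + \lambda\,\Rxi(\hbxi) = \L(\hbpsi,\hbbeta) + \lambda\,\Rbeta(\hbbeta) = \P(\hbpsi,\hbbeta)\,,
\end{equation*}
which already gives the claimed value-matching. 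It remains to show $(\hbpsi,\hbxi)$ is a \emph{local} minimizer of $\Q$.

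**The core argument.** For the local-minimality, I would argue directly. Since $(\hbpsi,\hbbeta)$ is a local minimizer of $\P$, there is $\varepsilon_0>0$ with $\P(\bpsi,\bbeta) \geq \P(\hbpsi,\hbbeta)$ for all $(\bpsi,\bbeta) \in \B((\hbpsi,\hbbeta),\varepsilon_0)$. Take any $(\bpsi,\bxi)$ near $(\hbpsi,\hbxi)$. Continuity of $\K$ forces $\K(\bxi)$ to be near $\K(\hbxi)=\hbbeta$, so for $(\bpsi,\bxi)$ in a sufficiently small ball we have $(\bpsi,\K(\bxi)) \in \B((\hbpsi,\hbbeta),\varepsilon_0)$, whence
\begin{equation*}
\L(\bpsi,\K(\bxi)) + \lambda\,\Rbeta(\K(\bxi)) = \P(\bpsi,\K(\bxi)) \geq \P(\hbpsi,\hbbeta) = \Q(\hbpsi,\hbxi)\,.
\end{equation*}
Now apply the majorization property Assumption~\ref{ass:svf}b): $\Rxi(\bxi) \geq \Rbeta(\K(\bxi))$, hence
\begin{equation*}
\Q(\bpsi,\bxi) = \L(\bpsi,\K(\bxi)) + \lambda\,\Rxi(\bxi) \geq \L(\bpsi,\K(\bxi)) + \lambda\,\Rbeta(\K(\bxi)) \geq \Q(\hbpsi,\hbxi)\,,
\end{equation*}
using $\lambda \geq 0$. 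This shows $(\hbpsi,\hbxi)$ is a local minimizer of $\Q$, completing the proof.

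**Where the subtlety lies.** The above argument actually goes through using only continuity of $\K$, the SVF identity, the majorization inequality, and $\lambda\geq 0$ — it does not visibly invoke Assumption~\ref{ass:svf}c) (upper hemicontinuity of $\hbxi(\cdot)$), Assumption~\ref{ass-parametrization-map}, or Lemma~\ref{lemma:p_to_pk}. I would double-check whether the intended statement is subtler than it first appears: the issue is that $(\hbpsi,\hbxi)$ must be a minimizer \emph{on a full neighborhood in $\bxi$-space}, including directions that move \emph{off} the fiber $\K^{-1}(\hbbeta)$, and the simple chain above does handle those. So I expect u.h.c.\ is in fact \emph{not} needed for this direction and is there for the converse (Lemma~\ref{lemma:q_to_p}, analogous to Lemma~\ref{lemma:pk_to_p}'s use of local openness); likewise Assumption~\ref{ass-parametrization-map} is needed only when one wants the SVF to exist and behave well, not for this implication per se. The main obstacle, then, is not the logic but making sure the ``small ball'' quantifier chase is airtight: one must pick the radius in $\bxi$-space small enough that both $\bpsi$ stays within $\varepsilon_0$ of $\hbpsi$ and $\K(\bxi)$ stays within $\varepsilon_0$ of $\hbbeta$ simultaneously, which is immediate from joint continuity of $(\bpsi,\bxi)\mapsto(\bpsi,\K(\bxi))$. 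I would state it cleanly rather than belabor it.
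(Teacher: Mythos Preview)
Your proof is correct and essentially identical to the paper's: both establish the value-matching $\Q(\hbpsi,\hbxi)=\P(\hbpsi,\hbbeta)$ from the SVF identity, then use continuity of $(\bpsi,\bxi)\mapsto(\bpsi,\K(\bxi))$ to pull a neighborhood back and finish with the majorization inequality $\Q(\bpsi',\bxi')\geq\P(\bpsi',\K(\bxi'))\geq\P(\hbpsi,\hbbeta)=\Q(\hbpsi,\hbxi)$. Your observation that neither Assumption~\ref{ass-parametrization-map} nor the u.h.c.\ part of Assumption~\ref{ass:svf} is actually invoked in this direction is accurate and matches the paper's own argument, which likewise uses only continuity, the SVF equality, and majorization---those hypotheses are reserved for the converse in Lemma~\ref{lemma:q_to_p}.
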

\vspace{0.2cm}
\begin{lemma}\label{lemma:q_to_p}
    If $(\hbpsi,\hbxi)$ is a local minimizer of $\Q(\bpsi,\bxi)$, %
    then $(\hbpsi,\K(\hbxi))=(\hbpsi,\hbbeta)$ is a local minimizer of $\P(\bpsi,\bbeta)$ with $\Q(\hbpsi,\hbxi)=\P(\hbpsi,\hbbeta)$ under Assumption~\ref{ass:min}.
\end{lemma}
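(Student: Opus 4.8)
The plan is to chain together the two "primitive" results already established. Suppose $(\hbpsi,\hbxi)$ is a local minimizer of $\Q(\bpsi,\bxi)=\L(\bpsi,\K(\bxi))+\la\Rxi(\bxi)$. I would first move from $\Q$ to the lifted objective $\P(\bpsi,\K(\bxi))=\L(\bpsi,\K(\bxi))+\la\Rbeta(\K(\bxi))$. Here the two crucial facts from Assumption~\ref{ass:svf} are: (i) the majorization $\Rxi(\bxi)\ge\Rbeta(\K(\bxi))$ for all $\bxi$, so $\Q(\bpsi,\bxi)\ge\P(\bpsi,\K(\bxi))$ pointwise; and (ii) equality $\Q(\hbpsi,\hbxi)=\P(\hbpsi,\K(\hbxi))$ holds \emph{at} $(\hbpsi,\hbxi)$, because $\hbxi$ lies in the solution set $\argmin_{\bxi:\K(\bxi)=\hbbeta}\Rxi(\bxi)$ and by the SVF (Assumption~\ref{ass:svf}a) that minimum equals $\Rbeta(\hbbeta)=\Rbeta(\K(\hbxi))$. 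A function that is pointwise $\le$ another and agrees with it at a point where the larger one has a local minimum must itself have a local minimum there: on a neighborhood $\mathcal{B}((\hbpsi,\hbxi),\varepsilon)$ where $\Q(\bpsi,\bxi)\ge\Q(\hbpsi,\hbxi)$, we get $\P(\bpsi,\K(\bxi))\le\Q(\bpsi,\bxi)$ is not quite the right direction — instead I use that $\P(\bpsi,\K(\bxi))\le\Q(\bpsi,\bxi)$ does not immediately help; the correct argument is that $\hbxi$ minimizes $\Rxi$ over its own fiber, so for $\bxi$ near $\hbxi$ we cannot directly compare fibers. The clean route is: for $(\bpsi,\bxi)$ in a small neighborhood, $\P(\bpsi,\K(\bxi))\le\Q(\bpsi,\bxi)$? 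No. Let me restate: since $\Q\ge\P\circ(\mathrm{id},\K)$ everywhere and they are equal at the minimizer, and $\Q$ is locally minimized at $(\hbpsi,\hbxi)$, we have for nearby points $\P(\bpsi,\K(\bxi))\le\Q(\bpsi,\bxi)$, which is the \emph{wrong} inequality for concluding $\P\circ(\mathrm{id},\K)$ is minimized. So in fact I would \emph{not} go through $\P\circ(\mathrm{id},\K)$ this way; instead I would directly compare $\Q$ at $(\hbpsi,\hbxi)$ with $\P$ at $(\hbpsi,\hbbeta)$ using the variational representation, and then invoke Lemma~\ref{lemma:pk_to_p}.

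Concretely: Step 1. Show $(\hbpsi,\hbxi)$ is a local minimizer of $\P(\bpsi,\K(\bxi))$. Take a neighborhood $\mathcal{B}((\hbpsi,\hbxi),\varepsilon)$ on which $\Q$ is minimized at $(\hbpsi,\hbxi)$. For $(\bpsi,\bxi)$ in this ball, let $\bbeta'=\K(\bxi)$ and choose $\bxi'\in\argmin_{\bzeta:\K(\bzeta)=\bbeta'}\Rxi(\bzeta)$, so $\Rxi(\bxi')=\Rbeta(\bbeta')$ and hence $\P(\bpsi,\K(\bxi))=\Q(\bpsi,\bxi')$. The obstacle is that $\bxi'$ need not lie in the ball. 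This is precisely where the upper hemicontinuity of $\hbxi(\cdot)$ (Assumption~\ref{ass:svf}c) enters: by shrinking $\varepsilon$, u.h.c.\ at $\hbbeta$ guarantees that for $\bxi$ close enough to $\hbxi$ (so $\bbeta'=\K(\bxi)$ close to $\hbbeta$ by continuity of $\K$), the entire solution set $\hbxi(\bbeta')$ lies in $\mathcal{B}(\hbxi(\hbbeta),\varepsilon')\subseteq\mathcal{B}(\hbxi,\varepsilon)$ — using that $\hbxi\in\hbxi(\hbbeta)$ together with the connectedness/scaling structure of the fibers (Assumption~\ref{ass-parametrization-map}c ensures fibers at regular values are scaled copies, pinning down which component of the solution set is relevant). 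Then $\Q(\bpsi,\bxi')\ge\Q(\hbpsi,\hbxi)=\P(\hbpsi,\hbbeta)$, giving $\P(\bpsi,\K(\bxi))\ge\P(\hbpsi,\hbbeta)$; since this holds on a neighborhood, $(\hbpsi,\hbxi)$ is a local minimizer of $\P\circ(\mathrm{id},\K)$ with matching value.

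Step 2. Apply Lemma~\ref{lemma:pk_to_p}: $\K$ is a continuous surjection (Assumption~\ref{ass-parametrization-map}), and I need local openness at $\hbxi$. This follows from Assumption~\ref{ass-parametrization-map}b (regularity of nonzero values, so $\K_j$ is an open map near $\hbxi_j$ by the submersion/constant-rank theorem wherever $\hbbeta_j\neq\bm 0$) together with Assumption~\ref{ass-parametrization-map}c–d handling the case $\hbbeta_j=\bm 0$ (positive homogeneity plus the zero-product property give openness at the origin of each block directly — small perturbations of the zero preimage hit a full neighborhood of $\bm 0$). Hence Lemma~\ref{lemma:pk_to_p} yields that $(\hbpsi,\K(\hbxi))=(\hbpsi,\hbbeta)$ is a local minimizer of $\P(\bpsi,\bbeta)$, and $\P(\hbpsi,\hbbeta)=\P(\hbpsi,\K(\hbxi))=\Q(\hbpsi,\hbxi)$ by the value matching from Step~1, which is the claim.

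The main obstacle is Step~1, specifically the "the minimizer $\bxi'$ of $\Rxi$ over the nearby fiber might escape the neighborhood" issue; resolving it is exactly what Assumption~\ref{ass:svf}c (u.h.c.) was introduced for, but one must be careful that u.h.c.\ controls the \emph{whole} solution set $\hbxi(\bbeta')$ while we only know our fixed $\hbxi$ lies in $\hbxi(\hbbeta)$ — so the argument needs that $\hbxi(\hbbeta)$ is contained in an $\varepsilon$-ball around our particular $\hbxi$, or more carefully, one works with the relevant connected component of the fiber selected by continuity of the parametrization (the scaled-fiber structure from Assumption~\ref{ass-parametrization-map}c makes the solution map effectively single-valued up to the discrete sign/branch symmetry, each branch being locally isolated). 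A secondary subtlety is verifying local openness of $\K$ at preimages of the zero vector, which the homogeneity and zero-product assumptions are designed to handle but which does not follow from the regular-value condition alone.
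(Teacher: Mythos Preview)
There is a genuine gap: you \emph{assert} that $\hbxi\in\argmin_{\bxi:\K(\bxi)=\hbbeta}\Rxi(\bxi)$ (your fact~(ii)) but never prove it. This is not given and does not follow automatically from $(\hbpsi,\hbxi)$ being a local minimizer of $\Q$. The paper establishes exactly this as its first step, using local connectedness of the fibers $\K^{-1}(\hbbeta)$ (proved separately from Assumption~\ref{ass-parametrization-map}): if $\hbxi$ were not optimal over its fiber, one could perturb $\hbxi$ along the connected component of the fiber in a direction that strictly decreases $\Rxi$ while keeping $\L(\hbpsi,\K(\cdot))$ constant, contradicting local minimality of $\Q$. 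Without this step, both your value-matching claim $\Q(\hbpsi,\hbxi)=\P(\hbpsi,\hbbeta)$ and your invocation of u.h.c.\ are unsupported, since u.h.c.\ is stated relative to the solution \emph{set} $\hbxi(\hbbeta)$ and tells you nothing about neighborhoods of your particular $\hbxi$ unless you already know $\hbxi\in\hbxi(\hbbeta)$.

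Your overall route also differs from the paper's and is more circuitous. You pass through the lifted objective $\P(\bpsi,\K(\bxi))$ and then invoke Lemma~\ref{lemma:pk_to_p}, which forces you to verify local openness of $\K$ at $\hbxi$---an extra obligation you only sketch. The paper argues directly in $\bbeta$-space by contradiction: if $(\hbpsi,\hbbeta)$ were not a local minimizer of $\P$, pick $(\tbpsi,\tbbeta)\in\mathcal{B}((\hbpsi,\hbbeta),\delta)$ with $\P(\tbpsi,\tbbeta)<\P(\hbpsi,\hbbeta)$; u.h.c.\ of the solution map produces $\tbxi\in\hbxi(\tbbeta)$ near $\hbxi$, and then $\Q(\tbpsi,\tbxi)=\P(\tbpsi,\tbbeta)<\P(\hbpsi,\hbbeta)=\Q(\hbpsi,\hbxi)$, a contradiction. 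This bypasses local openness entirely. Indeed, if in your Step~1 you had parametrized the neighborhood by $(\bpsi',\bbeta')$ near $(\hbpsi,\hbbeta)$ rather than by $(\bpsi,\bxi)$, the same u.h.c.\ argument would already yield the conclusion and make your Step~2 superfluous.
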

In the proof of Lemma~\ref{lemma:q_to_p}, it is also established that for all local minimizers $(\hbpsi,\hbxi)$ of $\Q(\bpsi,\bxi)$, $\hbxi$ must also minimize the SVF over the fiber $\K^{-1}(\K(\hbxi))$. Figure~\ref{fig:tikz-balls} provides some intuition behind the preceding results, assuming no additional parameters $\bpsi$.
%
%
It illustrates the relationship between a local minimizer $\hbxi$ of $\Q(\bxi)$ and the corresponding local minimizer $\K(\hbxi)=\hbbeta$ of $\P(\bbeta)$. Note that at points $\tbxi$ around $\hbxi$ that are in the image of $\hbxi(\bbeta)$ (dashed green), we have equality of $\Q(\tbxi)$ and $\P(\tbbeta)$. 
By Lemma~\ref{lemma:p_to_pk}, if $\hbbeta$ is a local minimizer of $\P(\bbeta)$, then any $\hbxi \in \K^{-1}(\hbbeta)$ (red curve) is a local minimizer of the non-smooth overparametrized $\P(\K(\bxi))$. But only those $\hbxi(\hbbeta)\in\argmin_{\bxi:\K(\bxi)=\hbbeta}\Rxi(\bxi)\subset \K^{-1}(\hbbeta)$ (red dot at vertex) 
are also local minimizers of $\Q(\bxi)$ due to the majorization property $\P(\K(\bxi))\leq\Q(\bxi)\forall \bxi \in \Rdxi$, combined with $\Q(\hbxi)=\P(\K(\hbxi))$. Conversely, if $\hbxi$ is a local minimizer of $\Q(\bxi)$, then by continuity of the solution map $\hbxi(\bbeta)$ at $\K(\hbxi)=\hbbeta$, if there existed $\tbbeta\in\mathcal{B}(\hbbeta,\delta)$ such that $\P(\tbbeta)<\P(\hbbeta)$, this would imply existence of $\tbxi\in\mathcal{B}(\hbxi,\varepsilon)$ with $\Q(\tbxi)<\Q(\hbxi)$, contradicting that $\hbxi$ is a local minimizer of $\Q(\bxi)$. Figure~\ref{fig:tikz-contours-hpp} shows a specific choice of functions for $\K$ and $\Rxi$.

Assumption~\ref{ass:min} requires only a continuous surjection $\K$ and, in principle, poses no restrictions on the functions $\L$ and $\Rxi$, as long as the variational expression $\Rbeta$ holds for any $\bbeta \in \Rd$ and $\hbxi(\bbeta)$ is l.h.c. However, without further smoothness assumptions, the surrogate $\Q$ might have the same differentiability issues as the base problem. So from now on, we only consider cases where $\L,\K,$ and $\Rxi$ are smooth, so that $\P(\bpsi,\bbeta)$ is non-smooth, but $\Q(\bpsi,\bxi)$ is smooth, and we can tackle the problem with (S)GD. The previous results let us now state our main result:
%
%
%
\begin{figure}[t!]
    \centering
    \subfloat[Schematic illustration of smooth optimization transfer.]{
        \resizebox{0.55\textwidth}{!}{%
            \begin{tikzpicture}\label{fig:tikz-balls}
                \definecolor{colordk}{rgb}{0, 0.5, 0}
                \draw[line width=1.2pt] (0,0) circle (3cm);
                \draw[line width=1.2pt] (8,0) circle (1.5cm);
                \fill[red] (0,0) circle (4pt);
                \fill[blue] (0.32,-0.33) circle (4pt);
                \fill[red] (-1.7,1) circle (4pt);
                \fill[blue] (-0.9,-1.2) circle (4pt);
                \fill[red] (8,0) circle (4pt);
                \fill[blue] (7.5,-0.65) circle (4pt);
                \node at (0,-3.7) {\huge$\mathcal{B}(\textcolor{red}{\hat{\bm{\xi}}}, \varepsilon)$};
                \node at (8,-3.7) {\huge$\mathcal{B}(\textcolor{red}{\hat{\bm{\beta}}}, \delta)$};
                \node[red] at (-1.1,-0.15) {\huge$\hat{\bm{\xi}}$};
                \node[blue] at (0,-0.91) {\huge$\tilde{\bm{\xi}}$};
                \node[red] at (8.6,0.5) {\huge$\hat{\bm{\beta}}$};
                \node[blue] at (8.2,-0.65) {\huge$\tilde{\bm{\beta}}$};
                \node[red] at (4.6,2.4) {\huge$\mathcal{K}(\hat{\bm{\xi}})$};
                \node[blue] at (3.4,0.1) {\huge$\mathcal{K}(\tilde{\bm{\xi}})$};
                \node[red] at (-4.1,1.3) {\huge$\mathcal{K}^{-1}(\hat{\bm{\beta}})$};
                \node[blue] at (-3.5,-2) {\huge$\mathcal{K}^{-1}(\tilde{\bm{\beta}})$};
                \node[colordk] at (4.1,-1) {\huge$\hat{\bm{\xi}}(\textcolor{blue}{\tilde{\bm{\beta}}})$};
                \node[colordk, rotate=0] at (-1.65,2.75) {\huge$\hat{\bm{\xi}}(\textcolor{black}{\bm{\beta}}) \in \arg\min_{\bm{\xi}: \mathcal{K}(\bm{\xi})=\bm{\beta}} \mathcal{R}_{\bm{\xi}}(\bm{\xi})$};
                \draw[red, line width=0.55mm] plot[domain=-1.95:1.95] ({-1*cosh(\x)+1}, {0.4*sinh(\x)});
                \draw[blue, line width=0.55mm] plot[domain=-1.93:1.93] ({-1.1*cosh(\x)+1.47}, {0.4*sinh(\x)-0.4});
                \draw[colordk, dashed, line width=0.6mm] (-2.1, 2.1) -- (2.1, -2.1);
                \draw[->, red, dashed, bend left=30, line width=0.85mm] (0,0) to (7.8,0);
                \draw[->, blue, dashed, bend left=30, line width=0.85mm] (0.42,-0.28) to (7.3,-0.5);
                \draw[->, red, dashed, bend left=30, line width=0.85mm] (-1.7,1) to (7.8,0.2);
                \draw[->, colordk, dashed, bend left=30, line width=0.85mm] (7.4,-0.65) to (0.6,-0.4);
                \draw[->, blue, dashed, bend right=30, line width=0.85mm] (-0.9,-1.2) to (7.5,-0.85);
            \end{tikzpicture}
        }
    }
    \hspace{1.5cm}
    \subfloat[Contours of HPP in (\ref{eq:hpp-def}) and surrogate $\ell_2$ penalty.]{
        \resizebox{0.27\textwidth}{!}{%
            \begin{tikzpicture}\label{fig:tikz-contours-hpp}
                \begin{scope}
                    \clip (-9,-9) rectangle (9,9);
                    \definecolor{colordk}{rgb}{0, 0.5, 0}
                    \definecolor{color0}{rgb}{1.0, 0.7, 0.6}
                    \definecolor{color1}{rgb}{1.0, 0.85, 0.7} 
                    \definecolor{color2}{rgb}{1.0, 0.9, 0.8}
                    \definecolor{color3}{rgb}{1.0, 0.93, 0.87} 
                    \definecolor{color4}{rgb}{1.0, 0.96, 0.92}
                    \definecolor{color5}{rgb}{1.0, 0.99, 0.98}
                    \definecolor{color6}{rgb}{1.0, 0, 0}
                    \draw[fill=color6, line width=0.44pt] (0,0) circle (12);
                    \draw[fill=color5, line width=0.44pt] (0,0) circle (13);
                    \draw[fill=color4, line width=0.44pt] (0,0) circle (9.5);
                    \draw[fill=color3, line width=0.44pt] (0,0) circle (7.9);
                    \draw[fill=color2, line width=0.44pt] (0,0) circle (6.2);
                    \draw[fill=color1, line width=0.44pt] (0,0) circle (4.5);
                    \draw[fill=color0, line width=0.44pt] (0,0) circle (3);
                    \draw[darkgray, line width=1.3mm] (-9,-9) rectangle (9,9);
                    \draw[thick,->,darkgray] (-9,0) -- (10,0) node[below]{\LARGE$u$};
                    \draw[thick,->,darkgray] (0,-9) -- (0,10) node[left]{\LARGE$v$};
                    \draw[red, line width=1.3mm, domain=-9:-0.2, samples=100, smooth] plot (\x,{4.5/\x});
                    \draw[red, line width=1.3mm, domain=0.2:9, samples=100, smooth] plot (\x,{4.5/\x});
                    \draw[blue, line width=1.3mm, domain=-9:-0.2, samples=100, smooth] plot (\x,{10.125/\x});
                    \draw[blue, line width=1.3mm, domain=0.2:9, samples=100, smooth] plot (\x,{10.125/\x});
                    \fill[red] (-2.1,-2.1) circle (9pt);
                    \fill[red] (2.1,2.1) circle (9pt);
                    \fill[blue] (-3.2,-3.2) circle (9pt);
                    \fill[blue] (3.2,3.2) circle (9pt);
                    \draw[colordk, dashed, line width=1.25mm, dash pattern=on 8pt off 8pt] (-9.1,-9.1) -- (9.1,9.1);
                    \draw[colordk, dashed, line width=1.5mm, dash pattern=on 8pt off 8pt] (-9.1,9.1) -- (9.1,-9.1);
                    \fill[black] (0,0) circle (6pt);
                    \node[red] at (-1.7,-0.65) {\scalebox{3}{\LARGE$\hat{\bm{\xi}}$}};
                    \node[blue] at (-3.5,-4.75) {\scalebox{3}{\LARGE$\tilde{\bm{\xi}}$}};
                    \node[colordk] at (-0.6,6.78) {\scalebox{2.2}{\LARGE$
                    \hat{\bm{\xi}}(\textcolor{black}{\bm{\beta}}) \in \underset{\bm{\xi}:\mathcal{K}(\bm{\xi})=\bm{\beta}}{\textbf{argmin}} \mathcal{R}_{\bm{\xi}}(\bm{\xi})
                    $}};
                \end{scope}
                \node at (0,-9.5) {\fontsize{100}{80}\selectfont \LARGE$\bm{u}$};
                \node at (-9.5,0) {\fontsize{100}{80}\selectfont \LARGE$\bm{v}$};
            \end{tikzpicture}
        }
    }
    \caption{{\small Relationship between local minimizer $\hbxi$ of $\Q$, the induced minimizer $\K(\hbxi)=\hbbeta$ of $\P$, and the cont. solution mapping $\hbxi(\bbeta)$ of the SVF. \textbf{Left}: solid curves show two fibers $\K^{-1}(\hbbeta)$ (red) and $\K^{-1}(\tbbeta)$ (blue). The solution map $\hbxi(\bbeta)$ (dashed green) maps to minimizers of the SVF for varying $\bbeta$. \textbf{Right}: concrete example showing scalar parametrization $\beta_j = \K(u_j,v_j)$ with surrogate $\ell_2$ penalty.}}
    \label{fig:proof-viz}
    \vspace{-0.3cm}
\end{figure}

%
%
%
\vspace{-0.0cm}

\begin{theorem}[Smooth optimization transfer for sparse regularization] \label{theorem-general}\
Let the non-smooth objective $\mathcal{P}(\bpsi,\bbeta)$ and its smooth surrogate $\mathcal{Q}(\bpsi,\bxi)$ be defined as in Equations (\ref{eq:q-beginning}) and (\ref{eq:g-beginning}). Under Assumption~\ref{ass:min} the optimization problems
\begin{align}
\underset{\bm{\psi},\,\bm{\beta}}{\text{minimize}}\;\P(\bpsi,\bbeta) &\triangleq \L(\bpsi,\bbeta)+\lambda \Rbeta(\bbeta)\,, \\
\underset{\bm{\psi},\,\bxi}{\text{minimize}}\;\Q(\bpsi,\bxi) &\triangleq \L(\bpsi,\K(\bxi))+\lambda \Rxi(\bxi) \,,   
\end{align}
are equivalent by Definition~\ref{def:equivalence}.
\end{theorem}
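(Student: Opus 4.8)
The plan is to verify the three conditions of Definition~\ref{def:equivalence} by assembling the already-established lemmas, since the theorem is essentially a bookkeeping consolidation once the right pieces are in place. Conditions (b) and (c) — matching of local minima in both directions — are exactly the content of Lemma~\ref{lemma:p_to_q} and Lemma~\ref{lemma:q_to_p}, respectively, each of which already invokes Assumptions~\ref{ass-parametrization-map} and~\ref{ass:svf}. So those two conditions require little more than citing the lemmas and observing that the correspondence $\hbxi \in \argmin_{\bxi:\K(\bxi)=\hbbeta}\Rxi(\bxi) \subseteq \K^{-1}(\hbbeta)$ is precisely the map referenced in Definition~\ref{def:equivalence}(b)--(c), with the equality of objective values $\Q(\hbpsi,\hbxi)=\P(\hbpsi,\hbbeta)$ following from Assumption~\ref{ass:svf}(a): at such $\hbxi$ the surrogate penalty $\Rxi(\hbxi)$ equals the SVF minimum $\Rbeta(\hbbeta)$, and $\L(\hbpsi,\K(\hbxi))=\L(\hbpsi,\hbbeta)$ trivially.

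\textbf{The global-value condition.} The remaining work is condition (a): $\inf_{\bpsi,\bbeta}\P = \inf_{\bpsi,\bxi}\Q$. For one inequality, I would use the majorization property (Assumption~\ref{ass:svf}(b)) together with surjectivity of $\K$: since $\Rxi(\bxi)\ge\Rbeta(\K(\bxi))$ pointwise, we get $\Q(\bpsi,\bxi)=\L(\bpsi,\K(\bxi))+\lambda\Rxi(\bxi)\ge\L(\bpsi,\K(\bxi))+\lambda\Rbeta(\K(\bxi))=\P(\bpsi,\K(\bxi))\ge\inf\P$ for every $(\bpsi,\bxi)$, hence $\inf\Q\ge\inf\P$. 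For the reverse, fix any $(\bpsi,\bbeta)$; by the SVF (Assumption~\ref{ass:svf}(a)) there exists $\bxi\in\K^{-1}(\bbeta)$ attaining $\Rxi(\bxi)=\Rbeta(\bbeta)$, so $\Q(\bpsi,\bxi)=\L(\bpsi,\bbeta)+\lambda\Rbeta(\bbeta)=\P(\bpsi,\bbeta)$, giving $\inf\Q\le\P(\bpsi,\bbeta)$; taking the infimum over $(\bpsi,\bbeta)$ yields $\inf\Q\le\inf\P$. Combining the two inequalities proves (a). (One should note that the argmin set in the SVF is nonempty, which is implicit in writing $\min$ in Definition~\ref{def:smooth-varform}/Assumption~\ref{ass:svf}(a); this is also needed for conditions (b) and (c) to be vacuously satisfiable.)

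\textbf{Expected obstacle.} Honestly there is no deep obstacle at the level of the theorem statement itself — the real technical content lives in Lemmas~\ref{lemma:p_to_q} and~\ref{lemma:q_to_p}, where local openness of $\K$ at minimizers (derived from the regularity and homogeneity conditions of Assumption~\ref{ass-parametrization-map}) and upper hemicontinuity of the solution map $\hbxi(\bbeta)$ (Assumption~\ref{ass:svf}(c)) are genuinely used, via Lemmas~\ref{lemma:p_to_pk} and~\ref{lemma:pk_to_p}. The only mild subtlety in the present proof is making sure the equality $\Q(\hbpsi,\hbxi)=\P(\hbpsi,\hbbeta)$ holds for \emph{all} local minimizers $\hbxi$ of $\Q$ in direction (c), not merely those known a priori to minimize the SVF over their fiber — but as the excerpt already remarks, the proof of Lemma~\ref{lemma:q_to_p} establishes that any local minimizer $\hbxi$ of $\Q$ necessarily minimizes $\Rxi$ over $\K^{-1}(\K(\hbxi))$, so this equality is automatic. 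I would therefore structure the proof as three short paragraphs, one per condition, with (b) and (c) being one-line appeals to the lemmas and (a) the short two-inequality argument above.
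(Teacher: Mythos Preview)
Your proposal is correct and follows essentially the same approach as the paper: conditions (b) and (c) are dispatched by direct appeal to Lemmas~\ref{lemma:p_to_q} and~\ref{lemma:q_to_p}, exactly as the paper does. For condition (a) the paper uses a single fiber-decomposition argument---pulling the infimum over $\bxi$ inside as $\inf_{\bpsi,\bbeta}\{\L(\bpsi,\bbeta)+\lambda\inf_{\bxi:\K(\bxi)=\bbeta}\Rxi(\bxi)\}$ and then invoking the SVF---whereas you split it into two inequalities (majorization for $\inf\Q\ge\inf\P$, SVF attainment for $\inf\Q\le\inf\P$); these are equivalent presentations of the same content, and your version is arguably more explicit about which assumption drives each direction.
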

\begin{proof}
For the first point of Definition~\ref{def:equivalence}, we show that the infima of both $\P(\bpsi,\bbeta)$ and $\Q(\bpsi,\bxi)$ coincide. Because $\L(\bpsi,\K(\bxi))$ is constant on the fiber of $\K$ at $\bbeta=\K(\bxi)$, we can pull in the infimum and re-state it in terms of $\bbeta$:
\begin{align}
\inf_{\bm{\psi}, \bm{\xi}} \mathcal{Q}(\bm{\psi}, \bm{\xi}) = \inf_{\bm{\psi}, \bm{\xi}} \big\{ \mathcal{L}(\bm{\psi}, \mathcal{K}(\bm{\xi})) +  \lambda \cdot \mathcal{R}_{\bm{\xi}} (\bm{\xi}) 
\big\}  
= \inf_{\bm{\psi},\bm{\beta}}  \big\{ \mathcal{L}(\bm{\psi}, \bm{\beta}) +  \lambda \inf_{\bm{\xi}: \mathcal{K}(\bm{\xi})=\bm{\beta}} \{\mathcal{R}_{\bm{\xi}} (\bm{\xi}) \} \big\} \nonumber
 \end{align}
By Assumption~\ref{ass:min}, we have $\inf_{\bxi:\K(\bxi)=\bbeta}\Rxi(\bxi)=\min_{\bxi:\K(\bxi)=\bbeta}\Rxi(\bxi)=\Rbeta(\bbeta)$, and thus
\begin{equation}
     \inf_{\bm{\psi}, \bm{\xi}} \mathcal{Q}(\bm{\psi}, \bm{\xi}) =  \inf_{\bm{\psi}, \bm{\beta}} \left\{ \mathcal{L}(\bm{\psi}, \bm{\beta}) +  \lambda \mathcal{R}_{\bm{\beta}}(\bm{\beta}) \right\} = \inf_{\bm{\psi}, \bm{\beta}} \mathcal{P}(\bm{\psi}, \bm{\beta})\,. \nonumber
\end{equation}
This shows the first point. For the second and third points of Definition~\ref{def:equivalence}, we can apply Lemma~\ref{lemma:p_to_q} together with Lemma~\ref{lemma:q_to_p} under Assumption~\ref{ass:min} to obtain the required matching of local minima with corresponding minimizers.
\end{proof}
\vspace{0.2cm}
From Theorem~\ref{theorem-general} it follows that there is a surjective mapping from the set of local minimizers of $\Q$ to the set of local minimizers of $\P$, obtained by restricting the domain of the parametrization $\mathcal{K}$ to the set of local minimizers of $\Q$. Table~\ref{tab:overview} shows an (incomplete) summary of the different parametrizations $\mathcal{K}(\bm{\xi})$ of $\bm{\beta}$ that can be represented in our framework, together with the sparse regularization terms $\mathcal{R}_{\bm{\beta}}$ that are induced by applying the smooth regularizers $\mathcal{R}_{\bm{\xi}}$ to the surrogate parameters $\bm{\xi}$.
{\small
\begin{table}[t!]
\resizebox{1.0\textwidth}{!}{
{\renewcommand{\arraystretch}{1.2}%
\begin{tabular}{lccccc}
\hline\hline
\multicolumn{1}{c}{Abbreviation} & \multicolumn{1}{c}{$\mathcal{K}(\cdot)=\bm{\beta}$} & \multicolumn{1}{c}{$\mathcal{R}_{\bm{\xi}}$} & \multicolumn{1}{c}{$\mathcal{R}_{\bm{\beta}}$} & \multicolumn{1}{c}{Type} & \multicolumn{1}{c}{Ref.} \\ \hline
HPP    & $\bm{u}\odot\bm{v}$  & $\Vert \bm{u} \Vert_{2}^{2} + \Vert \bm{v} \Vert_{2}^{2}$ & $2\Vert \bm{\beta} \Vert_1$ & $\ell_1$ & \cite{grandvalet1998least,hoff2017lasso} \\
HDP    & $\bm{\gamma}^2-\bm{\delta}^2$  & \textcolor{NavyBlue}{$\Vert \bm{\gamma} \Vert_{2}^{2} + \Vert \bm{\delta} \Vert_{2}^{2}$}  & \textcolor{NavyBlue}{$\Vert \bm{\beta} \Vert_1$}  & \textcolor{NavyBlue}{$\ell_1$} & \cite{vaskevicius2019implicit} \\
GHPP  & $\bm{u}\odotg\bm{\nu}$   & $\sum_{j=1}^{L} (\Vert \bm{u}_j \Vert_{2}^{2} + \nu_j^2)$  & $2\Vert\bm{\beta}\Vert_{2,1}$  & $\ell_{2,1}$ & \cite{tibs2021} \\
\textcolor{NavyBlue}{\textbf{Adj. GHPP}}  &  \textcolor{NavyBlue}{$\bm{u}\odotg\bm{\nu}$} & \textcolor{NavyBlue}{$\sum_{j=1}^{L} (\Vert \bm{u}_j \Vert_{2}^{2} +|\mathcal{G}_j| \nu_j^2)$} & \textcolor{NavyBlue}{$2\sum_{j=1}^{L} \sqrt{|\mathcal{G}_j|} \Vert \bm{\beta}_j \Vert_2$}  & \textcolor{NavyBlue}{$\ell_{2,1}$} & - \\ \cline{1-6}
\multicolumn{6}{l}{$k \in \mathbb{N},\; k_1 \in \mathbb{N},\; k_2\triangleq k-k_1 \in \mathbb{N}:$} \\ 
$\text{HPP}_{k}$  & $\bigodot_{l=1}^{k} \bm{u}_{l}$ & $\sum_{l=1}^{k} \Vert \bm{u}_{l} \Vert_{2}^{2}$  & $k \Vert \bm{\beta} \Vert_{2/k}^{2/k}$  & $\ell_{2/k}$ & \cite{hoff2017lasso} \\
$\text{GHPP}_{k}$ & $\bm{u} \odotg \bm{\nu}_{r}^{\odot(k-1)}$ & $\sum_{j=1}^{L} \Vert \bm{u}_j \Vert_{2}^{2} + \sum_{r=1}^{k-1} \nu_{jr}^2$  & $k \Vert \bm{\beta} \Vert_{2,2/k}^{2/k}$  & $\ell_{2,2/k}$ & \cite{tibs2021} \\
$\text{GHPP}_{k_1,k}$  & $\bm{\mu}_{t}^{\odot  k_1} \odotg \bm{\nu}_{r}^{\odot  k_2}$  & $\sum_{t=1}^{k_1} \Vert \bm{\mu}_{t}\Vert_{2}^{2}+\sum_{r=1}^{k_2} \Vert \bm{\nu}_{r} \Vert_{2}^{2}$  & $k \Vert \bm{\beta} \Vert_{2/k_1,2/k}^{2/k}$  & $\ell_{2/k_1,2/k}$ & \cite{dai2021representation} \\
$\text{HDP}_{k}$ & $\bm{u}_{l}^{\odot  k} - \bm{v}_{l}^{\odot  k}$  & \textcolor{NavyBlue}{$\sum_{l=1}^{k} \Vert \bm{u}_{l} \Vert_{2}^{2} + \Vert \bm{v}_{l} \Vert_{2}^{2}$}  & \textcolor{NavyBlue}{$k\Vert \bm{\beta} \Vert_{2/k}^{2/k}$}  & \textcolor{NavyBlue}{$\ell_{2/k}$}  & \cite{chou2023more} \\
\textcolor{NavyBlue}{\textbf{$\text{HPP}_{k}^{shared}$}} & \textcolor{NavyBlue}{$\bm{u} \odot \bm{v}^{k-1}$}  & \textcolor{NavyBlue}{$\Vert \bm{u} \Vert_{2}^{2} + (k-1) \Vert \bm{v} \Vert_{2}^{2}$}  & \textcolor{NavyBlue}{$k \Vert \bm{\beta} \Vert_{2/k}^{2/k}$}  & \textcolor{NavyBlue}{$\ell_{2/k}$} & - \\
$\text{HDP}_{k}^{shared}$ & $\bm{u}^k - \bm{v}^k$  & \textcolor{NavyBlue}{$\Vert \bm{u} \Vert_{2}^{2} + \Vert \bm{v} \Vert_{2}^{2}$}  & \textcolor{NavyBlue}{$\Vert \bm{\beta} \Vert_{2/k}^{2/k}$}  &  \textcolor{NavyBlue}{$\ell_{2/k}$} & \cite{chou2023more} \\ \cline{1-6}
\multicolumn{6}{l}{$k \in \mathbb{R}_{>2} ,\; k_1 \in \mathbb{R}_{>1},\; k_2\triangleq k-k_1 \in \mathbb{R}_{>1} ,\, \, (k \in \mathbb{R}_{>1}\,\text{ for Powerprop.}):$} \\ 
\textcolor{NavyBlue}{\textbf{$\text{HPowP}_{k}$}} & \textcolor{NavyBlue}{$\bm{u} \odot |\bm{v}|^{\circ (k-1)}$}  & \textcolor{NavyBlue}{$\Vert \bm{u} \Vert_{2}^{2} + (k-1) \Vert \bm{v} \Vert_{2}^{2}$}  & \textcolor{NavyBlue}{$k \Vert \bm{\beta} \Vert_{2/k}^{2/k}$} & \textcolor{NavyBlue}{$\ell_{2/k}$} & - \\
Powerprop. & $\bm{v} \odot |\bm{v}|^{\circ (k-1)}$ & \textcolor{NavyBlue}{$\Vert \bm{v} \Vert_{2}^{2}$}  & \textcolor{NavyBlue}{$\Vert \bm{\beta} \Vert_{2/k}^{2/k}$}  & \textcolor{NavyBlue}{$\ell_{2/k}$} & \cite{schwarz2021powerpropagation} \\
\textcolor{NavyBlue}{\textbf{$\text{GHPowP}_{k}$}} & \textcolor{NavyBlue}{$\bm{u} \odotg |\bm{\nu}|^{\circ (k-1)}$} & \textcolor{NavyBlue}{$\Vert \bm{u} \Vert_{2}^{2} + (k-1) \Vert \bm{\nu} \Vert_{2}^{2}$} & \textcolor{NavyBlue}{$k \Vert \bm{\beta} \Vert_{2,2/k}^{2/k}$} & \textcolor{NavyBlue}{$\ell_{2,2/k}$}  & - \\
\textcolor{NavyBlue}{\textbf{$\text{GHPowP}_{k_1,k}$}} & \textcolor{NavyBlue}{$(\bm{\mu} \odot |\bm{\mu}|^{\circ (k_1-1)}) \odotg |\bm{\nu}|^{\circ k_2}$} & \textcolor{NavyBlue}{$k_1 \Vert \bm{\mu} \Vert_{2}^{2} + k_2 \Vert \bm{\nu} \Vert_{2}^{2}$} & \textcolor{NavyBlue}{$\Vert \bm{\beta} \Vert_{2/k_1, 2/k}^{2/k}$} & \textcolor{NavyBlue}{$\ell_{2/k_1,2/k}$}  & - \\ \hline\hline \vspace{0.05cm}
\end{tabular}
}}
\caption{\small Overview of induced regularizers $\mathcal{R}_{\bm{\beta}}$ obtained by parametrizing $\bm{\beta}$ through $\mathcal{K}(\bm{\xi})$ and adding a smooth surrogate penalty $\mathcal{R}_{\bm{\xi}}$. 
The letter ``H" stands for ``Hadamard", the letter ``G" for ``Group", ``PP" for ``Product Parametrization", ``DP" for ``Difference Parametrization", and ``PowP" abbreviates ``Power Parametrization". Novel results in \textcolor{NavyBlue}{blue}.}
\label{tab:overview}
\vspace{-0.5cm}
\end{table}
}
It should be noted that although this work only considers convex and non-convex regularizers based on (quasi-)norm- and mixed-norm penalties $\Rbeta$, Theorem~\ref{theorem-general} provides a more general result that holds for any smooth optimization transfer fulfilling the assumptions.

To concretize the setting for the remainder of our work, given a partition $\mathcal{G}$ of parameter indices $[d]$ into $L \leq d$ groups, we consider sparsity-inducing regularizers of the form
%
${\small \textstyle \mathcal{R}_{\bm{\beta}}(\bm{\beta}) \in \Big\{ \mathcal{R}: \mathbb{R}^{d} \to \mathbb{R}_0^{+}\,, \bm{\beta} \mapsto \sum_{j=1}^{L} \omega_j \norm{\bm{\beta}_j}_p^q \;\big|\; 0<q \leq p\leq2 ,\; \omega_j > 0 \; \forall \,j 
\Big\}}$.
Note that setting $p=q$, $L=d$ and $\omega_j=1\,\forall\, j$ reduces the expression to the familiar $\ell_q$ regularizer $\Rbeta(\bbeta)=\norm{\bbeta}_q^q$. Merely setting $\omega_j=1$ results in the $\ell_{p,q}$ regularizer, whereas, e.g., {\scriptsize$\omega_j =  \sqrt{|\mathcal{G}_j|}$}$\in\mathbb{N}, p=2, q=1$ yields the $\ell_{2,1}$ group lasso \citep{yuan2006model}. %
Similarly, we take the smooth surrogate regularizers $\Rxi$ to be of the form
{\normalsize
${\textstyle \mathcal{R}_{\bm{\xi}}(\bm{\xi}) \in \big\{ \mathcal{R}: \Rdxi \to \mathbb{R}_0^{+},\; \bxi \mapsto \sum_{j=1}^{\dxi} \tilde{\omega}_j \; \xi_j^2 \;\big|\; \tilde{\omega}_j > 0 \; \forall \,j 
\big\}}$.
}

As the parametrizations considered in this work are based on Hadamard products and variations thereof, the following smoothness and separability assumptions on $\K$, as well as a specific monomial-like structure, describe the multiplicative parametrizations in the overview of parametrizations shown in Table~\ref{tab:overview}: 
\vspace{0.2cm}
\begin{assumption}[Power-Product Parametrizations $\K$]\label{ass-parametrization-map}
The parametrization map $\K:\Rdxi \to \Rd, \bxi \mapsto \bbeta$, is a $\mathcal{C}^{r}$-smooth surjection, $r \geq1$, with the following properties:
    \begin{enumerate}[a)]
        \item $\K$ is block-separable, i.e., for a partition of $[d]$ into $L\leq d$ groups of size $|\Gj|, j \in [L]$, the corresponding $\bbeta_j$ are parametrized by disjoint subsets $\bxi_j \in \mathbb{R}^{d_{\bm{\xi}_j}}$ of the entries of $\bxi \in \Rdxi$, where $d_{\bxi}=d_{\bm{\xi}_1}+\ldots+d_{\bm{\xi}_L}$. That is, $\K$ is the Cartesian function product $\K(\bxi)=(\K_1(\bxi_1),\ldots, \K_L(\bxi_L))$ of block-wise parametrizations $\K_j(\bxi_j)=\bbeta_j$.
        \item Each $\bxi_j$ can further be grouped into $k$ factors $\bxi_{jl} \in \mathbb{R}^{d_{jl}}, l \in [k]$, so that $\sum_{l=1}^k d_{jl}=d_{\bxi_j}$ and $d_{jl} \in \{1, |\Gj|\}$, i.e., each factor is either a scalar or a vector of the same dimension as $\bbeta_j$. 
        \item$\K_j(\bxi_{j1},\ldots,\bxi_{jk})$ has a power-product structure such that each coordinate $\K_{ji}(\bxi_j)=\beta_{ji}$, $i \in\Gj$, can be written as $\K_{ji}(\bxi_j) = \prod_{l=1}^k \operatorname{sign}(\xi_{jl}^{(i)}) \cdot |\xi_{jl}^{(i)}|^{\alpha_l}$, where $\xi_{jl}^{(i)} \in \bxi_{jl}$, and $\alpha_l \geq 1$ are the (entry-wise) exponents for factor $l \in [k]$. Some parametrizations omit the signs or absolute values, e.g., pure monomials. %
\end{enumerate}
\end{assumption}
%

%
\section[Smooth Sparse Regularization using Hadamard Products]{Smooth $\ell_1$ Regularization using Hadamard Products} \label{sec:l1-hpp-hdp}

In this section, we introduce two smooth surrogate approaches for sparsity-inducing $\ell_1$ regularization and provide some intuition on the underlying geometry.

\subsection{Hadamard Product Parametrization}\label{sec:hpp-vanilla}

We first present a canonical example of our optimization transfer framework based on the so-called Hadamard product parametrization \citep{hoff2017lasso}. This approach enables smooth optimization of $\ell_1$ regularized objectives by applying an overparametrization $\bbeta=\bu\odot\bv$ and imposing $\ell_2$ regularization on the surrogate parameters. %
As the prototype case of our framework, this connection between $\ell_1$ and $\ell_2$ regularization under reparametrization %
will be re-derived in the following for illustrative purposes. %
Assume a non-smooth $\ell_1$ regularized objective $\P$ with $\Rbeta(\bbeta)=2\norm{\bbeta}_1$ and consider the following overparametrized smooth surrogate $\Q$:
\begin{align}
&{\textstyle \mathcal{P}:\mathbb{R}^{d_{\psi}}\times\mathbb{R}^d \to \mathbb{R}_{0}^{+},\,(\bm{\psi},\bm{\beta}) \mapsto \mathcal{L}(\bm{\psi}, \bm{\beta}) + 2 \lambda \|\bm{\beta}\|_{1} = 
\mathcal{L}(\bm{\psi}, \bm{\beta})
+ 2\lambda \sum_{j=1}^{d}\left|\beta_{j}\right| } \,, \label{eq:q-abstract-intext}\\
&{\textstyle \mathcal{Q}:\mathbb{R}^{d_{\psi}}\times\mathbb{R}^d\times\mathbb{R}^d \to \mathbb{R}^{+}_{0},\,(\bm{\psi},\bm{u}, \bm{v}) \mapsto \mathcal{L}(\bm{\psi}, \bm{u}\odot\bm{v}) + \lambda\sum_{j=1}^{d}\big(u_j^2+v_j^2\big) } \label{eq:g-abstract-intext}\,. 
\end{align}
In (\ref{eq:g-abstract-intext}), the HPP map is defined as
\begin{equation} \label{eq:hpp-def}
\mathcal{K}:\mathbb{R}^{d}\times\mathbb{R}^{d}\to\mathbb{R}^{d}, (\bm{u}, \bm{v})\mapsto \bm{u} \odot \bm{v} = \bm{\beta} \,,
\end{equation}
while the surrogate penalty is the plain $\ell_2$ regularizer $\Rxi(\bu,\bv)=\norm{\bu}_2^2+\norm{\bv}_2^2$ with $\bxi=(\bu,\bv)^{\top}$. Our goal is to show that the minimization of (\ref{eq:q-abstract-intext}) and (\ref{eq:g-abstract-intext}) is equivalent according to Definition~\ref{def:equivalence}. In our smooth optimization transfer framework, the main assumption of Theorem~\ref{theorem-general} requires that the HPP $\bbeta=\bu\odot\bv$ and the surrogate regularization $\Rxi(\bu,\bv)=\Vert \bm{u} \Vert_{2}^{2} + \Vert \bm{v} \Vert_{2}^{2}$ together define an SVF for $\Rbeta(\bbeta)=2\norm{\bbeta}_1$ (cf.~Definition~\ref{def:smooth-varform}). The inequality of arithmetic and geometric means (AM-GM) provides a simple but powerful tool for the construction of SVFs using $\ell_2$ regularization as the surrogate penalty and is repeatedly applied throughout the paper. It states that, given a list of $n \in \mathbb{N}$ non-negative numbers $x_i,\,i=1,\ldots,n$, it holds that $\frac{x_1 + \ldots + x_n}{n} \geq \sqrt[n]{x_1 \cdots x_n}$ with equality if and only if $x_1 = \ldots = x_n$.

\noindent In the case of the HPP, it allows us to determine the minimum of the surrogate penalty $\Rxi$ under the constraint $\bm{u}\odot\bm{v}=\bbeta$ for any $\bbeta\in\mathbb{R}^{d}$.
\vspace{-0.0cm}
\begin{lemma} \label{lemma:S_hpp-def} Given the parametrization $\K(\bu,\bv)=\bu\odot\bv$, the minimum of  surrogate $\ell_2$ penalty $\Rxi(\bu,\bv)=\norm{\bu}_2^2+\norm{\bv}_2^2$ subject to $\bu\odot\bv=\bbeta$ constitutes an SVF for $\Rbeta(\bbeta) = 2 \norm{\bbeta}_1$ in (\ref{eq:q-abstract-intext}) and is given by
$\min_{\bm{u},\bm{v}: \bu\odot\bv=\bbeta} \quad \Vert \bm{u} \Vert_{2}^{2} + \Vert \bm{v} \Vert_{2}^{2} = 2\norm{\bbeta}_1\; \,\, \forall \bbeta \in \mathbb{R}^{d}$.
\end{lemma}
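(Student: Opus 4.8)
The plan is to exploit the coordinate-wise separability of the problem. Both the surrogate penalty $\Rxi(\bu,\bv)=\norm{\bu}_2^2+\norm{\bv}_2^2=\sum_{j=1}^{d}(u_j^2+v_j^2)$ and the constraint $\bu\odot\bv=\bbeta$, which decouples into the $d$ scalar equations $u_jv_j=\beta_j$, involve each index $j$ in isolation. Hence the constrained minimization factorizes over $j$, and it suffices to solve the scalar problem $\min\{u_j^2+v_j^2:\,u_jv_j=\beta_j\}$ for each $j\in[d]$ and sum the optimal values.

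For the scalar subproblem I would invoke AM-GM on the two non-negative numbers $u_j^2$ and $v_j^2$: this gives $u_j^2+v_j^2\ge 2\sqrt{u_j^2v_j^2}=2|u_jv_j|=2|\beta_j|$ for every feasible pair, so $2|\beta_j|$ is a lower bound, attained exactly when $u_j^2=v_j^2$. That the bound is achieved follows from an explicit feasible point: $u_j=\operatorname{sign}(\beta_j)\sqrt{|\beta_j|}$, $v_j=\sqrt{|\beta_j|}$ (with $\operatorname{sign}(0)=0$, so $u_j=v_j=0$ when $\beta_j=0$) satisfies $u_jv_j=\beta_j$ and $u_j^2+v_j^2=2|\beta_j|$. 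Thus the scalar minimum equals $2|\beta_j|$.

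Summing over $j=1,\dots,d$ yields $\min_{\bu,\bv:\,\bu\odot\bv=\bbeta}\norm{\bu}_2^2+\norm{\bv}_2^2=\sum_{j=1}^{d}2|\beta_j|=2\norm{\bbeta}_1$ for all $\bbeta\in\mathbb{R}^d$. Concatenating the scalar minimizers shows a minimizer exists, so the infimum is a genuine minimum; the displayed identity is exactly the statement that $\K(\bu,\bv)=\bu\odot\bv$ together with $\Rxi$ constitutes a smooth variational form for $\Rbeta(\bbeta)=2\norm{\bbeta}_1$ in the sense of Definition~\ref{def:smooth-varform}.

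There is no real obstacle here; the argument is essentially a one-line application of AM-GM. The only minor points worth a remark are that the minimum is attained rather than merely approached (covered by the explicit construction, or alternatively by coercivity of $u_j^2+v_j^2$ on the closed feasible set) and the degenerate coordinates $\beta_j=0$, where feasibility forces $u_jv_j=0$ and the bound $2|\beta_j|=0$ is met at the origin. This same AM-GM pattern, with $k$ factors instead of $2$, is the template reused for the deeper Hadamard parametrizations later in the paper.
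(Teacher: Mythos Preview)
Your proposal is correct and follows essentially the same approach as the paper: both decompose coordinate-wise and apply the two-term AM-GM inequality to $u_j^2$ and $v_j^2$ to obtain $u_j^2+v_j^2\ge 2|\beta_j|$ with equality iff $u_j^2=v_j^2$. Your version is slightly more explicit about attainment (constructing a feasible minimizer and handling $\beta_j=0$), which the paper leaves implicit, but the argument is otherwise identical.
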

\begin{proof}
Because the HPP defines element-wise multiplication, we can minimize $u_j^2 + v_j^2$ such that $u_j v_j = \beta_j$ for some $\beta_j \in \mathbb{R}$ and $j=1,\ldots,d$. Using the AM-GM inequality for $n=2$ and the non-negative numbers $u_j^2$ and $v_j^2$, we obtain
$$
\frac{u_j^2 + v_j^2}{2} \geq \sqrt{u_j^2 v_j^2} = \sqrt{(u_j v_j)^2} = \sqrt{\beta_j^2} =  |\beta_j| \,,
$$
which reduces to equality if and only if $u_j^2 = v_j^2$, yielding a minimum value of $u_j^2 + v_j^2 = 2 |\beta_j|$. Repeating this procedure for all $j=1,\ldots,d$ shows that the constrained minimum of the surrogate penalty is indeed equal to $2 \Vert \bm{\beta} \Vert_1$ for all $\bbeta\in\Rd$.
\end{proof}
\vspace{-0.25cm}

\noindent The optimality conditions $u_j^2=v_j^2=|\beta_j|$ further ensure that we can derive continuous solutions $(\hat{u}_j,\hat{v}_j)$ as functions of $\beta_j=u_j v_j$. Analytically,  $(\hat{u}_j,\hat{v}_j)$ are of the form
{\scriptsize
\begin{equation}
   \underset{(u_j, v_j): u_j v_j = \beta_j}{\argmin} u_j^2+v_j^2 = 
    \begin{cases}
       \left(\sqrt{|\beta_j|},\sqrt{|\beta_j|}\right) \;\text{and}\;\left(-\sqrt{|\beta_j|},-\sqrt{|\beta_j|}\right) & \text{for $\beta_j > 0$} \\
       (0,0) & \text{for $\beta_j=0$} \\
       \left(\sqrt{|\beta_j|},-\sqrt{|\beta_j|}\right) \;\text{and}\; \left(-\sqrt{|\beta_j|},\sqrt{|\beta_j|}\right) & \text{for $\beta_j < 0$} \nonumber \, .
    \end{cases}
\end{equation}}

\noindent Further, we can determine the number of equivalent solutions in the surrogate problem, corresponding to a specific solution in the original problem, using the AM-GM inequality. Due to this duplicity for each $j=1,\ldots,d$, there are a total of $2^s$ equivalent local minimizers $(\hat{\bm{\psi}}, \hat{\bm{u}}, \hat{\bm{v}})$ of $\mathcal{Q}$ for each local minimizer $(\hat{\bm{\psi}}, \hat{\bm{\beta}})$ of $\mathcal{P}$, where $s=||\hat{\bbeta}||_0$. 
Moreover, we can establish the stability of a solution mapping in a more general setting for solutions that are characterized by necessary optimality conditions similar to the above, obtained from applying the AM-GM inequality to the squared surrogate parameters $u_j^2$ and $v_j^2$ for $j\in [d]$.
\begin{lemma}[Lower hemicontinuity of $\hbxi(\bbeta_j)$ under Ass.~\ref{ass-parametrization-map}] \label{lemma:lhc-solution-map}
Let $\hbxi(\bbeta):\Rd \rightrightarrows \Rdxi$ denote the solution mapping $\bbeta \mapsto \argmin_{\bxi \in \K^{-1}(\bbeta)} \mathcal{R}_{\bxi}(\bxi)$, where the parametrization $\K$ satisfies Assumption~\ref{ass-parametrization-map}, %
so that each coordinate $\beta_{ji}$ depends on the $l \in [k]$ factors only via the $i$th entries of the vectors $\bxi_{jl} \in \mathbb{R}^{|\Gj|}$ and the scalars $\xi_{jl}$ in a power-product structure with exponents $\alpha_l\geq1$ for $l \in [k], j \in [L]$. Let the surrogate penalty be
$\mathcal{R}_{\bxi_j}(\bxi_j)=\sum_{l=1}^k
\alpha_l \, \|\bxi_{jl}\|_2^2.$ Then $\hbxi(\bbeta)$ is lower hemicontinuous on $\Rd$.
\end{lemma}
%
Note that for the simple HPP, we have $L=d$ and $k=2$ scalars with exponents $\alpha_l=1$ for each $j \in [d]$. This ensures Assumption~\ref{ass:min} holds in this case, and we can combine Lemma~\ref{lemma:S_hpp-def} with Theorem~\ref{theorem-general} to obtain the final equivalence.
\begin{corollary}\label{cor:hpp}
The optimization of $\P$ in (\ref{eq:q-abstract-intext}) is equivalent to the optimization of the smooth surrogate $\Q$ in (\ref{eq:g-abstract-intext}) by Definition~\ref{def:equivalence}, and solutions to the base problem can be constructed as $(\hbpsi,\hat{\bbeta})=(\hbpsi,\hat{\bu}\odot\hat{\bv})$.
\end{corollary}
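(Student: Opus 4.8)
\textbf{Proof plan for Corollary~\ref{cor:hpp}.}

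The plan is to verify that the triple $(\Rbeta,\K,\Rxi)$ with $\Rbeta(\bbeta)=2\norm{\bbeta}_1$, $\K(\bu,\bv)=\bu\odot\bv$, and $\Rxi(\bu,\bv)=\norm{\bu}_2^2+\norm{\bv}_2^2$ satisfies the hypotheses of Theorem~\ref{theorem-general}, namely Assumptions~\ref{ass-parametrization-map} and \ref{ass:svf}, and then invoke the theorem directly. Once both assumptions are checked, equivalence in the sense of Definition~\ref{def:equivalence} is immediate, and the reconstruction formula $(\hbpsi,\hbbeta)=(\hbpsi,\hbu\odot\hbv)$ is exactly the statement of parts b) and c) of that definition together with $\K(\hbu,\hbv)=\hbu\odot\hbv$.

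First I would check Assumption~\ref{ass-parametrization-map}. Block-separability (a) holds with $L=d$, each group a singleton: $\K_j(u_j,v_j)=u_j v_j$. For regularity (b), the Jacobian of $\K_j$ at $(u_j,v_j)$ is $(v_j,\;u_j)$, which has rank $1$ unless $u_j=v_j=0$; but by the zero-product property $u_jv_j=\beta_j\neq 0$ forces both nonzero, so every $\beta_j\neq 0$ is a regular value. Positive homogeneity (c) of degree $k=2$ is clear since $(cu_j)(cv_j)=c^2 u_j v_j$. The zero-product property (d) with $k=2$ Hadamard factors $\bxi_{j1}=u_j$, $\bxi_{j2}=v_j$ is the elementary fact that $u_jv_j=0 \iff u_j=0$ or $v_j=0$. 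Smoothness and surjectivity of $\K$ are obvious. Next, Assumption~\ref{ass:svf}: part (a) is precisely Lemma~\ref{lemma:S_hpp-def}, which establishes $\min_{\bu\odot\bv=\bbeta}\Rxi(\bu,\bv)=2\norm{\bbeta}_1=\Rbeta(\bbeta)$ for all $\bbeta$, so $(\Rxi,\K)$ is an SVF for $\Rbeta$. Part (b), the majorization $\Rxi(\bu,\bv)\geq \Rbeta(\K(\bu,\bv))$, follows from the same AM-GM estimate applied pointwise without imposing the constraint: $u_j^2+v_j^2\geq 2|u_jv_j|$, summing over $j$ gives $\norm{\bu}_2^2+\norm{\bv}_2^2\geq 2\norm{\bu\odot\bv}_1=\Rbeta(\K(\bu,\bv))$. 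Part (c), upper hemicontinuity of the solution map, is supplied by Lemma~\ref{lemma:uhc-solution-map}: here $L=d$, $k=2$, and the necessary optimality conditions derived after Lemma~\ref{lemma:S_hpp-def} read $u_j^2=v_j^2=T(\beta_j)$ with $T(\beta_j)=|\beta_j|$, which is continuous, vanishes exactly at $\beta_j=0$, and is strictly increasing in $|\beta_j|=\norm{\beta_j}_2$, so the lemma applies.

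Having verified both assumptions, I would conclude by applying Theorem~\ref{theorem-general} to obtain that the two optimization problems are equivalent per Definition~\ref{def:equivalence}; in particular part c) of that definition gives that if $(\hbpsi,\hbu,\hbv)$ is a local minimizer of $\Q$, then $(\hbpsi,\K(\hbu,\hbv))=(\hbpsi,\hbu\odot\hbv)$ is a local minimizer of $\P$ with matching objective value, which is the claimed reconstruction. There is essentially no obstacle here beyond the bookkeeping: the substantive content (the AM-GM computation of the SVF and the stability of the solution map) has already been carried out in Lemmas~\ref{lemma:S_hpp-def} and \ref{lemma:uhc-solution-map}, so the only thing to be careful about is matching each clause of Assumptions~\ref{ass-parametrization-map} and \ref{ass:svf} to the corresponding elementary property of the HPP, particularly noting that the constraint $\beta_j\neq 0$ is what rescues the rank condition in part (b) of Assumption~\ref{ass-parametrization-map}.
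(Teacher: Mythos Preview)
Your proposal is correct and follows essentially the same approach as the paper's own proof, which simply states that the result follows by combining Lemma~\ref{lemma:S_hpp-def} with Theorem~\ref{theorem-general} and observing that $\K$ conforms to the requirements on a parametrization map. You have merely made explicit the verification of each clause of Assumptions~\ref{ass-parametrization-map} and~\ref{ass:svf} that the paper leaves implicit, including the invocation of Lemma~\ref{lemma:uhc-solution-map} for upper hemicontinuity, which the paper mentions just before the corollary but does not restate in the proof itself.
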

Inspired by the implicitly regularized elastic net \citep{zhao2022high}, we propose the following explicitly regularized differentiable variant:
\begin{remark}{(Smooth elastic net formulation via HPP)}
We can readily extend the HPP optimization transfer for $\ell_1$ regularization to the Elastic Net penalty $\mathcal{R}_{\bm{\beta}}(\bm{\beta})\triangleq (1-\alpha)\norm{\bm{\beta}}_1 + \alpha \norm{\bm{\beta}}_2^2$,  $\alpha\in(0,1)$, as introduced in \citet{zou2005regularization}. To do this, we merely redefine $\tilde{\L}(\bpsi,\bbeta) \triangleq \L(\bpsi,\bbeta)+\lambda\alpha \norm{\bbeta}_2^2$ and $\lambda\Tilde{\mathcal{R}}_{\bbeta}(\bbeta) \triangleq \lambda(1-\alpha)\norm{\bbeta}_1$. Applying the HPP, we minimize $$\tilde{\L}(\bpsi, \bm{u} \odot \bm{v})+\frac{\lambda}{2}\Tilde{\mathcal{R}}_{\bxi}(\bm{u},\bm{v}) = \L(\bpsi,\bm{u}\odot\bm{v})+\lambda\alpha \norm{\bm{u}\odot\bm{v}}_2^2 + \frac{\lambda (1-\alpha)}{2}(\norm{\bm{u}}_2^2 + \norm{\bm{v}}_2^2)$$ over $(\bpsi, \bm{u},\bm{v})$ instead of $(\bpsi,\bbeta)$.
Solutions to the Elastic Net-regularized problem can be reconstructed after optimization of the smooth surrogate as $(\hat{\bpsi},\hat{\bbeta})=(\hat{\bpsi},\hat{\bm{u}}\odot\hat{\bm{v}})$.
\end{remark}

\begin{remark}{(Smooth formulations of SCAD/MCP/TL1 via HPP)}
As in the previous remark, we can construct smooth surrogates for objectives with non-convex SCAD \citep{fan2001variable}, MCP \citep{zhang2010nearly}, or transformed $\ell_1$ \citep{zhang2018minimization} regularization via the HPP and a corresponding surrogate regularizer. The surrogate regularizer is constructed by replacing all terms involving $|\beta_j|$ with the left-hand side of the inequality $(u_j^2+v_j^2)/2 \geq |\beta_j|$. A detailed derivation can be found in Appendix~\ref{app:hpp-tl1-scad-mcp}.    
\end{remark}
Further, for general smoothly parametrized objectives $\P(\bpsi,\K(\bu,\bv))$ using the HPP, one requirement for equivalence to $\P(\bpsi,\bbeta)$ by Lemma~\ref{lemma:pk_to_p} is that the parametrization $\K$ is locally open at the local minimizers $(\hat{\bu},\hat{\bv})\in\Rd \times \Rd$ of $\P(\bpsi,\K(\bu,\bv))$. In fact, the Hadamard product of two $d$-dimensional real-valued vectors is a (uniformly) open map everywhere \citep{balcerzak2016certain}, meaning that under $\K$, the image of any open ball around $(\bu,\bv)$ in $\Rd \times \Rd$ contains an open ball around $\bu\odot\bv$ in $\Rd$ for all $(\bu,\bv)\in\Rd \times \Rd$ (cf.~Def.~\ref{def:openness}).
\subsection{The Hadamard Difference Parametrization}\label{sec:hdp-subsec}

An alternative smooth optimization transfer approach for $\ell_1$ regularization is based on the Hadamard difference parametrization (HDP), which is defined as 
\begin{equation}\label{eq:hdp-def}
\mathcal{K}:\mathbb{R}^{d}\times\mathbb{R}^{d}\to\mathbb{R}^{d},\, (\bm{\gamma}, \bm{\delta})\mapsto \bm{\gamma} \odot \bm{\gamma} - \bm{\delta} \odot \bm{\delta} = \bm{\beta} \,.
\end{equation}
This variant of the HPP is often employed in studying the implicit regularization effects of GD in linear neural networks, facilitating an easier theoretical analysis \citep{woodworth2020kernel, vaskevicius2019implicit, vivien2022label}.
In our framework, applying the HDP and imposing explicit surrogate $\ell_2$ regularization on $\bm{\gamma}$ and $\bm{\delta}$ corresponds to $\ell_1$ regularization of $\bbeta$, albeit without the scaling factor of $2$ present in the HPP. To establish a connection between the HDP and the HPP, let $\bm{\gamma} = \frac{\bm{u}+\bm{v}}{2}$ and $\bm{\delta} = \frac{\bm{v}-\bm{u}}{2}$, or equivalently, $\bm{u} = \bm{\gamma} - \bm{\delta}$ and $\bm{v} = \bm{\gamma} + \bm{\delta}$. Then it is easy to confirm that $\bm{\gamma} \odot \bm{\gamma} - \bm{\delta} \odot \bm{\delta} = \bm{u} \odot \bm{v}$. 

\noindent For an objective $\P$ with $\ell_1$ regularization of $\bbeta$, we can construct a smooth surrogate $\Q$ applying the HDP and surrogate $\ell_2$ regularization. Both objectives can be written as
{\small
\begin{align}
&{\textstyle \mathcal{P}(\bm{\psi}, \bm{\beta}) = \mathcal{L}(\bm{\psi}, \bm{\beta}) + \lambda \|\bm{\beta}\|_{1} =  
\mathcal{L}(\bm{\psi}, \bm{\beta})
+ \lambda \sum_{j=1}^{d}\left|\beta_{j}\right| }\,, \label{eq:optim_q_diff} \\
&{\textstyle \mathcal{Q}(\bm{\psi}, \bm{\gamma}, \bm{\delta}) =  \mathcal{L}(\bm{\psi},\bm{\gamma}^2 - \bm{\delta}^2) + \lambda (\Vert \bm{\gamma} \Vert_{2}^{2} + \Vert \bm{\delta} \Vert_{2}^{2}) =  
\mathcal{L}(\bm{\psi},\bm{\gamma}^2 - \bm{\delta}^2)
+\lambda \sum_{j=1}^{d}\big(\gamma_{j}^{2}+\delta_{j}^{2}\big) }.
\label{eq:optim_g_diff} 
\end{align}
}

To show equivalence of the smooth surrogate, we first establish that $\K$ and $\Rxi(\bm{\gamma},\bm{\delta})=\Vert \bm{\gamma} \Vert_{2}^{2} + \Vert \bm{\delta} \Vert_{2}^{2}$ together define an SVF for $\Rbeta(\bbeta)=\norm{\bbeta}_1$.
\vspace{0.1cm}
\begin{lemma} \label{lemma:S_hdp-def} Given the parametrization map $\K(\bm{\gamma},\bm{\delta})=\bm{\gamma}\odot\bm{\gamma}-\bm{\delta}\odot\bm{\delta}$ as defined in Equation (\ref{eq:hdp-def}), the minimum of the surrogate $\ell_2$ regularization $\Rxi(\bm{\gamma},\bm{\delta}))=\norm{\bm{\gamma}}_2^2+\norm{\bm{\delta}}_2^2$ subject to $\bm{\gamma}\odot\bm{\gamma}-\bm{\delta}\odot\bm{\delta}=\bbeta$ constitutes an SVF for $\Rbeta(\bbeta)=\norm{\bbeta}_1$ in (\ref{eq:optim_q_diff}):
 \begin{equation} \label{eq:S_hdp-def}
    \min_{\bm{\gamma}, \bm{\delta}: \bm{\gamma}^2 - \bm{\delta}^2 = \bm{\beta}} \Vert \bm{\gamma} \Vert_{2}^{2} + \Vert \bm{\delta} \Vert_{2}^{2} \; = \Vert \bm{\beta} \Vert_{1} \quad \forall \bbeta \in \Rd\,.
\end{equation}   
\end{lemma}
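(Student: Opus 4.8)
The plan is to reuse the HPP variational form from Lemma~\ref{lemma:S_hpp-def} via the linear change of variables that links the HDP and the HPP. First I would set $\bm{u}=\bm{\gamma}-\bm{\delta}$ and $\bm{v}=\bm{\gamma}+\bm{\delta}$, which, as noted just before the statement, is a bijection of $\Rd\times\Rd$ onto itself with inverse $\bm{\gamma}=\tfrac12(\bm{u}+\bm{v})$, $\bm{\delta}=\tfrac12(\bm{v}-\bm{u})$, and under which $\bm{\gamma}\odot\bm{\gamma}-\bm{\delta}\odot\bm{\delta}=\bm{u}\odot\bm{v}$. Hence the feasible set $\{(\bm{\gamma},\bm{\delta}):\bm{\gamma}^2-\bm{\delta}^2=\bbeta\}$ is carried bijectively onto $\{(\bm{u},\bm{v}):\bm{u}\odot\bm{v}=\bbeta\}$.

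Next I would track the surrogate objective through this substitution: by the parallelogram law, $\norm{\bm{\gamma}}_2^2+\norm{\bm{\delta}}_2^2=\tfrac14\big(\norm{\bm{u}+\bm{v}}_2^2+\norm{\bm{v}-\bm{u}}_2^2\big)=\tfrac12\big(\norm{\bm{u}}_2^2+\norm{\bm{v}}_2^2\big)$. Therefore
\[
\min_{\bm{\gamma},\bm{\delta}:\,\bm{\gamma}^2-\bm{\delta}^2=\bbeta}\ \norm{\bm{\gamma}}_2^2+\norm{\bm{\delta}}_2^2
=\tfrac12\min_{\bm{u},\bm{v}:\,\bm{u}\odot\bm{v}=\bbeta}\ \norm{\bm{u}}_2^2+\norm{\bm{v}}_2^2
=\tfrac12\cdot 2\norm{\bbeta}_1=\norm{\bbeta}_1,
\]
the middle equality being Lemma~\ref{lemma:S_hpp-def}; since the latter minimum is attained, so is the former, so this is a bona fide SVF in the sense of Definition~\ref{def:smooth-varform}.

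As a self-contained cross-check — and to exhibit the minimizers, which matters for the upper hemicontinuity argument in the style of Lemma~\ref{lemma:uhc-solution-map} — note that the problem also separates coordinatewise: writing $a=\gamma_j^2\ge0$ and $b=\delta_j^2\ge0$, one minimizes $a+b$ subject to $a-b=\beta_j$, i.e.\ $a=b+\beta_j$ with $b\ge\max(0,-\beta_j)$, whose minimum value $2\max(0,-\beta_j)+\beta_j=|\beta_j|$ is attained at $\gamma_j^2=\max(\beta_j,0)$, $\delta_j^2=\max(-\beta_j,0)$; summing over $j=1,\dots,d$ recovers $\norm{\bbeta}_1$. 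I do not expect a genuine obstacle here: per coordinate this is an elementary linear program. The only two points needing a line of care are (i) verifying that $(\bm{\gamma},\bm{\delta})\leftrightarrow(\bm{u},\bm{v})$ is a bijection, so that the two constrained minimizations are truly equivalent, and (ii) noting that the value is an attained minimum rather than a mere infimum, so that Assumption~\ref{ass:svf}a) holds; both are immediate.
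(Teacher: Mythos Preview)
Your proposal is correct. Your primary route --- reducing to the HPP via the bijective change of variables $\bm{u}=\bm{\gamma}-\bm{\delta}$, $\bm{v}=\bm{\gamma}+\bm{\delta}$ and the parallelogram identity --- is a genuinely different argument from the paper's. The paper proves the lemma directly, coordinatewise, by a case split on the sign of $\beta_j$: for $\beta_j>0$ the constraint $\gamma_j^2=\beta_j+\delta_j^2$ forces $|\gamma_j|\ge\sqrt{\beta_j}$, and $(\pm\sqrt{\beta_j},0)$ is optimal; the cases $\beta_j<0$ and $\beta_j=0$ are symmetric. Your cross-check, recasting each coordinate as the linear program $\min\{a+b: a-b=\beta_j,\ a,b\ge0\}$, is essentially the paper's argument stated more compactly and without the explicit sign split. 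What each route buys: your reduction to Lemma~\ref{lemma:S_hpp-def} is elegant and makes the HPP--HDP relation the paper discusses in the surrounding text do actual work, at the price of depending on that earlier lemma; the paper's direct argument (and your cross-check) is self-contained and, importantly, exhibits the minimizers $(\hat\gamma_j,\hat\delta_j)$ in a form that feeds straight into the upper hemicontinuity check needed for Assumption~\ref{ass:svf}.
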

\vspace{-0.2cm}
For each $\beta_j$ in $\bbeta$, either $\gamma_j^2$ or $\delta_j^2$ must equal zero at the minimum, depending on the sign of $\beta_j$, with the square of the non-zero parameter being equal to $|\beta_j|$. The minimizers $(\hat{\gamma}_j,\hat{\delta}_j)$ hence form a continuous set-valued function of $\beta_j$. 
\vspace{0.1cm}
\begin{corollary}\label{cor:hdp}
Optimization of $\P$ (\ref{eq:optim_q_diff}) is equivalent to optimization of the smooth surrogate $\Q$ (\ref{eq:optim_g_diff}), and solutions to the $\P$ can be constructed as $(\hbpsi,\hat{\bm{\beta}})=(\hbpsi,\hat{\bm{\gamma}}^2 -\hat{\bm{\delta}}^2)$.
\end{corollary}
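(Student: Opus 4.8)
\textbf{Proof proposal for Corollary~\ref{cor:hdp}.}

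The plan is to mirror exactly the route used for the HPP in Corollary~\ref{cor:hpp}: invoke Theorem~\ref{theorem-general} with the HDP map $\K(\bm{\gamma},\bm{\delta})=\bm{\gamma}^2-\bm{\delta}^2$ and surrogate penalty $\Rxi(\bm{\gamma},\bm{\delta})=\norm{\bm{\gamma}}_2^2+\norm{\bm{\delta}}_2^2$, after checking that Assumptions~\ref{ass-parametrization-map} and~\ref{ass:svf} hold. Assumption~\ref{ass:svf}a) (the SVF identity) is precisely Lemma~\ref{lemma:S_hdp-def}, so that part is already done. For Assumption~\ref{ass:svf}b) (majorization), I would note that for each coordinate $\gamma_j^2+\delta_j^2 \geq |\gamma_j^2-\delta_j^2| = |\beta_j|$ by the reverse triangle inequality, and summing over $j$ gives $\Rxi(\bm{\gamma},\bm{\delta}) \geq \norm{\K(\bm{\gamma},\bm{\delta})}_1 = \Rbeta(\K(\bm{\gamma},\bm{\delta}))$ for all $(\bm{\gamma},\bm{\delta})$. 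For Assumption~\ref{ass:svf}c) (upper hemicontinuity of the solution map), I would appeal to Lemma~\ref{lemma:uhc-solution-map}: here $L=d$, $k=2$, the two ``factors'' in group $j$ are the scalars $\gamma_j$ and $\delta_j$, and the necessary optimality conditions from Lemma~\ref{lemma:S_hdp-def} read $\gamma_j^2=\delta_j^2=0$ when $\beta_j=0$, $\gamma_j^2=|\beta_j|$, $\delta_j^2=0$ when $\beta_j>0$, and $\gamma_j^2=0$, $\delta_j^2=|\beta_j|$ when $\beta_j<0$. This does not literally say $\Vert\gamma_j\Vert_2^2=\Vert\delta_j\Vert_2^2$, so I would either state the mild adaptation of Lemma~\ref{lemma:uhc-solution-map} with $T(\beta_j)=|\beta_j|$ controlling the nonzero branch (and the zero branch collapsing continuously as $|\beta_j|\to 0$), or verify u.h.c.\ directly: given $\varepsilon>0$ pick $\delta>0$ with $\sqrt{|\beta_j|+\delta}-\sqrt{|\beta_j|}<\varepsilon$ (and $\sqrt{\delta}<\varepsilon$ when $\beta_j=0$), which works coordinatewise and hence globally.

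The remaining ingredient is Assumption~\ref{ass-parametrization-map} on the parametrization map. Block-separability (a) is immediate with $L=d$ and each $\K_j(\gamma_j,\delta_j)=\gamma_j^2-\delta_j^2$; the Jacobian at a point mapping to $\beta_j\neq 0$ is $(2\gamma_j,-2\delta_j)$, which is nonzero since $\beta_j\neq0$ forces $(\gamma_j,\delta_j)\neq(0,0)$, giving regularity (b); positive homogeneity of degree $k=2$ (c) holds since $\K_j(c\gamma_j,c\delta_j)=c^2\K_j(\gamma_j,\delta_j)$; and the zero-product property (d) holds because $\gamma_j^2-\delta_j^2=0$ together with the minimal-norm characterization forces $\gamma_j=\delta_j=0$ — though strictly, (d) as stated (``$\bbeta_j=\bm 0$ iff some factor is $\bm 0$'') needs a small comment, since $\gamma_j^2=\delta_j^2\neq 0$ also gives $\beta_j=0$; I would remark that on the solution set this degenerate case is excluded, which is all Lemma~\ref{lemma:q_to_p} actually uses. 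Once all assumptions are in place, Theorem~\ref{theorem-general} directly yields equivalence per Definition~\ref{def:equivalence}, and the reconstruction formula $(\hbpsi,\hat{\bbeta})=(\hbpsi,\hat{\bm{\gamma}}^2-\hat{\bm{\delta}}^2)$ is just $\hbbeta=\K(\hat{\bm{\gamma}},\hat{\bm{\delta}})$ from condition c) of Definition~\ref{def:equivalence}.

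The main obstacle, I expect, is not any deep step but the bookkeeping around Assumption~\ref{ass-parametrization-map}d) and the hypotheses of Lemma~\ref{lemma:uhc-solution-map}: the HDP's zero set is genuinely larger than just ``some factor vanishes'' (the light-cone $\gamma_j=\pm\delta_j$ all maps to $0$), so I would want to be careful that the results quoted from Section~\ref{sec:theory} are applied only where their hypotheses truly hold — essentially, restricting attention to the minimal-norm fiber points, where the degeneracy disappears. Since an analogous argument was already spelled out for the HPP, I anticipate the authors state this corollary's proof is omitted ``for brevity,'' as announced after Corollary~\ref{cor:hpp}, with at most a one-line pointer to Lemma~\ref{lemma:S_hdp-def} and Theorem~\ref{theorem-general}.
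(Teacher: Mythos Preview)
Your proposal is correct and follows the paper's approach exactly; as you predicted, the paper omits this proof entirely (per the remark after Corollary~\ref{cor:hpp}), the template being Lemma~\ref{lemma:S_hdp-def} plus Theorem~\ref{theorem-general}. Your flagged concerns about Assumption~\ref{ass-parametrization-map}d) and the hypotheses of Lemma~\ref{lemma:uhc-solution-map} not literally fitting the HDP are legitimate technical subtleties the paper glosses over --- it simply asserts continuity of the solution map in the sentence following Lemma~\ref{lemma:S_hdp-def} rather than invoking Lemma~\ref{lemma:uhc-solution-map} --- and your proposed direct verification is the right fix.
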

For the preservation of local minima in general, potentially unregularized objectives $\P(\bpsi,\bbeta)$ parametrized using the HDP, Lemma~\ref{lemma:pk_to_p} again requires local openness of the HDP at local minimizers $(\hat{\gamma},\hat{\delta})$ of $\P(\bpsi,\K(\bm{\gamma},\bm{\delta}))$. Recall that rotating a point $(u,v)\in\mathbb{R}^2$ by $45^{\circ}$ clockwise about the origin defines the transformation $(\gamma,\delta)\triangleq(\frac{u+v}{\sqrt{2}},\frac{v-u}{\sqrt{2}})$. Evaluating the HDP at the rotated point yields $\gamma^2-\delta^2=2uv$, showing that the HDP constitutes a rotation of the HPP scaled by a factor of $2$, with both actions preserving the openness. Details on the difference between HPP and HDP can be found in Appendix~\ref{app:hpp-vs-hdp}.
\begin{figure}[b!] 
    \centering
    \subfloat[HPP]{%
        \includegraphics[height=0.13\textheight]{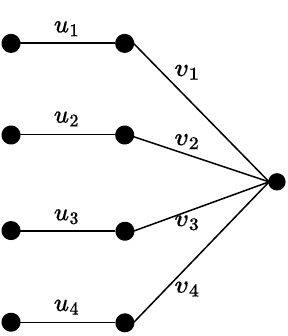}%
        \label{fig:hpp-depth1}%
        }%
    \hspace{0.12\textwidth}
    \subfloat[HDP]{%
        \includegraphics[height=0.13\textheight]{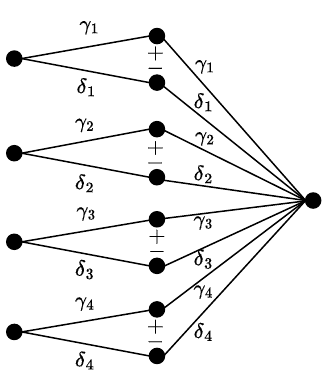}%
        \label{fig:hdp-depth1}%
        }%
    \hspace{0.12\textwidth}
    \subfloat[GHPP~(\ref{sec:group-lasso-vanilla})]{%
        \raisebox{0.15cm}{\includegraphics[height=0.12\textheight]{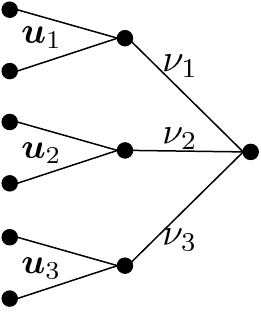}}%
        \label{fig:groupnetwork1}%
        }%
    \caption[Deep Diagonal Linear Network Structures]{\small Diagonal linear networks corresponding to different parametrizations of a linear predictor: \textbf{a)} HPP ($\ell_1$),\, \textbf{b)} HDP ($\ell_1$),\, \textbf{c)} Structure-inducing parametrization (GHPP for $\ell_{2,1}$, cf.~\ref{sec:group-lasso-vanilla}) with grouping layer. Left nodes are inputs, and the right-most node the output.}
    \label{fig:hpp-hdp-dep1}
    \vspace{-0.3cm}
\end{figure}
%
%
%
\subsection{Geometric Intuition and Induced Network Architectures}
\textbf{Correspondence to diagonal linear networks}\,
The HPP $\bm{\beta} = \bm{u} \odot \bm{v}$ and HDP $\bm{\beta} = \bm{\gamma} \odot \bm{\gamma} - \bm{\delta} \odot \bm{\delta}$ parametrizations reveal close connections to diagonal linear networks and linear regression \citep{woodworth2020kernel, tibs2021}. Assume a simple linear model $f(\bm{x}_i | \bm{\beta}) = \bm{x}_{i}^{\top}\bm{\beta}$ with no additional parameters $\bpsi$. This can be represented as a simplistic neural network with $d$ input neurons, a single output neuron, and $d$ edges for the weights $\beta_j$ with linear activations and no additional bias terms. Applying the respective parametrization, e.g., using the HPP, $f(\bm{x}_i | \bu,\bv)=\bm{x}_i^{\top}(\bu \odot \bv)$, which represents a diagonal linear network with linear activations, a single hidden layer, and no bias terms. By Corollary~\ref{cor:hpp}, this is equivalent to a linear regression with $\ell_1$ regularization of $\bbeta$ under $\ell_2$ regularization of the weights. Figure~\ref{fig:hpp-hdp-dep1} shows two such linear networks, with the diagonal network corresponding to the HPP on the left, and the diagonal network corresponding to the HDP in the middle. This correspondence, however, is not limited to overparametrized linear models. For example, we can ``stretch out" a network architecture by inserting additional diagonal layers at certain locations, promoting localized sparse representations. More generally, we can overparametrize any layer of a DNN by replacing its weights $\bbeta$ by $\K(\bxi)$. Imposing suitable surrogate regularization on the weights $\bxi$ then induces sparsity in the original layer in its parametrization. \vspace{0.0cm}
\begin{figure}[b!] 
    \centering
    \subfloat[Optim. Transfer (HPP)]{%
        \raisebox{0.27cm}{\includegraphics[height=0.16\textheight]{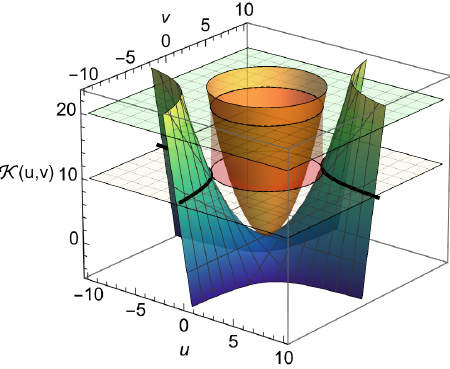}}%
        \label{fig:had-geom-intu-viewpoint}%
        }%
    \hspace{0.01\textwidth}
    \subfloat[\centering Majorization of $\ell_1$ via surrogate $\ell_2$ penalty (HPP)]{%
        \raisebox{0.0cm}{\includegraphics[height=0.2\textheight]{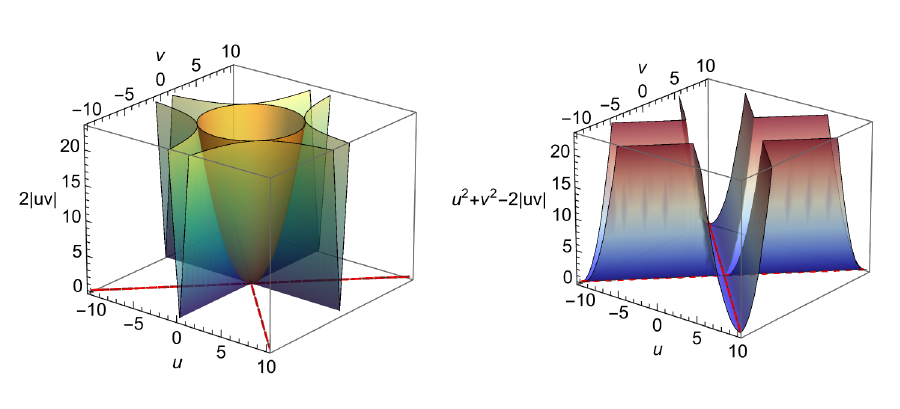}}%
        \label{fig:had-geom-majorant-math}%
        }%
    \caption{\small \textbf{a)}\, Illustration of $\ell_1$ optimization transfer using HPP and surrogate $\ell_2$ regularization on a scalar $\beta_j=10$ (lower plane). The hyperbolic paraboloid (blue/green) shows the parametrization $\K(u_j,v_j)=u_jv_j$ and the elliptic paraboloid (orange) the $\ell_2$ surrogate. 
    The fiber $\K^{-1}(10)$ defines a hyperbola (black), whose two vertices 
    achieve minimal a min. $\ell_2$ penalty of $2|10|=20$ (upper plane) over $\K^{-1}(10)$. \textbf{b)}\, Majorization of 
    overparametrized $\ell_1$ term $2|u_jv_j|$ (blue/green) through $\ell_2$ penalty. 
    The $\ell_2$ 
    (orange) is tightly ``hugged" by the $\ell_1$ term. The difference of both regularizers attains zero at the red perpendicular continuous lines intersecting at the origin, illustrating the l.h.c. of the SVF solution map. The lines are defined by $|u_j|=|v_j|$. It can be seen that for any $(u_j,v_j)$ with $|u_j|=|v_j|$, and any $\beta_j'=u_j' v_j'$ close to $\beta_j=u_jv_j$, there is $(u_j',v_j')$ near $(u_j,v_j)$ with $|u_j'|=|v_j'|$. This is because the solutions continuously increase in $|\beta|$ as we move away from $0$.
    }
    \label{fig:geom-intu-combined}
    \vspace{-0.3cm}
\end{figure}
%

\noindent \textbf{Geometric intuition}\, 
A graphical analysis of our optimization transfer approach for $\ell_1$ regularization using the HPP provides additional insights into the underlying geometry. Figure~\ref{fig:had-geom-intu-viewpoint} illustrates why the minimum of the surrogate $\ell_2$ penalty $\mathcal{R}_{\bm{\xi}}(\bm{u}, \bm{v})=\Vert \bm{u} \Vert_{2}^{2} + \Vert \bm{v} \Vert_{2}^{2}$ over $\{(\bm{u}, \bm{v}):\bm{u} \odot \bm{v} = \bm{\beta}\}$ equals $2\Vert\bm{\beta}\Vert_{1}$. 
The setting in Figure~\ref{fig:had-geom-intu-viewpoint} shows the HPP $\mathcal{K}(\bm{u}, \bm{v})=\bm{u}\odot\bm{v}$ (blue/green), the majorizing surrogate $\ell_2$ penalty (orange), as well as the feasible set defined by the fiber $\K^{-1}(\bbeta)$ (black hyperbola). In this example, we set $d=1$ and fix $\beta=10$. Alternatively, we can interpret the plot as an illustration for only a single entry $\beta_j=10$, $j\in [d]$. The shape of $\K(u_j,v_j)=u_jv_j$ is a hyperbolic paraboloid, and the fiber $\K^{-1}(\beta_j)\subset\mathbb{R}\times \mathbb{R}$ for $\beta_j=10$ is obtained by intersecting $\mathcal{K}$ with the horizontal plane $\beta_j=10$. The geometric shape of the resulting set is a rectangular hyperbola, forming an unbounded feasible set 
in the constrained minimization problem stated in Lemma~\ref{lemma:S_hpp-def}. 
Since the surrogate regularizer $\mathcal{R}_{\bm{\xi}}(\bm{u}, \bm{v})$ defines an elliptic paraboloid for each $j\in[d]$, the constrained minimization problem is solved by searching the entry-wise feasible sets with respect to the smallest surrogate penalty. %
For a hyperbola defined by $u_j(v_j)=\frac{\beta_j}{v_j}, \beta_j>0$, 
this is achieved at the vertices $(\sqrt{\beta_j}, \sqrt{\beta_j})$ and $(-\sqrt{\beta_j}, -\sqrt{\beta_j})$, with a minimal distance of $\sqrt{2\beta_j}$. Similarly, for 
$\beta_j<0$, minimal distance of $\sqrt{2|\beta_j|}$ is attained at $(-\sqrt{|\beta_j|}, \sqrt{|\beta_j|})$ and $(\sqrt{|\beta_j|}, -\sqrt{|\beta_j|})$. For $\beta_j=0$, the fiber $\K^{-1}(0)$ contains all points on the coordinate axes, with 0 minimal distance at $(0,0)$. The majorization property is visualized in Figure~\ref{fig:had-geom-majorant-math}. We further demonstrate how the proposed optimization transfer to an equivalent smooth surrogate transforms the loss landscape using a simple toy objective in Figure~\ref{fig:hpp-tri-1} and a more detailed visualization in Figure~\ref{fig:optim-landscapes-params}. 
%
%
%

\section{Hadamard Group Lasso for Structured Sparsity}\label{sec:had-group-lasso}

In many applications, we have additional \textit{a priori} structural information on the parameters, e.g., that certain gene pathways can only be jointly relevant or that a set of dummy-coded features representing a categorical variable should either be included in the model or fully selected out. To obtain structured sparsity, we make use of parametrization maps ``tying together" groups of parameters through shared factors, with the property that adding smooth $\ell_2$ regularization on the surrogate parameters induces an $\ell_{2,1}$ (group lasso) penalty $2 \sum_{j=1}^{L} \Vert \bm{\beta}_{j} \Vert_{2}$ in the base parametrization $\bbeta$. This is fundamentally different from the structured HPP approach discussed in \citet{hoff2017lasso}, where structure is induced through dependent Gaussian priors on $\bu$ and $\bv$ instead of mixed-norm regularization.
\vspace{-0.0cm}

\noindent \textbf{Set-up for structured sparsity regularization} \,
Let $[d]
$ denote the index set corresponding to the entries of $\bm{\beta}\in\mathbb{R}^{d}$, and define $\mathcal{G}_j \subseteq [d]$ to be the subsets of indices corresponding to groups $j=1,\ldots,L$. Let $\mathcal{G} = \{\mathcal{G}_1, \ldots, \mathcal{G}_{L}\}$ form a partition of $[d]$, i.e. $\cup_{j=1}^{L} \mathcal{G}_j= \{1,\ldots,d\}$ and $\mathcal{G}_i \cap \mathcal{G}_j = \emptyset$ for $i \neq j$, so that $|\mathcal{G}_1|+\ldots+|\mathcal{G}_L|=d$. The parameter vector $\bm{\beta}$ contains the group-wise vectors $\bbeta_j$, i.e., $\bm{\beta} = \left(\bm{\beta}_{1},\ldots, \bm{\beta}_{L}\right)^{\top}$, where $\bbeta_j=(\beta_{ji})_{i \in \Gj}\in \mathbb{R}^{|\mathcal{G}_j|}$ for $j \in [L]$.
\vspace{-0.2cm}
\subsection{Group Hadamard Product Parametrization}\label{sec:group-lasso-vanilla}

For the group Hadamard product parametrization (GHPP), we again use the parametrization structure $\bm{\beta} = \bm{u} \odot \bm{v}$, but now with the elements of $\bm{v}$ (and thus also $\bbeta$) constrained to reflect the group membership. Noting that $\Rd = \mathbb{R}^{|\mathcal{G}_1|+\ldots+|\mathcal{G}_L|}$, the Hadamard factors are
{\small
\begin{align}\label{eq:hadgroup-reparam-defs}
\bm{u} = \left( \bm{u}_{1},\ldots,\bm{u}_{L} \right)^{\top} \in \mathbb{R}^{d},\,\,\bm{v} = \left( \bm{v}_{1},\ldots,\bm{v}_{L} \right)^{\top} =
  \begin{pmatrix} 
    \nu_1 \mathds{1}_{|\mathcal{G}_1|} \\ \vdots \\ \nu_{L} \mathds{1}_{|\mathcal{G}_L|}
  \end{pmatrix}  \in \Rd\,\;. \nonumber
  \end{align}
Then we have $\bbeta_j=\bu_j \odot \bv_j = \nu_j \cdot (u_{j1},\ldots,u_{j |\mathcal{G}_j|})^{\top} \in \mathbb{R}^{|\Gj|}$ for $j \in [L]$.
}
Note that in this parametrization, 
the second Hadamard factor $\bm{v}$ is a $d$-dimensional vector containing values $\nu_{1}, \ldots, \nu_{L}$, where each $\nu_{j}$ is repeated $|\mathcal{G}_j|$ times in $\bv$. Comparing this to the Hadamard factor $\bm{v} = (v_1, \ldots, v_d)^{\top}$ in the HPP, the $d$ distinct entries of $\bm{v}$ are replaced by entries that are constant within groups $\mathcal{G}_{1}, \ldots, \mathcal{G}_L$, thereby ``tying'' together the parameters in each $\Gj$. The first Hadamard factor $\bu$ remains unconstrained as in the HPP, i.e., $\bm{u}=(u_1,\ldots, u_d)^{\top}\in\Rd$.
Letting $\bnu \in \mathbb{R}^{L}$ denote $(\nu_{1}, \ldots, \nu_{L})^{\top}$, the GHPP map is defined as:
{\small
\begin{equation}\label{eq:ghpp-reparam-mapping}
\mathcal{K}: \mathbb{R}^d \times \mathbb{R}^{L} \to \mathbb{R}^{d}, (\bm{u},\bm{\nu}) \mapsto  \begin{pmatrix} 
    \bm{u}_{1} \\ \vdots \\ \bm{u}_{L}
  \end{pmatrix}  \odot \begin{pmatrix} 
    \nu_1 \mathds{1}_{|\mathcal{G}_1|} \\ \vdots \\ \nu_{L} \mathds{1}_{|\mathcal{G}_L|}
  \end{pmatrix}  = \bm{u} \odotg \bm{\nu} = \begin{pmatrix}
  \bm{\beta}_{1} \\ \vdots \\ \bm{\beta}_{L}
\end{pmatrix}  = \bm{\beta} \,,
\end{equation}
}%
\noindent where we use the notation $\bm{u} \odotg \bm{\nu} \triangleq (\bm{u}_{j} \nu_{j})_{j \in \mathcal{G}}$. Given an objective $\P$ with non-smooth regularization $\Rbeta(\bbeta)=2\norm{\bbeta}_{2,1}$, defining the surrogate regularization as $\Rxi(\bu,\bnu)\triangleq\norm{\bu}_2^2+\norm{\bnu}_2^2$ provides a smooth optimization transfer $(\Rbeta,\K,\Rxi)$, from which we construct the smooth surrogate $\Q$:
{\small
\begin{align}
&{\textstyle \mathcal{P}(\bm{\psi}, \bm{\beta}) = \mathcal{L}(\bm{\psi}, \bm{\beta}) + 2 \lambda \Vert \bm{\beta}\Vert_{2,1} =  
\mathcal{L}(\bm{\psi}, \bm{\beta})
+ 2 \lambda \sum_{j=1}^{L} \Vert \bm{\beta}_{j} \Vert_{2} } \, , \label{eq:optim_q_group} \\
&{\textstyle \mathcal{Q}(\bm{\psi}, \bm{u}, \bm{v}) =  \mathcal{L}(\bm{\psi},\bm{u} \odotg \bm{v}) + \lambda (\Vert \bm{u} \Vert_{2}^{2} + \Vert \bm{\nu} \Vert_{2}^{2}) = \mathcal{L}(\bm{\psi},\bm{u} \odotg \bm{v})
+\lambda \sum_{j=1}^{L}\big(\Vert \bm{u}_{j} \Vert_{2}^{2} +\nu_{j}^{2}\big) }\,. \label{eq:optim_g_group}
\end{align}
}

\noindent The functions $\K$ and $\Rxi$ are chosen so that we obtain an SVF for $\Rbeta$:
\begin{lemma} \label{lemma:S_ghpp-def} Given the parametrization map $\K(\bu,\bnu)=\bu \odotg \bnu$, the minimum of the surrogate $\ell_2$ regularization $\Rxi(\bu,\bnu)=\norm{\bu}_2^2+\norm{\bnu}_2^2$ subject to $\K(\bu,\bnu)=\bbeta$ constitutes an SVF for $\Rbeta(\bbeta)$ in (\ref{eq:optim_q_group}) and is
 \begin{equation} \label{eq:S_ghpp-def}
    \min_{\bm{u}_{j}, \nu_j: \bm{\beta}_{j} = \nu_j \bm{u}_{j} } {\textstyle  \sum_{j=1}^{L} \Vert \bm{u}_{j} \Vert_{2}^{2} +\nu_{j}^{2}  = 2 \sum_{j=1}^{L} \Vert \bm{\beta}_{j} \Vert_{2} } \quad \forall \bbeta \in \mathbb{R}^{d}\,,
\end{equation}   
\end{lemma}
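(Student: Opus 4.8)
The plan is to exploit the block-separable structure of both the constraint and the surrogate penalty, reducing the minimization to $L$ independent scalar problems, one per group, and then to dispatch each of these with a single application of the AM--GM inequality — exactly as in the proof of Lemma~\ref{lemma:S_hpp-def}, but now with the grouping built in.

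First I would observe that the constraint $\K(\bu,\bnu)=\bbeta$ is precisely the system $\bbeta_j=\nu_j\bu_j$ for $j\in[L]$, in which the variable blocks $(\bu_j,\nu_j)$ for distinct $j$ do not interact; likewise $\Rxi(\bu,\bnu)=\sum_{j=1}^{L}\big(\norm{\bu_j}_2^2+\nu_j^2\big)$ splits as a sum over groups. Hence $\min_{\K(\bu,\bnu)=\bbeta}\Rxi(\bu,\bnu)=\sum_{j=1}^{L} m_j(\bbeta_j)$, where $m_j(\bbeta_j)\triangleq\min_{\nu_j\bu_j=\bbeta_j}\big(\norm{\bu_j}_2^2+\nu_j^2\big)$, so it suffices to show $m_j(\bbeta_j)=2\norm{\bbeta_j}_2$ for every $j$.

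For the per-group problem I would distinguish two cases. If $\bbeta_j=\bm{0}$, the choice $(\bu_j,\nu_j)=(\bm{0},0)$ is feasible and attains the value $0=2\norm{\bm{0}}_2$, which is the minimum since the objective is nonnegative. If $\bbeta_j\neq\bm{0}$, any feasible point has $\nu_j\neq 0$, which forces $\bu_j=\bbeta_j/\nu_j$ and hence $\norm{\bu_j}_2^2=\norm{\bbeta_j}_2^2/\nu_j^2$; the problem collapses to minimizing $g(t)\triangleq\norm{\bbeta_j}_2^2/t+t$ over $t\triangleq\nu_j^2\in(0,\infty)$. Applying AM--GM to the two nonnegative numbers $\norm{\bbeta_j}_2^2/t$ and $t$ gives $g(t)\ge 2\sqrt{\norm{\bbeta_j}_2^2}=2\norm{\bbeta_j}_2$, with equality iff $t=\norm{\bbeta_j}_2$, and this value is attained, so $m_j(\bbeta_j)=2\norm{\bbeta_j}_2$. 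Summing over $j$ yields the claimed SVF identity. I would additionally record the optimality conditions $\norm{\bu_j}_2^2=\nu_j^2=\norm{\bbeta_j}_2$ for each $j$ (also trivially valid when $\bbeta_j=\bm{0}$), which match the hypotheses of Lemma~\ref{lemma:uhc-solution-map} with $k=2$ factors $\bxi_{j1}=\bu_j$, $\bxi_{j2}=\nu_j$ and $T(\bbeta_j)=\norm{\bbeta_j}_2$ — a continuous, nonnegative function vanishing exactly at $\bbeta_j=\bm{0}$ and strictly increasing in $\norm{\bbeta_j}_2$ — thereby supplying the upper hemicontinuity of the solution map needed for Assumption~\ref{ass:svf}.

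I do not anticipate a genuine obstacle here; the only points requiring a moment's care are (i) justifying the interchange of the minimization with the sum over groups, which is immediate from the disjointness of the variable blocks and the separability of $\Rxi$, and (ii) in the case $\bbeta_j\neq\bm{0}$, noting that $\nu_j$ cannot vanish so the substitution $\bu_j=\bbeta_j/\nu_j$ is legitimate and the infimum over $t>0$ is actually attained rather than merely approached.
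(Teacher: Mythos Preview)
Your proposal is correct and follows essentially the same route as the paper: exploit block-separability to reduce to per-group problems, then settle each by a single AM--GM step, reading off the optimality conditions $\norm{\bu_j}_2^2=\nu_j^2=\norm{\bbeta_j}_2$. The only cosmetic difference is that the paper applies AM--GM directly to $\norm{\bu_j}_2^2$ and $\nu_j^2$ and uses absolute homogeneity of the norm to rewrite $|\nu_j|\,\norm{\bu_j}_2=\norm{\nu_j\bu_j}_2=\norm{\bbeta_j}_2$, which avoids your case split on $\bbeta_j=\bm{0}$ and the substitution $\bu_j=\bbeta_j/\nu_j$; your explicit link to Lemma~\ref{lemma:uhc-solution-map} is a nice addition that the paper defers to the surrounding text.
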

According to Lemma~\ref{lemma:lhc-solution-map}, the optimality conditions $\Vert \bm{u}_{j} \Vert_{2}^{2} = \nu_{j}^{2} = \Vert \bm{\beta}_{j} \Vert_{2}$ of the AM-GM inequality in the proof allow us to derive the minimizers $(\hat{\bu}_j,\hat{\nu}_j)$ as a lower hemicontinuous function of $\bbeta_j$ for all $\bbeta_j\in\mathbb{R}^{|\G_j|}$:
\vspace{-0.1cm}
\begin{equation}\label{eq:minimizers-ghpp}
\arg\hspace{-0.05cm}\min_{\hspace{-0.4cm}\substack{(\bu_j,\nu_j):\\ \bm{\beta}_{j} = \nu_j \bm{u}_{j}}}
   \Vert \bm{u}_{j} \Vert_{2}^{2} + \nu_{j}^{2} = 
    \begin{cases}
       \pm \left(\bbeta_j/\sqrt{\Vert \bm{\beta}_{j} \Vert_{2}}, \sqrt{\Vert \bm{\beta}_{j} \Vert_{2}}\right) 
       , & \hspace{-0.25cm}\text{$\norm{\bbeta_j}_2 > 0$} \\
       (\bm{0},0), & \hspace{-0.25cm}\text{$\norm{\bbeta_j}_2=0$} \\
    \end{cases}
\end{equation}
The tuple $(\Rbeta,\K,\Rxi)$ is thus a valid optimization transfer for $\ell_{2,1}$ group sparsity:
\begin{corollary}\label{cor:ghpp}
The optimization of $\P$ in (\ref{eq:optim_q_group}) is equivalent to the optimization of the smooth surrogate $\Q$ in (\ref{eq:optim_g_group}) by Definition~\ref{def:equivalence}, and solutions to the base problem can be obtained as $(\hbpsi,\hat{\bbeta})=(\hbpsi,\hat{\bu}\odotg\hat{\bnu})$. 
\end{corollary}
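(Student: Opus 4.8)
The plan is to obtain Corollary~\ref{cor:ghpp} as a direct instantiation of Theorem~\ref{theorem-general} for the GHPP tuple $(\Rbeta,\K,\Rxi)$ with $\K(\bu,\bnu)=\bu\odotg\bnu$ and $\Rxi(\bu,\bnu)=\norm{\bu}_2^2+\norm{\bnu}_2^2$. Thus it suffices to check that $\K$ satisfies Assumption~\ref{ass-parametrization-map} and that $(\K,\Rxi)$ satisfies Assumption~\ref{ass:svf}; the reconstruction formula $(\hbpsi,\hbbeta)=(\hbpsi,\hat\bu\odotg\hat\bnu)$ then falls out of part (c) of Definition~\ref{def:equivalence}, since every local minimizer of $\Q$ maps to one of $\P$ via $\K$.

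First I would verify Assumption~\ref{ass-parametrization-map}. Smoothness and surjectivity of $\K$ are immediate, since $\K$ is polynomial and any $\bbeta$ is realized by $\nu_j=1,\ \bu_j=\bbeta_j$. Block-separability holds with the partition $\mathcal{G}$, as $\K_j(\bu_j,\nu_j)=\nu_j\bu_j$ depends only on the disjoint block $\bxi_j=(\bu_j,\nu_j)$. For the regular-value condition, the Jacobian of $\K_j$ at $(\bu_j,\nu_j)$ is the $|\Gj|\times(|\Gj|+1)$ matrix $[\,\nu_j I_{|\Gj|}\mid\bu_j\,]$; at a nonzero value $\bbeta_j=\nu_j\bu_j\neq\bm{0}$ every preimage necessarily has $\nu_j\neq0$, so this matrix has full row rank $|\Gj|$. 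Positive homogeneity holds with depth $k=2$ because $\K_j(c\bu_j,c\nu_j)=c^2\nu_j\bu_j$, and the zero-product property is clear since $\nu_j\bu_j=\bm{0}$ iff $\nu_j=0$ or $\bu_j=\bm{0}$ (the scalar $\nu_j$ entering as the Hadamard factor $\nu_j\mathds{1}_{|\Gj|}$).

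Next I would verify Assumption~\ref{ass:svf}. Part (a) is exactly Lemma~\ref{lemma:S_ghpp-def}, which gives $\min_{\K(\bu,\bnu)=\bbeta}\Rxi(\bu,\bnu)=2\sum_{j}\norm{\bbeta_j}_2=\Rbeta(\bbeta)$. Part (b), majorization, is the same group-wise AM-GM bound as in that lemma but applied \emph{without} imposing the constraint minimally: for any $(\bu,\bnu)$ one has $\norm{\bu_j}_2^2+\nu_j^2\ge 2\norm{\bu_j}_2|\nu_j|=2\norm{\nu_j\bu_j}_2$, and summing over $j$ yields $\Rxi(\bu,\bnu)\ge 2\norm{\K(\bu,\bnu)}_{2,1}=\Rbeta(\K(\bu,\bnu))$. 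Part (c), upper hemicontinuity of the solution map $\hbxi(\bbeta)=\argmin_{\K(\bxi)=\bbeta}\Rxi(\bxi)$, follows from Lemma~\ref{lemma:uhc-solution-map} with $L$ groups, $k=2$ factors per group ($\bu_j$ and $\nu_j$), and optimality conditions $\norm{\bu_j}_2^2=\nu_j^2=T(\bbeta_j)$ where $T(\bbeta_j)=\norm{\bbeta_j}_2$ is continuous, vanishes iff $\bbeta_j=\bm{0}$, and is strictly increasing in $\norm{\bbeta_j}_2$; this is also evident from the closed form in~(\ref{eq:minimizers-ghpp}).

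Having checked both assumptions, Theorem~\ref{theorem-general} delivers equivalence of $\P$ in~(\ref{eq:optim_q_group}) and $\Q$ in~(\ref{eq:optim_g_group}) in the sense of Definition~\ref{def:equivalence}, and in particular the reconstruction $\hbbeta=\K(\hat\bu,\hat\bnu)=\hat\bu\odotg\hat\bnu$. I expect the only point requiring a moment's care to be the regular-value check: since the parametrization ties each group together through the single coordinate $\nu_j$, one must observe that the rank condition is demanded only at nonzero $\bbeta_j$, where $\nu_j\neq0$ is forced, so the sharing does not break full row rank. Everything else is a mechanical transcription of the general framework, which is presumably why the authors note the proof may be omitted.
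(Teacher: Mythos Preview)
Your proposal is correct and follows exactly the approach the paper takes: the analogous Corollary~\ref{cor:hpp} is proved by ``combining Lemma~\ref{lemma:S_hpp-def} with Theorem~\ref{theorem-general} and observing that $\K$ conforms to the requirements on a parametrization map,'' and the paper explicitly omits the proof for later corollaries. Your write-up simply makes the assumption checks explicit (and correctly), including the regular-value observation and the u.h.c.\ verification via Lemma~\ref{lemma:uhc-solution-map}, which the paper also invokes immediately before stating the corollary.
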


\noindent Note that there are only two equivalent minimizers for each $\bbeta_j$ given $\norm{\bbeta_j}_2>0$, as the sign of $\hat{\nu}_j$ uniquely determines the sign of all $\hat{u}_{ji}$ in $\hat{\bu}_j$ for $i \in \G_j$. Thus, for each minimizer $(\hat{\bpsi},\hat{\bbeta})$ of $\P$, there will be $2^{s}$ equivalent corresponding solutions $(\hat{\bpsi},\hat{\bu},\hat{\bnu})$ to $\Q$, where $s$ is the number of groups $\G_j$ with $\Vert\hat{\bbeta}_j\Vert_2>0$.

\noindent For linear predictors, structure-inducing overparametrization was also studied in \citet{tibs2021} and \citet{dai2021representation}, however, without proving the matching local minima property or going beyond linearity. %
Similar to the HPP approach to smooth $\ell_1$ regularization, the GHPP corresponds to a particular network structure with linear activations and a grouping layer when applied to a linear model, as shown in Figure~\ref{fig:groupnetwork1}. The $\ell_2$ regularized network then corresponds to a linear model with an $\ell_{2,1}$ penalty.

\noindent Considering the preservation of local minima in a general objective $\P(\bpsi,\bbeta)$ under smooth parametrization of $\bbeta$ using the GHPP, local openness of $\K$ at the local solutions to $\P(\bpsi,\K(\bxi))$ is a crucial requirement for Lemma~\ref{lemma:pk_to_p}. This assumption, however, is not straightforward for the GHPP. While the Banach open mapping theorem states that every continuous linear surjection between Banach spaces is globally open, it is known that this openness principle can not be extended to \textit{bilinear} continuous surjections \citep{horowitz1975elementary, balcerzak2013bilinear}. A widely used counterexample of a bilinear continuous surjection that is not open everywhere is given, e.g., in \citet[][Chapter 2, Exercise 11]{rudin1991functional}, corresponding to the GHPP for $L=1$ and $d=2$. Therefore, as opposed to the HPP, whose global openness is discussed at the end of Section~\ref{sec:hpp-vanilla}, the GHPP is not globally open in general. Whereas local minimality would be preserved under the $\text{HPP}$ for $\textit{any}$ function due to its global openness, this does not hold for the $\text{GHPP}$, depending on which points qualify as local minimizers. The points of openness for the $\text{GHPP}$ are as follows:
\vspace{-0.0cm}

\begin{lemma}[Local openness of the GHPP]\label{lemma:openness-ghpp}
The parametrization map defined by $\K:\Rd \times \mathbb{R}^{L},\,(\bu,\bnu)\mapsto \bu \odotg \bnu$ is locally open at $(\bu,\bnu)$, with $\bu=(\bu_1,\ldots,\bu_L)^{\top}$ and $\bnu=(\nu_1,\ldots,\nu_L)^{\top}$,  if the $(\bu_j,\nu_j)$ are such that $\nu_j=0$ implies $\norm{\bu_j}_2=0$ for all $j\in[L]$.
\end{lemma}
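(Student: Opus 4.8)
The plan is to reduce the statement to a local, block-separable question and then handle each group $j$ with $\nu_j \neq 0$ and each group with $\nu_j = 0$ (hence $\bu_j = \bm 0$) separately, using the block-separable structure of $\K$ from Assumption~\ref{ass-parametrization-map}~a). Since $\K(\bu,\bnu) = (\bu_1\nu_1,\ldots,\bu_L\nu_L)$ is a Cartesian product of the block maps $\K_j(\bu_j,\nu_j) = \bu_j\nu_j$, and a finite product of locally open maps is locally open at the product point (openness of a product of open sets, plus the fact that any ball around $(\bu,\bnu)$ contains a product of blockwise balls), it suffices to show each $\K_j$ is locally open at $(\bu_j,\nu_j)$. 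So fix a group $j$ and drop the subscript: I must show $\K_j:\mathbb{R}^{|\G_j|}\times\mathbb{R}\to\mathbb{R}^{|\G_j|}$, $(\bu,\nu)\mapsto \nu\bu$, is locally open at $(\bu,\nu)$ whenever $\nu = 0 \Rightarrow \bu = \bm 0$.

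\textbf{Case $\nu \neq 0$.} Here $(\bu,\nu)$ is a regular point of $\K_j$: the Jacobian with respect to $\bu$ alone is $\nu I_{|\G_j|}$, which is invertible, so $\bbeta = \nu\bu$ is a regular value in the sense of Assumption~\ref{ass-parametrization-map}~b), and by the submersion property (constant-rank / implicit function theorem) $\K_j$ is an open map on a neighborhood of $(\bu,\nu)$. Concretely, for small $\varepsilon$, the restriction $\bu' \mapsto \nu\bu'$ on $\mathcal{B}(\bu,\varepsilon)\times\{\nu\}$ already has image an open ball of radius $|\nu|\varepsilon$ around $\nu\bu$, which contains $\mathcal{B}(\nu\bu,\delta)$ for $\delta = |\nu|\varepsilon/2$; this $\delta$-ball is then contained in $\K_j(\mathcal{B}((\bu,\nu),\varepsilon))$.

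\textbf{Case $\nu = 0$, hence $\bu = \bm 0$.} Now I must show $\K_j$ is locally open at the origin $(\bm 0, 0)$, i.e. for every $\varepsilon>0$ there is $\delta>0$ with $\mathcal{B}(\bm 0,\delta)\subseteq \K_j(\mathcal{B}((\bm 0,0),\varepsilon))$. Given a target $\bbeta$ with $\norm{\bbeta}_2 = t$ small, I want to write $\bbeta = \nu\bu$ with $\norm{\bu}_2^2 + \nu^2$ small — this is exactly the AM--GM minimal-norm decomposition underlying Lemma~\ref{lemma:S_ghpp-def}: take $\nu = \sqrt{t}$ and $\bu = \bbeta/\sqrt{t}$, so that $\norm{\bu}_2^2 + \nu^2 = t + t = 2t$. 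Thus $(\bu,\nu)$ lies in $\mathcal{B}((\bm 0,0),\varepsilon)$ as soon as $2t < \varepsilon^2$, i.e. choosing $\delta = \varepsilon^2/2$ works. (The case $\bbeta = \bm 0$ is trivial, mapped from $(\bm 0,0)$ itself.) This shows the origin is a point of local openness.

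\textbf{Main obstacle.} The routine part is the regular-point case; the subtle point is making the product-of-blocks reduction fully rigorous — specifically that local openness at $(\bu,\nu)$ for the product map follows from local openness at each $(\bu_j,\nu_j)$, given that the Euclidean ball $\mathcal{B}((\bu,\nu),\varepsilon)$ is not a product but contains $\prod_j \mathcal{B}((\bu_j,\nu_j),\varepsilon/\sqrt{L})$, and dually $\prod_j \mathcal{B}(\bbeta_j,\delta_j)$ contains $\mathcal{B}(\bbeta,\min_j\delta_j/\sqrt{L})$ — plus checking that the mixed case (some blocks with $\nu_j\neq 0$, some at the origin) is covered, which it is, since the blockwise $\delta_j$ are chosen independently. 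I would state this product lemma explicitly (or cite Assumption~\ref{ass-parametrization-map}~a) together with a short remark) and then the two cases above finish the proof.
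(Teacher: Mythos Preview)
Your proposal is correct and follows the same two-step architecture as the paper (reduce to blocks via the Cartesian structure, then assemble), but your treatment of the individual blocks $\K_j(\bu_j,\nu_j)=\nu_j\bu_j$ is genuinely more elementary than the paper's. The paper invokes an external characterization of local openness for bilinear matrix multiplication maps (Prop.~1 in \citet{nouiehed2022learning}) and checks its rank conditions in each of the three regimes $(\bm 0,0)$, $(\bu_j,\nu_j)$ with $\nu_j\neq 0$, and $(\bu_j,0)$ with $\bu_j\neq\bm 0$. You instead give direct constructions: for $\nu_j\neq 0$ you use that the slice $\bu'\mapsto\nu_j\bu'$ is a linear isomorphism, and at the origin you use the AM--GM minimal-norm preimage $(\bbeta/\sqrt{t},\sqrt{t})$ with $t=\norm{\bbeta}_2$, which lands in the $\varepsilon$-ball once $2t<\varepsilon^2$. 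This is self-contained and arguably cleaner; what the paper's route buys in exchange is that the cited proposition is an if-and-only-if, so it simultaneously certifies that $\K_j$ is \emph{not} locally open at $(\bu_j,0)$ with $\bu_j\neq\bm 0$, i.e., that the hypothesis of the lemma is sharp. Your product-of-blocks assembly (with $\varepsilon_j=\varepsilon/\sqrt{L}$ and $\tilde\delta=\min_j\delta_j$) is exactly what the paper does in its second step, so there is no gap there.
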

To establish matching local minima, we thus need to ensure that local solutions $(\hat{\bu},\hat{\bnu})$ to $\P(\bpsi,\K(\bu,\bnu))$ are indeed points of openness. Note that all minimizers of $\Q(\bu,\bnu)$ are of the form stated in (\ref{eq:minimizers-ghpp}), i.e., either $(\hat{\bu}_j,\hat{\nu}_j)=(\bm{0},0)$, or the $(\hat{\bu}_j,\hat{\nu}_j)$ are such that $\norm{\hat{\bu}_j}_2>0$ and $|\hat{\nu}_j|>0$ for all $j\in[L]$. Then, by Lemma~\ref{lemma:openness-ghpp}, $\K(\bu,\bnu)$ is locally open at all potential local minimizers of $\Q$. 

%
%
\vspace{-0.2cm}
\subsection{Adjusting the GHPP for Variable Group Sizes}
The well-known group lasso, initially proposed by \citet{yuan2006model}, 
does not employ plain $\ell_{2,1}$ regularization, but includes additional weights accounting for the variable group sizes $|\mathcal{G}_j|,j\in [L]$. With this modification, we can define the non-smooth penalty as $\mathcal{R}_{\bm{\beta}}(\bm{\beta})\triangleq\sum_{j=1}^{L} \sqrt{|\mathcal{G}_j|}\norm{\bm{\beta}_j}_2$. Interestingly, this regularizer can be obtained as a simple extension to the previous approach by introducing a scaling factor in the surrogate penalty. The derivation is deferred to Appendix~\ref{app:adj-ghpp}. This results in the following smooth objective $\Q$ and corresponding equivalent group lasso regularized objective $\P$:
\begin{align}
&{\textstyle \mathcal{P}(\bm{\psi}, \bm{\beta}) = \mathcal{L}(\bm{\psi}, \bm{\beta}) + 2 \lambda  \sum_{j=1}^{L} \sqrt{|\mathcal{G}_j|} \Vert \bm{\beta}_{j} \Vert_{2} }\,, \label{eq:p-adj-ghpp}\\
&{\textstyle \mathcal{Q}(\bm{\psi}, \bm{u}, \bm{\nu}) =  \mathcal{L}(\bm{\psi},\bm{u} \odotg \bm{\nu}) + \lambda \sum_{j=1}^{L}\big(\Vert \bm{u}_{j} \Vert_{2}^{2} + |\mathcal{G}_j| \nu_{j}^{2}\big) }. \label{eq:q-adj-ghpp} 
\end{align}
%
%
%
%
\section{Going Deeper: Non-Convex Regularization with Hadamard Product Parametrizations of Depth \textit{k}} \label{sec:hppk}

The Hadamard product parametrizations factorizing $\bbeta$ using two factors $\bu,\bv$ can be naturally extended to deeper factorizations of depth $k>2,\,k \in \mathbb{N}$. For a suitable surrogate penalty, these parametrizations induce (a restricted class) of non-convex $\ell_q$  and $\ell_{p,q}$ regularizers for $0<q<1$  and $0< q < p\leq 2$ in the base parametrization $\bbeta$. 


\subsection{Hadamard Product Parametrization of Depth \textit{k}} \label{sec:had-prod-chain}

First, consider a multilinear extension of the bilinear HPP termed the $\text{HPP}_k$,
\begin{equation}\label{eq:hppk}
   {\textstyle \mathcal{K}:\prod_{l=1}^{k} \mathbb{R}^{d} \to \mathbb{R}^{d}, (\bm{u}_1,\ldots,\bm{u}_{k}) \mapsto \bigodot_{l=1}^{k} \bm{u}_{l} = \bm{\beta} \,,}
\end{equation}
where $\prod_{l=1}^{k} \mathbb{R}^{d}$ denotes the $k$th Cartesian power of $\mathbb{R}^{d}$ and $k>2$. The depth two case recovers the simple HPP (\ref{eq:hpp-def}). Each $\beta_j,\, j\in [d]$, is parametrized as the product $\prod_{l=1}^{k} u_{jl}$, where each factor $u_{jl}$ is taken from a different $\bu_l$. Further, we define $\Rxi(\bu_1,\ldots,\bu_k)\triangleq\sum_{l=1}^{k}\norm{\bu_l}_2^2$. Then, minimizing $\Rxi(\bu_1,\ldots,\bu_k)$ subject to the constraint imposed by the parametrization map $\K$ yields an SVF for non-convex $\ell_{q}$ regularization with $q=2/k$:
\begin{lemma}\label{lemma:S_hppk-def} Given the parametrization map $\K(\bu_1,\ldots,\bu_k)=\bu_{l}^{\odot k}$, the minimum surrogate $\ell_2$ regularizer $\Rxi(\bu_1,\ldots,\bu_k)=\sum_{l=1}^{k}\norm{\bu_l}_2^2$ subject to $\K(\bu_1,\ldots,\bu_k)=\bbeta$ constitutes an SVF for $\Rbeta(\bbeta)\triangleq k \norm{\bbeta}_{2/k}^{2/k}$ and is given by
$\min_{\bm{u}_{l}: \bbeta=\bm{u}_{l}^{\odot  k}} {\textstyle \sum_{l=1}^{k}} \Vert \bm{u}_l \Vert_{2}^{2} = k \Vert \bm{\beta} \Vert_{2/k}^{2/k} \,\, \forall \bbeta \in \mathbb{R}^{d}$.
\end{lemma}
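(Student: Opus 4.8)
The plan is to reduce the constrained vector minimization to $d$ independent scalar subproblems and apply the AM--GM inequality coordinate-wise, in direct analogy with the proof of Lemma~\ref{lemma:S_hpp-def}. Since $\bigodot_{l=1}^{k}\bm{u}_l$ is defined entrywise, the constraint $\bigodot_{l=1}^{k}\bm{u}_l=\bm{\beta}$ decouples into $\prod_{l=1}^{k} u_{jl}=\beta_j$ for $j\in[d]$, and the objective $\sum_{l=1}^{k}\norm{\bm{u}_l}_2^2=\sum_{j=1}^{d}\sum_{l=1}^{k} u_{jl}^2$ is separable across $j$. Hence it suffices to determine, for fixed $\beta_j\in\mathbb{R}$, the minimum of $\sum_{l=1}^{k} u_{jl}^2$ over all $(u_{j1},\ldots,u_{jk})$ with $\prod_{l=1}^{k} u_{jl}=\beta_j$.

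For this scalar subproblem I would apply AM--GM to the $k$ non-negative numbers $u_{j1}^2,\ldots,u_{jk}^2$, giving
\[
\frac{1}{k}\sum_{l=1}^{k} u_{jl}^2 \;\geq\; \Big(\prod_{l=1}^{k} u_{jl}^2\Big)^{1/k} = \Big(\big(\prod_{l=1}^{k} u_{jl}\big)^{2}\Big)^{1/k} = |\beta_j|^{2/k},
\]
so that $\sum_{l=1}^{k} u_{jl}^2 \geq k\,|\beta_j|^{2/k}$, with equality if and only if $u_{j1}^2=\cdots=u_{jk}^2$, i.e. $u_{jl}^2=|\beta_j|^{2/k}$ for every $l$. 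Note that squaring the product removes any sign ambiguity, which is why the bound involves $|\beta_j|$. This lower bound is attained: if $\beta_j=0$ take $u_{jl}=0$ for all $l$; if $\beta_j\neq 0$ take $|u_{jl}|=|\beta_j|^{1/k}$ and choose the signs so that $\prod_l u_{jl}$ has the sign of $\beta_j$ (e.g. all factors positive when $\beta_j>0$, exactly one negative when $\beta_j<0$). Thus the scalar minimum equals $k\,|\beta_j|^{2/k}$.

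Summing the scalar minima over $j=1,\ldots,d$ then yields
\[
\min_{\bm{u}_l:\,\bm{\beta}=\bigodot_{l=1}^{k}\bm{u}_l}\ \sum_{l=1}^{k}\norm{\bm{u}_l}_2^2 \;=\; k\sum_{j=1}^{d}|\beta_j|^{2/k}\;=\;k\,\norm{\bm{\beta}}_{2/k}^{2/k}\qquad\forall\,\bm{\beta}\in\mathbb{R}^d,
\]
which is exactly $\Rbeta(\bbeta)$; since the value is attained (not merely an infimum) for every $\bm{\beta}$, $\Rxi$ and $\K$ indeed define a smooth variational form for $\Rbeta$ in the sense of Definition~\ref{def:smooth-varform}. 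I would also record that the equality conditions $u_{j1}^2=\cdots=u_{jk}^2=|\beta_j|^{2/k}$ are of the form required by Lemma~\ref{lemma:uhc-solution-map} with $T(\beta_j)=|\beta_j|^{2/k}$ --- continuous, vanishing exactly at $\beta_j=0$, and strictly increasing in $|\beta_j|$ --- which will later supply upper hemicontinuity of the solution map. There is no real obstacle here: the only mild point requiring care is verifying that the AM--GM equality case is feasible under the product constraint (the sign bookkeeping above), so that the lower bound is genuinely achieved; everything else is a transcription of the depth-two argument.
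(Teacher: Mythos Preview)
Your proof is correct and follows essentially the same approach as the paper's: coordinate-wise reduction followed by AM--GM on the squared factors $u_{j1}^2,\ldots,u_{jk}^2$, with the same equality condition $u_{jl}^2=|\beta_j|^{2/k}$. You are slightly more explicit than the paper in verifying attainability via sign bookkeeping and in linking the optimality conditions to Lemma~\ref{lemma:uhc-solution-map}, but the argument is otherwise identical.
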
 
A visualization of the $\text{HPP}_k$ for $k=3$ can be found in Appendix~\ref{app:geom-intuition-3d}, illustrating the shape of the fibers of $\K$ and the majorization of the non-smooth $\ell_{2/3}$ penalty by the smooth surrogate $\ell_2$ penalty. 
Given an objective $\P$ with smooth $\L$ and non-convex $\ell_{2/k}$ regularization $\Rbeta(\bbeta)$, applying the optimization transfer defined by $(\Rbeta,\K,\Rxi)$ yields the corresponding $\Q$:
\begin{align}
    &{\textstyle \mathcal{P}(\bm{\psi}, \bm{\beta}) = \mathcal{L}(\bm{\psi}, \bm{\beta}) + \lambda k \|\bm{\beta}\|_{2/k}^{2/k} =  
    \mathcal{L}(\bm{\psi}, \bm{\beta})
    + \lambda k \sum_{j=1}^{d}\left|\beta_{j}\right|^{2/k} }\,, \label{eq:optim_q_HPC} \\
%
&\hspace{-0.2cm}{\textstyle \mathcal{Q}(\bm{\psi}, \bm{u}_1,\ldots,\bm{u}_{k}) =  \mathcal{L}\big(\bm{\psi}, \bm{u}_{l}^{\odot  k}\big) + \lambda \sum_{l=1}^{k} \Vert \bm{u}_{l} \Vert_{2}^{2} = 
\mathcal{L}\big(\bm{\psi}, \bm{u}_{l}^{\odot  k}\big)
+ \lambda \sum_{j=1}^{d} \sum_{l=1}^{k} u_{jl}^{2} }\,. \label{eq:optim_g_HPC}
\end{align}
%
The optimality conditions of the AM-GM inequality ensure lower hemicontinuity of the solution map in Lemma~\ref{lemma:S_hppk-def} by Lemma~\ref{lemma:lhc-solution-map}, implying equivalence of $\P$ and $\Q$: 
\begin{corollary}\label{cor:hppk}
The optimization of $\P$ (\ref{eq:optim_q_HPC}) is equivalent to optimization of the smooth surrogate $\Q$ (\ref{eq:optim_g_HPC}) by Def.~\ref{def:equivalence}, and solutions to $\P$ can be constructed as $(\hbpsi,\hat{\bm{\beta}}) = (\hbpsi,\hat{\bm{u}}_{l}^{\odot  k})$. 
\end{corollary}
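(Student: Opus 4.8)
The plan is to apply Theorem~\ref{theorem-general}, so that the proof reduces to checking that the parametrization $\K(\bu_1,\ldots,\bu_k)=\bigodot_{l=1}^{k}\bu_l$ together with the surrogate penalty $\Rxi(\bu_1,\ldots,\bu_k)=\sum_{l=1}^{k}\norm{\bu_l}_2^2$ fulfils Assumptions~\ref{ass-parametrization-map} and~\ref{ass:svf}; the equivalence and the reconstruction formula then follow exactly as in Corollary~\ref{cor:hpp}. For Assumption~\ref{ass-parametrization-map} I would use the trivial partition $L=d$ into singletons, so $\K$ is the Cartesian product of the scalar maps $\K_j(u_{j1},\ldots,u_{jk})=\prod_{l=1}^{k}u_{jl}$, which gives block-separability (a). For the regular-value condition (b), note that for $\beta_j\neq 0$ every point of $\K_j^{-1}(\beta_j)$ has all $u_{jl}\neq 0$, and the $1\times k$ Jacobian of $\K_j$ has entries $\partial\K_j/\partial u_{jm}=\prod_{l\neq m}u_{jl}=\beta_j/u_{jm}\neq 0$, hence rank $1=|\mathcal{G}_j|$. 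Positive homogeneity of degree $k>1$ in (c) is immediate from $\prod_l(c\,u_{jl})=c^k\prod_l u_{jl}$, and the zero-product property (d) holds because a product of real numbers vanishes iff one of its factors does, with the $k$ Hadamard factors $\bu_l$ serving as the $\bxi_{jl}$.

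For Assumption~\ref{ass:svf}, part (a) is precisely Lemma~\ref{lemma:S_hppk-def}, which identifies the constrained minimum of $\Rxi$ over the fiber $\K^{-1}(\bbeta)$ with $\Rbeta(\bbeta)=k\norm{\bbeta}_{2/k}^{2/k}$. Part (b), the majorization $\Rxi(\bu_1,\ldots,\bu_k)\geq\Rbeta(\K(\bu_1,\ldots,\bu_k))$, follows by applying AM-GM termwise to the $k$ nonnegative numbers $u_{j1}^2,\ldots,u_{jk}^2$, giving $\sum_{l=1}^{k}u_{jl}^2\geq k\big(\prod_{l=1}^{k}u_{jl}^2\big)^{1/k}=k\,|\beta_j|^{2/k}$, and then summing over $j\in[d]$. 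Part (c), upper hemicontinuity of the solution map $\hbxi(\bbeta)=\argmin_{\bxi:\K(\bxi)=\bbeta}\Rxi(\bxi)$, I would get from Lemma~\ref{lemma:uhc-solution-map}: here $L=d$, each group vector is the scalar $\beta_j$, each factor $\hbxi_{jl}$ is the scalar $u_{jl}$, and the AM-GM equality conditions read $u_{j1}^2=\cdots=u_{jk}^2=T(\beta_j)$ with $T(\beta_j)=|\beta_j|^{2/k}$, a continuous function that vanishes iff $\beta_j=0$ and is strictly increasing in $|\beta_j|=\norm{\bbeta_j}_2$ — exactly the hypotheses of that lemma.

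With both assumptions verified, Theorem~\ref{theorem-general} shows that the problems (\ref{eq:optim_q_HPC}) and (\ref{eq:optim_g_HPC}) are equivalent according to Definition~\ref{def:equivalence}, and the discussion following that theorem provides the surjection from local minimizers of $\Q$ onto local minimizers of $\P$ via $\K$; concretely, $(\hbpsi,\hbbeta)=(\hbpsi,\bigodot_{l=1}^{k}\hat{\bm{u}}_l)=(\hbpsi,\hat{\bm{u}}_l^{\odot k})$. I do not expect a genuine obstacle, since the statement is a direct corollary; the only point demanding a little care is the regular-value clause~(b), where one must observe that regularity is required only at nonzero $\bbeta_j$ (at $\bbeta_j=\bm 0$ the Jacobian legitimately drops rank), so that the fibers over nonzero values are smooth manifolds consistent with the rescaling description of clause~(c).
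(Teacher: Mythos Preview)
Your proposal is correct and follows exactly the approach the paper intends: combine Lemma~\ref{lemma:S_hppk-def} with Theorem~\ref{theorem-general} after checking Assumptions~\ref{ass-parametrization-map} and~\ref{ass:svf}, invoking Lemma~\ref{lemma:uhc-solution-map} for the upper hemicontinuity via the AM-GM optimality conditions $u_{j1}^2=\cdots=u_{jk}^2=|\beta_j|^{2/k}$. The paper in fact omits the proof entirely (after Corollary~\ref{cor:hpp} it states ``For brevity, this proof will be omitted from now''), so your explicit verification of clauses (a)--(d) and the majorization via AM-GM simply fleshes out what the paper leaves implicit.
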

This result is also shown in \citet{hoff2017lasso}, but its application there is limited to simple linear models that can be optimized with alternating ridge regression.
Extending the HPP, the $\text{HPP}_k$ also corresponds to a horizontally ``stretched" diagonal network structure with increased depth, as shown in Figure~\ref{fig:diagnetwork}. The relation of parametrization and corresponding network structure for linear models was also studied in simpler settings and without proof of our general result \citep{tibs2021,dai2021representation}. Besides these works in explicit regularization, a strand of literature in DL uses diagonal linear networks to study the implicit regularization of GD \citep{gunasekar2018implicit,gissin2019implicit,woodworth2020kernel,moroshko2020implicit, li2021implicit}.
\begin{figure}[t!] 
    \centering
    \subfloat[$\text{HPP}_k$]{%
        \includegraphics[height=0.108\textheight]{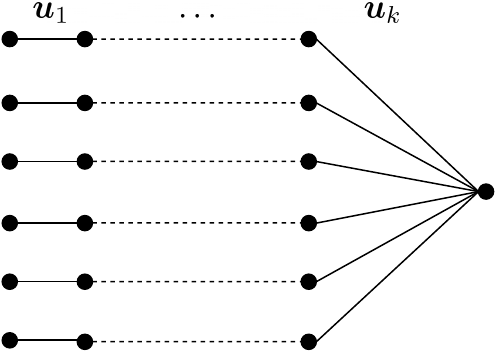}%
        \label{fig:diagnetwork}%
        }%
    \hspace{0.07\textwidth}
    \subfloat[$\text{GHPP}_k$]{%
        \includegraphics[height=0.108\textheight]{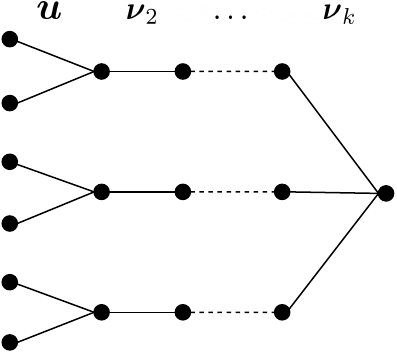}%
        \label{fig:deepgroup}%
        }%
    \hspace{0.07\textwidth}
    \subfloat[$\text{GHPP}_{k_1,k_1+k_2}$]{%
        \raisebox{0\textwidth}{\includegraphics[height=0.108\textheight]{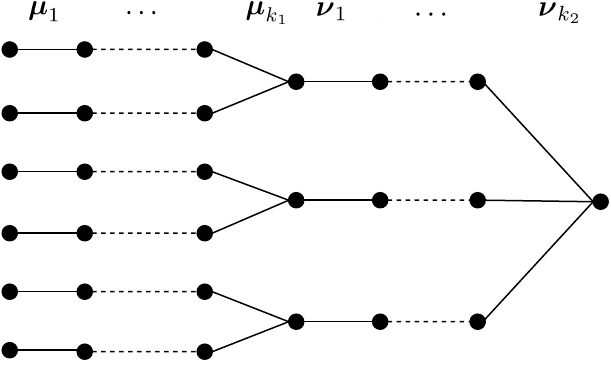}}%
        \label{fig:grouplpq}%
        }%
    \caption[Deep Diagonal Linear Network Structures]{\small Deep diagonal linear networks corresponding to parametrizations of a linear predictor. \textbf{a)} HPP (for $\ell_{2/k}$),\, \textbf{b)} $\text{GHPP}_k$ (for $\ell_{2,2/k}$),\, \textbf{c)} $\text{GHPP}_{k_1,k_1+k_2}$ (for $\ell_{2/k_1,2/(k_1+k_2)})$. The depth up to and including the grouping layer is $k_1$, followed by $k_2=k-k_1$ more diagonal layers. Nodes on the left represent input features and the single node on the right the output.}
    \label{fig:deeper-architectures-combined}
    \vspace{-0.35cm}
\end{figure}

\noindent Regarding applications of the $\text{HPP}_k$ to general objectives $\P(\bpsi,\bbeta)$ without surrogate regularization, we can establish the global openness of the $k$-linear surjection $\K$: 
\vspace{-0.0cm}
\begin{lemma}\label{lemma:proof-hppk}
The map $\mathcal{K}:\prod_{l=1}^{k} \mathbb{R}^{d} \to \mathbb{R}^{d}, (\bm{u}_1,\ldots,\bm{u}_{k}) \mapsto \bigodot_{l=1}^{k} \bm{u}_{l}$ is globally open.
\end{lemma}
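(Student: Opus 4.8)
The plan is to reduce to a one–dimensional statement and then handle a mild degeneracy by hand. Observe first that $\mathcal{K}$ is ``coordinate‑wise'': the $j$-th component of $\bigodot_{l=1}^{k}\bm{u}_l$ is $\prod_{l=1}^{k}u_{jl}$ and depends only on the $j$-th entries of the factors. Hence, composing with the linear coordinate permutation that identifies $\prod_{l=1}^{k}\mathbb{R}^{d}$ with $\prod_{j=1}^{d}\mathbb{R}^{k}$ (a homeomorphism, being a bijective linear map between finite–dimensional spaces), $\mathcal{K}$ becomes the Cartesian product $\prod_{j=1}^{d} m$ of $d$ copies of the scalar monomial map $m:\mathbb{R}^{k}\to\mathbb{R},\ (t_1,\dots,t_k)\mapsto \prod_{l=1}^{k} t_l$. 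A Cartesian product of open maps is open (images of basic open boxes are open boxes, and openness then passes to arbitrary unions), and openness is preserved under pre- and post-composition with homeomorphisms; so it suffices to prove that $m$ is an open map.

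To show $m$ is open, I would fix $\bm{a}=(a_1,\dots,a_k)\in\mathbb{R}^{k}$ and $\varepsilon>0$ and argue that $m(\mathcal{B}(\bm{a},\varepsilon))$ contains an open interval around $p:=m(\bm{a})=\prod_l a_l$; since $\bm{a},\varepsilon$ are arbitrary this gives that the image of every open set is open. If all $a_l\neq 0$, restrict $m$ to the line $t\mapsto (a_1,\dots,a_{k-1},a_k+t)$, on which $m$ equals $(a_k+t)\prod_{l<k}a_l$, an affine function of $t$ with nonzero slope $\prod_{l<k}a_l$ and value $p$ at $t=0$; its image over $|t|<\varepsilon$ is an open interval about $p$ contained in $m(\mathcal{B}(\bm{a},\varepsilon))$.

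The remaining case, where some $a_l=0$ (so $p=0$), is the only real obstacle: moving a single coordinate need not help, since if two coordinates vanish the product stays $0$. The fix is a two–stage perturbation. Let $Z=\{\,l: a_l=0\,\}\neq\varnothing$, pick $l_0\in Z$ and a constant $c$ with $0<|c|<\varepsilon/(2\sqrt{k})$, and consider the one–parameter family $u_{l_0}=t$, $u_l=c$ for $l\in Z\setminus\{l_0\}$, and $u_l=a_l$ for $l\notin Z$, with $|t|<\varepsilon/2$. A direct bound on $\|\cdot-\bm{a}\|$ shows this family lies in $\mathcal{B}(\bm{a},\varepsilon)$, and along it $m=\kappa\, t$ with $\kappa=c^{\,|Z|-1}\prod_{l\notin Z}a_l\neq 0$ (the empty product being $1$), so the image is an open interval around $0=p$. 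This proves $m$ is open, hence $\mathcal{K}$ is globally open. (Alternatively, one can induct on $k$ by writing $\mathcal{K}=\odot\circ(\mathcal{K}_{k-1}\times\mathrm{id})$ and invoking that the two–factor Hadamard product is open \citep{balcerzak2016certain} together with closure of open maps under composition and Cartesian products; I expect the zero–value case above to be the only subtlety either way, the rest being routine point‑set bookkeeping.)
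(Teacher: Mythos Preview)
Your proof is correct and follows the same overall decomposition as the paper: reduce to the scalar monomial map $m:\mathbb{R}^k\to\mathbb{R}$ and then lift to the $d$-dimensional Hadamard product via the coordinate-wise product structure. The difference is in how each step is executed. For the scalar case, the paper invokes Theorem~1.2 of \citet{balcerzak2016certain} (every nontrivial $k$-linear functional into the scalar field is globally open), whereas you prove openness of $m$ directly by an explicit two-case perturbation argument, handling the degenerate zero case by first nudging all vanishing coordinates to a common nonzero value and then sliding one of them. For the lifting step, the paper carries out an explicit $\varepsilon$--$\delta$ argument showing that local openness of each $\K_j$ at $(u_{jl})_{l=1}^k$ implies local openness of $\K$ at $(\bu_l)_{l=1}^k$; you instead package this as the general fact that Cartesian products of open maps are open and that openness is invariant under the coordinate-permutation homeomorphism. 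Your route is more self-contained (no external citation needed for the scalar case) and somewhat cleaner in the lifting step; the paper's route makes the quantitative dependence of $\delta$ on $\varepsilon$ explicit, which is not needed here but aligns with the style of the surrounding openness lemmas.
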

\vspace{-0.0cm}
Consequently, applying Lemma~\ref{lemma:pk_to_p}, smoothly parametrizing any continuous objective using the $\text{HPP}_k$ 
preserves the local minima of $\P$.
%
%
%
%
\subsection{Group Hadamard Product Parametrizations of Depth \textit{k}}\label{sec:had-group-chains}

The smooth optimization transfer for $\ell_{2,1}$ regularized problems can be naturally extended to structured sparsity with non-convex $\ell_{2,2/k}$ regularization. We start with the same set-up as in Section~\ref{sec:had-group-lasso}, but now consider deeper factorizations of $\bm{\beta}$. Recall that the GHPP is defined as $\K(\bu,\bnu)=\bu \odotg \bnu=\bbeta$. 
Further factorizing the grouping parameter $\bnu$ into $k-1$ Hadamard factors, i.e., $\bnu=\bigodot_{r=1}^{k-1}\bnu_{r}$, defines the $\text{GHPP}_k$ map:
\begin{align} \label{eq:ghpp-k}
 \hspace{-0.2cm}\mathcal{K}:\,\mathbb{R}^{d} \times \prod_{r=1}^{k-1} \mathbb{R}^{L} \to \mathbb{R}^{d}, (\bm{u}, \bm{\nu}_1,\ldots,\bm{\nu}_{k-1}) \mapsto  \bm{u} \odotg  \bnu_{r}^{\odot(k-1)}= 
   {\footnotesize \begin{pmatrix} 
    \bm{u}_{1} \\ \vdots \\ \bm{u}_{L}
  \end{pmatrix}  \odot \begin{pmatrix} 
    \mathds{1}_{|\mathcal{G}_1|} \prod_{r=1}^{k-1} \nu_{1r} \\ \vdots \\ \mathds{1}_{|\mathcal{G}_L|} \prod_{r=1}^{k-1} \nu_{Lr} 
  \end{pmatrix}} \nonumber
\end{align}
Equivalently, the parametrization on the group level reads $\bbeta_j=\bu_j \prod_{r=1}^{k-1}\nu_{jr}$, where $\bbeta_j,\bu_j\in\mathbb{R}^{|\G_j|}$ and $\nu_{jr}\in\mathbb{R}$, for $j=1,\ldots,L$ and $r=1,\ldots,k-1$. Applying plain $\ell_2$ regularization under this parametrization, i.e., $\Rxi(\bu, \bnu_1, \ldots, \bnu_{k-1})\triangleq\norm{\bu}_2^2+\sum_{r=1}^{k-1}\norm{\bnu_r}_2^2$, induces the non-smooth and non-convex regularizer $\Rbeta(\bbeta)=k\norm{\bbeta}_{2,2/k}^{2/k}$ for structured sparsity in the base parametrization. To show this, we first prove that the minimum $\ell_2$ penalty under the parametrization map constraint equals $\Rbeta$:
\begin{lemma} \label{lemma:S_ghppk-def} Given the parametrization $\K(\bu,\bnu_1,\ldots,\bnu_{k-1})=\bu \odotg \bnu_{r}^{\odot (k-1)}$, the minimum of the surrogate $\ell_2$ penalty $\Rxi(\bu, \bnu_1, \ldots, \bnu_{k-1})\triangleq\norm{\bu}_2^2+\sum_{r=1}^{k-1}\norm{\bnu_r}_2^2$ subject to $\K(\bu,\bnu_1\ldots,\bnu_{k-1})=\bbeta$ constitutes the following SVF for $\Rbeta(\bbeta)\triangleq k \norm{\bbeta}_{2,2/k}^{2/k}$:
 \begin{equation} \label{eq:S_ghppk-def}
\min_{\substack{\bm{u}, \bm{\nu}_{1}, \ldots, \bm{\nu}_{k-1}:\\ \bm{\beta} = \bm{u}\odotg \bnu_{r}^{\odot (k-1)}}}
 \textstyle\sum_{j=1}^{L} \Big( \Vert \bm{u}_{{j}} \Vert_{2}^{2} + \textstyle\sum_{r=1}^{k-1}  \nu_{jr}^{2} \Big)  =  k \Vert \bm{\beta} \Vert_{2, 2/k}^{2/k}\quad\forall\bbeta\in\Rd \,.
\end{equation}   
\end{lemma}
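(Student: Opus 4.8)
The plan is to reduce the constrained minimization to a group-wise problem and then apply the AM--GM inequality to $k$ cleverly chosen nonnegative numbers, exactly in the spirit of the proofs of Lemmas~\ref{lemma:S_hpp-def} and \ref{lemma:S_ghpp-def}. The key observation is that the constraint $\K(\bu,\bnu_1,\ldots,\bnu_{k-1})=\bbeta$ reads $\bbeta_j=\bu_j\prod_{r=1}^{k-1}\nu_{jr}$ on each group $j\in[L]$, so both the objective $\norm{\bu}_2^2+\sum_{r=1}^{k-1}\norm{\bnu_r}_2^2=\sum_{j=1}^{L}\big(\norm{\bu_j}_2^2+\sum_{r=1}^{k-1}\nu_{jr}^2\big)$ and the feasible set decouple over $j$. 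It therefore suffices to solve each group subproblem $\min_{\bu_j,\nu_{j1},\ldots,\nu_{j,k-1}:\,\bbeta_j=\bu_j\prod_r\nu_{jr}}\;\norm{\bu_j}_2^2+\sum_{r=1}^{k-1}\nu_{jr}^2$ separately and sum the results.

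For a fixed group I would first dispatch the degenerate case $\bbeta_j=\bm{0}$: the choice $\bu_j=\bm{0}$, $\nu_{jr}=0$ is feasible and yields objective value $0=k\norm{\bm{0}}_2^{2/k}$, which is clearly optimal. For $\bbeta_j\neq\bm{0}$, every feasible point must have $c_j\triangleq\prod_{r=1}^{k-1}\nu_{jr}\neq0$ (else $\bbeta_j=\bm{0}$), so $\bu_j=\bbeta_j/c_j$ is forced and $\norm{\bu_j}_2^2=\norm{\bbeta_j}_2^2/c_j^2$. The quantity to minimize becomes $\norm{\bbeta_j}_2^2/c_j^2+\sum_{r=1}^{k-1}\nu_{jr}^2$, a sum of $k$ nonnegative numbers whose product telescopes: $(\norm{\bbeta_j}_2^2/c_j^2)\cdot\prod_{r=1}^{k-1}\nu_{jr}^2=(\norm{\bbeta_j}_2^2/c_j^2)\cdot c_j^2=\norm{\bbeta_j}_2^2$, independently of how $c_j$ is split among the $\nu_{jr}$. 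AM--GM for these $k$ numbers then gives $\norm{\bbeta_j}_2^2/c_j^2+\sum_{r=1}^{k-1}\nu_{jr}^2\ge k\big(\norm{\bbeta_j}_2^2\big)^{1/k}=k\norm{\bbeta_j}_2^{2/k}$, with equality exactly when all $k$ terms coincide.

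To show the bound is attained — so that the infimum is a genuine minimum and the SVF is valid — I would exhibit the equality case: taking $\nu_{jr}=\norm{\bbeta_j}_2^{1/k}$ for all $r$ (or any choice with an even number of sign flips) gives $c_j=\norm{\bbeta_j}_2^{(k-1)/k}$, hence $\bu_j=\bbeta_j/\norm{\bbeta_j}_2^{(k-1)/k}$ with $\norm{\bu_j}_2^2=\norm{\bbeta_j}_2^{2/k}$, so each of the $k$ terms equals $T(\bbeta_j)\triangleq\norm{\bbeta_j}_2^{2/k}$ and the objective equals $k\norm{\bbeta_j}_2^{2/k}$. Summing the group-wise optima over $j=1,\ldots,L$ and invoking the definition $\norm{\bbeta}_{2,2/k}^{2/k}=\sum_{j=1}^{L}\norm{\bbeta_j}_2^{2/k}$ yields $\min\sum_{j=1}^{L}\big(\norm{\bu_j}_2^2+\sum_{r=1}^{k-1}\nu_{jr}^2\big)=k\sum_{j=1}^{L}\norm{\bbeta_j}_2^{2/k}=k\norm{\bbeta}_{2,2/k}^{2/k}$, which is the asserted SVF. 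As a by-product, the optimality conditions $\norm{\bu_j}_2^2=\nu_{j1}^2=\cdots=\nu_{j,k-1}^2=T(\bbeta_j)$ with $T$ continuous, vanishing iff $\bbeta_j=\bm{0}$ and strictly increasing in $\norm{\bbeta_j}_2$, place the solution map in exactly the form required by Lemma~\ref{lemma:uhc-solution-map}, so upper hemicontinuity follows at no extra cost.

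I do not anticipate a serious obstacle. The only points needing care are (i) verifying that the product of the $k$ AM--GM terms is genuinely independent of how $c_j$ is factored into the $\nu_{jr}$ (this is what makes the bound $k\norm{\bbeta_j}_2^{2/k}$ tight rather than merely a lower bound parametrized by $c_j$), and (ii) handling the degenerate case $c_j=0$, $\bbeta_j=\bm{0}$, where $\bu_j$ is no longer pinned down by the constraint but the objective is still minimized at zero; both are routine.
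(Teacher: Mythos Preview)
Your proposal is correct and follows essentially the same approach as the paper: group-wise decomposition followed by the AM--GM inequality applied to the $k$ nonnegative numbers $\norm{\bu_j}_2^2,\nu_{j1}^2,\ldots,\nu_{j,k-1}^2$, whose product equals $\norm{\bbeta_j}_2^2$ under the constraint. The paper applies AM--GM first and then invokes the constraint to simplify the geometric mean, whereas you substitute the constraint into $\norm{\bu_j}_2^2$ first; you are also somewhat more explicit about the degenerate case $\bbeta_j=\bm{0}$ and about exhibiting attainment, but the argument is the same.
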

For an objective $\P(\bpsi,\bbeta)$ with non-convex $\ell_{2,2/k}$ regularization, 
\begin{equation}
{\textstyle \mathcal{P}(\bm{\psi}, \bm{\beta}) = \mathcal{L}(\bm{\psi}, \bm{\beta}) + \lambda k \|\bm{\beta}\|_{2,2/k}^{2/k} = \L(\bpsi,\bbeta)+\lambda k \sum_{j=1}^{L}\norm{\bbeta_j}_2^{2/k} }\,, \label{eq:optim_q_HGPC}
\end{equation}
the smooth surrogate $\Q$ obtained from the tuple $(\Rbeta,\K,\Rxi)$ is given by
\begin{equation}
{\textstyle \mathcal{Q}(\bm{\psi}, \bm{u}, \bm{\nu}_{1},\ldots,\bm{\nu}_{k-1}) =  \mathcal{L}\big(\bm{\psi}, \bm{u} \odotg \bm{\nu}_{r}^{\odot (k-1)} \big) + \lambda \sum_{j=1}^{L} \Big(\Vert \bm{u}_{{j}} \Vert_{2}^{2} + \sum_{r=1}^{k-1}  \nu_{jr}^{2}\Big) }  \,. \label{eq:optim_g_HGPC} 
\end{equation}
By Lemma~\ref{lemma:lhc-solution-map}, the optimality conditions obtained in the proof of Lemma~\ref{lemma:S_ghppk-def} imply a lower hemicontinuous solution map as a function of $\bbeta$, so that we can state:
\begin{corollary}\label{cor:ghppk}
The optimization of $\P$ in (\ref{eq:optim_q_HGPC}) is equivalent to the optimization of the smooth surrogate $\Q$ in (\ref{eq:optim_g_HGPC}) by Definition~\ref{def:equivalence}, and solutions to $\P$ can be constructed as $(\hbpsi,\hat{\bm{\beta}}) = (\hbpsi,\hat{\bm{u}} \odotg  \hat{\bnu_{r}}^{\odot (k-1)})$. 
\end{corollary}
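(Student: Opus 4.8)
The plan is to reuse, essentially verbatim, the recipe behind Corollaries~\ref{cor:hpp}, \ref{cor:ghpp} and~\ref{cor:hppk}: feed the smooth variational form supplied by Lemma~\ref{lemma:S_ghppk-def} into Theorem~\ref{theorem-general}, after checking that the $\text{GHPP}_k$ map obeys Assumption~\ref{ass-parametrization-map} and that the associated solution map obeys Assumption~\ref{ass:svf}. Write $\K(\bu,\bnu_1,\ldots,\bnu_{k-1})=\bu\odotg\bnu_r^{\odot(k-1)}$, so that on each group $\K_j(\bu_j,\nu_{j1},\ldots,\nu_{j,k-1})=\bu_j\prod_{r=1}^{k-1}\nu_{jr}$. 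First I would verify Assumption~\ref{ass-parametrization-map}. Block-separability (a) is immediate: $\bbeta_j$ depends only on the disjoint block $\bxi_j=(\bu_j,\nu_{j1},\ldots,\nu_{j,k-1})$, so $\K$ is the Cartesian product of the $\K_j$. Homogeneity (c) of degree $k$ holds because scaling all $k$ Hadamard factors ($\bu$ together with the $k-1$ vectors $\bnu_r$) by $c>0$ multiplies every $\beta_j$ by $c^k$. The zero-product property (d) holds since $\beta_{j_i}=u_{j_i}\prod_r\nu_{jr}$ vanishes for all $i$ if and only if $\bu_j=\bm{0}$ or $\nu_{jr}=0$ for some $r$, i.e.\ at least one of the $k$ factors of the block is zero. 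Surjectivity and smoothness of $\K$ are clear ($\K$ is polynomial, and the choice $\nu_{jr}=1$, $\bu_j=\bbeta_j$ realizes any $\bbeta$).

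For the regularity condition (b) I would compute the Jacobian of $\K_j$ at a preimage of a nonzero $\bbeta_j$: since some $\beta_{j_i}\neq 0$, any such preimage has $c\triangleq\prod_{r=1}^{k-1}\nu_{jr}\neq 0$ and $\bu_j\neq\bm{0}$, and the columns of $\mathcal J_{\K_j}$ corresponding to the $u_{j_i}$ form the block $c\,I_{|\mathcal{G}_j|}$, which already has full row rank $|\mathcal{G}_j|$. Hence $\K_j$ is a submersion at those points, so $\bbeta_j\neq\bm{0}$ is a regular value. Next, Assumption~\ref{ass:svf}: part (a) is exactly Lemma~\ref{lemma:S_ghppk-def}; part (b), majorization, follows from the same group-wise AM-GM inequality used in that proof, namely $\norm{\bu_j}_2^2+\sum_{r=1}^{k-1}\nu_{jr}^2\geq k\big(\norm{\bu_j}_2^2\prod_r\nu_{jr}^2\big)^{1/k}=k\norm{\bbeta_j}_2^{2/k}$ at every feasible point, which summed over $j$ gives $\Rxi(\bxi)\geq\Rbeta(\K(\bxi))$ pointwise; part (c), upper hemicontinuity of $\hbxi(\bbeta)$, follows from Lemma~\ref{lemma:uhc-solution-map} with $L$ groups, $k$ factors per group, and $T(\bbeta_j)=\norm{\bbeta_j}_2^{2/k}$, since the AM-GM equality conditions in the proof of Lemma~\ref{lemma:S_ghppk-def} read $\norm{\bu_j}_2^2=\nu_{j1}^2=\cdots=\nu_{j,k-1}^2=T(\bbeta_j)$, and this $T$ is continuous, equals $0$ iff $\bbeta_j=\bm{0}$, and is strictly increasing in $\norm{\bbeta_j}_2$ because $2/k>0$.

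With both assumptions in place, Theorem~\ref{theorem-general} yields the equivalence of $\P$ in~(\ref{eq:optim_q_HGPC}) and $\Q$ in~(\ref{eq:optim_g_HGPC}) according to Definition~\ref{def:equivalence}; condition (c) of that definition, applied to the surjection $\K$, then gives the stated reconstruction $(\hbpsi,\hbbeta)=(\hbpsi,\hat{\bu}\odotg\hat{\bnu}_r^{\odot(k-1)})$. I expect the only genuinely new work to be the Jacobian-rank check in (b); it is transparent once one observes that at any preimage of a nonzero $\bbeta_j$ the scalar $c=\prod_r\nu_{jr}$ is nonzero, but some care is needed to confirm this nonvanishing and to keep the book-keeping straight between the $d$-dimensional unconstrained factor $\bu$ and the $(k-1)L$ grouping scalars. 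Everything else — surjectivity, the SVF, the majorization, and the upper hemicontinuity — is inherited from the cited lemmas with no extra effort.
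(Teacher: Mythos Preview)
Your proposal is correct and follows exactly the paper's template: combine Lemma~\ref{lemma:S_ghppk-def} (SVF and majorization via AM--GM) with Lemma~\ref{lemma:uhc-solution-map} (u.h.c.\ of the solution map from the equality conditions $\norm{\bu_j}_2^2=\nu_{j1}^2=\cdots=\nu_{j,k-1}^2=\norm{\bbeta_j}_2^{2/k}$), verify that $\K$ meets Assumption~\ref{ass-parametrization-map}, and invoke Theorem~\ref{theorem-general}. The paper omits the explicit proof after Corollary~\ref{cor:hpp}, so your more detailed verification of Assumption~\ref{ass-parametrization-map}(a)--(d), especially the Jacobian-rank argument for (b), simply spells out what the paper leaves implicit.
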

We can think of the parametrization $\K$ as a composition involving the GHPP and the $\text{HPP}_{k-1}$ for $\bnu$ to gain insights into the network architecture corresponding to a linear model overparametrized by $\K$. Compared to the depth-two network matching the GHPP in Figure~\ref{fig:groupnetwork1}, the network for the $\text{GHPP}_k$ shown in Figure~\ref{fig:deepgroup} adds $k-1$ diagonal layers after the initial layer, corresponding to the additional deeper factorization of $\bnu$ in the $\text{GHPP}_k$. In the previously mentioned less general setting, \citet{tibs2021} first discovered that optimizing a network as in Figure~\ref{fig:groupnetwork1} with weight decay induces an objective with the same global minimum as an $\ell_{2,2/k}$ regularized linear model.\\
\noindent Regarding the preservation of local minima when applying the $\text{GHPP}_k$ to a general objective $\P(\bpsi,\bbeta)$ without surrogate regularization, we can use the compositional nature of the $\text{GHPP}_k$ to obtain points of local openness, as required by Lemma~\ref{lemma:pk_to_p}:
\begin{corollary}[Points of local openness of the $\text{GHPP}_k$]\label{cor:openness-ghppk}
The parametrization mapping $\K(\bu,\bnu_1,\ldots,\bnu_{k-1}) = \bu \odotg \bnu_r^{\odot (k-1)}$ 
is locally open at $(\bm{u}, \bm{\nu}_1,\ldots,\bm{\nu}_{k-1})$ whenever the GHPP (\ref{eq:ghpp-reparam-mapping}) is locally open at $(\bu,\bnu_r^{\odot (k-1)})$.
\end{corollary}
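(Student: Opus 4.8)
The plan is to exploit the compositional structure of the $\text{GHPP}_k$ map. Writing $\bnu_r^{\odot(k-1)} = \bigodot_{r=1}^{k-1}\bnu_r$, the parametrization in (\ref{eq:ghpp-k}) factors as
\[
\K \;=\; \K_{\mathrm{GHPP}} \,\circ\, \big(\mathrm{id}_{\Rd} \times \text{HPP}_{k-1}\big)\,,
\]
where $\mathrm{id}_{\Rd}\times\text{HPP}_{k-1}: \Rd \times \prod_{r=1}^{k-1}\mathbb{R}^{L} \to \Rd\times\mathbb{R}^{L}$ sends $(\bu,\bnu_1,\ldots,\bnu_{k-1})$ to $(\bu,\bnu_r^{\odot(k-1)})$, and $\K_{\mathrm{GHPP}}:\Rd\times\mathbb{R}^{L}\to\Rd$ is the GHPP map of (\ref{eq:ghpp-reparam-mapping}). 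Hence it suffices to establish (i) that the inner map is globally open, and (ii) that the composition of a globally open map with a map that is locally open at the relevant image point is again locally open at the original point. Chaining these two facts with the hypothesis that $\K_{\mathrm{GHPP}}$ is locally open at $(\bu,\bnu_r^{\odot(k-1)})$ then yields local openness of $\K$ at $(\bu,\bnu_1,\ldots,\bnu_{k-1})$.

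For step (i), I would invoke Lemma~\ref{lemma:proof-hppk} (applied with $\mathbb{R}^{L}$ in place of $\Rd$ and $k-1$ in place of $k$), which gives that $\text{HPP}_{k-1}:\prod_{r=1}^{k-1}\mathbb{R}^{L}\to\mathbb{R}^{L}$ is globally open; the identity $\mathrm{id}_{\Rd}$ is trivially open; and a finite product of open maps is open, since a product of $\varepsilon$-balls is mapped onto a product of sets each containing a $\delta$-ball, hence onto a set containing a $\delta'$-ball in the product. Therefore $\mathrm{id}_{\Rd}\times\text{HPP}_{k-1}$ is globally open.

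For step (ii), I would prove the following short lemma: if $g$ is globally open and $f$ is locally open at $g(x)$, then $f\circ g$ is locally open at $x$. Indeed, given $\varepsilon>0$, the set $g(\mathcal{B}(x,\varepsilon))$ is open and contains $g(x)$, so it contains a ball $\mathcal{B}(g(x),\varepsilon')$ for some $\varepsilon'>0$; by local openness of $f$ at $g(x)$ there is $\delta>0$ with $\mathcal{B}(f(g(x)),\delta)\subseteq f(\mathcal{B}(g(x),\varepsilon'))\subseteq f(g(\mathcal{B}(x,\varepsilon)))$, which is precisely Definition~\ref{def:openness} for $f\circ g$ at $x$. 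Applying this with $g=\mathrm{id}_{\Rd}\times\text{HPP}_{k-1}$, $f=\K_{\mathrm{GHPP}}$, and $x=(\bu,\bnu_1,\ldots,\bnu_{k-1})$, noting $g(x)=(\bu,\bnu_r^{\odot(k-1)})$, finishes the argument.

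I do not expect a substantive obstacle here; the only point requiring some care is the bookkeeping at the interface between the topological notion of an open map and the metric $\varepsilon$--$\delta$ formulation in Definition~\ref{def:openness} — once one observes that the Euclidean topology on $\Rd\times\mathbb{R}^{L}$ is the product topology and that images commute with unions, both auxiliary facts are elementary. The genuine content of the statement sits entirely in Lemma~\ref{lemma:proof-hppk} and Lemma~\ref{lemma:openness-ghpp}, which together pin down the points at which the GHPP itself is locally open, and hence (via the hypothesis) characterize the points of local openness of the $\text{GHPP}_k$ inherited through the composition.
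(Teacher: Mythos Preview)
Your proposal is correct and follows essentially the same approach as the paper: decompose $\K$ as $\text{GHPP}\circ(\mathrm{id}_{\Rd}\times\text{HPP}_{k-1})$, invoke Lemma~\ref{lemma:proof-hppk} for global openness of the inner map, and conclude via preservation of local openness under composition. If anything, your treatment is slightly more explicit, as you spell out both the composition lemma and the product-of-open-maps step, whereas the paper simply asserts these.
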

Besides the proof in Appendix~\ref{app:cor-openness-ghppk}, conditions for the local openness of the GHPP are stated in Lemma~\ref{lemma:openness-ghpp}.
Note that the optimality conditions in the proof of Lemma~\ref{lemma:S_ghppk-def} thus also imply the local openness of $\K$ at all local minimizers of $\Q$.

%
%
%
%
\subsection[Generalizing the GHPP to Mixed Quasi-Norms]{Generalizing the GHPP to Mixed $\ell_{p,q}$ Quasi-Norms} \label{sec:had-lpq-group-chains}

We can extend the principle behind the construction of the $\text{GHPP}_k$, i.e., starting with the $\text{GHPP}$ and factorizing the $\bnu$ parameter, to deeper parametrizations factorizing both $\bu$ and $\bnu$ simultaneously into $k_1$ and $k_2$ Hadamard factors. In the following, we establish that smooth $\ell_2$ regularization of the resulting surrogate parameters induces non-convex $\ell_{p,q}$ mixed-norm regularization in the base parametrization, with $(p,q)\in\{(2/k_1,2/(k_1+k_2)):k_1,k_2 \in \mathbb{N}\}$. We start with the same structured parameter set-up as in Section~\ref{sec:had-group-lasso}, partitioning the components of $\bbeta$ into $L$ groups. Consider the GHPP map given by $\bm{\beta} = \bm{u} \odotg \bnu$, with $\bm{u}=\left( \bm{u}_{1},\ldots,\bm{u}_{L} \right)^{\top}$ and $\bnu=\left( \nu_{1},\ldots,\nu_{L} \right)^{\top}$, together comprising $L$ pairs of group-wise parameters $(\bm{u}_{{j}},\nu_{{j}})$. Factorizing each $\bm{u}_{{j}}$ into a product of $k_1$ Hadamard factors $\bm{\mu}_{jt}$, $t=1,\ldots,k_1$, and each $\nu_j$ into a product of $k_2$ scalar factors $\nu_{jr}$, $r=1,\ldots,k_2$, we can define the following surjective parametrization mapping $\mathcal{K}$ termed the $\text{GHPP}_{k_1,k_1+k_2}$:
{\small
\begin{align}\label{eq:ghppk1k-def}
\mathcal{K}\,:\, \textstyle\prod_{t=1}^{k_1} \mathbb{R}^{d} \times \textstyle\prod_{r=1}^{k_2} \mathbb{R}^{L} \to &\mathbb{R}^{d},\; (\bm{\mu}_{1}, \ldots, \bm{\mu}_{k_1}, \bm{\nu}_1,\ldots,\bm{\nu}_{k_2}) \mapsto  \bm{\mu}_{t}^{\odot  k_1} \odotg \bnu_{r}^{\odot  k_2}  \\
&=\begin{pmatrix} 
\bm{\mu}_{1t}^{\odot  k_1} \\ \vdots \\ \bm{\mu}_{Lt}^{\odot  k_1}
\end{pmatrix}  \odot \begin{pmatrix} 
\mathds{1}_{|\mathcal{G}_1|} \prod_{r=1}^{k_2} \nu_{1r} \\ \vdots \\ \mathds{1}_{|\mathcal{G}_L|} \prod_{r=1}^{k_2} \nu_{Lr}
\end{pmatrix} = \begin{pmatrix}
\bm{\beta}_{1} \\ \vdots \\ \bm{\beta}_{L}
\end{pmatrix} = \bm{\beta} \,, \nonumber
\end{align}
}%
\noindent where $\bm{\mu}_{t} \triangleq (\bm{\mu}_{1t}, \ldots, \bm{\mu}_{Lt})^{\top} \in \mathbb{R}^{d}$ and $\bm{\nu}_{r} \triangleq (\nu_{1r},\ldots, \nu_{Lr})^{\top} \in \mathbb{R}^{L}$. Note that each $\bm{\mu}_{jt}$ is the $t-$th factor of the $j$-th parameter group with entries $(\mu_{j1t}, \ldots, \mu_{j |\mathcal{G}_j| t})^{\top}\in\mathbb{R}^{|\G_j|}$. On the group level, the parametrization reads $\bbeta_j=\bu_j \nu_j = (\bigodot_{t=1}^{k_1} \bm{\mu}_{jt}) \prod_{r=1}^{k_2}\nu_{jr}=\bm{\mu}_{jt}^{\odot k_1}\prod_{r=1}^{k_2}\nu_{jr}$, for $j\in[L]$. Further, let $k \triangleq k_1+k_2$ denote the total factorization depth. To derive the non-convex group-sparse regularizer for $\bbeta$ induced through $\ell_2$ regularization of $\bm{\mu}_{jt},\nu_{jr}$ for  $j\in[L],t\in[k_1],r\in[k_2]$, a simple generalization of the AM-GM inequality is required.
Defining the surrogate penalty $\Rxi$ as plain $\ell_2$ regularization, we can show that $\Rxi$ and $\K$ induce an SVF for mixed-norm $\ell_{p,q}$ regularization.
\begin{lemma} \label{lemma:S_ghppk1k-def} Given a parametrization $\K(\bm{\mu}_{1},\ldots,\bm{\mu}_{k_1},\bnu_1,\ldots,\bnu_{k_2})=\bm{\mu}_{t}^{\odot  k_1} \odotg \bnu_{r}^{\odot  k_2}$, the minimum surrogate $\ell_2$ regularization $\Rxi(\bm{\mu}_{1}, \ldots, \bm{\mu}_{k_1} \bnu_1, \ldots, \bnu_{k_2} ) \triangleq \sum_{t=1}^{k_1}\norm{\bm{\mu}_{t}}_2^2+\sum_{r=1}^{k_2}\norm{\bnu_r}_2^2$ subject to $\K(\bm{\mu}_{1},\ldots,\bm{\mu}_{k_1},\bnu_1,\ldots,\bnu_{k_2})=\bbeta$ constitutes an SVF for $\Rbeta(\bbeta)\triangleq k \norm{\bbeta}_{2/k_1,2/k}^{2/k}$ and is given by
\vspace{-0.1cm}
\begin{equation} \label{eq:S_ghppk1k-def}
\min_{\substack{\bm{\mu}_{1}, \ldots, \bm{\mu}_{k_1}, \bnu_1, \ldots, \bnu_{k_2}:\\ 
\bbeta = \bm{\mu}_{t}^{\odot  k_1} \odotg  \bnu_{r}^{\odot  k_2}}} \,\, {\textstyle \sum_{j=1}^{L} \Big(\sum_{t=1}^{k_1} \Vert \bm{\mu}_{jt} \Vert_{2}^{2} + \sum_{r=1}^{k_2}  \nu_{jr}^{2} \Big)}  =  k \Vert \bm{\beta} \Vert_{2/k_1, 2/k}^{2/k} \quad\forall\bbeta\in\Rd \,.
\vspace{-0.15cm}
\end{equation}
\end{lemma}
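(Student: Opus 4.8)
The plan is to reduce the constrained minimization to a per-group problem, then within each group apply a weighted AM–GM argument to identify the optimal allocation of "mass" among the $k_1$ vector factors $\bm\mu_{jt}$ and the $k_2$ scalar factors $\nu_{jr}$. Since the parametrization map $\K$ is block-separable over the $L$ groups (Assumption~\ref{ass-parametrization-map}a), and the surrogate penalty $\Rxi$ is additively separable over groups, we have $\min_{\K(\bxi)=\bbeta}\Rxi(\bxi)=\sum_{j=1}^{L}\min_{\bbeta_j=\bm\mu_{jt}^{\odot k_1}\prod_r\nu_{jr}}\big(\sum_{t=1}^{k_1}\norm{\bm\mu_{jt}}_2^2+\sum_{r=1}^{k_2}\nu_{jr}^2\big)$, so it suffices to fix a group $j$ and show the inner minimum equals $k\norm{\bbeta_j}_2^{2/k}$.

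Within group $j$, first observe that the scalar factors only enter through their product $\nu_j\triangleq\prod_{r=1}^{k_2}\nu_{jr}$, and the vector factors enter through $\bu_j\triangleq\bigodot_{t=1}^{k_1}\bm\mu_{jt}$, with constraint $\bbeta_j=\nu_j\bu_j$. For the $k_2$ scalars with fixed product $\nu_j$, AM–GM gives $\sum_{r=1}^{k_2}\nu_{jr}^2\geq k_2\,|\nu_j|^{2/k_2}$ with equality iff all $\nu_{jr}^2$ are equal. For the vector part, I would argue coordinatewise: writing $\bu_j=\bm\mu_{jt}^{\odot k_1}$ entrywise as $u_{j_i}=\prod_{t}\mu_{j_i t}$, AM–GM on the $k_1$ numbers $\mu_{j_i t}^2$ yields $\sum_t\mu_{j_i t}^2\geq k_1 |u_{j_i}|^{2/k_1}$; however, since all the $\bm\mu_{jt}$ appear symmetrically and we want to minimize $\sum_t\norm{\bm\mu_{jt}}_2^2$ subject to a \emph{fixed} $\bu_j$, the clean way is the generalized AM–GM / power-mean argument showing $\min\sum_{t=1}^{k_1}\norm{\bm\mu_{jt}}_2^2 = k_1\norm{\bu_j}_{2/k_1}^{2/k_1}$ — this is exactly Lemma~\ref{lemma:S_hppk-def} ($\text{HPP}_k$) applied within the group, with the optimum achieved when $\norm{\bm\mu_{j1}}_2^2=\cdots=\norm{\bm\mu_{jk_1}}_2^2$. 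Combining, the inner objective is bounded below by $\inf_{\nu_j}\big(k_1\norm{\bu_j}_{2/k_1}^{2/k_1}+k_2|\nu_j|^{2/k_2}\big)$ subject to $\norm{\bu_j}$ being tied to $\nu_j$ through $\bbeta_j=\nu_j\bu_j$, i.e. $\norm{\bu_j}_{2/k_1}=\norm{\bbeta_j}_{2/k_1}/|\nu_j|$...

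Here I must be careful: the induced norm on $\bbeta_j$ should be $\norm{\bbeta_j}_2$, not $\norm{\bbeta_j}_{2/k_1}$, so the reduction cannot treat $\bu_j$ as a free vector. Instead, the right move is to optimize jointly: parametrize $u_{j_i}=\beta_{j_i}/\nu_j$ and note $\min\sum_t\norm{\bm\mu_{jt}}_2^2 = k_1\big(\sum_i |u_{j_i}|^{2/k_1}\big)^{?}$ — rather, go fully entrywise-then-regroup. Set $s\triangleq 1/\nu_j$. Then $\sum_{t}\mu_{j_i t}^2 \ge k_1|\beta_{j_i}|^{2/k_1}|s|^{2/k_1}$ for each $i$, so $\sum_t\norm{\bm\mu_{jt}}_2^2\ge k_1|s|^{2/k_1}\sum_i|\beta_{j_i}|^{2/k_1}$, and $\sum_r\nu_{jr}^2\ge k_2|s|^{-2/k_2}$. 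The group objective is thus $\ge \min_{s>0}\big(k_1|s|^{2/k_1}A + k_2|s|^{-2/k_2}\big)$ where $A\triangleq\sum_i|\beta_{j_i}|^{2/k_1}=\norm{\bbeta_j}_{2/k_1}^{2/k_1}$. Differentiating in $|s|$ (a one-variable calculus step) gives optimal $|s|^{2/k_1+2/k_2}=\tfrac{k_2}{k_1}\cdot\tfrac{1}{A}$... this again produces a power of $A=\norm{\bbeta_j}_{2/k_1}^{2/k_1}$, and one checks the exponents collapse: the final value is $k\,A^{k_1/k}=k\,\norm{\bbeta_j}_{2/k_1}^{2/k}$, and summing over $j$ gives $k\sum_j\norm{\bbeta_j}_{2/k_1}^{2/k}=k\norm{\bbeta}_{2/k_1,2/k}^{2/k}$.

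The main obstacle is bookkeeping the nested norms correctly: the vector factors generate an inner $\ell_{2/k_1}$ norm on $\bbeta_j$, the scalar factors generate the outer $q=2/k$ aggregation across groups, and the equality/optimality conditions must be assembled so that Lemma~\ref{lemma:uhc-solution-map} applies — that is, I must record that at the optimum $\norm{\bm\mu_{j1}}_2^2=\cdots=\norm{\bm\mu_{jk_1}}_2^2=\nu_{j1}^2=\cdots=\nu_{jk_2}^2=\norm{\bbeta_j}_{2/k_1}^{1/k}\cdot(\text{const})$, matching the hypothesis $\norm{\hbxi_{j1}}_2^2=\cdots=\norm{\hbxi_{jk}}_2^2=T(\bbeta_j)$ with $T$ continuous, zero iff $\bbeta_j=\bm0$, and increasing in $\norm{\bbeta_j}$. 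I would also flag the degenerate case $\bbeta_j=\bm0$ separately (any factor zero, objective $0$), and invoke the generalized AM–GM exactly as promised in the text preceding the lemma to keep the two-stage argument rigorous rather than re-deriving it.
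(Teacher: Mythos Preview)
Your proposal is correct and matches the paper's two-stage structure: first the entrywise AM--GM on the $k_1$ factors $\mu_{jti}^2$ to obtain $\sum_{t}\norm{\bm\mu_{jt}}_2^2\geq k_1\norm{\bu_j}_{2/k_1}^{2/k_1}$, then combine with the $\nu_{jr}^2$. The only execution difference is in the second step: the paper applies the weighted AM--GM inequality (Proposition~\ref{prop-wamgm}) once, with weight $k_1$ on $\norm{\bu_j}_{2/k_1}^{2/k_1}$ and unit weights on each $\nu_{jr}^2$, which collapses directly to $k\norm{\bbeta_j}_{2/k_1}^{2/k}$ and delivers the optimality conditions $\norm{\bu_j}_{2/k_1}^{2/k_1}=\nu_{j1}^2=\cdots=\nu_{jk_2}^2=\norm{\bbeta_j}_{2/k_1}^{2/k}$ in one line. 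Your alternative---a separate AM--GM on the $\nu_{jr}$'s followed by one-variable calculus in $|s|$---also works, but note two slips: the critical point is $|s|^{2/k_1+2/k_2}=1/A$ (no $k_2/k_1$ factor), and the common squared norm at the optimum is $\norm{\bbeta_j}_{2/k_1}^{2/k}$, not $\norm{\bbeta_j}_{2/k_1}^{1/k}$. The weighted AM--GM you mention at the end is precisely what the paper invokes and sidesteps the exponent bookkeeping you flag as the main obstacle; using it also makes the equality conditions $\norm{\bm\mu_{jt}}_2^2=\nu_{jr}^2=T(\bbeta_j)$ with $T(\bbeta_j)=\norm{\bbeta_j}_{2/k_1}^{2/k}$ immediate for the u.h.c.\ check.
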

Note that by Lemma~\ref{lemma:lhc-solution-map}, the optimality conditions in the proof above ensure lower hemicontinuity of the solution map to the SVF. Assuming an objective $\P(\bpsi,\bbeta)$ with non-convex $\ell_{2/{k_1},2/k}$ regularizer $\Rbeta(\bbeta)$, the optimization transfer $(\Rbeta,\K,\Rxi)$ defines the following equivalent smooth surrogate $\Q$: 
{\small
\begin{align}
&{\small \mathcal{P}(\bm{\psi}, \bm{\beta}) = \mathcal{L}(\bm{\psi}, \bm{\beta}) + \lambda k \|\bm{\beta}\|_{2/k_1,2/k}^{2/k} = \L(\bpsi,\bbeta) + \lambda k \sum_{j=1}^{L}\norm{\bbeta_j}_{2/k_1}^{2/k} } \,, \label{eq:optim_q_GHGPC}\\
&{\small \mathcal{Q}(\bm{\psi}, \bm{\mu}_{1},\ldots,\bm{\mu}_{k_1},\bnu_1,\ldots,\bnu_{k_2}) =  \mathcal{L}\big(\bm{\psi}, \bm{\mu}_{t}^{\odot  k_1} \odotg \bnu_{r}^{\odot  k_2} \big) + \lambda {\footnotesize \sum_{j=1}^{L} \Big( \sum_{t=1}^{k_1} \Vert \bm{\mu}_{jt} \Vert_{2}^{2} + \sum_{r=1}^{k_2}  \nu_{jr}^{2} \Big)}  \,.}\label{eq:optim_g_GHGPC} 
\end{align}
}

\begin{corollary}\label{cor:ghppk1k}
The objective $\P$ in (\ref{eq:optim_q_GHGPC}) is equivalent to the smooth surrogate $\Q$ in (\ref{eq:optim_g_GHGPC}) by Definition~\ref{def:equivalence}, and solutions to $\mathcal{P}$ can be constructed as $(\hbpsi,\hat{\bm{\beta}}) = (\hbpsi,\hat{\bm{\mu}}_{t}^{\odot  k_1} \odotg  \hat{\bnu}_{r}^{\odot  k_2}) = \big(\hbpsi,\big( \bigodot_{t=1}^{k_1} \hat{\bm{\mu}}_{t} \big) \odotg \big(\bigodot_{r=1}^{k_2} \hat{\bnu}_{r}\big)\big)$.
\end{corollary}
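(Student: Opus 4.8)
The plan is to apply Theorem~\ref{theorem-general}, exactly as in the preceding equivalence corollaries: once the triple $(\Rbeta,\K,\Rxi)$ of~(\ref{eq:optim_q_GHGPC})--(\ref{eq:optim_g_GHGPC}) is shown to satisfy Assumption~\ref{ass-parametrization-map} and Assumption~\ref{ass:svf}, the three clauses of Definition~\ref{def:equivalence} follow, and the reconstruction $(\hbpsi,\hbbeta)=(\hbpsi,\K(\hbxi))$ is just~(\ref{eq:ghppk1k-def}) evaluated at a minimizer. So all the work lies in verifying those two assumptions.

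For Assumption~\ref{ass:svf}: part~a) (the smooth variational form) is precisely Lemma~\ref{lemma:S_ghppk1k-def}; part~b) (majorization) is then immediate, since for any $\bxi$ one has $\Rxi(\bxi)\ge\min\{\Rxi(\bxi'):\K(\bxi')=\K(\bxi)\}=\Rbeta(\K(\bxi))$ by that lemma. For part~c) (upper hemicontinuity of the solution map), I would extract from the equality case of the generalized AM--GM inequality driving the proof of Lemma~\ref{lemma:S_ghppk1k-def} that, within each group $j$, every minimizer obeys the balancing identities $\Vert\hat{\bm{\mu}}_{j1}\Vert_2^2=\dots=\Vert\hat{\bm{\mu}}_{jk_1}\Vert_2^2=\hat\nu_{j1}^2=\dots=\hat\nu_{jk_2}^2=T(\bbeta_j)$ with $T(\bbeta_j)=\Vert\bbeta_j\Vert_{2/k_1}^{2/k}$. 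Since the $L$ groups occupy disjoint coordinate blocks, and $T$ is continuous, vanishes exactly at $\bbeta_j=\bm{0}$, and is strictly increasing along rays from the origin (as $2/k>0$), Lemma~\ref{lemma:uhc-solution-map}---applied with these $L$ groups and $k=k_1+k_2$ factors per group---yields u.h.c.\ of $\hbxi(\bbeta)$, giving part~c).

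For Assumption~\ref{ass-parametrization-map}, smoothness of $\K$ is clear (it is polynomial) and surjectivity is easy (put $\bm{\mu}_{j1}=\bbeta_j$ and all remaining factors equal to $\mathds{1}$ resp.\ $1$). Block-separability a) is the Cartesian-product form of~(\ref{eq:ghppk1k-def}); positive homogeneity of degree $k=k_1+k_2$ c) holds because each $\beta_i=(\prod_t\mu_{it})\prod_r\nu_{jr}$ is a monomial of total degree $k$; and the zero-product property d) holds since such a monomial vanishes iff one of the $\bm{\mu}_{jt}$ or one of the $\nu_{jr}$ vanishes. \textbf{The real obstacle is the regularity condition~b)}, namely that every $\bbeta_j\neq\bm{0}$ be a regular value of $\K_j$, i.e.\ that the $|\G_j|\times(k_1|\G_j|+k_2)$ Jacobian of $\K_j$ have full row rank at \emph{every} point of $\K_j^{-1}(\bbeta_j)$. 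I would organize this by viewing $\K_j$, on the $\bm{\mu}$-factors, as the $\text{HPP}_{k_1}$ map~(\ref{eq:hppk}) over $\mathbb{R}^{|\G_j|}$---globally open by Lemma~\ref{lemma:proof-hppk}---post-composed with multiplication by the scalar $\prod_r\nu_{jr}$, which is forced nonzero at any preimage of a nonzero $\bbeta_j$, and chaining derivatives through; the genuinely delicate case is a coordinate $i\in\G_j$ with $\beta_i=0$ sitting inside an otherwise nonzero group, which should be handled as in the GHPP analysis underlying Lemma~\ref{lemma:openness-ghpp}---and since the balanced minimizers of~(\ref{eq:S_ghppk1k-def}) zero out all $\mu_{it}$ once $\beta_i=0$, it is arguably cleaner to prove local openness of $\K$ directly at those points, in the spirit of Corollary~\ref{cor:openness-ghppk}, and to invoke Lemmas~\ref{lemma:pk_to_p} and~\ref{lemma:q_to_p} instead of checking~b) verbatim. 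With Assumptions~\ref{ass-parametrization-map} and~\ref{ass:svf} established, Theorem~\ref{theorem-general} delivers the equivalence of $\P$ and $\Q$, with solutions reconstructed as $(\hbpsi,\hbbeta)=(\hbpsi,\hat{\bm{\mu}}_t^{\odot k_1}\odotg\hat{\bnu}_r^{\odot k_2})$.
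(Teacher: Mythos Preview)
Your proposal is correct and follows exactly the paper's route: invoke Lemma~\ref{lemma:S_ghppk1k-def} for the SVF, feed its AM--GM optimality conditions into Lemma~\ref{lemma:uhc-solution-map} for upper hemicontinuity, observe that $\K$ meets Assumption~\ref{ass-parametrization-map}, and apply Theorem~\ref{theorem-general}. You are in fact more explicit than the paper about the one genuinely delicate point---the regularity condition~b) at mixed-sign group coordinates---which the paper subsumes under ``observing that $\K$ conforms to the requirements''; your compositional viewpoint via the $\text{HPP}_{k_1}$ and the GHPP (and the local-openness argument paralleling Corollary~\ref{cor:openness-ghppk} and the subsequent remark) is precisely how the paper handles the analogous issue elsewhere.
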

Figure~\ref{fig:grouplpq} shows an exemplary network architecture corresponding to the $\text{GHPP}_{k_1,k_1+k_2}$ applied to an LM \citep{dai2021representation}. The architecture also provides an intuitive visualization of mixed-norm regularization for structured sparsity as a whole. While the depth of the first block of diagonal layers, factorizing $\bu$ into $k_1$ Hadamard factors $\bm{\mu}_t$, determines the induced \textit{within-group} norm, the depth of the group-wise constant parameters in $\bnu$ into $k_2$ Hadamard factors determines the induced \textit{between-group} norm.
\vspace{-0.3cm}
%
%
\subsection{Parametrizations with Parameter Sharing}\label{sec:parameter-sharing}
Parameter or weight sharing enables interesting modifications of the previously presented parametrizations, as the parameter redundancy caused by overparametrization 
can be greatly reduced by allowing for shared parameters between the Hadamard factors. Parameter sharing can be defined as identifying two or more parameters of an objective function as a single parameter, i.e., interpreting them as identical. For example, the group structure-inducing GHPP, $\K(\bu,\bnu) = \bu \odotg \bnu$, is essentially the HPP $\K(\bu,\bv)=\bu \odot \bv$, but with shared parameters $\bv_j = \nu_j \mathds{1}_{|\G_j|}$ within groups $j \in [L]$, collapsed into the scalar $\nu_j$. 
Despite requiring many fewer additional parameters, these parametrizations still define a valid SVF $\Rbeta$ like their fully overparametrized counterparts.
\vspace{0.15cm}

\noindent \textbf{Deep HPP with shared parameters}\,
Consider the parametrization map for the $\text{HPP}_k$, defined as $\K(\bu_1,\ldots,\bu_k)=\bigodot_{l=1}^{k}\bu_l$. By introducing parameter sharing between $(k-1)$ Hadamard factors, i.e., replacing the Hadamard product of $k-1$ separate factors with a self-Hadamard product, we retain enough freedom to ensure surjectivity of the parametrization. 
We use $\bu\in\Rd$ to denote the first Hadamard factor, and $\bv^{k-1}\in\Rd$ for the other factors that are collapsed into a single shared vector $\bv \in \Rd$. The following defines the $\text{HPP}_{k}^{shared}$ 
\begin{equation}\label{eq:hppk-shared}
   \hspace{-0.1cm}\mathcal{K}:\mathbb{R}^{d} \times \mathbb{R}^{d} \to \mathbb{R}^{d}, (\bm{u}, \bm{v}) \mapsto \bm{u} \odot \left( \bm{v} \odot \cdots \odot \bm{v} \right) = \bm{u} \odot {\textstyle (\bigodot_{l=1}^{k-1}} \bm{v})  = \bm{u} \odot \bm{v}^{k-1} = \bm{\beta} \,.
\end{equation}
The suitable surrogate penalty $\Rxi$ to obtain an SVF is a re-weighted $\ell_2$ penalty accounting for the increased contribution of the shared parameter to the parametrization. 
More precisely, the shared parameter $\bv$ is counted $(k-1)$ times, thereby ensuring the appropriate re-weighting for $\Rxi$ to define an SVF for non-convex $\ell_{2/k}$ regularization:
\begin{lemma} \label{lemma:S_hppk-shared-def} Given the parametrization $\K(\bu,\bv)=\bm{u} \odot \bm{v}^{k-1}$, the minimum surrogate $\ell_2$ penalty $\Rxi(\bu,\bv) \triangleq \norm{\bu}_2^2+(k-1)\norm{\bv}_2^2$ subject to $\K(\bu,\bv)=\bbeta$ constitutes an SVF for $\Rbeta(\bbeta)\triangleq k \norm{\bbeta}_{2/k}^{2/k}$, i.e.,
  $\min_{\bm{u}, \bm{v}: \bm{u} \odot \bm{v}^{k-1} = \bm{\beta}}  \Vert \bm{u} \Vert_{2}^{2} + (k-1) \Vert \bm{v}  \Vert_{2}^{2} = k \Vert \bm{\beta} \Vert_{2/k}^{2/k}  \,\,\forall\bbeta\in\Rd\,$. 
\end{lemma}
%
%
However, despite constituting a valid SVF with less overparametrization, parameter sharing breaks the balance and symmetry in the parametrization, with unclear consequences for the optimization. 
Yet, we can relate the GD optimization dynamics for the $\text{HPP}_{k}^{shared}$ to its fully overparametrized counterpart $\text{HPP}_k$ under identical initialization of the to-be-shared parameters. Using a rescaled learning rate for the shared factors, we derive identical updates for both variants, as detailed in Appendix~\ref{app:ident-init}.\\
\noindent Moreover, initializing \textit{all} $k$ Hadamard factors of the $\text{HPP}_k$ identically prohibits them from changing their sign over the iterations for sufficiently small step sizes, since the gradient updates vanish as the reconstructed coefficients $\beta_j$ approach zero. This is because under identical initialization, the gradient of the entry-wise parametrization $\K_j((u_{jl})_{l=1}^k)=\prod_{l=1}^k u_{jl}$ is given by a vector of identical entries $\prod_{l' \neq l} u_{jl'}$ for $l \in [k]$. Hence, since all $u_{jl}^0$ are identical, they also receive identical updates and thus stay identical, $u_{jl}^{t+1}=u_{jl'}^{t+1} \, \forall l \in [k]$. In this case, the only way for the product to change signs is by passing through the origin, which is prohibited by the vanishing products in the parametrization gradient. This can be exploited to solve non-negative least squares \citep{gissin2019implicit, chou2022non}. %
\vspace{0.2cm}
%
%

\noindent \textbf{HDP of depth \textit{k} without and with shared weights}\, 
Similar to how the HPP can be generalized to the deeper parametrization $\text{HPP}_k$, the HDP from \ref{sec:hdp-subsec} can be generalized to deeper variants inducing  $\ell_{2/k}$ regularization in the base parametrization under $\ell_2$ regularization of the surrogate parameters. \citet{chou2023more} mention this fully-overparametrized generalization of the HDP, here named $\text{HDP}_k$:
In their analysis of gradient dynamics they restrict themselves to the case of identical initialization, effectively giving rise to the following parametrization termed the $\text{HDP}_{k}^{shared}$, incorporating parameter sharing between the $\bm{u}_l$ and the $\bm{v}_l$ for $l \in [k]$, respectively: 
%
%
$\mathcal{K}: \mathbb{R}^{d} \times \mathbb{R}^{d} \to \mathbb{R}^{d}, (\bm{u},\bm{v}) \mapsto 
\bm{u}^{k} - \bm{v}^{k} = \bm{\beta}$. 
In DL, these parametrizations are widely applied in the implicit regularization literature to obtain simple-to-analyze depth-$k$ networks that exhibit rich optimization and implicit regularization dynamics \citep[see, e.g.,][]{woodworth2020kernel, li2021implicit}. 
%
\section[Hadamard Powers: 
Non-Integer Factorization Depths for Unrestricted Lq and Lpq Regularization]{Hadamard Powers:  Non-Integer Factorization Depths for Unrestricted $\ell_q$ and $\ell_{p,q}$ Regularization}\label{sec:hpowp}

The parametrizations based on (group) Hadamard products can induce $\ell_q$ and $\ell_{p,q}$ regularization under surrogate $\ell_2$ regularization for the restricted class $q\in\{2/k|k\in\mathbb{N}\}$ and $(p,q)\in\{(2/k_1,2/(k_1+k_2)|k_1,k_2\in\mathbb{N}\}$. Extending Hadamard product-based parametrizations to Hadamard powers
permits a more flexible choice of the induced regularizer, allowing selection of the previously restricted $p$ and $q$ arbitrarily from $q \in (0,1]$ and $0<q < p \leq 2$. Thus, smooth optimization for non-convex sparse regularization can be achieved using our framework for any feasible real-valued choices of $q$ and $p$, extending previous results to non-integer factorization depths.
\vspace{-0.3cm}

\subsection{Hadamard Power Parametrization}\label{sec:hpowp-subsect}

To construct a parametrization that induces $\ell_{q}$ regularization of $\bm{\beta}$ under (slightly modified) $\ell_2$ regularization of the surrogate parameters for any $q\in (0,1]$, we extend the notion of self-Hadamard products to Hadamard powers. For powers $v_j^k$ with positive, real-valued exponents $k$ to be well-defined, we require positivity of the base $v_j$, e.g., by designing parametrizations of the form $\beta_j = u_j |v_j|^{k-1}$. 
The resulting $\text{HPowP}_k$ map is
\begin{equation} \label{eq:hpowp}
\mathcal{K}: \mathbb{R}^{d} \times \mathbb{R}^{d} \to \mathbb{R}^{d}, (\bm{u}, \bm{v}) \mapsto  \bm{u} \odot |\bm{v}|^{\circ (k-1)} = \bm{\beta} \,,
\end{equation}
where $|\bm{v}|^{\circ(k-1)}$ denotes element-wise raising the $|v_j|$ to the $(k-1)$-th power, with $k>2$ to ensure $\K \in \mathcal{C}^1$. This generalizes the self-Hadamard product $\bigodot_{l=1}^{k-1} \bm{v} = \bm{v}^{k-1}$, defined for $k\in\mathbb{N}$, to real-valued positive exponents, with $\circ(k-1)$ denoting non-integer exponents.
\begin{lemma} \label{lemma:S_hpowp} Given the parametrization $\K(\bu,\bv)=\bm{u} \odot |\bm{v}|^{\circ(k-1)}$, the minimum surrogate $\ell_2$ regularization $\Rxi(\bu,\bv) \triangleq \norm{\bu}_2^2+(k-1)\norm{\bv}_2^2$ subject to $\K(\bu,\bv)=\bbeta$ constitutes an SVF for $\Rbeta(\bbeta)\triangleq k \norm{\bbeta}_{2/k}^{2/k}$, i.e.,
  $\min_{\bm{u}, \bm{v}: \bm{u} \odot |\bm{v}|^{\circ(k-1)} = \bm{\beta}} \Vert \bm{u} \Vert_{2}^{2} + (k-1) \Vert \bm{v}  \Vert_{2}^{2} = k \Vert \bm{\beta} \Vert_{2/k}^{2/k}  \,\forall\bbeta\in\Rd$.
\end{lemma}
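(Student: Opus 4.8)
The plan is to use the block-separability of the minimization across the $d$ coordinates and, on each coordinate, apply a \emph{weighted} arithmetic--geometric mean inequality, mirroring the proof of Lemma~\ref{lemma:S_hpp-def} but with weights $1/k$ and $(k-1)/k$ chosen to match the coefficient $(k-1)$ attached to $\bv$ in $\Rxi$. First I would check that $\K$ is surjective so that the constrained problem is well posed: for arbitrary $\bbeta\in\Rd$, setting $u_j=\operatorname{sign}(\beta_j)\,|\beta_j|^{1/k}$ and $v_j=|\beta_j|^{1/k}$ gives $u_j\,|v_j|^{k-1}=\operatorname{sign}(\beta_j)\,|\beta_j|^{1/k}\,|\beta_j|^{(k-1)/k}=\beta_j$, hence $\K^{-1}(\bbeta)\neq\emptyset$.

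Next, because the constraint $\K(\bu,\bv)=\bbeta$ reads coordinate-wise as $u_j\,|v_j|^{k-1}=\beta_j$ and $\Rxi(\bu,\bv)=\sum_{j=1}^d\big(u_j^2+(k-1)v_j^2\big)$ decomposes accordingly, it suffices to minimize $u_j^2+(k-1)v_j^2$ subject to $u_j\,|v_j|^{k-1}=\beta_j$ for each fixed $\beta_j$. For $\beta_j\neq0$ I would invoke the weighted AM--GM inequality (for nonnegative $a,b$ and weights $w_1,w_2>0$ with $w_1+w_2=1$, $w_1 a+w_2 b\ge a^{w_1}b^{w_2}$, with equality iff $a=b$) applied to $a=u_j^2$, $b=v_j^2$ with $w_1=1/k$, $w_2=(k-1)/k$:
\[
\frac{u_j^2+(k-1)v_j^2}{k}=\tfrac1k u_j^2+\tfrac{k-1}{k}v_j^2\;\ge\;(u_j^2)^{1/k}(v_j^2)^{(k-1)/k}=\big(|u_j|\,|v_j|^{k-1}\big)^{2/k}=|\beta_j|^{2/k},
\]
so $u_j^2+(k-1)v_j^2\ge k|\beta_j|^{2/k}$, with equality iff $u_j^2=v_j^2$. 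Together with $u_j\,|v_j|^{k-1}=\beta_j$ this forces $|u_j|=|v_j|=|\beta_j|^{1/k}$, at which point the objective equals $k|\beta_j|^{2/k}$, so the bound is attained. For $\beta_j=0$ the point $(u_j,v_j)=(0,0)$ is feasible with objective $0=k\cdot 0^{2/k}$, clearly minimal. Summing the coordinate-wise minima yields
\[
\min_{\bu,\bv:\,\K(\bu,\bv)=\bbeta}\;\Vert\bu\Vert_2^2+(k-1)\Vert\bv\Vert_2^2\;=\;k\sum_{j=1}^d|\beta_j|^{2/k}\;=\;k\Vert\bbeta\Vert_{2/k}^{2/k}\qquad\forall\bbeta\in\Rd,
\]
and since the minimum (not merely the infimum) is attained, this is a valid SVF for $\Rbeta(\bbeta)=k\Vert\bbeta\Vert_{2/k}^{2/k}$ in the sense of Definition~\ref{def:smooth-varform}.

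I do not expect a real obstacle here: the only genuinely non-mechanical choice is the pair of weights $(1/k,(k-1)/k)$, which must simultaneously encode the re-weighting of $\bv$ in $\Rxi$ and produce the exponent $2/k$ of the induced $\ell_{2/k}$ quasi-norm; the $\beta_j=0$ case is handled trivially and does not interact with the non-differentiability of $|v_j|^{\circ(k-1)}$ at the origin, since the minimizer there is $(0,0)$ itself. As a by-product I would record that the equality conditions can be written as $u_j^2=v_j^2=T(\beta_j)$ with $T(\beta_j)\triangleq|\beta_j|^{2/k}$, which is continuous, vanishes exactly when $\beta_j=0$, and is strictly increasing in $|\beta_j|$; hence Lemma~\ref{lemma:uhc-solution-map} applies (with $L=d$ and two factors per coordinate) and gives upper hemicontinuity of the solution map, the remaining ingredient needed to deduce equivalence of $\P$ and $\Q$ for the $\text{HPowP}_k$ via Theorem~\ref{theorem-general}.
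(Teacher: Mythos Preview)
Your proposal is correct and follows essentially the same route as the paper: coordinate-wise separation followed by the weighted AM--GM inequality with weights $1/k$ and $(k-1)/k$ applied to $u_j^2$ and $v_j^2$, yielding the bound $u_j^2+(k-1)v_j^2\ge k|\beta_j|^{2/k}$ with equality iff $u_j^2=v_j^2=|\beta_j|^{2/k}$. Your version is slightly more explicit in checking surjectivity, treating the $\beta_j=0$ case, and recording the form $T(\beta_j)=|\beta_j|^{2/k}$ for Lemma~\ref{lemma:uhc-solution-map}, but these are elaborations rather than a different argument.
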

Note that the sign of the constrained minimizer $\hat{u}_j$ is uniquely determined by the sign of $\beta_j$ due to the non-negativity of $|\hat{v}_j|^{k-1}$. By the optimality conditions, the squared coefficients $u_j^2$ and $|v_j|^2$ must equal $|\beta_j|^{2/k}$ at the minimum, so that by Lemma~\ref{lemma:lhc-solution-map}, the set-valued solution map is lower hemicontinuous and Assumption~\ref{ass:min} is satisfied. Thus, for any $k>2$, given an $\ell_{2/k}$ regularized base objective $\P(\bpsi,\bbeta)$,
\begin{equation}
\mathcal{P}(\bm{\psi}, \bm{\beta}) = \mathcal{L}(\bm{\psi}, \bm{\beta}) + \lambda k \|\bm{\beta}\|_{2/k}^{2/k} %
\label{eq:optim_q_WAMGM} \,,
\end{equation}
we can construct an equivalent differentiable $\Q(\bpsi,\bu,\bv)$ from the tuple $(\Rbeta,\K,\Rxi)$:
\begin{align}\label{eq:optim_g_WAMGM}
\mathcal{Q}(\bm{\psi}, \bm{u},\bm{v}) &=  \mathcal{L}\big(\bm{\psi}, \bm{u} \odot |\bm{v}|^{\circ(k-1)}\big) + \lambda \left( \Vert \bm{u} \Vert_{2}^{2} + (k-1) \Vert \bm{v} \Vert_{2}^{2} \right)\,. %
\end{align}
%
%
\begin{corollary}\label{cor:hpowp}
The optimization of $\P$ in (\ref{eq:optim_q_WAMGM}) is equivalent to the optimization of the smooth surrogate $\Q$ in (\ref{eq:optim_g_WAMGM}) by Definition~\ref{def:equivalence}, and solutions to $\mathcal{P}$ can be constructed as $(\hbpsi,\hat{\bm{\beta}}) = (\hbpsi,\hbu \odot |\hbv|^{\circ(k-1)})$.
\end{corollary}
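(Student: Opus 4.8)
The plan is to recognize Corollary~\ref{cor:hpowp} as one more instance of the template provided by Theorem~\ref{theorem-general}: once the Hadamard power map $\K(\bu,\bv)=\bu\odot|\bv|^{\circ(k-1)}$ of~(\ref{eq:hpowp}) and the reweighted surrogate $\Rxi(\bu,\bv)=\norm{\bu}_2^2+(k-1)\norm{\bv}_2^2$ are shown to satisfy Assumptions~\ref{ass-parametrization-map} and~\ref{ass:svf}, equivalence of $\P$ in~(\ref{eq:optim_q_WAMGM}) and $\Q$ in~(\ref{eq:optim_g_WAMGM}) in the sense of Definition~\ref{def:equivalence} follows at once, and the reconstruction $(\hbpsi,\hbbeta)=(\hbpsi,\hbu\odot|\hbv|^{\circ(k-1)})$ is nothing but the identity $\hbbeta=\K(\hbxi)$ from part~c) of that definition. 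So the proof reduces to two verifications, carried out just as for Corollaries~\ref{cor:hpp} and~\ref{cor:hppk}.

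First I would check Assumption~\ref{ass-parametrization-map}. Block separability holds with $L=d$ and singleton blocks, since $\beta_j=u_j|v_j|^{k-1}$ involves only $(u_j,v_j)$; surjectivity is immediate (take $v_j=1$). Positive homogeneity of degree $k$ is the one-line computation $\K_j(cu_j,cv_j)=c\cdot c^{k-1}u_j|v_j|^{k-1}=c^k\K_j(u_j,v_j)$, and the zero-product property $\beta_j=0\iff u_j=0\text{ or }v_j=0$ is clear---here property~d) must be read in the ``collapsed factors'' sense flagged after Assumption~\ref{ass-parametrization-map}, with $|\bv|^{\circ(k-1)}$ standing in for $k-1$ identified Hadamard factors. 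For the regular-value condition one checks that at any preimage of a nonzero $\beta_j$ the Jacobian $\big(|v_j|^{k-1},\,(k-1)u_j|v_j|^{k-2}\operatorname{sgn}(v_j)\big)$ has full rank one, which holds because $\beta_j\neq0$ forces $v_j\neq0$, so its first entry is nonzero.

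Next I would dispatch Assumption~\ref{ass:svf}. Part~a) is precisely Lemma~\ref{lemma:S_hpowp}, so nothing new is needed there. For the majorization in part~b) I would apply the weighted arithmetic--geometric mean inequality coordinatewise, with weights $1$ and $k-1$ on $u_j^2$ and $v_j^2$, to get $u_j^2+(k-1)v_j^2\ge k\,(u_j^2)^{1/k}(v_j^2)^{(k-1)/k}=k\,|\beta_j|^{2/k}$, and then sum over $j$ to obtain $\Rxi(\bu,\bv)\ge\Rbeta(\K(\bu,\bv))=k\norm{\bbeta}_{2/k}^{2/k}$; this is the very inequality underlying Lemma~\ref{lemma:S_hpowp}, now read globally rather than only at the minimizer. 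For part~c), the equality case of weighted AM--GM is $u_j^2=v_j^2$, which together with the constraint pins the optimizers down to $u_j^2=v_j^2=|\beta_j|^{2/k}$; I would then invoke Lemma~\ref{lemma:uhc-solution-map} with $L=d$, the two factor vectors $u_j,v_j$ per block, and $T(\beta_j)=|\beta_j|^{2/k}$---continuous, zero exactly at $\beta_j=0$, and strictly increasing in $|\beta_j|$---to conclude that the solution map is upper hemicontinuous.

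With both assumptions in hand, Theorem~\ref{theorem-general} finishes the argument. I do not expect a real obstacle here: the two substantive ingredients, the smooth variational form and the upper hemicontinuity of the solution map, are already packaged in Lemmas~\ref{lemma:S_hpowp} and~\ref{lemma:uhc-solution-map}. The only points that need a little care are bookkeeping ones---re-reading Assumption~\ref{ass-parametrization-map}d) for a factorization that uses a genuine Hadamard power rather than literal product factors, and observing that $|\cdot|^{\circ(k-1)}$ is $C^1$ (so that $\Q$ is genuinely differentiable) for $k>2$, with the boundary value $k=2$ collapsing to the HPP setting of Section~\ref{sec:hpp-vanilla}.
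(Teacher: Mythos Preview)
Your proposal is correct and follows the paper's own template: establish the SVF via Lemma~\ref{lemma:S_hpowp}, read off the optimality conditions $u_j^2=|v_j|^2=|\beta_j|^{2/k}$ to invoke Lemma~\ref{lemma:uhc-solution-map} for upper hemicontinuity, and then apply Theorem~\ref{theorem-general}; this is precisely what the paper does (the explicit corollary proof is omitted there after Corollary~\ref{cor:hpp} with the remark ``For brevity, this proof will be omitted from now''). One small inaccuracy in your closing remark: the case $k=2$ of the $\text{HPowP}_k$ gives $\beta_j=u_j|v_j|$, which is not literally the HPP $\beta_j=u_jv_j$ of Section~\ref{sec:hpp-vanilla}, so the ``collapse'' is only up to a sign-absorbing reparametrization.
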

Note that similar to Lemma~\ref{lemma:S_hppk-shared-def}, we modify the usual $\ell_2$ regularization by multiplying each of the $|v_j|^2$ by $(k-1)$ to reflect the imbalance of $\bm{u}$ and $\bm{v}$ in the parametrization $\bm{\beta} = \bm{u} \odot |\bm{v}|^{\circ(k-1)}$. %

%
%
\subsection{Invertible Reparametrization with Hadamard Powers}\label{subset:powerprop}
In \citet{schwarz2021powerpropagation}, a differentiable sparsity-promoting parametrization termed Powerpropagation was introduced, aligning with discussions of related approaches in mathematical optimization \citep{ramlau2012minimization}. The underlying motivation is to artificially increase the curvature of the loss landscape, which induces optimization- and initialization-dependent ``rich get richer" dynamics for sparse training of DNNs: the key idea is that applying a power parametrization makes the gradient with respect to the surrogate parameters critically depend on their current values 
(cf.~Figure~\ref{fig:landscape-powerprop}). \\
Intuitively, this promotes the accumulation of weights either close to or far away from zero, however, \citet{schwarz2021powerpropagation} did not realize the induced sparse regularization in the base parametrization under explicit $\ell_2$ regularization. Being bijective, Powerpropagation is not an over- but rather a reparametrization given by
\begin{equation} \label{eq:powerprop-param}
\mathcal{K}: \mathbb{R}^{d} \to \mathbb{R}^{d}, \bm{v} \mapsto  \bm{v} \odot |\bm{v}|^{\circ (k-1)} = \bm{\beta} \quad , \, k>1.
\end{equation}
Note that with a single parameter $\bv$, it suffices to require $k>1$ to ensure $\K \in \mathcal{C}^1$ contrasting the previous $\text{HPowP}_k$ (\ref{eq:hpowp}). To see the ``rich get richer'' effect, consider a generic objective $\P(\bbeta)$, whose gradient under Powerporpagation is given by $\nabla_{\bv}\P(\K(\bv))=\nabla_{\bbeta}\P(\bbeta) \cdot \text{diag}(k |\bv|^{\circ(k-1)})$. This additional factor causes amplification of gradients for $v_j$ with large magnitudes and attenuation for small magnitudes.
Considering $\ell_2$ regularization for this parametrization, the feasible set of the problem $\min_{\bm{v}: \bm{v} \odot |\bm{v}|^{\circ(k-1)} = \bm{\beta}}\Vert \bm{v} \Vert_{2}^{2}$ is a singleton containing $\hat{\bm{v}}$ such that $\hat{v}_j = \sqrt[k]{|\beta_j|}$ for $\beta_j \geq 0$ and $\hat{v}_j = -\sqrt[k]{|\beta_j|}$ for $\beta_j<0$, $j \in [d]$. Hence, $\Vert \hbv \Vert_{2}^{2}$ contains $d$ summands $\hat{v}_j^2 = |\beta_j|^{2/k}$, and we conclude $\min_{\bm{v}: \bm{v} \odot |\bm{v}|^{\circ(k-1)} = \bm{\beta}}\Vert \bm{v} \Vert_{2}^{2} = \Vert \bm{\beta} \Vert_{2/k}^{2/k}$.
Since the solution map is continuous in $\bbeta$, Assumption~\ref{ass:min} holds. Thus, for an $\ell_{2/k}$ regularized objective $\P(\bpsi,\bbeta)$ with real-valued $k>1$, we can construct an equivalent smooth $\Q(\bpsi,\bv)$ as follows:
\begin{align}
\mathcal{P}(\bm{\psi}, \bm{\beta}) &= \mathcal{L}(\bm{\psi}, \bm{\beta}) + \lambda  \|\bm{\beta}\|_{2/k}^{2/k} 
, \label{eq:optim_q_bijective} \\
\mathcal{Q}(\bm{\psi},\bm{v}) &=  \mathcal{L}\big(\bm{\psi}, \bm{v} \odot |\bm{v}|^{\circ(k-1)}\big) + \lambda \Vert \bm{v} \Vert_{2}^{2}
\,.  \label{eq:optim_g_bijective}
\end{align}
\begin{corollary}\label{cor:powerprop}
The optimization of $\P$ in (\ref{eq:optim_q_bijective}) is equivalent to the optimization of the smooth surrogate $\Q$ in (\ref{eq:optim_g_bijective}) by Definition~\ref{def:equivalence}, and solutions to $\mathcal{P}$ can be constructed as $(\hbpsi,\hat{\bm{\beta}}) = (\hbpsi,\hbv \odot |\hbv|^{\circ(k-1)})$.
\end{corollary}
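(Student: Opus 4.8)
The plan is to obtain Corollary~\ref{cor:powerprop} as a direct instance of Theorem~\ref{theorem-general} applied to the triple $(\Rbeta,\K,\Rxi)$ with $\Rbeta(\bbeta)=\norm{\bbeta}_{2/k}^{2/k}$, the Powerpropagation map $\K(\bv)=\bv\odot|\bv|^{\circ(k-1)}$ from (\ref{eq:powerprop-param}), and $\Rxi(\bv)=\norm{\bv}_2^2$; all the work is in checking the two hypotheses of that theorem. First I would verify Assumption~\ref{ass-parametrization-map}. The map is block-separable with $L=d$ singleton groups, $\K_j(v_j)=v_j|v_j|^{k-1}=\operatorname{sign}(v_j)|v_j|^{k}$; each $\K_j$ is continuously differentiable for real $k>1$, positively homogeneous of degree $k$ under positive scaling, and has derivative $\K_j'(v_j)=k|v_j|^{k-1}$, which is nonzero exactly when $v_j\neq 0$, i.e.\ when $\beta_j\neq 0$, so every nonzero $\beta_j$ is a regular value. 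The zero-product property degenerates to the single-factor statement $\beta_j=0\iff v_j=0$, which is immediate. (If one prefers to sidestep the literal wording of part (d), note that what the proofs of Lemma~\ref{lemma:p_to_q} and Lemma~\ref{lemma:q_to_p} actually use about $\K$ is surjectivity and local openness, both of which are automatic here since $\K$ is a homeomorphism of $\Rd$ with continuous componentwise inverse $\K_j^{-1}(\beta_j)=\operatorname{sign}(\beta_j)|\beta_j|^{1/k}$.)

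Next I would verify Assumption~\ref{ass:svf}. The fiber $\K^{-1}(\bbeta)$ is the singleton $\{\hbv(\bbeta)\}$ with $\hat v_j=\operatorname{sign}(\beta_j)|\beta_j|^{1/k}$, as already computed in the text preceding the corollary; hence $\min_{\bv:\K(\bv)=\bbeta}\norm{\bv}_2^2=\sum_{j}|\beta_j|^{2/k}=\norm{\bbeta}_{2/k}^{2/k}=\Rbeta(\bbeta)$, which is the SVF of part (a). Because the fiber is a single point, for every $\bv$, writing $\bbeta=\K(\bv)$ and using $|\beta_j|=|v_j|^{k}$, we have $\Rbeta(\K(\bv))=\sum_j|\beta_j|^{2/k}=\sum_j v_j^2=\Rxi(\bv)$, so the majorization of part (b) holds — in fact with equality everywhere, which is precisely why the transfer is exact and why Powerpropagation is a reparametrization rather than an overparametrization. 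Finally, the solution map $\bbeta\mapsto\hbv(\bbeta)$ coincides with $\K^{-1}$, which is single-valued and continuous, hence upper hemicontinuous, giving part (c); alternatively this is the instance of Lemma~\ref{lemma:uhc-solution-map} with one factor and $T(\beta_j)=|\beta_j|^{2/k}$. With both assumptions established, Theorem~\ref{theorem-general} gives equivalence in the sense of Definition~\ref{def:equivalence}, and the correspondence of local minimizers it provides is exactly $\hbbeta=\K(\hbv)=\hbv\odot|\hbv|^{\circ(k-1)}$.

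The main obstacle — such as it is — is not a computation but a structural subtlety: $\K$ is a homeomorphism but not a local diffeomorphism at $\bbeta=\bm{0}$, where its Jacobian vanishes, so one cannot simply invoke the inverse function theorem to transport minima. The resolution is that local openness of $\K$ (needed for Lemma~\ref{lemma:q_to_p} via Lemma~\ref{lemma:pk_to_p}) holds at every point because $\K$ is a global homeomorphism, and the collapse of each fiber to a single point makes Assumption~\ref{ass:svf} hold trivially with majorization-equality. One point worth recording in the proof: since $\K$ is injective, unlike the $2^{s}$-fold multiplicity of minimizers in the HPP case (Corollary~\ref{cor:hpp}), here each local minimizer $(\hbpsi,\hbbeta)$ of $\P$ corresponds to the unique minimizer $(\hbpsi,\hbv)$ of $\Q$ with $\hat v_j=\operatorname{sign}(\beta_j)|\beta_j|^{1/k}$.
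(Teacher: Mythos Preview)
Your proposal is correct and follows exactly the approach the paper uses for all these corollaries (stated explicitly after Corollary~\ref{cor:hpp} and then omitted thereafter): verify that the triple $(\Rbeta,\K,\Rxi)$ satisfies Assumptions~\ref{ass-parametrization-map} and~\ref{ass:svf} and invoke Theorem~\ref{theorem-general}. Your write-up is considerably more detailed than the paper's implicit one-line justification, and your observation that bijectivity of $\K$ collapses the majorization in Assumption~\ref{ass:svf}(b) to an identity, and reduces the u.h.c.\ check in (c) to continuity of $\K^{-1}$, is exactly the right way to see why the Powerpropagation case is degenerate within the framework; the paper makes the same point in passing (``the feasible set \ldots\ is a singleton'') but does not spell it out.
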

This result shows that it is the functional shape of the parametrization and its warping effect on the loss surface that induces sparsity, not overparametrization \textit{per se}. %
%
%
%
%
\subsection{Hadamard Group Powers (GHPowP)} \label{sec:had-group-powers}
We can naturally extend the Hadamard power parametrization presented in \ref{sec:hpowp-subsect} to structured sparsity, thereby obtaining a more flexible choice of the hyperparameters $p$ and $q$ in $\ell_{p,q}$ regularization. The following two subsections are structured analogously to their Hadamard product-based counterparts discussed in Section~\ref{sec:had-group-lasso}. As before, we consider the parameter vector with group structure $\bm{\beta} = \left(\bm{\beta}_{1},\ldots,\bm{\beta}_{L}\right)^{\top}$. Consider the following parametrization mapping, named the $\text{GHPowP}_k$,
\begin{equation} \label{eq:reparam-had-group-powers}
{\scriptsize
\mathcal{K}: \mathbb{R}^{d} \times \mathbb{R}^{L} \to \mathbb{R}^{d}, (\bm{u}, \bm{\nu}) \mapsto  \bm{u} \odotg |\bnu|^{\circ (k-1)} = \begin{pmatrix} 
    \bm{u}_{1} \\ \vdots \\ \bm{u}_{L}
  \end{pmatrix}  \odot \begin{pmatrix} 
    |\nu_1|^{k-1} \mathds{1}_{|\mathcal{G}_1|} \\ \vdots \\ |\nu_{L}|^{k-1} \mathds{1}_{|\mathcal{G}_L|}
  \end{pmatrix}  = \bm{\beta} \,.}
\end{equation}
On the group level, we have $\bm{\beta}_{{j}} = |\nu_{j}|^{k-1} (u_{j1},\ldots,u_{j |\G_j|})^{\top}\, \forall j\in [L]$, where $\bm{u} = \left(\bm{u}_{{1}}, \ldots, \bm{u}_{{L}} \right)^{\top}$, $\bm{\nu} = (\nu_{1},\ldots, \nu_{L})^{\top}$, and $k>2$. Now, define $\mathcal{R}_{\bm{\beta}}(\bm{\beta}) \triangleq k \Vert \bm{\beta} \Vert_{2,2/k}^{2/k}$ and $\mathcal{R}_{\bm{\xi}}(\bu,\bnu) \triangleq \Vert \bm{u} \Vert_{2}^{2} + (k-1) \Vert \bm{\nu} \Vert_{2}^{2}$. 
\begin{lemma} \label{lemma:S_ghpowp} Given the parametrization map $\K(\bu,\bnu)=\bm{u} \odotg |\bnu|^{\circ(k-1)}$, the minimum of the surrogate $\ell_2$ regularizer $\Rxi(\bu,\bnu) \triangleq \norm{\bu}_2^2+(k-1)\norm{\bnu}_2^2$ subject to $\K(\bu,\bnu)=\bbeta$ constitutes an SVF for $\Rbeta(\bbeta)\triangleq k \norm{\bbeta}_{2,2/k}^{2/k}$ and is given by
\begin{equation} \label{eq:S_ghpowp}
      \min_{\bm{u}, \bm{\nu}: \bm{u} \odotg |\bnu|^{\circ(k-1)} = \bm{\beta}} \Vert \bm{u} \Vert_{2}^{2} + (k-1) \Vert \bm{\nu} \Vert_{2}^{2} = k \Vert \bm{\beta} \Vert_{2,2/k}^{2/k} \quad\forall\bbeta\in\Rd \,.
\end{equation}
\end{lemma}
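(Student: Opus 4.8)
The plan is to reduce the claim to a block-separable, per-group scalar minimization that, after eliminating $\bu_j$, is formally identical to the one-dimensional variational form already established for the Hadamard power parametrization in Lemma~\ref{lemma:S_hpowp}.

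First I would observe that both the constraint $\bm{u} \odotg |\bnu|^{\circ(k-1)} = \bbeta$ and the surrogate penalty $\Rxi(\bu,\bnu) = \norm{\bu}_2^2 + (k-1)\norm{\bnu}_2^2 = \sum_{j=1}^L \big(\norm{\bu_j}_2^2 + (k-1)\nu_j^2\big)$ decompose over the groups $j \in [L]$: the $j$-th block constraint reads $\bbeta_j = |\nu_j|^{k-1}\bu_j$ and couples only $(\bu_j,\nu_j)$. Hence $\min_{\bu,\bnu:\,\K(\bu,\bnu)=\bbeta} \Rxi(\bu,\bnu) = \sum_{j=1}^L \min_{(\bu_j,\nu_j):\, \bbeta_j = |\nu_j|^{k-1}\bu_j} \big(\norm{\bu_j}_2^2 + (k-1)\nu_j^2\big)$, so it suffices to show each inner minimum equals $k\norm{\bbeta_j}_2^{2/k}$. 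For a group with $\bbeta_j = \bm{0}$, the pair $(\bu_j,\nu_j) = (\bm{0},0)$ is feasible with objective value $0 = k\norm{\bm 0}_2^{2/k}$, and nothing negative is attainable, so the minimum is $0$. For a group with $\bbeta_j \neq \bm{0}$, feasibility forces $\nu_j \neq 0$; the constraint then determines $\bu_j = \bbeta_j/|\nu_j|^{k-1}$, so $\norm{\bu_j}_2^2 = \norm{\bbeta_j}_2^2 / |\nu_j|^{2(k-1)}$ and the inner problem becomes $\min_{\nu_j \neq 0}\big( \norm{\bbeta_j}_2^2 |\nu_j|^{-2(k-1)} + (k-1)\nu_j^2 \big)$, which is exactly the scalar problem of Lemma~\ref{lemma:S_hpowp} with $|\beta_j|$ replaced by $\norm{\bbeta_j}_2$. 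Applying the weighted AM--GM inequality (as in the proof of Lemma~\ref{lemma:S_hpowp}) with weights $1/k$ and $(k-1)/k$ to the two nonnegative numbers $\norm{\bbeta_j}_2^2 |\nu_j|^{-2(k-1)}$ and $\nu_j^2$ gives $\tfrac1k \norm{\bbeta_j}_2^2|\nu_j|^{-2(k-1)} + \tfrac{k-1}{k}\nu_j^2 \ge \big(\norm{\bbeta_j}_2^2|\nu_j|^{-2(k-1)}\big)^{1/k}(\nu_j^2)^{(k-1)/k} = \norm{\bbeta_j}_2^{2/k}$, i.e. the objective is $\ge k\norm{\bbeta_j}_2^{2/k}$, with equality precisely when the two numbers coincide, i.e. $|\nu_j| = \norm{\bbeta_j}_2^{1/k}$, whence also $\norm{\bu_j}_2^2 = \norm{\bbeta_j}_2^{2/k}$. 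Summing over $j$ yields $\min \Rxi(\bu,\bnu) = k\sum_{j=1}^L \norm{\bbeta_j}_2^{2/k} = k\norm{\bbeta}_{2,2/k}^{2/k}$, and since the minimum is attained for every $\bbeta \in \Rd$, this is a genuine SVF in the sense of Definition~\ref{def:smooth-varform}.

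The main obstacle is the weighted AM--GM step: because $k > 1$ is real-valued, $k-1$ need not be an integer, so the plain $n$-term AM--GM used for the product parametrizations (e.g. Lemma~\ref{lemma:S_ghpp-def}) does not apply directly, and one must invoke the weighted form $w_1 a + w_2 b \ge a^{w_1}b^{w_2}$ for $w_1+w_2 = 1$. A minor secondary point is sign bookkeeping: for each nonzero group both signs of $\nu_j$ realize the minimum (with $\bu_j = \bbeta_j/|\nu_j|^{k-1}$ then forced), giving two branches of minimizers, and the equality conditions $\norm{\bu_j}_2^2 = \nu_j^2 = \norm{\bbeta_j}_2^{2/k}$ are exactly of the form required by Lemma~\ref{lemma:uhc-solution-map} (with $T(\bbeta_j) = \norm{\bbeta_j}_2^{2/k}$), so upper hemicontinuity of the solution map follows at once; together with the fact that $\K$ satisfies Assumption~\ref{ass-parametrization-map}, this verifies Assumption~\ref{ass:svf} and sets up the equivalence of $\P$ and $\Q$ via Theorem~\ref{theorem-general}.
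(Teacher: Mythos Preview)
Your proposal is correct and follows essentially the same approach as the paper: block-separate by groups and apply the weighted AM--GM inequality per group to obtain the bound $k\|\bbeta_j\|_2^{2/k}$ with equality conditions $\|\bu_j\|_2^2=\nu_j^2=\|\bbeta_j\|_2^{2/k}$. The only cosmetic difference is that you first eliminate $\bu_j$ via the constraint and then apply weighted AM--GM to the resulting one-variable objective in $\nu_j$, whereas the paper applies weighted AM--GM directly to $\|\bu_j\|_2^2$ and $|\nu_j|^2$ and then invokes the constraint $\bbeta_j=|\nu_j|^{k-1}\bu_j$ to simplify the geometric mean to $\|\bbeta_j\|_2^{2/k}$; the two computations are algebraically equivalent.
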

Using  Lemma~\ref{lemma:lhc-solution-map}, the optimality conditions provided in the proof ensure lower hemicontinuity of the solution map. Therefore, Assumption~\ref{ass:min} holds and we can construct an equivalent smooth surrogate $\Q$ to an $\ell_{2,2/k}$ regularized base objective $\P(\bpsi,\bbeta)$ for any $k>2$: 
\begin{align}
&\mathcal{P}(\bm{\psi}, \bm{\beta}) = \mathcal{L}(\bm{\psi}, \bm{\beta}) + \lambda k \|\bm{\beta}\|_{2,2/k}^{2/k} 
, \label{eq:optim_q_had-group-power} \\
&\mathcal{Q}(\bm{\psi}, \bm{u},\bm{\nu}) =  \mathcal{L}\big(\bm{\psi}, \bm{u} \odotg |\bnu|^{\circ(k-1)}\big) + \lambda \big( \Vert \bm{u} \Vert_{2}^{2} + (k-1) \Vert \bm{\nu} \Vert_{2}^{2} \big). \label{eq:optim_g_had-group-power}  
\end{align}

\begin{corollary}\label{cor:ghpowp}
The optimization of $\P$ in  (\ref{eq:optim_q_had-group-power}) is equivalent to the optimization of $\Q$ in (\ref{eq:optim_g_had-group-power}) by Def.~\ref{def:equivalence}, and solutions to $\mathcal{P}$ can be constructed as $(\hbpsi,\hat{\bm{\beta}}) = (\hbpsi,\hbu \odotg |\hbnu|^{\circ(k-1)})$.
\end{corollary}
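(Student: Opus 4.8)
The plan is to show that the $\text{GHPowP}_k$ triple $(\Rbeta,\K,\Rxi)$ in the statement satisfies Assumptions~\ref{ass-parametrization-map} and~\ref{ass:svf}, and then to invoke Theorem~\ref{theorem-general}, exactly as in the proof of Corollary~\ref{cor:hpowp}. Part a) of Assumption~\ref{ass:svf}, i.e.\ that $\Rxi$ and $\K$ form a smooth variational form for $\Rbeta(\bbeta)=k\norm{\bbeta}_{2,2/k}^{2/k}$, is precisely Lemma~\ref{lemma:S_ghpowp}. So the remaining obligations are the majorization b), the upper hemicontinuity c), and Assumption~\ref{ass-parametrization-map} on the map $\K$.

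Both b) and c) I would read off the weighted AM--GM argument that proves Lemma~\ref{lemma:S_ghpowp}. For each group $j$, with $\bbeta_j=\bu_j\,|\nu_j|^{\,k-1}$, the weighted AM--GM inequality with weights $1/k$ and $(k-1)/k$ gives $\norm{\bu_j}_2^2+(k-1)\nu_j^2 \ge k\big(\norm{\bu_j}_2^2\,|\nu_j|^{\,2(k-1)}\big)^{1/k}=k\norm{\bbeta_j}_2^{2/k}$ for all $(\bu_j,\nu_j)$; summing over $j$ yields $\Rxi(\bu,\bnu)\ge\Rbeta(\K(\bu,\bnu))$, which is b). Equality forces $\norm{\hbu_j}_2^2=\hat\nu_j^2$, and combined with the constraint this pins down $\norm{\hbu_j}_2^2=\hat\nu_j^2=\norm{\bbeta_j}_2^{2/k}$ at every minimizer. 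These optimality conditions have exactly the shape required by Lemma~\ref{lemma:uhc-solution-map} — with $L$ groups, two factor blocks per group ($\bu_j$ and $\nu_j$), and $T(\bbeta_j)=\norm{\bbeta_j}_2^{2/k}$, a continuous function that vanishes iff $\bbeta_j=\bm 0$ and is strictly increasing in $\norm{\bbeta_j}_2$ — so the SVF solution map is u.h.c.\ at every $\bbeta\in\Rd$ (including at $\bbeta=\bm 0$, where it reduces to $(\bm 0,\bm 0)$). This gives c).

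For Assumption~\ref{ass-parametrization-map} I would verify the four items for $\K_j(\bu_j,\nu_j)=\bu_j\,|\nu_j|^{\,k-1}$: it is block-separable over the partition $\G$; it is positively homogeneous of degree $k>1$ since $\K_j(c\bu_j,c\nu_j)=c\cdot c^{\,k-1}\bu_j|\nu_j|^{\,k-1}=c^k\K_j(\bu_j,\nu_j)$ for $c>0$; it satisfies the zero-product property, $\bbeta_j=\bm 0\iff\bu_j=\bm 0$ or $\nu_j=0$ (using $k>1$); and every $\bbeta_j\ne\bm 0$ is a regular value, because any preimage has $\nu_j\ne0$, so the Jacobian of $\K_j$ contains the invertible $|\G_j|\times|\G_j|$ block $|\nu_j|^{\,k-1}I_{|\G_j|}$ and hence has full rank $|\G_j|$. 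Surjectivity is clear. With both assumptions in hand, Theorem~\ref{theorem-general} gives equivalence of $\P$ in (\ref{eq:optim_q_had-group-power}) and $\Q$ in (\ref{eq:optim_g_had-group-power}) per Definition~\ref{def:equivalence}, and clause c) there yields that any local minimizer $(\hbpsi,\hbu,\hbnu)$ of $\Q$ induces the local minimizer $(\hbpsi,\hbu\odotg|\hbnu|^{\circ(k-1)})$ of $\P$, the claimed reconstruction. The main technical wrinkle is the smoothness clause: for non-integer $1<k<2$ the power $|\nu|^{\circ(k-1)}$ is only $C^1$ away from $\{\nu_j=0\}$, so $\K$ is strictly smooth only off these hyperplanes; this is the same point already handled for the $\text{HPowP}_k$ in Section~\ref{sec:hpowp-subsect} — the non-smooth locus maps onto $\{\bbeta_j=\bm 0\}$, which is covered by the zero-product and u.h.c.\ arguments above — and I would dispose of it the same way rather than re-deriving it.
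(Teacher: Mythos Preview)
Your proposal is correct and follows essentially the same route as the paper: establish the SVF via Lemma~\ref{lemma:S_ghpowp}, read off majorization and the optimality conditions $\Vert\hbu_j\Vert_2^2=\hat\nu_j^2=\Vert\bbeta_j\Vert_2^{2/k}$ from the weighted AM--GM step, feed those into Lemma~\ref{lemma:uhc-solution-map} for upper hemicontinuity, and then invoke Theorem~\ref{theorem-general}. The paper is terser (it omits the explicit check of Assumption~\ref{ass-parametrization-map} and the smoothness caveat for $1<k<2$), but your added verifications are sound and do not diverge from the intended argument.
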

%
%
%
\subsection[Mixed Norm Regularization with Hadamard Group Powers]{Mixed Norm Regularization with Hadamard Group Powers} \label{sec:had-lpq-group-powers}

Analogous to the previous subsection, we can further apply Hadamard powers to induce $\ell_{p,q}$ mixed-norm regularization for arbitrary feasible values $0<q < p \leq 2$. As a starting point, we again revisit the structured group set-up $\bm{\beta}=\left(\bm{\beta}_{1},\ldots, \bm{\beta}_{L}\right)^{\top}$. However, to allow for non-integer factorization depths, a more complex nested power parametrization 
is required and constructed in the following. %
Consider the parametrization $\bm{\beta} = \bm{u} \odotg |\bnu|^{\circ k_2}$, $k_2>1$, with $\bm{u} = \left( \bm{u}_{1},\ldots,\bm{u}_{L} \right)^{\top} \in \Rd$ and $\bm{\nu}=(\nu_{1},\ldots,\nu_{L})^{\top} \in \mathbb{R}^{L}$, corresponding to parametrization (\ref{eq:reparam-had-group-powers}).
Additionally, the auxiliary parameter $\bu$ is parametrized using an invertible pre-composition, i.e., $\bu=\bm{\mu} \odot |\bm{\mu}|^{\circ(k_1-1)}$ with $k_1>1$ and surrogate parameters $\bm{\mu} = (\bm{\mu}_{1}, \ldots, \bm{\mu}_{L})^{\top} \in \mathbb{R}^{|\mathcal{G}_1|+\ldots+|\mathcal{G}_L|} = \mathbb{R}^{d}$. We can then define the $\text{GHPowP}_{k_1,k_1+k_2}$ as
{\small
\begin{align} \label{eq:reparam-had-lpq-powers}
\mathcal{K}: \mathbb{R}^{d} \times \mathbb{R}^{L} \to \mathbb{R}^{d},\,&(\bm{\mu}, \bm{\nu}) \mapsto \bm{\mu} \odot |\bm{\mu}|^{\circ(k_1-1)} \odotg |\bnu|^{\circ k_2} 
= {\footnotesize \begin{pmatrix} 
\bm{\mu}_{1} \odot |\bm{\mu}_{1}|^{\circ (k_1-1)} \\ \vdots \\ \bm{\mu}_{L} \odot |\bm{\mu}_{L}|^{\circ (k_1-1)}
\end{pmatrix}  \odot \begin{pmatrix} 
|\nu_1|^{k_2}\mathds{1}_{|\mathcal{G}_1|} \\ \vdots \\ |\nu_{L}|^{k_2}\mathds{1}_{|\mathcal{G}_L|} \\
\end{pmatrix}}, \nonumber
\end{align}}

\noindent or equivalently on the group level, $\bm{\beta}_{{j}} = \bu_j |\nu_j|^{k_2} = \bm{\mu}_{j} \odot |\bm{\mu}_{j}|^{\circ (k_1-1)} \cdot |\nu_{j}|^{k_2}$ for groups $j \in [L]$. The parametrization of $\bm{u}_{{j}}$ via $\bm{\mu}_{j}$ is bijective, so that for each $u_{ji}, i \in \mathcal{G}_{j}$ in $\bm{u}_{{j}}$, it holds $\mu_{ji} = \text{sign}(u_{ji})\cdot |u_{ji}|^{1/k_1}$. Thus, we can express the squared Euclidean norm of $\bm{\mu}_{j}$ as 
\begin{equation}
    \Vert \bm{\mu}_{j} \Vert_{2}^{2} = \sum_{i \in \mathcal{G}_{j}} \mu_{ji}^{2} = \sum_{i \in \mathcal{G}_{j}} |u_{ji}|^{2/k_1} = \Vert \bm{u}_{{j}} \Vert_{2/k_1}^{2/k_1} \,. \nonumber
\end{equation}
Letting $k\triangleq k_1+k_2>2$, we define the non-convex base regularizer as $\mathcal{R}_{\bm{\beta}}(\bm{\beta}) \triangleq k \Vert \bm{\beta} \Vert_{2/k_1,2/k}^{2/k}$ and the surrogate as $\mathcal{R}_{\bm{\xi}}(\bm{\mu},\bnu) \triangleq k_1 \Vert \bm{\mu} \Vert_{2}^{2} + k_2 \Vert \bm{\nu} \Vert_{2}^{2}$. Together, $\K$ and $\Rxi$ form an SVF for $\Rbeta$: 
\begin{lemma} \label{lemma:S_ghpowpk1} For a parametrization $\K(\bu,\bnu)=\bm{\mu} \odot |\bm{\mu}|^{\circ(k_1-1)} \odotg |\bnu|^{\circ k_2}$, the minimum of the surrogate $\ell_2$ regularizer $\Rxi(\bu,\bnu) \triangleq k_1 \Vert \bm{\mu} \Vert_{2}^{2} + k_2 \Vert \bm{\nu} \Vert_{2}^{2}$ subject to $\K(\bu,\bnu)=\bbeta$ constitutes an SVF for $\Rbeta(\bbeta)\triangleq k \norm{\bbeta}_{2/k_1,2/k}^{2/k}$ and is given by
\begin{equation} \label{eq:S_ghpowpk1}
          \min_{\substack{\bm{\mu}, \bm{\nu}: \bm{\mu}\odot|\bm{\mu}|^{\circ(k_1-1)} \odotg |\bnu|^{\circ k_2} = \bm{\beta}}}  k_1 \Vert \bm{\mu} \Vert_{2}^{2} + k_2 \Vert \bm{\nu} \Vert_{2}^{2}  = k \Vert \bm{\beta} \Vert_{2/k_1,2/k}^{2/k} \quad\forall\bbeta\in\Rd \,,
\end{equation}
\end{lemma}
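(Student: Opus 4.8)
The plan is to reduce this lemma to the combination of two facts already established: (i) the bijective pre-composition $\bu = \bm{\mu}\odot|\bm{\mu}|^{\circ(k_1-1)}$ satisfies $\norm{\bm{\mu}_j}_2^2 = \norm{\bu_j}_{2/k_1}^{2/k_1}$ for each group $j$, a computation carried out in the text immediately preceding the lemma, and (ii) the $\text{GHPowP}_k$ result of Lemma~\ref{lemma:S_ghpowp}, which handles the outer power $|\bnu|^{\circ k_2}$. Concretely, I would split the constrained minimization over $(\bm{\mu},\bnu)$ into an inner and outer problem. Since the map $\bm{\mu}\mapsto\bu$ is a bijection and $\K(\bm{\mu},\bnu) = \bu\odotg|\bnu|^{\circ k_2}$ depends on $\bm{\mu}$ only through $\bu$, I can reparametrize the feasible set: for fixed $\bu$, there is a unique $\bm{\mu}$ with $\mu_{j_i} = \operatorname{sign}(u_{j_i})|u_{j_i}|^{1/k_1}$, and $k_1\norm{\bm{\mu}}_2^2 = k_1\sum_{j=1}^L\norm{\bu_j}_{2/k_1}^{2/k_1}$.

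The key steps, in order, are: First, use the bijection to rewrite
\begin{align}
\min_{\substack{\bm{\mu},\bnu:\,\bm{\mu}\odot|\bm{\mu}|^{\circ(k_1-1)}\odotg|\bnu|^{\circ k_2}=\bbeta}} k_1\norm{\bm{\mu}}_2^2 + k_2\norm{\bnu}_2^2 = \min_{\substack{\bu,\bnu:\,\bu\odotg|\bnu|^{\circ k_2}=\bbeta}} k_1\sum_{j=1}^L\norm{\bu_j}_{2/k_1}^{2/k_1} + k_2\norm{\bnu}_2^2\,. \nonumber
\end{align}
Second, observe that this outer problem is block-separable over groups $j\in[L]$, so it suffices to solve, for each $j$ with $\bbeta_j = |\nu_j|^{k_2}\bu_j$,
\begin{align}
\min_{\substack{\bu_j,\nu_j:\,|\nu_j|^{k_2}\bu_j=\bbeta_j}} k_1\norm{\bu_j}_{2/k_1}^{2/k_1} + k_2\nu_j^2\,. \nonumber
\end{align}
Third, for $\bbeta_j\neq\bm{0}$ substitute $\bu_j = |\nu_j|^{-k_2}\bbeta_j$ to get a one-dimensional problem in $t\triangleq\nu_j^2>0$, namely minimize $k_1 t^{-k_2/k_1}\norm{\bbeta_j}_{2/k_1}^{2/k_1} + k_2 t$; differentiate, find the stationary point, and plug back — this is a routine AM–GM-style optimization that should yield exactly $k\norm{\bbeta_j}_{2/k_1}^{2/k}$ after collecting exponents, using $k = k_1+k_2$. (Alternatively, apply the weighted AM–GM inequality directly to the two terms with weights $k_1/k$ and $k_2/k$, which is the cleaner route and matches the pattern of the earlier proofs.) The case $\bbeta_j=\bm{0}$ is immediate with minimizer $(\bm{0},0)$. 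Fourth, sum over $j$ and invoke $\sum_{j=1}^L\norm{\bbeta_j}_{2/k_1}^{2/k} = \norm{\bbeta}_{2/k_1,2/k}^{2/k}$ to conclude. Finally, I would record the optimality conditions — at the optimum, $k_1\norm{\bm{\mu}_{jt}}$-type balance (here $k_1\norm{\bu_j}_{2/k_1}^{2/k_1} = k_2\nu_j^2 = \text{const}\cdot\norm{\bbeta_j}$, actually both equal a common multiple of $\norm{\bbeta_j}_{2/k_1}^{2/k}$) — so that Lemma~\ref{lemma:uhc-solution-map} applies and gives upper hemicontinuity of the solution map, as the surrounding text requires.

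The main obstacle I anticipate is bookkeeping with the fractional exponents: correctly tracking that the inner minimization over $\bu_j$ (for fixed $\nu_j$) genuinely produces the quasi-norm $\norm{\bbeta_j}_{2/k_1}^{2/k_1}$ rather than some other power, and then that the subsequent scalar optimization over $\nu_j$ converts the exponent $2/k_1$ into $2/k = 2/(k_1+k_2)$ with the correct prefactor $k$. A secondary subtlety is the non-differentiability of $|\cdot|^{\circ(k_1-1)}$ and $|\cdot|^{\circ k_2}$ at zero when $k_1<2$ or $k_2<1$; this does not affect the variational identity (which is a pointwise statement about the constrained infimum, attained), but I would note that Assumption~\ref{ass-parametrization-map} still holds at regular values $\bbeta_j\neq\bm{0}$, which is all that is needed, with the zero case handled separately. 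Everything else follows the template of Lemmas~\ref{lemma:S_hpowp} and~\ref{lemma:S_ghpowp} verbatim.
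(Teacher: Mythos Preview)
Your plan is correct and follows the paper's proof almost exactly: rewrite $k_1\norm{\bm{\mu}_j}_2^2 = k_1\norm{\bu_j}_{2/k_1}^{2/k_1}$ via the bijection, then apply the weighted AM--GM inequality group-wise with weights $k_1,k_2$ (your ``cleaner route'') and sum over $j$. One small correction on the optimality condition: weighted AM--GM gives equality when the \emph{unweighted} arguments coincide, so the balance reads $\norm{\bm{\mu}_j}_2^2 = |\nu_j|^2 = \norm{\bbeta_j}_{2/k_1}^{2/k}$, not $k_1\norm{\bu_j}_{2/k_1}^{2/k_1} = k_2\nu_j^2$, and this is precisely the form required to invoke Lemma~\ref{lemma:uhc-solution-map}.
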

The optimality conditions obtained in the proof of this result further ensure Assumption~\ref{ass:min} holds by establishing lower hemicontinuity of the set-valued solution map of the SVF according to Lemma~\ref{lemma:lhc-solution-map}. Given an $\ell_{2/k_1,2/k}$ regularized objective $\P(\bpsi,\bbeta)$, we can construct a surrogate $\Q(\bpsi,\bm{\mu},\bnu)$ from the tuple $(\Rbeta,\K,\Rxi)$: 
\begin{align}
&\mathcal{P}(\bm{\psi}, \bm{\beta}) = \mathcal{L}(\bm{\psi}, \bm{\beta}) + \lambda k \|\bm{\beta}\|_{2/k_1,2/k}^{2/k}, \label{eq:optim_q_had-lpq-power} 
\\
&\mathcal{Q}(\bm{\psi}, \bm{\mu},\bm{\nu}) =  \mathcal{L}\big(\bm{\psi}, (\bm{\mu} \odot |\bm{\mu}|^{\circ (k_1-1)}) \odotg |\bnu|^{\circ k_2}\big) + \lambda \left( k_1 \Vert \bm{\mu} \Vert_{2}^{2} + k_2 \Vert \bm{\nu} \Vert_{2}^{2} \right). \label{eq:optim_g_had-lpq-power} 
\end{align}

\begin{corollary}\label{cor:ghpowpk1}
The optimization of $\P(\bpsi,\bbeta)$ in (\ref{eq:optim_q_had-lpq-power}) is equivalent to the optimization of the smooth surrogate $\Q(\bpsi,\bu, \bnu)$ in (\ref{eq:optim_g_had-lpq-power}) for any $k_1,k_2>1$ according to Definition~\ref{def:equivalence}, and solutions to $\mathcal{P}$ can be constructed from solutions to $\mathcal{Q}$ as $(\hbpsi,\hat{\bm{\beta}}) = (\hbpsi,\bm{\mu} \odot |\bm{\mu}|^{\circ (k_1-1)} \odotg |\hbnu|^{\circ k_2})$.
\end{corollary}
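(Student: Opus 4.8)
The plan is to instantiate Theorem~\ref{theorem-general} for the tuple $(\Rbeta,\K,\Rxi)$ given by $\Rbeta(\bbeta)=k\|\bbeta\|_{2/k_1,2/k}^{2/k}$, the nested power map $\K$ in (\ref{eq:reparam-had-lpq-powers}), and $\Rxi(\bm{\mu},\bm{\nu})=k_1\|\bm{\mu}\|_2^2+k_2\|\bm{\nu}\|_2^2$, exactly as for the two preceding power parametrizations (Corollaries~\ref{cor:hpowp} and~\ref{cor:ghpowp}) and the product-based analogue (Corollary~\ref{cor:ghppk1k}). Once Assumptions~\ref{ass-parametrization-map} and~\ref{ass:svf} are verified, equivalence of $\P$ in (\ref{eq:optim_q_had-lpq-power}) and $\Q$ in (\ref{eq:optim_g_had-lpq-power}) in the sense of Definition~\ref{def:equivalence} -- infimum equality together with the two-sided matching of local minima -- follows immediately, and the reconstruction $(\hbpsi,\hbbeta)=(\hbpsi,\K(\hat{\bm{\mu}},\hbnu))=(\hbpsi,\hat{\bm{\mu}}\odot|\hat{\bm{\mu}}|^{\circ(k_1-1)}\odotg|\hbnu|^{\circ k_2})$ is just the surjectivity clause of the theorem. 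So the whole argument reduces to checking the two assumptions.

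For Assumption~\ref{ass-parametrization-map}: on the group level $\bbeta_j=\bm{\mu}_j\odot|\bm{\mu}_j|^{\circ(k_1-1)}\cdot|\nu_j|^{k_2}$ depends only on the disjoint block $(\bm{\mu}_j,\nu_j)$, giving block-separability; $\K$ is surjective since even with $\nu_j=1$ the map $\bm{\mu}_j\mapsto\bbeta_j$ is onto $\mathbb{R}^{|\Gj|}$; rescaling $(\bm{\mu}_j,\nu_j)\mapsto(c\,\bm{\mu}_j,c\,\nu_j)$ multiplies $\bbeta_j$ by $c^{k_1}c^{k_2}=c^{k}$ with $k=k_1+k_2>1$, so $\K_j$ is positively homogeneous of ``depth'' $k$; and $\bbeta_j=\bm{0}$ iff $\bm{\mu}_j=\bm{0}$ or $\nu_j=0$, which is the zero-product property with the two ``factors'' $\bm{\mu}_j,\nu_j$ counted with multiplicities $k_1,k_2$ -- the reading of condition~(d) for Hadamard powers flagged after Assumption~\ref{ass-parametrization-map}. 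For regularity of a nonzero value $\bbeta_j$, every $\bxi_j\in\K_j^{-1}(\bbeta_j)$ has $\nu_j\neq0$ and $\mu_{j_i}\neq0$ precisely on $\mathrm{supp}(\bbeta_j)$, and the Jacobian block $\partial\bbeta_j/\partial\bm{\mu}_j$ restricted to those coordinates is diagonal with entries $k_1|\mu_{j_i}|^{k_1-1}|\nu_j|^{k_2}\neq0$, so $\mathcal{J}_{\K_j}$ has full rank $|\Gj|$.

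For Assumption~\ref{ass:svf}: part~(a) is precisely Lemma~\ref{lemma:S_ghpowpk1}; part~(b), $\Rxi(\bm{\mu},\bm{\nu})\ge\Rbeta(\K(\bm{\mu},\bm{\nu}))$, is automatic once (a) holds, since $(\bm{\mu},\bm{\nu})$ itself lies in the fiber over $\K(\bm{\mu},\bm{\nu})$ on which $\Rxi$ attains minimum $\Rbeta(\K(\bm{\mu},\bm{\nu}))$. For upper hemicontinuity in part~(c) I would not invoke Lemma~\ref{lemma:uhc-solution-map} literally -- it is phrased for $k$ genuinely separate factor vectors -- but instead use that $\bm{\mu}_j\mapsto\bm{u}_j\triangleq\bm{\mu}_j\odot|\bm{\mu}_j|^{\circ(k_1-1)}$ is a homeomorphism of $\mathbb{R}^{|\Gj|}$ with $\|\bm{\mu}_j\|_2^2=\|\bm{u}_j\|_{2/k_1}^{2/k_1}$, reducing the per-group SVF to $\min\{k_1\|\bm{u}_j\|_{2/k_1}^{2/k_1}+k_2\nu_j^2:\ \bm{u}_j|\nu_j|^{k_2}=\bbeta_j\}$. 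Substituting $\bm{u}_j=\bbeta_j/|\nu_j|^{k_2}$ and minimizing the resulting one-variable function of $|\nu_j|$ gives the balanced optimum $\|\bm{u}_j\|_{2/k_1}^{2/k_1}=\nu_j^2=\|\bbeta_j\|_{2/k_1}^{2/k}$, with $\bm{u}_j$ (hence $\bm{\mu}_j$) then determined and only the sign of $\nu_j$ free; thus the minimizer set, pulled back to $\bm{\mu}$-coordinates through the homeomorphism, is a continuous function of $\bbeta_j$ away from $\bbeta_j=\bm{0}$ that collapses to $(\bm{0},0)$ as $\bbeta_j\to\bm{0}$, so $\bbeta\mapsto\hbxi(\bbeta)$ is continuous and in particular u.h.c. everywhere -- which is all Assumption~\ref{ass:svf}(c) requires.

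The one genuinely new difficulty, compared with the honest-Hadamard-product corollaries, is the bookkeeping forced by the non-integer exponents: $\K$ is only sufficiently differentiable at coordinates where $\mu_{j_i}=0$ or $\nu_j=0$ when $k_1,k_2>1$; otherwise one restricts to a neighbourhood of each local minimizer, where by the optimality conditions above either the whole group is zero (handled directly, as a point where $\K_j$ collapses the block to $\bm{0}$) or all active coordinates -- including $\nu_j$ -- are bounded away from zero and $\K$ is locally smooth there. The u.h.c./local-openness reasoning, designed for bilinear-type maps, likewise has to be routed through the bijective outer reparametrization $\bm{\mu}\mapsto\bm{\mu}\odot|\bm{\mu}|^{\circ(k_1-1)}$ as above. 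With these checks recorded, the corollary follows by the same one-line appeal to Theorem~\ref{theorem-general} used for Corollaries~\ref{cor:hpowp}--\ref{cor:ghpowp}.
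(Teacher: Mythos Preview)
Your proposal is correct and follows the paper's overall route: verify Assumptions~\ref{ass-parametrization-map} and~\ref{ass:svf} for the tuple $(\Rbeta,\K,\Rxi)$, then apply Theorem~\ref{theorem-general}, exactly as in the preceding corollaries. The paper's own argument is the one-line appeal established after Corollary~\ref{cor:hpp}: Lemma~\ref{lemma:S_ghpowpk1} gives the SVF, the text preceding the corollary asserts that the optimality conditions $\|\bm{\mu}_j\|_2^2=|\nu_j|^2=\|\bbeta_j\|_{2/k_1}^{2/k}$ yield upper hemicontinuity \emph{via} Lemma~\ref{lemma:uhc-solution-map}, and Theorem~\ref{theorem-general} is invoked.

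The one substantive difference is how you handle Assumption~\ref{ass:svf}(c). The paper plugs the balanced-norm conditions into Lemma~\ref{lemma:uhc-solution-map}; you deliberately avoid that lemma, instead passing through the homeomorphism $\bm{\mu}_j\mapsto\bm{u}_j=\bm{\mu}_j\odot|\bm{\mu}_j|^{\circ(k_1-1)}$ to reduce the per-group SVF to a one-variable minimization and read off continuity of the solution map directly. Your route is more self-contained and arguably cleaner here, since the hypothesis of Lemma~\ref{lemma:uhc-solution-map} that $T(\bbeta_j)=\|\bbeta_j\|_{2/k_1}^{2/k}$ be ``strictly increasing in $\|\bbeta_j\|_2$'' is not literally satisfied for $k_1>1$ (different norms are only equivalent), so the paper's invocation is somewhat informal. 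You also flag the differentiability issue at zero coordinates for non-integer $k_1,k_2$, which the paper glosses over; your localization argument (either the whole group vanishes or all active coordinates are bounded away from zero at a minimizer) is the right patch. These refinements buy rigor but do not change the architecture of the proof.
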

%

%

\section{Optimization Details}\label{sec:optim}

In this section, we discuss optimization details of our smooth optimization transfer approach and provide some guidance regarding practical implementations.

\vspace{0.15cm}
\noindent \textbf{Iterative optimization using (S)GD}\, 
A considerable body of literature has established desirable convergence properties of (S)GD that hold in overparametrized non-convex settings, such as provably almost always escaping (strict) saddle points under random initialization and mild regularity conditions \citep{lee2016gradient, lee2019first}. For full-batch GD, however, \citet{du2017gradient} show that it might take exponentially long to escape saddle points. This can be reduced to polynomial time in the presence of sufficient perturbation in the gradient updates \citep{ge2015escaping, jin2017escape}, emphasizing the benefit of SGD in efficiently optimizing non-convex problems.\\ 
The effect of applying a power-product parametrization on the optimization landscape is to transfer the problem to a more curved space, which impacts the optimization geometry of (S)GD in a way that has been termed the ``rich get richer" effect in the Powerpropagation literature \cite{schwarz2021powerpropagation}, but can also take other forms depending on the parametrization. The effect hinges on the multiplicative structure of the parametrizations $\K$, leading to additional multiplicative dependence of the gradient updates of one factor on the current values of a subset of all factors.
For the bijective Powerpropagation (\ref{eq:powerprop-param}), there is only a single surrogate parameter. Hence, the multiplicative dependence on current values becomes self-reinforcing: As discussed in Section~\ref{subset:powerprop}, loss gradients for large-magnitude parameters are magnified, while the gradients for small-magnitude parameters are shrunken, promoting a heavy-tailed weight distribution. The same holds for the HDP (\ref{eq:hdp-def}), but separately for each surrogate parameter: its (entry-wise) gradient is $\nabla \K_j(\gamma_j,\delta_j)=(2\gamma_j,2\delta_j)$, resulting in self-reinforcing ``rich get richer'' dynamics for $\gamma_j$ and $\delta_j$, respectively. Interestingly, for the HPP (\ref{eq:hpp-def}), the multiplicative dependence implies dynamics that are more akin to a ``Robin Hood'' effect that balances the effective learning rates for both parameters, because its entry-wise gradient is $(v_j,u_j)$, i.e., the partial derivatives contain the \textit{other} factor. For an unbalanced factorization $|u_j|\gg|v_j|$, $u_j$ moves slowly because the respective gradient is attenuated by $v_j$, and $v_j$ moves fast because the gradient is magnified by $u_j$. If $|u_j|=|v_j|$, then both parameters have the same effective learning rate. 
These adaptive dynamics also show in the gradient of the smooth surrogate (\ref{eq:gradient-intext}) where the product-structured Jacobian $\mathcal{J}_{\K(\bxi)}(\bxi)$ essentially acts as a parameter-dependent preconditioner leading to adaptive step sizes and momentum \citep{arora2019implicit}.\\ %
Besides overparametrization, our approach also imposes differentiable surrogate regularization, which fundamentally differentiates our approach from the mere (unpenalized) overparametrization in implicit regularization. In these methods, the optimization dynamics of the overparametrized problem are tweaked, e.g., via impractically small initialization scales \citep{woodworth2020kernel, zhao2022high, vaskevicius2019implicit}, so that the optimizer converges to a specific solution on the global minima manifold, such as the minimum $\ell_1$-norm solution. In contrast, our optimization transfer approach transforms the sparsity-regularized problem while exactly preserving the solution structure, and hence, in theory, is agnostic to how this solution is reached. This means our smooth surrogate can, in principle, be solved with any optimizer and does not rely on changing the gradient flow dynamics in a specific way to induce implicit regularization behavior.

%
%

\vspace{0.15cm}
\noindent \textbf{Critical points}\, 
Due to the results obtained in Lemma~\ref{lemma:p_to_q} and Lemma~\ref{lemma:q_to_p}, any local minimum of the surrogate optimization problem corresponds to a local minimum in the base parametrization. As a result, if the base optimization problem $\P(\bpsi,\bbeta)$
is convex, e.g., for a convex $\L(\bpsi,\bbeta)$ with $\ell_1$ or $\ell_{2,1}$ regularization, every local minimum of the surrogate problem $\Q(\bpsi,\bxi)$ is necessarily global. For non-convex base problems, our approach ensures no spurious minima are created in the optimization transfer.

\noindent However, such a matching property does not necessarily hold for critical points of the surrogate $\Q$, owed to the zero-product property of the parametrizations $\K$. Without loss of generality, consider a non-smooth regularized objective $\P(\bbeta)=\L(\bbeta)+\lambda \Rbeta(\bbeta)$ with smooth loss $\L(\bbeta)$ and no additional unregularized parameters $\bpsi$. Applying the proposed smooth optimization transfer, we construct the surrogate $\Q(\bxi)=\L(\K(\bxi))+\lambda \Rxi(\bxi)$ using a smooth parametrization $\K(\bxi)$ and further imposing surrogate $\ell_2$ regularization on $\bxi$. The gradient of $\Q$ with respect to $\bm{\xi}$ is then given by
\begin{equation} \label{eq:gradient-intext}
\nabla_{\bxi} \Q(\bm{\xi}) = \mathcal{J}_{\mathcal{K}(\bm{\xi})}^{\top}(\bxi) \nabla_{\K} \mathcal{L}(\mathcal{K}(\bm{\xi})) + \lambda \nabla_{\bxi} \mathcal{R}_{\bm{\xi}}(\bxi)\,,
\end{equation}
where $\mathcal{J}_{\mathcal{K}(\bm{\xi})}(\bxi)$ is the $d \times \dxi$-dimensional Jacobian of $\mathcal{K}$ at $\bm{\xi}$, and the gradients $\nabla_{\K} \mathcal{L}(\mathcal{K}(\bm{\xi}))$ and $\nabla_{\bxi} \mathcal{R}_{\bm{\xi}}(\bxi)$ are $d$- and $\dxi$-dimensional vectors, respectively. For the parametrizations we consider, the Jacobian $\mathcal{J}_{\K(\bxi)}(\bm{0})$ at $\bxi=\bm{0}$ is the null matrix. As $\Rxi(\bxi)$ is a type of $\ell_2$ penalty, we have $\nabla_{\bxi} \Rxi(\bm{0})=\bm{0}$, and it follows $\nabla_{\bxi} \Q(\bm{0})=\bm{0}$. Therefore, $\bxi=\bm{0}$ is a critical point of $\Q$, irrespective of the gradient $\nabla_{\K} \L(\K(\bxi))$ of $\L$ in the base objective. A derivation of the Hessian of $\Q$ is given in Appendix~\ref{app:derivations-optim}.

\noindent Regarding the nature of potentially spurious critical points, it is known that parametrizations of depth $k=2$, such as the HPP or HDP, only induce strict saddle points at $\bxi=\bm{0}$, since their Hessian evaluated at the origin $\mathcal{H}_{\K}(\bm{0})$ contains parameter-independent non-zero constants that ensure a strictly negative eigenvalue \citep{zhao2022high}.\footnote{A strict or ridable saddle point is a saddle at which the Hessian has at least one strictly negative eigenvalue, i.e., there is a direction of descent.} %
Through construction of a counterexample, \citet[][Corollary 2.4]{kawaguchi2016deep} shows that the strict saddle property does not necessarily hold for deep factorizations with depth $k>2$. In our framework, this corresponds to those parametrizations $\K$ that induce non-convex $\ell_q$ or $\ell_{p,q}$ regularization in the base objective under surrogate $\ell_2$ regularization. For this class of non-convex regularizers with unbounded derivatives approaching the origin, $\hbbeta=\bm{0}$ is always a local minimizer in the base problem $\P(\bbeta)$, 
regardless of $\L(\bbeta)$ \citep{loh2015regularized}. In the constructed smooth surrogate $\Q(\bxi)$, this is reflected in the Hessian $\mathcal{H}_{\Q(\bxi)}(\bxi)$. For $k>2$ and $\lambda=0$, the Hessian at $\bxi=\bm{0}$ degenerates to a null matrix, $\mathcal{H}_{\Q(\bxi)}(\bm{0})=\bm{0}$, inducing a higher-order saddle point. 
For the regularized problems we are interested in, 
the strong convexity of $\Rxi(\bxi)$ guarantees that $\mathcal{H}_{\Q(\bxi)}(\bm{0})$ has only positive eigenvalues. Thus, $\hbxi=\bm{0}$ is a local minimizer of $\Q$, corresponding to the local minimizer $\hbbeta=\bm{0}$ in $\P(\bbeta)$ that is induced by the non-convex regularizer $\Rbeta(\bbeta)$. Hence, the additional $\ell_2$ regularization in our smooth surrogate avoids the problematic spurious non-strict saddle point at $\bxi=\bm{0}$ induced by $\K$, even for non-convex regularization with $k>2$. Importantly,
$\hbxi=\bm{0}$ being a local minimizer of the surrogate $\Q(\bxi)$ for non-convex $\ell_q$ and $\ell_{p,q}$ regularization in the base problem $\P(\bbeta)$ is not a property of our proposed method, but of the non-convex regularizer $\Rbeta(\bbeta)$.
%
%
\vspace{0.15cm}

\noindent \textbf{Initialization}\, Another relevant question concerns finding effective and well-founded initializations for the surrogate parameters, and how they relate to an appropriate initialization of the base parameter $\bbeta^{t}$ at $t=0$. 
A natural approach would be to initialize the surrogate parameters functionally equivalent to a standard initialization scheme for the base parameter $\bbeta^{0}$. However, in the case of overparametrization, there are many such options, and it is \textit{a priori} unclear how to optimally select among feasible initializations of $\bbeta^{0}$. It seems natural to initialize the surrogate parameters $\bxi$ according to the optimality conditions provided by the implemented SVF, i.e., $\bxi^{0} = \hbxi(\bbeta^{0})$, where $\hbxi(\bbeta)$ is the set-valued solution mapping of the SVF, and $\bbeta^{0}$ is obtained from a standard initialization scheme for the base parameters. This ensures that the optimization is initialized at a minimizer of the surrogate penalty $\Rxi(\bxi)$ over $\{\bxi: \K(\bxi)=\bbeta^{0}\}$. 

\noindent To provide two examples, consider a parametrization of $\bbeta$ using $\text{HPP}_k$, i.e., $\bbeta=\bu_l^{\odot k}$ with surrogate $\ell_2$ regularization. One approach then entails initializing the surrogate factors $\bu_l$ identically as $\bu_l^{0}=\sqrt[k]{|\bbeta^{0}|}$, and subsequently multiplying one (arbitrary) factor $\bu_l^{0}$ by the respective signs of $\bbeta^0$.\footnote{Applying any sign pattern to the $\bu_l$ that respects the signs of $\bbeta^{0}$ under the parametrization $\K$ is valid.} For structured sparsity using the GHPP, $\bbeta = \bu \odotg \bnu$, the surrogate parameters are initialized as {\footnotesize $\nu_j^{0} = \sqrt{\Vert \bbeta_j^{0} \Vert_2}$} and {\footnotesize $\bu_j^{0}=\bbeta_j^{0}/\sqrt{\Vert \bbeta_j^{0} \Vert_2}$}, again equivalently for sign patterns $\pm (\bu_j^0,\nu_j^0)$. Another option would be to randomly initialize all factors, but increase the initialization scale so that the product is initialized at a desired scale, a variant of which is proposed in \cite{kolb2025deep}.

%
\vspace{0.15cm}
\noindent \textbf{Effects on Optimization Landscape}\, 
The parametrizations $\K(\bxi)=\bbeta$ considered in this work (cf.~Assumption~\ref{ass-parametrization-map}) are based on Hadamard products and powers. This has a notable effect on the loss landscape, primarily due to a modification of curvature induced by the multiplicative nature of the parametrizations. For the bijective Powerpropagation (\ref{eq:powerprop-param}), the warping effect of the reparametrization takes place in the same space as the base parameters and can thus be disentangled from overparametrization. Appendix~\ref{app:effects-optim-landscape} contains more details.


\vspace{0.15cm}
\noindent \textbf{Practical implementation}\, 
It is important to consider the case when the surrogate parameters $\bxi_j$ corresponding to some base parameter $\beta_j$ are initialized in an orthant that maps to an incorrect sign under the multiplicative parametrization $\K$ compared to the solution $\hat{\beta}_j$. In these cases, it is crucial to use large learning rates during early iterations, as previously suggested by, e.g., \citet{li2023implicit}. Otherwise, the respective parameter iterates will gradually approach zero from the side of the initial orthant. This occurs due to the ``rich get richer'' effect, resulting in diminishing gradient magnitudes as the parameter approaches zero, making it difficult to ``step over" the zero boundary. Although for most DNNs the sign pattern is not identified, large step sizes in DL have been found to drive SGD toward simpler structures \citep{andriushchenko2023sgd, chen2024stochastic} and balanced Hadamard factors \citep{ziyin2024symmetry}, thus facilitating sparse optimization.
%
Besides initially large learning rates, we further emphasize the importance of the commonplace recommendation of using either small batch sizes in SGD or perturbing the gradient updates via additional noise injection for faster convergence and the improved ability to escape saddles and local minima \citep{jin2017escape}.
Further, note that using (S)GD to optimize the differentiable surrogate does not have an inherent proximal step. Consequently, the iterates can only asymptotically approach theoretically zero values in a finite number of steps. However, this issue is benign, as with standard training hyperparameters, numerically zero or negligibly small floating point representations can be attained. For additionally accelerated and more adaptive optimization, a dynamic thresholding schedule can be implemented, in which during training the surrogate parameters $\bxi_j=\bm{0}$ parametrizing a scalar parameter are thresholded if the reconstructed parameter $|\beta_j|<\varepsilon_{tiny}$. In this way, the weights are automatically removed from future updates early on, allowing the optimization more time exploring sparser subnetworks without having to implement a hard mask or change the architecture. This is not required due to the multiplicative structure of the gradients of $\K$, which necessarily all become zero for all future iterations once $\bxi_j=\bm{0}$.
%
%

\section{Related Work} \label{sec:related-work}
In this section, we relate our optimization transfer framework and the presented parametrizations to prior art. In recent years, parametrizations based on Hadamard products 
have attracted considerable interest in several fields, including DL, statistics, signal processing, and optimization. The context in which they are applied varies significantly, so we focus on works that use these parametrizations for sparsity-inducing regularization with explicit surrogate regularization, i.e., not relying on manipulating the optimization dynamics as in implicit regularization approaches.

\noindent \textbf{Comparison to prior work using explicit regularization}\, In our work, we extend the literature on approximation-free, differentiable optimization for sparse regularization using a combination of overparametrization and surrogate regularization. An early connection between $\ell_1$ and an adaptive variant of $\ell_2$ regularization using the HPP was first observed by \citet{grandvalet1998least}. In statistics, the basic idea was re-discovered for a restricted problem class by \citet{hoff2017lasso} using the $\text{HPP}_k$, however, without noting its compatibility with SGD or its applicability to non-linear models. Unlike our work, their suggested method is limited to linear models, for which the overparametrized objective can be optimized using alternating ridge regressions due to the multi-convex nature of the optimization problem. This multi-convexity is lost in more complex models like neural networks, requiring less restrictive optimizers like SGD.
%
Besides these, \citet{tibs2021} additionally studies the $\text{GHPP}_k$ in linear models and finds that they have identical global minima to certain weight-decayed network architectures (cf.~Figure~\ref{fig:hpp-hdp-dep1}). Notably, a simple weight-decayed diagonal linear network with one hidden layer has the same global minimum as the lasso, which, in turn, is equivalent to applying the Hadamard product parametrization and $\ell_2$ regularization. This observation can also be implicitly inferred from the representation cost analysis of the same architecture presented in \citet{dai2021representation}. Building on the results of \citet{hoff2017lasso}, the work of \citet{ziyin2022spred} constitutes a first endeavor to adapt differentiable sparse regularization to the dominant SGD optimization paradigm for DNNs, however, using only a two-parameter factorization. %

\indent Previous works, however, exhibit several limitations. First, their scope is limited to single or a small subset of known sparsity-inducing parametrizations (cf.~Table~\ref{tab:overview}), while we are the first to provide a comprehensive account. Among the works incorporating $\ell_2$ regularization to induce sparsity, the proposed methods are either confined to linear models by scope \citep{tibs2021, dai2021representation} or restrictions of their optimization procedure \citep{hoff2017lasso}. The only work applying overparametrization with $\ell_2$ regularization to broader objectives is \citet{ziyin2022spred}, whose approach is limited to a simple overparametrization for induced convex $\ell_1$-type regularization. 
Further, we place particular emphasis on ensuring matching local minima as a crucial property to preserve structure in the overparametrized problem, which aligns with the work of \citet{levin2020towards} and \citet{nouiehed2022learning} and was previously only discussed by \cite{hoff2017lasso,ziyin2022spred}.\\
Moreover, although the implicit regularization literature also studies several overparametrizations \cite{woodworth2020kernel, gunasekar2018characterizing, nacson2022implicit, vaskevicius2019implicit,li2021implicit, gissin2019implicit, moroshko2020implicit, chou2023more, vivien2022label}, these works do not consider the induced regularizer under explicit surrogate regularization and base their implicit regularization on the manipulation of optimization dynamics, mainly through vanishing initialization scales. This renders implicit regularization approaches impractical and fundamentally different from our approach, besides their restriction to convex $\ell_1$ regularization in important settings \citep{nacson2022implicit}. Due to these limitations, the goal of implicit regularization works is mainly to understand optimization dynamics, and few works are specifically geared toward practical applications \cite{zhao2022high, chou2022non}. In contrast, our method does not rely on manipulation of the optimization dynamics to reach a specific minimizer, but has the same solution structure as the sparsity-regularized problem, a property that is independent of optimization dynamics.\\
Table~\ref{tab:comparison} compares the most closely related prior works. In the following, we further describe the related literature on Hadamard parametrizations in different subfields.

\begin{table}[t]
\centering
\resizebox{1.0\textwidth}{!}{
{
\renewcommand{\arraystretch}{1.5}
\begin{tabular}{l|c|c|c|c|c}  
\hline\hline
\parbox{1.9cm}{\centering Reference} & \parbox{2.8cm}{\centering Regularization} & \parbox{2.5cm}{\centering Induced sparse \\regularizers} & \parbox{1.9cm}{\centering Matching Local Min.} & \parbox{2.3cm}{\centering Corresponding \\ NN struct.} & \parbox{2.7cm}{\centering Application to\\ arbitrary model \\ subcomponents}\\[0.1cm]
\hline 
\citet{grandvalet1998least} & Explicit \normalsize{$\ell_2$} (\scriptsize{adaptive}) & $\ell_1$ & \textcolor{RedOrange}{\xmark} & \textcolor{RedOrange}{\xmark} & \hspace{0.8cm}\textcolor{RedOrange}{\xmark} (LM) \\ 
\citet{hoff2017lasso} & Explicit $\ell_2$ & $\ell_{q}$ \scriptsize{(restricted)} & \textcolor{ForestGreen}{\cmark} & \textcolor{RedOrange}{\xmark} & \hspace{0.8cm}\textcolor{RedOrange}{\xmark} {(LM)}\\ 
\citet{ziyin2022spred} & Explicit $\ell_2$ & $\ell_{1},\ell_{2,1}$  & \hspace{0.7cm}\textcolor{ForestGreen}{\cmark} \,\scriptsize{($\ell_1$)} & \textcolor{RedOrange}{\xmark} & \textcolor{ForestGreen}{\cmark}\\ 
\citet{tibs2021} & Explicit $\ell_2$ & $\ell_{q},\ell_{2,q}$ \scriptsize{(restricted)} & \textcolor{RedOrange}{\xmark} & \textcolor{ForestGreen}{\cmark} & \hspace{0.8cm}\textcolor{RedOrange}{\xmark} (LM)\\ 
\citet{zhao2022high} & Implicit (GD) & $\text{min}$-$\ell_1$-solution & \textcolor{ForestGreen}{\cmark} & \textcolor{RedOrange}{\xmark} & \hspace{0.8cm}\textcolor{RedOrange}{\xmark} (LM)\\ 
Our framework: & Explicit \normalsize{$\ell_2$} (\scriptsize{weighted}) &  $\ell_q,\ell_{p,q}$ & \textcolor{ForestGreen}{\cmark} & \textcolor{ForestGreen}{\cmark} & \textcolor{ForestGreen}{\cmark} \\ 
\hline\hline
\end{tabular}}}
\caption{{\small Overview of related works using Hadamard parametrizations for explicit and implicit sparse regularization. GD stands for gradient descent, and LM for linear model. In the third column, the addition (restricted) refers to a choice of $q=2/k,\,k \in \mathbb{N}$. 
} 
}
\label{tab:comparison}
\vspace{-0.3cm}
\end{table}

\textbf{DL literature}\,\, In the theoretical DL community, the surge in activity can be ascribed to the correspondence of Hadamard product-based parametrizations of linear models and simple, easy-to-analyze network architectures with linear activations \citep{tibs2021,dai2021representation}, predominantly studied under the name of diagonal linear networks \citep{gunasekar2018implicit, gissin2019implicit,pesme2021implicit, li2021implicit, even2023s, wang2023implicit}, as well as similar stylized architecture for structured sparsity \citep{li2023implicit}. These networks are primarily analyzed in the context of implicit regularization effects and the representation cost of neural networks. The first phenomenon studies initialization and trajectory-based regularization effects of (S)GD without any explicit regularization term \citep{vaskevicius2019implicit,woodworth2020kernel}, whereas the latter is concerned with measuring the cost that is required for a DNN to represent particular functions in terms of norms of network weights \citep{dai2021representation,jacotfeature}. Implicit regularization through Hadamard product-based overparametrization of linear models was further extended to robust and sparse linear regression in \citet{ma2022blessing} using subgradient descent. In the absence of explicit regularization, \citet{chou2023more} study the implicit regularization of two variants of Hadamard parametrizations on gradient flow under vanishing initialization, obtaining improved sample complexity for compressed sensing problems. The ``rich'' gradient dynamics \citep{woodworth2020kernel} caused by identical small initialization are further developed for overparametrized non-negative least squares problems in \citet{chou2022non}. Contrasting our explicit surrogate regularization, an important shortcoming of implicit regularization approaches is that it is limited to convex $\ell_q$ norms for $q \geq 1$ for common losses such as the square loss, ruling out non-convex $\ell_q$ regularization for $q<1$ \citep{woodworth2020kernel,nacson2022implicit}. \cite{li2024improving} improve the adaptivity of sequence models using multiplicative overparametrization, whereas \cite{kolb2025dgating,kolb2025differentiable} apply structured overparametrization to induce, e.g., attention sparsity in transformer models.

\noindent \textbf{Statistics literature}\,\, The implicit $\ell_1$ regularization effect of applying a simple Hadamard product parametrization to the parameters of a linear model under vanishing initialization and GD was studied by \citet{zhao2022high}. Under the name ``neuronized priors'', \citet{shin2022neuronized} studies similar parameter factorizations in a Bayesian modeling framework, whereas \cite{cheltsov2024hadamard} proposes Hadamard Langevin dynamics for sampling $\ell_1$ priors. \citep{kaushik2024precise} derive asymptotics for a family of reweighted least-squares approaches for linear models with Hadamard product parametrization.

\noindent \textbf{Signal processing literature}\,\,  \citet{li2023tail} recently applied the Hadamard Product Parametrization (HPP) to solve the tail-$\ell_1$ problem in compressed sensing. In a more general setting, \citet{yang2022better}, and subsequently \citet{parhi2023deep}, analyze the sparse functional representations learned by $\ell_2$ regularized neural networks with homogeneous activation functions from a signal processing perspective, employing a similar line of reasoning to our work to show the equivalence of group sparse $\ell_{2,1}$ and surrogate $\ell_2$ regularization using their Neural Balance Theorem. Similarly, starting with \citet{neyshabur2015norm, neyshabur2015search}, several works advanced the understanding of $\ell_2$ regularized networks and the inductive biases in the learned representations \cite[see, e.g.][]{pilanci2020neural,ergen2021path, ergen2021revealing, jagadeesan2022inductive}.

\noindent \textbf{Optimization literature}\,\, \citet{micchelli2013regularizers} study the optimization of convex regularizers such as the $\ell_1$ penalty by smoothly approximating the absolute value using a quadratic variational formulation of the regularizer involving an additional surrogate parameter $\bm{\eta}$. %
This concept is similarly discussed in \citet{bach2012optimization} under the umbrella term sub-quadratic norms. Formal connections between the so-called $\bm{\eta}$-trick and the Hadamard product (over)parametrizations studied in \citet{hoff2017lasso} are established for convex and lower semicontinuous proper loss functions in \citet{poon2021smooth}, who subsequently leverage Hadamard parametrizations to smooth bilevel programming \citep{poon2023smooth}. Recently, \citet{ouyang2024kurdyka} studied smooth $\ell_1$ regularization using Hadamard parametrizations and derive the surrogate Kurdyka-Lojasiewicz exponent at second-order stationary points from that of the original objective. An analysis of the optimization dynamics of the $\text{HPP}_k$ applied to a linear model under gradient flow is presented in \cite{labarriere2024optimization} and connections to a corresponding mirror flow are made by, e.g., \cite{poon2023smooth, labarriere2024optimization, jacobs2025mask, jacobs2025mirror}. 
Another branch of literature in optimization that is related to our approach  
is the perspective functions framework, a versatile tool for constructing proximal methods \citep{combettes2018perspective1b, combettes2020perspective}. 
%

%
\section{Numerical Experiments}\label{sec:experiments}
In this section, we present experimental findings supporting our theoretical results and demonstrating the generality of our method by applying it to various learning problems ranging from non-convex regularized linear regression to enhanced DNN pruning and filter-sparse convolutional neural networks (CNNs). The main goal of these experiments is not to establish the superiority of our method over other approaches but rather to demonstrate the practical feasibility and competitiveness of using SGD to solve non-smooth regularization.\footnote{We stress that 
the proposed method offers a differentiable formulation of sparse regularizers, thus inherently tying its performance to that of the induced regularizer.} Details on optimization settings and architectures can be found in Appendix~\ref{app:experiments}.

\subsection{Failure of (Sub)GD to Solve Sparse Regularization} \label{subsec:failure-direct-optim}

First, we illustrate the failure of directly applying GD to solve both unstructured and structured sparsity regularization, even in the case of a convex (group) lasso objective with linear predictor and independent features. In DL libraries, the gradient at non-differentiable points is typically assigned zero in the GD update, effectively constituting subgradient descent. To this end, we draw $\bm{X} \in \mathbb{R}^{1000 \times 100}$, $\bbeta \in \mathbb{R}^{100}$, and $\bm{\varepsilon} \in \mathbb{R}^{1000}$ from independent Gaussians and compose the noisy outcome as $\bm{Y}=\bm{X}\bbeta + \bm{\varepsilon}$. For the group lasso, the parameters are partitioned into $L=20$ groups. The objectives in the base parametrization  for both regularizers are
$\P_{\ell_1}(\bbeta)=\frac{1}{n} \Vert \bm{Y}-\bm{X}\bbeta \Vert_2^2 + \lambda \Vert \bbeta \Vert_1$ and $ \P_{\ell_{2,1}}(\bbeta)=\frac{1}{n} \Vert \bm{Y}-\bm{X}\bbeta \Vert_2^2 + \lambda \sum_{j=1}^{L} \Vert \bbeta_j \Vert_2$, and we compare three optimization approaches: directly applying GD to the non-smooth objective, GD under smooth optimization transfer using the (G)HPP, and a highly efficient specialized combination of non-smooth methods, implemented in \texttt{glmnet} \citep{friedman2010regularization} and \texttt{SGL} \citep{simon2013sparse}. The equivalent differentiable objectives of the second approach are defined as $\Q_{\ell_1}(\bu,\bv)= \frac{1}{n} \Vert \bm{Y}-\bm{X} (\bu \odot \bv) \Vert_2^2 + \frac{\lambda}{2} (\Vert \bu \Vert_2^2 + \Vert \bv \Vert_2^2)$ and $\Q_{\ell_{2,1}}(\bu,\bnu)= \frac{1}{n} \Vert \bm{Y}-\bm{X} (\bu \odotg \bnu) \Vert_2^2 + \frac{\lambda}{2} (\Vert \bu \Vert_2^2 + \Vert \bnu \Vert_2^2)$ for $\bu,\bv \in \mathbb{R}^{100}$ and $\bnu \in \mathbb{R}^{20}$.\\ 
Figure~\ref{fig:comparison-direct-gd-sparsity} shows the failure of direct GD to achieve parameter (group) sparsity. In contrast, applying GD to the equivalent smooth objective $\Q$ matches the regularization paths of the specialized optimizers, providing numerical evidence that by optimizing the equivalent surrogate, the non-smooth base problem can be solved exactly using fully differentiable standard GD. Figure~\ref{fig:comparison-direct-gd-norms} further plots the parameter norms as a function of $\lambda$, complementing previous findings. For direct GD, the weight norm even starts to increase for large values of $\lambda$, raising serious concerns about the actual effect achieved by direct GD optimization for $\ell_1$ regularized DNNs \citep[e.g.,][]{han2015learning,wen2016structuredsparsity,liu2017learning}.
\begin{figure}[t!]
  \centering
    \subfloat[HPP vs direct GD for lasso objective]{%
        \raisebox{0.0cm}{\includegraphics[width=0.42\textwidth]{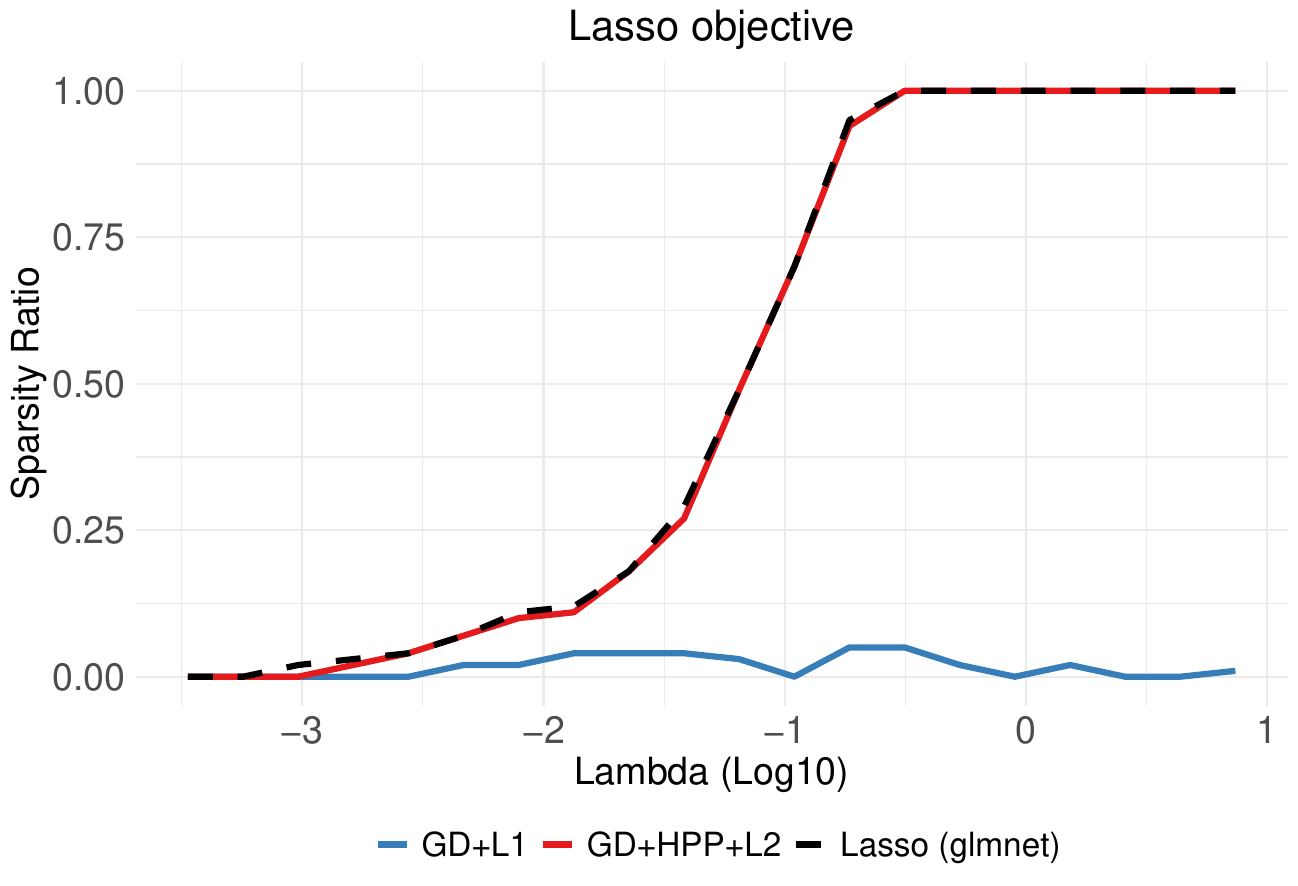}}%
        \label{fig:hpp-vs-gd-sparsity}%
        }%
    \hspace{0.01\textwidth}
    \subfloat[GHPP vs direct GD for group lasso]{%
        \raisebox{0.0cm}{\includegraphics[width=0.42\textwidth]{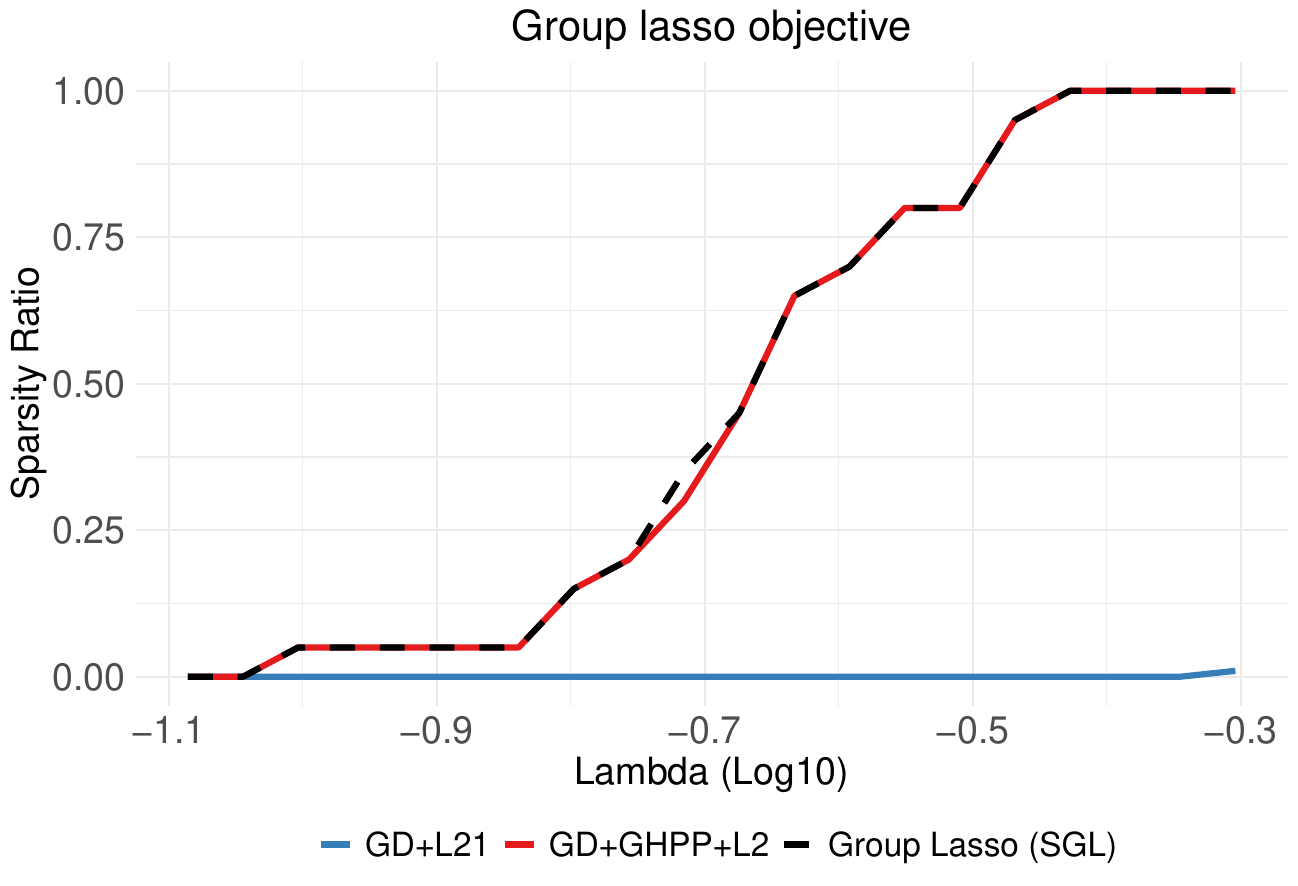}}%
        \label{fig:ghpp-vs-gd-sparsity}%
        }%
  \caption[Comparison with direct (Sub)GD optimization]{\small Comparison of regularization paths of (G)HPP-based GD and direct (Sub)GD optimization of the non-smooth $\ell_1$ regularized lasso (\textbf{a}) and $\ell_{2,1}$ regularized group lasso (\textbf{b}) objectives. Dashed lines indicate (optimal) solutions of the non-smooth optimizer. Parameters (groups) with magnitude ($\ell_2$ norm) below $1 \times 10^{-6}$ are considered $0$.%
  }
  \label{fig:comparison-direct-gd-sparsity}
  \vspace{-0.3cm}
\end{figure}
\subsection{Comparison with Convex and Non-Convex Regularizers}\label{subsec:comparison-linmod}

Next, we investigate the behavior of our smooth optimization method for $\ell_q$ regularization under SGD in a high-dimensional ($d>n$) sparse linear regression simulation setting, comparing against widely-used convex and non-convex regularizers.
The $\ell_q$ regularized sparse linear regression problem we consider is defined as $\P(\bbeta)= \frac{1}{n} \Vert \bm{Y}-\bm{X}\bbeta \Vert_2^2 + \lambda \Vert \bbeta \Vert_{2/k}^{2/k}$. %
Smooth optimization of this objective is achieved by overparametrization of $\bbeta$ using the $\text{HPP}_{k}$ for factorization depths $k \in \{2,3,4,6\}$. Combined with $\ell_2$ regularization of the surrogate parameters, equivalent smooth surrogates for SGD optimization are given by 
$\Q(\bu_1,\ldots,\bu_k) = {\textstyle \frac{1}{n} } \Vert \bm{Y} - \bm{X} \bu_{l}^{\odot k}\Vert_2^2 + {\textstyle \frac{\lambda}{k}  \sum_{l=1}^{k} \norm{\bu_l}_2^2}$. 
We compare our models against widely used implementations of convex $\ell_1$ and non-convex SCAD and MCP regularizers, as well as an oracle model that is obtained as the least squares estimator using only the true informative features. All models are evaluated with respect to their standardized estimation error $\Vert \hbbeta - \bbeta^{\ast}\Vert_2^2/\Vert\bbeta^{\ast}\Vert_2^2$, as well as their test root mean squared error {\small$\sqrt{n^{-1} \Vert \bm{Y} - \hat{\bm{Y}} \Vert_2^2}$} (RMSE).
\begin{figure}[ht]
\centering
\includegraphics[width=0.75\textwidth]{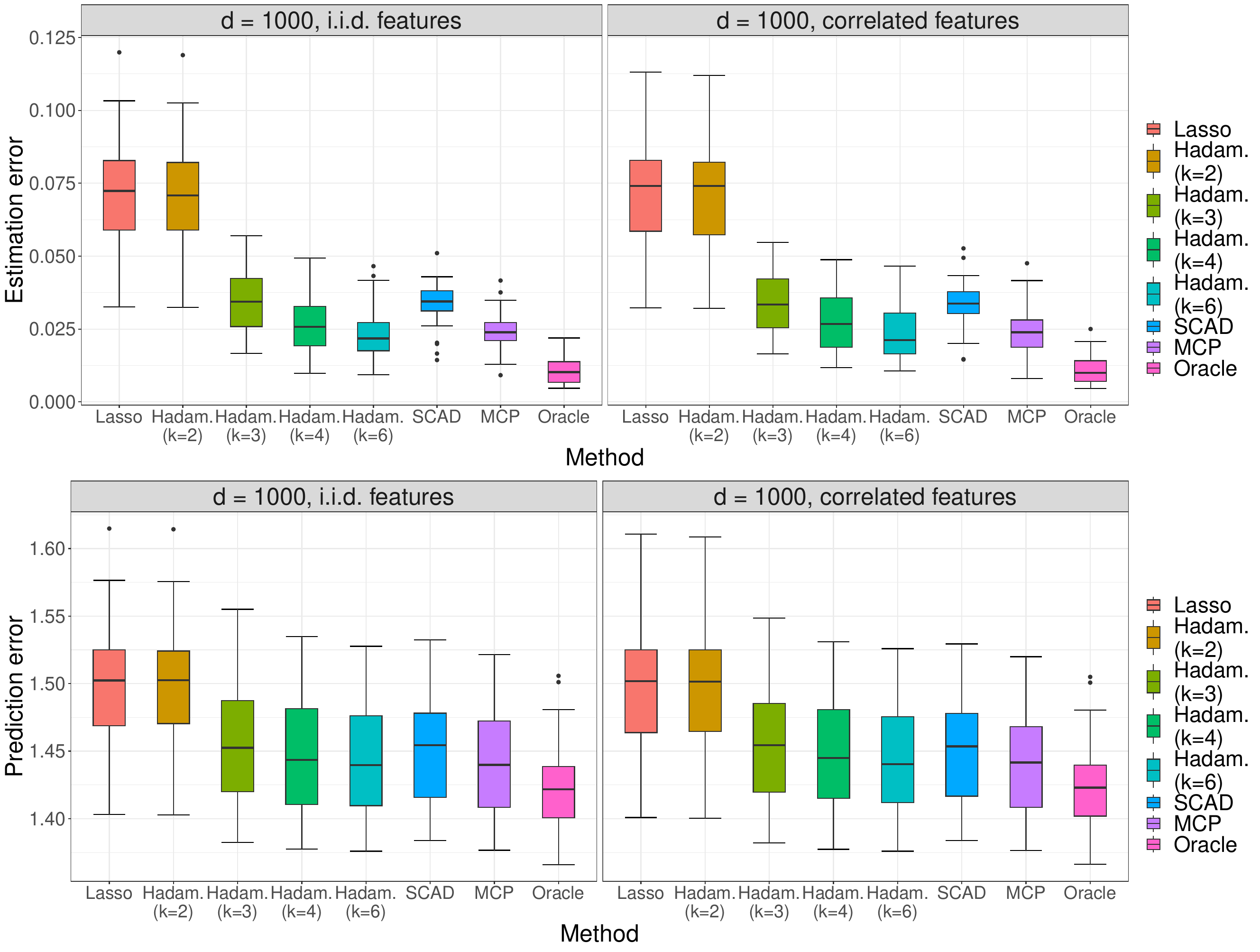}
\caption[Numerical Experiments (Estimation and Prediction)]{\small Stand. estimation error (top row) and test prediction error (bottom row) for two $\bm{\Sigma}$ settings (columns) of our approach for depths $k\in\{2,3,4,6\}$, compared with specialized optimizers for $\ell_1$ and non-convex SCAD and MCP penalties.}
\label{fig:simulation-estpred}
\vspace{-0.3cm}
\end{figure}
%

\noindent Figure~\ref{fig:simulation-estpred} shows the distribution of estimation and test prediction errors over 30 simulation runs. The results indicate that the performance of our differentiable method for $\ell_q$ regularization 
improves monotonically with the factorization depth $k$, outperforming $\ell_1$ regularization for $k>2$, and surpassing or matching both SCAD and MCP. These results are noteworthy considering the use of vanilla SGD without tuning. Comparing the performance of the Hadamard parametrized model of depth $k=2$ and the standard implementation of the lasso, we find virtually identical results, empirically validating our theoretical results.\\
Besides estimation and prediction error, the support recovery of our approach is also of interest for variable selection. In line with previous findings, we demonstrate empirically that deeper factorizations improve support recovery. Appendix~\ref{app:experiments} contains the corresponding results, 
as well as additional experiments for both a low-dimensional ($d<n$) setting and varying sparsity of the ground-truth parameter, whose findings are consistent with previous results.
\vspace{-0.1cm}

\subsection{Unstructured Sparsity: Enhanced DNN Pruning}\label{subsec:lenet-pruning}

In this application, we demonstrate how one-shot pruning of DNNs can be enhanced with differentiable (non-convex) sparse regularization using the $\text{HPP}_k$. Pruning \citep{lecun1989optimal} is the dominant sparsification technique for DNNs \citep{hoefler2021sparsity} and selectively removes components according to some saliency criterion, typically chosen to be the weight magnitude. Our method, as any sparse regularizer, can be easily combined with other sparsification schemes, e.g., by additionally applying global magnitude pruning \citep{blalock2020state} after training the overparametrized sparse network.

\begin{figure}[ht]
  \centering
  \includegraphics[width=0.83\textwidth]{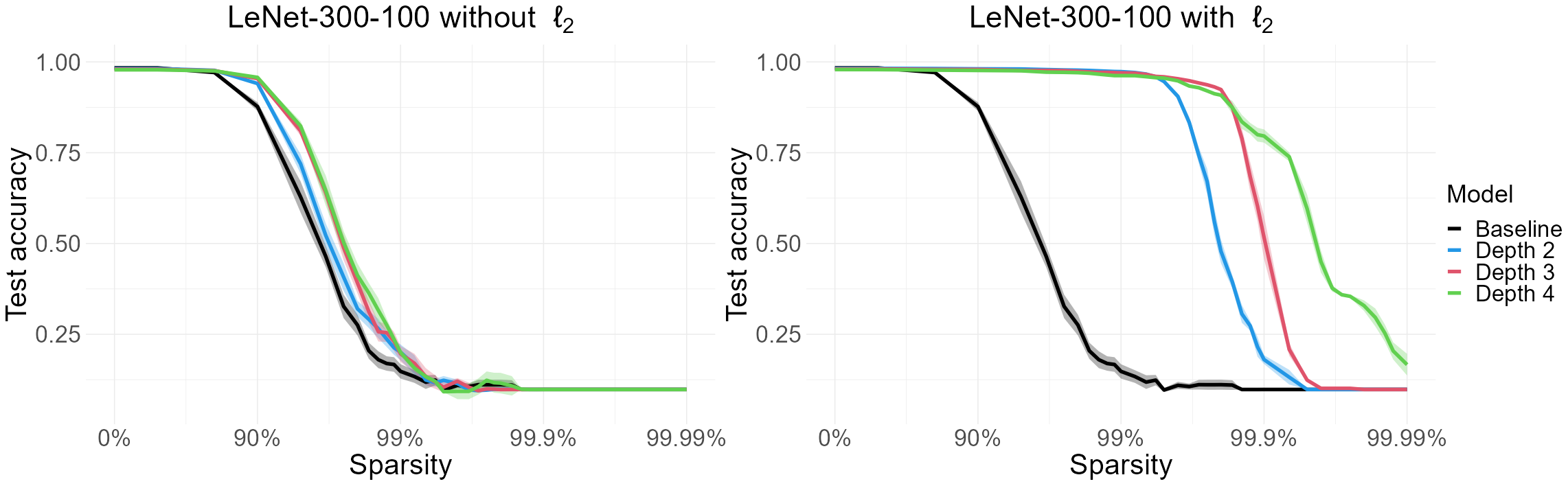}
  \caption[One-shot pruning curves for LeNet-300-100 on MNIST]{\small One-shot pruning curves obtained by overparametrizing the weights and biases of a LeNet-300-100 trained on MNIST using the $\text{HPP}_k$. \textbf{Left}: results for unregularized models. \textbf{Right}: adding smooth $\ell_2$ regularization perhaps counterintuitively produces profound sparsity-inducing effects. 
  Magnitude-based pruning constitutes the baseline and the error bars show standard errors over five random initializations.
  }
  \label{fig:lenet-hppk}
  \vspace{-0.4cm}
\end{figure}

\noindent To evaluate this approach, we train a LeNet-300-100 on the MNIST image classification task \citep{deng2012mnist} using Adam.  
The fully connected network has two hidden layers with 300 and 100 units and ReLU activation. We apply the $\text{HPP}_k$ to all $266,610$ weights and biases for depths $k \in \{2,3,4\}$. After training, the Hadamard factors are collapsed and the reconstructed model is further pruned to desired sparsity levels without finetuning. Figure~\ref{fig:lenet-hppk} (left) shows the pruning curves for $\lambda=0$ and different depths $k$. The plot reveals that factorizing the parameters without surrogate $\ell_2$ regularization already improves the pruning performance, in line with the arguments provided for the mechanism of Powerpropagation \citep{schwarz2021powerpropagation}. This is surprising since the model expressivity has not changed, highlighting important trajectory-dependent effects. The right plot is with active $\ell_2$ regularization, inducing sparse $\ell_{2/k}$ regularization according to our theory. The Pareto curves are taken as the best performance over a grid of $\lambda$ values for each sparsity level. The results show drastic improvements over both the baseline (magnitude pruning) and the unregularized overparametrization, with induced non-convex regularization ($k>2$) further outperforming induced $\ell_1$ sparsity. At a fixed accuracy of $75\%$, magnitude pruning still uses $\approx 24,000$ param., while the models for $k=2,4$ require only $\approx 1,800$ and $230$ parameters, respectively. Similarly, at a fixed sparsity of $99.9\%$, the model performance for $k=2$ almost degrades to random guessing, while the depth $4$ model retains $>80\%$ test accuracy. 
\vspace{-0.1cm} 

\subsection{Structured Sparsity: Filter-Sparse CNNs}

The next experiment applies the structured Hadamard power parametrization from Section~\ref{sec:had-group-powers} to a small VGG-style CNN to obtain filter sparsity. The network has a total of $99,178$ parameters of which $64,800$ are filter weights. Although structured sparsity in DL generally leads to poorer performance-sparsity trade-offs than unstructured sparsity, its capacity to jointly remove whole model components permits a much greater reduction in computational footprint and is thus of particular interest for practical applications. Writing the regularized CNN training objective for filter sparsity as $\P(\bm{\psi}, \bbeta)=\L(\bpsi,\bbeta)+\lambda \Vert \bbeta \Vert_{2,2/k}^{2/k}$, all biases and the weights of fully-connected layers are contained in $\bpsi$ while $\bbeta$ comprises the grouped filter weights of the convolutional layers. Applying the $\text{GHPowP}_{k}$ as defined in (\ref{eq:reparam-had-group-powers}) to $\bbeta$, the equivalent differentiable objective reads $\Q(\bpsi, \bu,\bnu) =  \L(\bpsi,\bu \odotg |\bnu|^{\circ(k-1)}) + \frac{\lambda}{k} \sum_{j=1}^{L} (\Vert \bm{u}_{{j}} \Vert_{2}^{2} + (k-1)  \nu_{j}^{2})$, where $L$ is the total number of filters. Effectively, the weights of each filter are multiplied by a shared scalar $|\nu_j|^{k-1}$, inducing the group structure. Note that by using a structured Hadamard power parametrization, only one additional parameter per filter is introduced for any factorization depth $k$, resulting in minimal overparametrization ($99,370$ parameters). Figure~\ref{fig:vgg-ghpowp} shows the regularization path for the overparametrized CNNs trained on MNIST using real-valued depths $k \in \{2,2.5,3\}$. The models are trained using SGD without any post-hoc pruning and compared to (structured) magnitude pruning of the original CNN based on the $\ell_2$ norm of the filter weights. The results show a $>90\%$ filter reduction at a negligible drop in accuracy, with deeper factorizations allowing for slightly higher sparsity. In comparison, structured magnitude pruning already starts degrading sharply at $50\%$ sparsity.
\begin{figure}[t]
  \centering
  \includegraphics[width=0.78\textwidth]{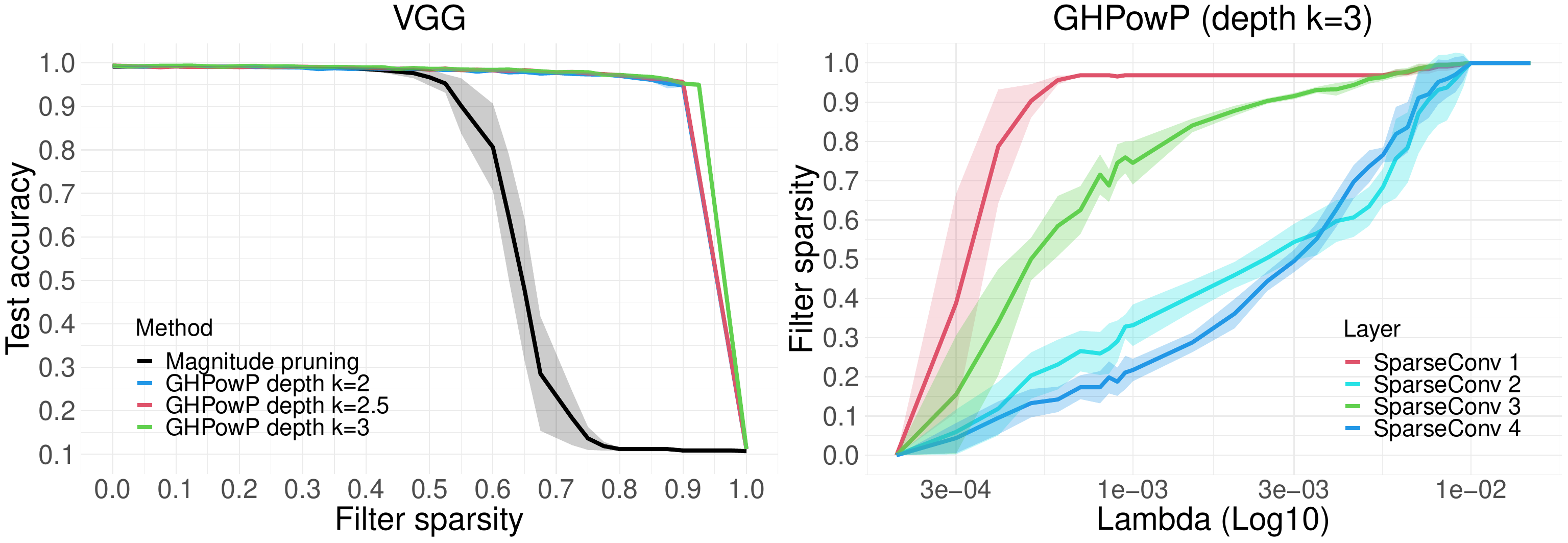}
  \caption[Structured Sparsity using GHPowP]{\small \textbf{Left}: regularization paths for (structured) filter sparsity using the $\text{GHPowP}_k$ for $k \in \{2, 2.5, 3\}$ to overparametrize the filter weights of a small VGG architecture trained on MNIST. Structured magnitude pruning based on filter norms constitutes the baseline. \textbf{Right}: layer-wise sparsity patterns for the $\text{GHPowP}_3$. Error bars show standard errors over ten random initializations.
  }
  \label{fig:vgg-ghpowp}
  \vspace{-0.3cm}
\end{figure}

\subsection{Computational Complexity}

An important question is how the overparametrization in our method affects the runtime complexity of DNN training using SGD. Since the networks are reduced to their base parametrization after training and sparse components are removed, the inference time complexity is reduced by the extent of the achieved sparsity. During training, the overparametrization increases both model size and computational complexity, which is heavily dependent on the architecture, hardware, and specific choice of $\K$.
\begin{figure}[b!]
  \centering
  \includegraphics[width=0.8\textwidth]{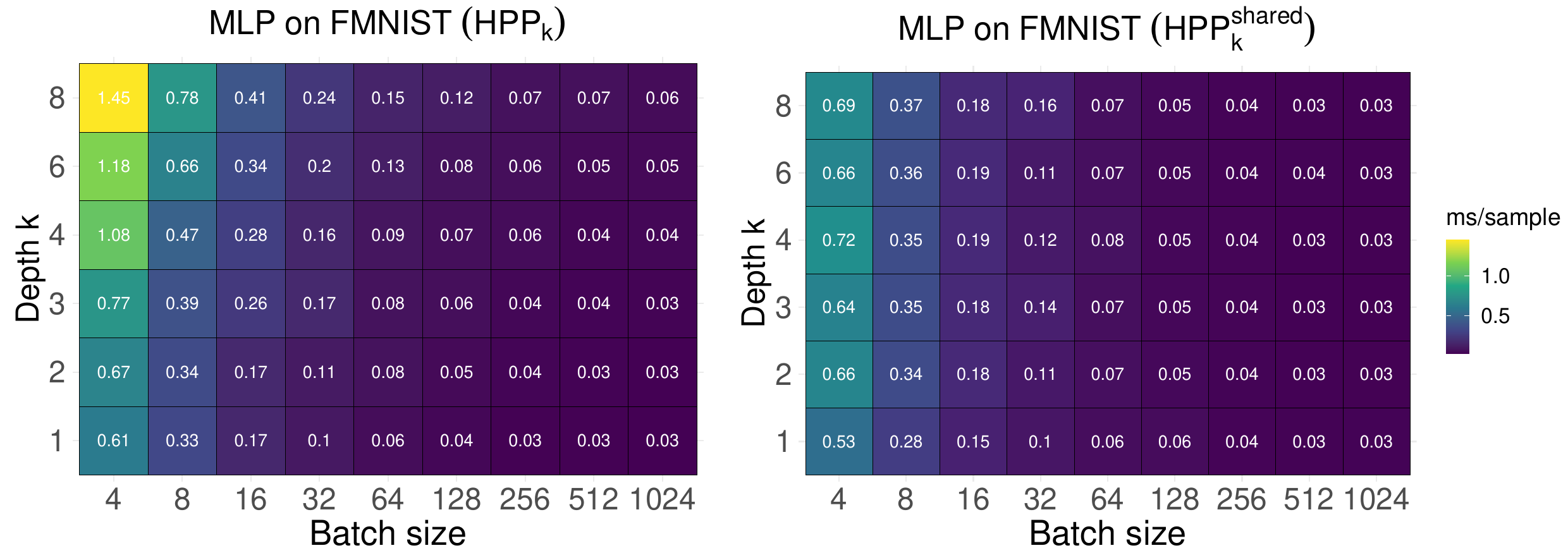}
  \caption[Time per sample for MLP on FMNIST]{\small Time per sample (training) for different factorization depths and batch sizes. \textbf{Left}: full overparametrization using the $\text{HPP}_k$. \textbf{Right}: parameter sharing significantly reduces computational overhead. Averages over four epochs are displayed.
  }
\label{fig:complexity-fmnist}
  \vspace{-0.3cm}
\end{figure}
To evaluate the impact of our approach, we train a fully-connected ReLU network with four hidden layers on the Fashion MNIST data set \citep{xiao2017fashion}. Figure~\ref{fig:complexity-fmnist} reports the mean wall-clock training time per sample for different batch sizes and factorization depths $k$ of both the $\text{HPP}_k$ (\ref{eq:hppk}) and its parameter-sharing counterpart the $\text{HPP}_{k}^{shared}$ (\ref{eq:hppk-shared}). The results show that the computational overhead increases sublinearly in $k$, but with diminishing effects for larger batches. For the $\text{HPP}_k$, training time is at worst roughly tripled for $k=8$, whereas parameter sharing affords significant improvements over the full $\text{HPP}_k$: for batch sizes $\geq 64$, there is no discernible increase in training time for the tested depth levels. Details on architecture, hardware, and additional results for a ResNet are provided in Appendix~\ref{app:subsec:computational-complexity}.

\section{Summary and Discussion}\label{sec:discussion}

In this work, we propose a general framework for smooth optimization of objectives that involve non-smooth and potentially non-convex sparse regularization of parameter subsets. Being model- and loss-agnostic, our approach is applicable to a wide range of scenarios. The key idea underlying our method is to find a smooth variational form of the non-smooth sparse regularizer. Applying a smooth parametrization map and a change of regularizers enables the construction of an equivalent smooth surrogate objective, eliminating the need for specialized optimization routines for non-smooth and non-convex problems. Moreover, our framework can be easily integrated into existing differentiable structures such as DNNs. 
Our general template is applied to the smooth optimization of a broad range of non-smooth $\ell_q$ and $\ell_{p,q}$ regularized optimization problems for (structured) sparsity. Numerical experiments demonstrate the practical feasibility and effectiveness of our method in various sparse learning problems and in comparison with other methods.

\noindent Our approach also presents certain limitations that merit discussion. One limitation pertains to the initialization of the surrogate parameters, where an optimal choice is not straightforward. 
In addition, while our approach enables efficient optimization using SGD, obtaining (numerically) exact zeros is not guaranteed for small $\lambda$. This is a characteristic of SGD and not a limitation of the optimization transfer \textit{per se}. 
For variable selection, we recommend a post-thresholding step.
It is worth emphasizing that these challenges do not inherently limit the potential of our approach; instead, they underline key areas where additional research is needed.

\noindent There are several promising avenues for future research. Notably, our approach offers the flexibility to construct reparametrized, sparse ``drop-in" replacements for network components, allowing for modular sparse regularization in differentiable network structures. This makes our method especially well-suited for exploring applications across various domains, such as input-sparse DNNs.
Although a heuristic initialization performed well in our experiments, there is further great interest in understanding how to construct initialization schemes tailored to the surrogate parameters. %
There is also an opportunity to investigate the relationship between our smooth optimization transfer approach and implicit regularization methods in the DL literature. Our approach enforces a balanced parameter norm condition through surrogate $\ell_2$ regularization, which bears similarities to the balanced weight conditions employed in implicit regularization techniques. Investigating this relationship could reveal valuable insights and potential synergies between the two approaches. Lastly, establishing theoretical conditions for the existence of differentiable parameterizations $\K$ and surrogate regularizers $\Rxi$ inducing a given $\Rbeta$ constitutes an interesting direction for future research.

\begin{appendices}

\section{Proofs of Lemmas and Theorems}\label{app:missing-proofs}

\subsection{Proof of Lemma~\ref{lemma:p_to_pk}}\label{app:p_to_pk}
\begin{proof}
    Assume $(\hat{\bpsi},\hat{\bbeta})$ is a local minimizer of $\P(\bpsi,\bbeta)$, then $\exists\, \varepsilon>0: \forall (\bpsi',\bbeta')\in\mathcal{B}((\hat{\bpsi},\hat{\bbeta}),\varepsilon):P(\hat{\bpsi},\hat{\bbeta})\leq \P(\bpsi',\bbeta')$.
    Since $\K(\bxi)$ is a continuous surjection, so is $\tilde{\K}(\bpsi,\bxi)\triangleq (\bpsi,\K(\bxi))$. Pick any $(\hat{\bpsi},\hat{\bxi})\in\tilde{\K}^{-1}(\hbpsi,\hbbeta)=\{\hbpsi\}\times\{\hbxi:\K(\hbxi)=\hbbeta\}$. By continuity of $\tilde{\K}$, there $\exists \delta>0:\tilde{\K}(\mathcal{B}((\hbpsi,\hbxi),\delta))\subseteq \mathcal{B}(\tilde{\K}(\hbpsi,\hbxi),\varepsilon) = \mathcal{B}((\hbpsi,\hbbeta),\varepsilon)$. This means $\forall\,(\bpsi',\bxi') \in \mathcal{B}((\hbpsi,\hbxi),\delta):\,(\bpsi',\K(\bxi'))=(\bpsi',\bbeta')\in\mathcal{B}((\hbpsi,\hbbeta),\varepsilon)$. Since by assumption, $P(\hat{\bpsi},\hat{\bbeta})\leq \P(\bpsi',\bbeta')$ for all $(\bpsi',\bbeta')\in\mathcal{B}((\hat{\bpsi},\hat{\bbeta}),\varepsilon)$, and by continuity all $(\bpsi',\bxi') \in \mathcal{B}((\hbpsi,\hbxi),\delta)$ map to some $(\bpsi',\bbeta')$ in $\mathcal{B}((\hbpsi,\hbbeta),\varepsilon)$ under $\tilde{\K}$, we conclude that 
    {\small
    $$\forall\, (\bpsi',\bxi')\in\mathcal{B}((\hbpsi,\hbxi),\delta):\P(\tilde{\K}(\hbpsi,\hbxi)) = \P(\hbpsi,\K(\hbxi)) = \P(\hbpsi,\hbbeta) \leq \P(\bpsi',\bbeta') = \P(\bpsi',\K(\bxi')) \,.$$
    Therefore, if $(\hat{\bpsi},\hat{\bbeta})$ is a local minimizer of $\P(\bpsi,\bbeta)$, then all $(\hat{\bpsi},\hat{\bxi})$ in the fiber $\tilde{\K}^{-1}(\hbpsi,\hbbeta)$ are local minimizers of $\P(\bpsi,\K(\bxi))$ with equivalent local minima $\P(\hat{\bpsi},\hat{\bbeta})=\P(\hat{\bpsi},\K(\hat{\bxi}))$}.
\end{proof}
\vspace{-0.0cm}
\subsection{Proof of Lemma~\ref{lemma:pk_to_p}}\label{app:pk_to_p}
\begin{proof}
    Assume $(\hat{\bpsi},\hbxi)$ is a local minimizer of $\P(\bpsi,\K(\bxi))$, then $\exists\, \varepsilon>0: \forall (\bpsi',\bxi')\in\mathcal{B}((\hbpsi,\hbxi),\varepsilon):\P(\hbpsi,\K(\hbxi))\leq \P(\bpsi',\K(\bxi'))$.
    Since $\K(\bxi)$ is locally open at $\hbxi$, so is $\tilde{\K}(\bpsi,\bxi)\triangleq (\bpsi,\K(\bxi))$ at $(\hbpsi,\hbxi)$. By local openness, we can find $\delta>0$ such that $\mathcal{B}(\tilde{\K}(\hbpsi,\hbxi),\delta) \subseteq \tilde{\K}(\mathcal{B}((\hbpsi,\hbxi),\varepsilon))$. Thus, $\forall\,(\bpsi',\bbeta')\in \mathcal{B}(\tilde{\K}(\hbpsi,\hbxi),\delta)\,\,\exists\,\,(\bpsi',\bxi')\in\mathcal{B}((\hbpsi,\hbxi),\varepsilon)$ such that $(\bpsi',\K(\bxi'))$ $=(\bpsi',\bbeta')$. But since we have by assumption that $\forall\,(\bpsi',\bxi')\in\mathcal{B}((\hbpsi,\hbxi),\varepsilon):\P(\hbpsi,\K(\hbxi))=\P(\hbpsi,\hbbeta)\leq \P(\bpsi',\K(\bxi'))$, and we established $\forall (\bpsi',\bbeta')\in \mathcal{B}((\hbpsi,\hbbeta),\delta)\,\exists\,(\bpsi',\bxi')\in\mathcal{B}((\hbpsi,\hbxi),\varepsilon):(\bpsi',\bbeta')=(\bpsi',\K(\bxi'))$, it follows 
    $$\forall\,(\bpsi',\bbeta')\in \mathcal{B}((\hbpsi,\hbbeta),\delta):\P(\hbpsi,\hbbeta)=\P(\hbpsi,\K(\hbxi)) \leq \P(\bpsi',\K(\bxi'))=\P(\bpsi',\bbeta')\,.$$ 
    Thus, $(\hbpsi,\hbbeta)=(\hbpsi,\K(\hbxi))$ is a local minimizer of $\P(\bpsi,\bbeta)$ with corresponding local minimum $\P(\hat{\bpsi},\hat{\bbeta})=\P(\hat{\bpsi},\K(\hat{\bxi}))$.
\end{proof}
\vspace{-0.0cm}

\subsection{Proof of Lemma~\ref{lemma:p_to_q}}\label{app:p_to_q}
\begin{proof}
    Assume $(\hat{\bpsi},\hat{\bbeta})$ is a local minimizer of $\P(\bpsi,\bbeta)$, then $\exists\, \varepsilon>0: \P(\hat{\bpsi},\hat{\bbeta})\leq \P(\bpsi',\bbeta')$ $ \forall \,(\bpsi',\bbeta')\in\mathcal{B}((\hat{\bpsi},\hat{\bbeta}),\varepsilon)$. Since $\K(\bxi)$ is a continuous surjection, so is $\tilde{\K}(\bpsi,\bxi)\triangleq (\bpsi,\K(\bxi))$. 
    By assumption of the variational form in Assumption~\ref{ass:min}, $\exists \hbxi \in \argmin \Rxi(\bxi)_{\bxi:\K(\bxi)=\hbbeta} \subseteq \{\bxi: \K(\bxi)=\hbbeta\}$ so that $\Rxi(\hbxi)=\Rbeta(\K(\hbxi))=\Rbeta(\hbbeta)$, and therefore also $\P(\hbpsi,\hbbeta)=\L(\hbpsi,\hbbeta)+\lambda \Rbeta(\hbbeta)=\L(\hbpsi,\K(\hbxi))+\lambda\Rbeta(\K(\hbxi))=\L(\hbpsi,\K(\hbxi))+\lambda \Rxi(\hbxi)=\Q(\hbpsi,\hbxi)$.\\
    By continuity of $\tilde{\K}$, there $\exists \delta>0:\tilde{\K}(\mathcal{B}((\hbpsi,\hbxi),\delta))\subseteq \mathcal{B}(\tilde{\K}(\hbpsi,\hbxi),\varepsilon) = \mathcal{B}((\hbpsi,\hbbeta),\varepsilon)$. This means $\forall\,(\bpsi',\bxi') \in \mathcal{B}((\hbpsi,\hbxi),\delta):\,(\bpsi',\K(\bxi'))=(\bpsi',\bbeta')\in\mathcal{B}((\hbpsi,\hbbeta),\varepsilon)$. Because $(\hbpsi,\hbbeta)$ is a local minimizer of $\P(\bpsi,\bbeta)$, $\P(\hat{\bpsi},\hat{\bbeta})\leq \P(\bpsi',\bbeta')$ for all $(\bpsi',\bbeta')\in\mathcal{B}((\hat{\bpsi},\hat{\bbeta}),\varepsilon)$, and by continuity of $\tilde{\K}$, all $(\bpsi',\bxi') \in \mathcal{B}((\hbpsi,\hbxi),\delta)$ map to some $(\bpsi',\bbeta')$ in $\mathcal{B}((\hbpsi,\hbbeta),\varepsilon)$. Then we can conclude $\P(\hbpsi,\K(\hbxi))\leq\P(\bpsi',\K(\bxi'))$ for all $(\bpsi',\bxi') \in \mathcal{B}((\hbpsi,\hbxi),\delta)$. Lastly, using the majorization property of the surrogate penalty, $\Rxi(\bxi)\geq\Rbeta(\K(\bxi))\,\forall\,\bxi$, we obtain the following chain of inequalities:
    \begin{equation*}
        \forall\, (\bpsi',\bxi') \in \mathcal{B}((\hbpsi,\hbxi),\delta):\,\, \Q(\hbpsi,\hbxi)=\P(\hbpsi,\K(\hbxi)) \leq \P(\bpsi',\K(\bxi')) \leq \Q(\bpsi',\bxi').
    \end{equation*}
    \noindent Thus, $(\hbpsi,\hbxi)$ is a local minimizer of $\Q(\bpsi,\bxi)$. Therefore, if $(\hbpsi,\hbbeta)$ is a local minimizer of $\P(\bpsi,\bbeta)$, then all $(\hbpsi,\hbxi)$ such that $\hbxi \in \argmin_{\bxi:\K(\bxi)=\hbbeta} \Rxi(\bxi)$ are local minimizers of $\Q(\bpsi,\bxi)$ with $\Q(\hbpsi,\hbxi)=\P(\hbpsi,\hbbeta)$.
%
\end{proof}
\vspace{-0.0cm}
\subsection{Proof of Lemma~\ref{lemma:q_to_p}}\label{app:q_to_p}

\begin{proof}

Assume $(\hbpsi,\hbxi)$ is a local minimizer of $\Q(\bpsi,\bxi)$, then $\exists\, \varepsilon_0>0$ such that $\forall (\bpsi',\bxi')\in \mathcal{B}((\hbpsi,\hbxi),\varepsilon_0):\Q(\hbpsi,\hbxi)\leq \Q(\bpsi',\bxi')$. First, we show that for each local minimizer $(\hbpsi,\hbxi)$ of $\Q$, letting $\K(\hbxi) = \hbbeta$, it must also hold that $\hbxi \in \hbxi(\hbbeta) =\argmin_{\bxi:\K(\bxi) = \hbbeta}\Rxi(\bxi)$, i.e., $\hbxi$ is a minimizer of the SVF given $\hbbeta$. Suppose for contradiction that $\hbxi$ is not a local minimizer of $\Rxi(\bxi)$ over the fiber $\K^{-1}(\hbbeta)$. Then $\forall\,\varepsilon>0 \, \exists \,\tbxi \in \mathcal{B}(\hbxi,\varepsilon)$ such that $\K(\tbxi)=\hbbeta$ and $\Rxi(\tbxi) < \Rxi(\hbxi)$. Let $\varepsilon = \varepsilon_0$. Because $\K(\tbxi) = \K(\hbxi)$, and the loss $\L(\bpsi,\K(\bxi))$ is constant over all $\bxi \in \K^{-1}(\hbbeta)$, we have $\L(\hbpsi,\K(\tbxi)) = \L(\hbpsi,\hbbeta)$. But then $\Q(\hbpsi,\tbxi) = \L(\hbpsi,\hbbeta) + \lambda \Rxi(\tbxi) < \L(\hbpsi,\hbbeta) + \lambda \Rxi(\hbxi) = \Q(\hbpsi,\hbxi)$,
with $(\hbpsi,\tbxi) \in \mathcal{B}((\hbpsi,\hbxi),\varepsilon_0)$, contradicting local minimality of $(\hbpsi,\hbxi)$. Thus if $(\hbpsi,\hbxi)$ is a local minimizer of $\Q(\bpsi,\bxi)$, then, as all local minima of the SVF are global, $\hbxi \in \argmin_{\bxi:\K(\bxi)=\hbbeta}\Rxi(\bxi)$, with $\Rxi(\hbxi) = \Rbeta(\hbbeta)$, and so $\Q(\hbpsi,\hbxi) = \P(\hbpsi,\hbbeta)$.

\noindent Using this result, we now proceed to prove that $(\hbpsi,\hbbeta)$ is a local minimizer of $\P(\bpsi,\bbeta)$ by contradiction. Suppose $(\hbpsi,\hbbeta)$ is not a local minimizer of $\P(\bpsi,\bbeta)$, then $\forall \delta>0\; \exists (\tbpsi,\tbbeta) \in \mathcal{B}((\hbpsi,\hbbeta),\delta):\; \P(\tbpsi,\tbbeta)<\P(\hbpsi,\hbbeta)$. 
By Assumption~\ref{ass:min}, the set-valued solution map $\hbxi(\bbeta)$ is lower hemicontinuous at $\hbbeta$, and Lemma~\ref{lemma:lhc-product-augmented} extends this property to the identity-augmented product map $g:(\bpsi, \bbeta) \mapsto \{ (\bpsi, \bxi) : \bxi \in \hbxi(\bbeta) \}$. Since we know $\hbxi \in \hbxi(\hbbeta)$, it follows from lower hemicontinuity of $g(\bpsi,\bbeta)$ at $(\hbpsi,\hbbeta)$ that for all $\varepsilon>0$ there is $\delta>0$ so that:

\begin{equation*}
\forall \,\, (\bpsi', \bbeta') \in \B((\hbpsi, \hbbeta), \delta) \, \exists \,(\bpsi', \bxi') \in g(\bpsi', \bbeta') \cap \B((\hbpsi, \hbxi), \varepsilon). 
\end{equation*}

Letting $\varepsilon = \varepsilon_0$ and $(\bpsi',\bbeta')=(\tbpsi,\tbbeta)$, this implies there is also $(\tbpsi,\tbxi) \in \B((\hbpsi, \hbxi), \varepsilon_0)$ with $\tbxi \in \hbxi(\tbbeta)$. As $\tbxi$ is a minimizer of $\Rxi(\bxi)$ over the fiber $\K^{-1}(\tbbeta)$, we have $\Rxi(\tbxi)=\Rbeta(\tbbeta)$ and thus $\P(\tbpsi,\tbbeta)=Q(\tbpsi,\tbxi)$. But then we have found $(\tbpsi,\tbxi)\in\mathcal{B}((\hbpsi,\hbxi),\varepsilon_0)$ such that $\Q(\tbpsi,\tbxi)=\P(\tbpsi,\tbbeta)<\P(\hbpsi,\hbbeta)=\Q(\hbpsi,\hbxi)\,,$
contradicting that $(\hbpsi,\hbxi)$ is a local minimizer of $\Q$. This shows that if $(\hbpsi,\hbxi)$ is a local minimizer of $\Q(\bpsi,\bxi)$, then $(\hbpsi,\hbbeta)=(\hbpsi,\K(\hbxi))$ is a local minimizer of $\P(\bpsi,\bbeta)$ with $\Q(\hat{\bpsi},\hat{\bxi})=\P(\hat{\bpsi},\hbbeta)$.

\end{proof}

%

\begin{lemma}[Lower hemicontinuity of product-augmented maps] \label{lemma:lhc-product-augmented}

Let $\hbxi : \Rd \rightrightarrows \Rdxi$ be lower hemicontinuous at $\hbbeta \in \Rd$. Augmenting $\hbxi(\bbeta)$ by the identity function $\operatorname{id}_{\bpsi}: \mathbb{R}^{d_{\bpsi}} \to \mathbb{R}^{d_{\bpsi}},\bpsi \mapsto \bpsi$, we can define the Cartesian product map $g : \mathbb{R}^{d_{\bpsi}} \times \Rd \rightrightarrows \mathbb{R}^{d_{\bpsi}} \times \Rdxi, \, (\bpsi, \bbeta) \mapsto g(\bpsi, \bbeta) \triangleq \{ (\bpsi, \bxi) : \bxi \in \hbxi(\bbeta) \}$. Then $g$ is lower hemicontinuous at $(\hbpsi, \hbbeta) \in \mathbb{R}^{d_{\bpsi}} \times \Rd$ for any $\hbpsi \in \mathbb{R}^{d_{\bpsi}}$.
\end{lemma}

\begin{proof}
Let $\varepsilon > 0$ and let $(\hbpsi, \hbxi) \in g(\hbpsi, \hbbeta)$, i.e., $\hbxi \in \hbxi(\hbbeta)$, be arbitrary. Since $\hbxi(\bbeta)$ is lower hemicontinuous at $\hbbeta$, there is $\delta_1 > 0$ such that for all $\tbbeta \in \B(\hbbeta, \delta_1)$ and all $\bxi' \in \hbxi(\hbbeta)$, there exist $\tbxi \in \hbxi(\tbbeta)$ with $\tbxi \in \B(\bxi',\varepsilon/\sqrt{2})$, 
in particular for $\hbxi \in \hbxi(\hbbeta)$. Set $\delta \triangleq \min(\delta_1, \varepsilon/\sqrt{2})$. Let $(\tbpsi, \tbbeta) \in \B((\hbpsi, \hbbeta), \delta)$. Then $\|\tbpsi - \hbpsi\|_2^2 + \|\tbbeta - \hbbeta\|_2^2 < \delta^2 \leq \frac{\varepsilon^2}{2}$, 
so in particular, $\|\tbpsi - \hbpsi\|_2 < \delta \leq \varepsilon/\sqrt{2}$, and $\tbbeta \in \B(\hbbeta, \delta_1)$. Hence there exists $\tbxi \in \hbxi(\tbbeta)$ with $\|\tbxi - \hbxi\|_2 < \varepsilon/\sqrt{2}$. Combining these, we obtain
\begin{equation*}
\|(\tbpsi, \tbxi) - (\hbpsi, \hbxi)\|_2^2 = \|\tbpsi - \hbpsi\|_2^2 + \|\tbxi - \hbxi\|_2^2 < \frac{\varepsilon^2}{2} + \frac{\varepsilon^2}{2} = \varepsilon^2,
\end{equation*}
so that $(\tbpsi, \tbxi) \in \B((\hbpsi, \hbxi), \varepsilon)$. Since $(\tbpsi, \tbxi) \in g(\tbpsi, \tbbeta)$, this shows that for all $(\tbpsi, \tbbeta) \in \B((\hbpsi, \hbbeta), \delta)$, there exists $(\tbpsi, \tbxi) \in g(\tbpsi, \tbbeta) \cap \B((\hbpsi, \hbxi), \varepsilon)$. Since $\hbpsi$ and $\hbxi \in \hbxi(\hbbeta)$ were arbitrary, this shows the lower hemicontinuity of $g$ at $(\hbpsi, \hbbeta)$.
\end{proof}

\subsection{Proof of Lemma~\ref{lemma:lhc-solution-map}}\label{app:proof-lhc}

\begin{proof}

We prove lower hemicontinuity for each $\hbxi_j(\bbeta_j),\, j \in [L]$, separately, and then extend to the entire solution map $\hbxi(\bbeta)$. Our argument proceeds by (i) deriving the minimal $\mathcal{R}_{\bxi_j}(\bxi_j)$ over the fiber $\K_j^{-1}(\bbeta_j)$ given our structural assumptions, (ii) analytically constructing solution magnitudes attaining this minimum, and (iii) verifying the solution map is lower hemicontinuous.

Fix $j \in [L]$ and consider $\bbeta_j \in \mathbb{R}^{|\Gj|}$.  
As each of the $k$ factors $\bxi_{jl}$ is either a scalar or a vector in $\mathbb{R}^{|\Gj|}$, let $S_j \subseteq [k]$ denote the subset of scalar indices, $V_j = [k] \setminus S_j$ the vector indices, and let the respective sum of exponents be $k_1 = \sum_{l \in V_j} \alpha_l$ and $k_2 = \sum_{l \in S_j} \alpha_l$. 
By the power-product assumption on $\K_j$, each entry $i \in \Gj$ can be written as 
\begin{equation*}
    \K_{ji}(\bxi_j) = \underbrace{\textstyle\prod_{l=1}^k \operatorname{sign}(\xi_{jl}^{(i)}) \, |\xi_{jl}^{(i)}|^{\alpha_l}}_{\text{Assumption } \ref{ass-parametrization-map} } = \underbrace{\textstyle\prod_{l \in V_j} \operatorname{sign}(\xi_{jli}) \, |\xi_{jli}|^{\alpha_l}}_{\triangleq \beta_{ji}^V = \beta_{ji}/\beta_j^S}  \cdot \underbrace{\textstyle\prod_{l \in S_j} \operatorname{sign}(\xi_{jl}) \, |\xi_{jl}|^{\alpha_l}}_{\triangleq \beta_j^S} =
\beta_{ji},
\end{equation*}
where $\xi_{jl}^{(i)}$ equals the $i$th entry $\xi_{jli}$ of $\bxi_{jl}$ for $l \in V_j$ and $\xi_{jl}^{(i)}=\xi_{jl}$ for scalars $l \in S_j$. The partial product $\beta_{ji}^V$ written as the fraction $\beta_{ji}/\beta_j^S$ is well-defined for $\bbeta_j \neq \bm{0}$, as the product-structure implies $\xi_{jl} \neq 0\, \forall \,l \in S_j$ and hence $\beta_j^S \neq 0$. 

\textbf{(i) Minimum of \texorpdfstring{$\mathcal{R}_{\bxi_j}(\bxi_j)$ over $\K_j^{-1}(\bbeta_j)$}{surrogate regularizer over fiber}.} For $\bbeta_j = \bm{0}$, the unique norm-minimizing solution $\hbxi_j(\bm{0})$ is the singleton $\{\bm{0}\}$, yielding $\mathcal{R}_{\bxi_j}(\bm{0})=\bm{0}$. Next, consider $\bbeta_j \neq \bm{0}$. We reorder the terms in $\mathcal{R}_{\bxi_j}(\bxi_j)$ as follows:
\begin{equation*}\resizebox{0.99\textwidth}{!}{${\textstyle
\mathcal{R}_{\bxi_j}(\bxi_j) = \sum_{l=1}^k \alpha_l \| \bxi_{jl} \|_2^2 = \sum_{l \in V_j} \alpha_l \| \bxi_{jl} \|_2^2 + \sum_{l \in S_j} \alpha_l \xi_{jl} ^2  = \sum_{i \in \Gj} \sum_{l \in V_j}
\alpha_l \, \xi_{jli}^2+\sum_{l \in S_j}\alpha_l \, \xi_{jl}^2.}$}
\end{equation*}
Using the weighted AM-GM inequality (Prop.~\ref{prop-wamgm}) on the first term and inserting the parametrization constraints $\K_{ij}(\bxi_j)=\beta_{ji}=\beta_{ji}^V \cdot \beta_{j}^S$, the constrained minimum is
\scalebox{0.74}{
  \begin{minipage}{\linewidth}
  \begin{align*}
  \sum_{i \in \Gj} \sum_{l \in V_j} \alpha_l \, \xi_{jli}^2 \ge  \sum_{i \in \Gj} k_1 \big( \prod_{l \in V_j} \left(\xi_{jli}^{2}\right)^{\alpha_l} \big)^{1/k_1} = \sum_{i \in \Gj} k_1 \big(\big| \prod_{l \in V_j} \operatorname{sign}(\xi_{jli}) |\xi_{jli}|^{\alpha_l} \big|\big)^{2/k_1} = \sum_{i \in \Gj} k_1 |\beta_{ji}/\beta_j^S |^{2/k_1} = k_1 \| \bbeta_j/\beta_j^S \|_{2/k_1}^{2/k_1}\,,
  \end{align*}
  \end{minipage}
}

Applying the same inequality to the partially minimized sum and inserting $\K_j$, it holds
\scalebox{0.93}{
  \begin{minipage}{\linewidth}
\begin{align*}
k_1  \| {\bbeta_j/\beta_j^S}\|_{2/k_1}^{2/k_1}
+
\textstyle\sum_{l \in S_j}
\alpha_l \, \xi_{jl}^2
&\ge
(k_1 + k_2)
\big(
 \| {\bbeta_j/\beta_j^S}\|_{2/k_1}^{2}
\cdot \textstyle\prod_{l \in S_j}
(\xi_{jl}^2)^{\alpha_l}
\big)^{1/(k_1 + k_2)}\\
&= (k_1 + k_2)
\big(
 \| {\bbeta_j/\beta_j^S}\|_{2/k_1}
|\beta_j^S|
\big)^{2/(k_1 + k_2)}
= (k_1 + k_2)
 \| \bbeta_j\|_{2/k_1}^{2/(k_1 + k_2)}\,.
\end{align*}
\end{minipage}}
Combined, this yields the minimum value of $\mathcal{R}_{\bxi_j}(\bxi_j)$ over $\K_j^{-1}(\bbeta_j)$: 
\begin{equation*}
\mathcal{R}_{\bxi_j}(\bxi_j) =
\sum_{l=1}^k
\alpha_l \, \|\bxi_{jl}\|_2^2 \geq 
k_1  \| {\bbeta_j/\beta_j^S}\|_{2/k_1}^{2/k_1} +\sum_{l \in S_j}
\alpha_l \, \xi_{jl}^2 \geq(k_1 + k_2)
 \| \bbeta_j\|_{2/k_1}^{2/(k_1 + k_2)} ,
\end{equation*}
with equality holding if and only if both AM-GM optimality conditions are met, i.e., $|\xi_{jli}| = |\xi_{jl'i}| = |{\beta_{ji}/\beta_j^S}|^{1/k_1}
\,\, \forall l,l' \in V_j$ and all $i \in \G_j$, as well as $|\xi_{jl}| = |\xi_{jl'}| =  \|{\bbeta_j/\beta_j^S}  \|_{2/k_1}^{1/k_1} = \left\|
\bbeta_j
\right\|_{2/k_1}^{1/(k_1 + k_2)}
\, \forall \,l,l' \in S_j$, while $\K_j(\bxi_j)=\bbeta_j$.

\textbf{(ii) Construction of solutions.} The AM–GM optimality conditions and the multiplicative structure of $\K_j$ determine the absolute values of all solution parameters. Applying correct sign configurations to $\bxi_j$ respecting $\operatorname{sign}(\bbeta_j)$ under $\K_j$ then produces full solutions.
For a direct construction for any $\bbeta_j \neq \bm{0}$, define for $k_1,k_2\geq1$ the scale at which the scalar factors must balance at optimality, 
$T(\bbeta_j) \triangleq \left\| \bbeta_j \right\|_{2/k_1}^{1/(k_1 + k_2)}$. Then the absolute values of the solution parameters are given by:
$$\hbxi_j^{abs}: \mathbb{R}^{|\Gj|} \to \mathbb{R}^{d_{\bxi_j}}\,,\, \bbeta_j \mapsto |\bxi_j| = 
\begin{cases}
    |\xi_{jl}|=T(\bbeta_j)\,\,&\text{for}\,|\xi_{jl}| \in \mathbb{R}, \,l \in S_j   \\
    |\bxi_{jl}|=T(\bbeta_j)^{-k_2/k_1} | \bbeta_{j}|^{\circ (1/k_1)}\,&\text{for}\,\,|\bxi_{jl}| \in \mathbb{R}^{|\Gj|},\, l \in V_j
\end{cases} $$

Since $\hbxi_j^{abs}$ is a composition of continuous functions on $\mathbb{R}^{|\Gj|}$ (norms, powers, multiplication), its continuity follows immediately.
Note for vector-valued $|\bxi_{jl}|$, each $i$th entry is $T(\bbeta_j)^{-k_2/k_1} | \beta_{ji}|^{1/k_1}$, $i \in \Gj$. To verify the balancedness optimality conditions, we have by construction  $|\xi_{jli}|=|\xi_{jl'i}|=T(\bbeta_j)^{-k_2/k_1}|\beta_{ji}|^{1/k_1} \, \forall \, l,l' \in V_j, i \in \Gj$, as well as $|\xi_{jl}|=T(\bbeta_j) \, \forall \, l \in S_j$. To show that the $\xi_{jli}$ balance at the optimal scale in the first AM-GM application, see
\begin{align*}
    |\beta_{ji}/\beta_j^S|^{1/k_1} = |\beta_{ji}/T(\bbeta_j)^{k_2}|^{1/k_1} =  T(\bbeta_j)^{-k_2/k_1} | \beta_{ji}|^{1/k_1} = |\xi_{jli} | \quad \forall \, l \in V_j \,\, i \in \G_j.
\end{align*}
Finally, to confirm that the lower bound of the penalty for the factors in $V_j$, $k_1 \| \bbeta_j / \beta_j^S \|_{2/k_1}^{2/k_1}$, is balanced in the second AM-GM application, we use both the definition of $T(\bbeta_j)$ and $|\beta_j^S|=T(\bbeta_j)^{k_2}$ to obtain
\begin{align*}
    \| \bbeta_j / \beta_j^S \|_{2/k_1}^{1/k_1} &=  T(\bbeta_j)^{-k_2/k_1}\| \bbeta_j \|_{2/k_1}^{1/k_1} = \| \bbeta_j \|_{2/k_1}^{-k_2/(k_1(k_1+k_2))} \| \bbeta_j \|_{2/k_1}^{1/k_1} = \| \bbeta_j \|_{2/k_1}^{k_1/(k_1(k_1+k_2))}\\
    &= \| \bbeta_j \|_{2/k_1}^{1/(k_1+k_2)} = T(\bbeta_j).
\end{align*}
To show feasibility of the constructed solutions, select any $\beta_{ji}=\K_{ji}(\bxi_j), i \in \Gj$. Then:
\scalebox{0.92}{
  \begin{minipage}{\linewidth}
\begin{align*}
\K_{ji}(\hat{\xi}_{ji}^{abs}(\bbeta_j))&=\prod_{l \in S_j} T(\bbeta_j)^{\alpha_l}
\prod_{l \in V_j} (T(\bbeta_j)^{-k_2/k_1} | \beta_{ji}|^{1/k_1})^{\alpha_l}=T(\bbeta_j)^{k_2}\left( |\beta_{ji}|^{1/k_1} T(\bbeta_j)^{-k_2/k_1} \right)^{k_1}\\
&=T(\bbeta_j)^{k_2}\large( |\beta_{ji}|^{1/k_1}  \large)^{k_1}T(\bbeta_j)^{-k_2}=|\beta_{ji}|.
\end{align*} 
  \end{minipage}
}\\ \vspace{0.1cm}

\noindent The non-uniqueness of the solution signs and the multiplicative parametrization result in sign-flip symmetries. This also implies that, in addition to balanced magnitudes, solutions also require suitable sign configurations to ensure $\K_j(\bxi_j)=\bbeta_j$ and not a sign-permuted version.

\textbf{(iii) Lower hemicontinuity of solution mapping.} To establish lower hemicontinuity of $\hbxi_j(\bbeta_j)$, our goal is to show: for all $\bar{\bxi}_j \in \hbxi_j(\bbeta_j)$ and any $\varepsilon>0$, there is $\delta>0$ so that for all $\bbeta_j' \in \B(\bbeta_j,\delta)$ there exists $\bxi_j' \in \hbxi_j(\bbeta_j')\,\cap\,\B(\bar{\bxi}_j,\varepsilon)$. The continuity of the $\hbxi_j^{abs}(\bbeta_j)$ significantly simplifies the argument.\\
Fix an arbitrary $\bbeta_j \neq \bm{0}$ and any solution $\bar{\bxi}_j \in \hbxi_j(\bbeta_j)$. By our construction of the absolute values of the solutions, it follows that $|\bar{\xi}_{jl}|=T(\bbeta_j)$ and $|\bar{\xi}_{jli}|=T(\bbeta_j)^{-k_2/k_1}|\beta_{ji}|^{1/k_1}$ for all $i \in \Gj, l \in [k]$. For any $\bbeta_j'$ near $\bbeta_j$, we can use the construction $\hbxi_j^{abs}(\bbeta_j')$ to obtain $|\bxi_j'|$.\\
For a fixed $\bar{\bxi}_j \in \hbxi_j(\bbeta_j)$, define the signed function $\hbxi_j^{sign}(\bbeta_j')\triangleq \operatorname{sign}(\bar{\bxi}_j) \odot \hbxi_j^{abs}(\bbeta_j')$ by applying the signs of $\bar{\bxi}_j$ to the absolute values. Trivially, $\hbxi_j^{sign}(\bbeta_j)=\bar{\bxi}_j$. Note that this is a slight abuse of notation: in the case of a zero entry $\beta_{ji}=0$, we have $\bar{\xi}_{jli}=0 \, \forall \, l \in V_j$, and the respective entries in $\operatorname{sign}(\bar{\bxi}_j)$ are undefined. In this case, we set those signs to any pattern that respects $\operatorname{sign}(\beta_{ji}')$ to ensure $\K_{ji}(\hat{\xi}_{ji}^{sign}(\bbeta_j'))=\beta_{ji}'$ for all $\bbeta_j'$ sufficiently close to $\bbeta_j$. This is permissible for our argument because the sign of $\xi_{jli}'$ becomes irrelevant when measuring distance to $\bar{\xi}_{jli}=0$. For simplicity, we can hence assume $\operatorname{sign}(\bar{\bxi}_j)$ to be well-defined. Because $\hbxi_j^{abs}$ is continuous at $\bbeta_j$,  $\forall \varepsilon>0 \exists\delta>0: \forall\bbeta' \in \B(\bbeta_j,\delta)$:  $\hbxi_j^{abs}(\bbeta_j') \in \B(\hbxi_j^{abs}(\bbeta_j), \varepsilon)$. But then $\bxi_j' \triangleq \hbxi_j^{sign}(\bbeta_j') \in \B(\bar{\bxi}_j, \varepsilon)$ must hold, because 
\begin{align*}
    \| \bar{\bxi}_j - \bxi_j' \|_2 = \| \operatorname{sign}(\bar{\bxi}_j) \odot |\bar{\bxi}_j| -\operatorname{sign}(\bar{\bxi}_j) \odot |\bxi_j'| \|_2 
    &= \| \operatorname{sign}(\bar{\bxi}_j) \odot (\hbxi_j^{abs}(\bbeta_j) - \hbxi_j^{abs}(\bbeta_j')) \|_2 \\
    &= \| (\hbxi_j^{abs}(\bbeta_j) - \hbxi_j^{abs} (\bbeta_j') \|_2  < \varepsilon
\end{align*}
Therefore, $\hbxi_j(\bbeta_j)$ is lower hemicontinuous at $\bbeta_j \neq 0$. Now, for $\bbeta_j=\bm{0}$ we have $\hbxi_j(\bm{0})=\{\bm{0}\}$. Because we know $\hbxi_j^{abs}(\bbeta_j)$ continuously approaches $\bm{0}$ as $\bbeta_j \to \bm{0}$ we can infer that for all $\varepsilon>0$ there is $\delta>0$ such that for any $\bbeta_j' \in \B(\bm{0}, \delta)$, \textit{all} solutions in $\hbxi_j(\bbeta_j')$ are in $\B(\bm{0},\varepsilon)$, and thus $\hbxi_j(\bbeta_j)$ is also lower hemicontinuous at $\bm{0}$.

This shows $\hbxi_j(\bbeta_j)$ is lower hemicontinuous on $\mathbb{R}^{|\Gj|}$. Since each $\hbxi_j(\bbeta_j)$ is lower hemicontinuous on $\mathbb{R}^{|\Gj|}$ and the solution map is the Cartesian function product $\hbxi(\bbeta) = (\hbxi_1(\bbeta_1), \ldots, \hbxi_L(\bbeta_L))$, it follows that $\hbxi(\bbeta)$ is lower hemicontinuous at every $\bbeta \in \Rd$. 
\end{proof}

\vspace{-0.0cm}
%
%
\subsection{Proof of Lemma~\ref{lemma:S_hdp-def}} \label{app:hdp-proof}
\begin{proof}
As in Lemma~\ref{lemma:S_hpp-def}, we can proceed by finding the minimum element-wise. Since the constraint implies that the difference of two non-negative numbers equals $\beta_j$ for $j=1,\ldots,d$, we further differentiate by the sign of $\beta_j$. For $\beta_j=0$, the constraint reduces to $\gamma_j^2=\delta_j^2$, which provides a unique minimizer $(\hat{\gamma}_j, \hat{\delta}_j) = (0,0)$, resulting in a minimum $\ell_2$ regularization term of $0=|\beta_j|$. For $\beta_j>0$, the constraint gives us $ \gamma_j^2 = \beta_j + \delta_j^2 \geq \beta_j \implies |\gamma_j| \geq \sqrt{|\beta_j|} = \sqrt{\beta_j}$. Thus, we consider $\gamma_j = \pm \sqrt{|\beta_j|}$ and $\delta_j=0$. This choice trivially satisfies the constraint, and it is easy to see that any other pair $(\gamma_j, \delta_j)$ satisfying $\beta_j = \gamma_j^2 - \delta_j^2$ needs to have a strictly larger magnitude in both $\gamma_j$ and $\delta_j$, resulting in a larger sum of the squared $2\text{-norms}$. Thus the minimizers for $\beta_j>0$ are given by $(\hat{\gamma}_j, \hat{\delta}_j) = (\pm \sqrt{|\beta_j|},0)$, resulting in a minimum regularization term of $|\beta_j|$. For $\beta_j < 0$, an analogous argument holds: By the constraint $\gamma_j^2-\delta_j^2=\beta_j$ we have $\delta_j^2 = \gamma_j^2 - \beta_j \geq - \beta_j = |\beta_j| \implies |\delta_j| \geq \sqrt{|\beta_j|}$. Considering $\delta_j = \pm \sqrt{|\beta_j|}$ and $\gamma_j = 0$, we again observe that any other pair $(\gamma_j, \delta_j)$ satisfying the constraint has strictly larger magnitude in $\gamma_j$ and $\delta_j$, resulting in a larger $\ell_2$ regularization term. Thus, the minimizers for $\beta_j<0$ are given by $(\hat{\gamma}_j, \hat{\delta}_j) = (0,\pm \sqrt{|\beta_j|})$, yielding a minimum $\ell_2$ penalty of $|\beta_j|$.\\
In all three cases, the minimum of $\gamma_j^2+\delta_j^2$ subject to $\gamma_j^2-\delta_j^2=\beta_j$ is given by $|\beta_j|$. The proof is completed by iterating over $j=1,\ldots,d$.
\end{proof}
\vspace{-0.0cm}
\subsection{Proof of Lemma~\ref{lemma:S_ghpp-def}}\label{app:proof-lemma-ghpp}

\begin{proof}
Due to the separable structure of the parametrization, we can proceed by finding the minimizer for each summand $j\in[L]$. Using the AM-GM on 
$\Vert \bm{u}_{j} \Vert_{2}^{2}$ and $\nu_{j}^{2}$,
\begin{align} \label{eq:am-gm-groupstep}
    \frac{\Vert \bm{u}_{j} \Vert_{2}^{2} + \nu_{j}^{2}}{2} &\geq \sqrt{\nu_{j}^{2} \cdot \Vert \bm{u}_{j} \Vert_{2}^{2}} = \sqrt{( \nu_{j} \cdot \Vert \bm{u}_{j} \Vert_{2})^{2}} 
    = |\nu_j| \cdot  \Vert \bm{u}_{j} \Vert_{2} =  \Vert \nu_j \bm{u}_{j} \Vert_{2} =  \Vert \bm{\beta}_{j} \Vert_{2} \,, \nonumber
\end{align}
where we used the absolute homogeneity of norms. The expression reduces to equality if and only if $\Vert \bm{u}_{j} \Vert_{2}^{2} = \nu_{j}^{2} = \Vert \bm{\beta}_{j} \Vert_{2}$. Iterating over all groups $j=1,\ldots,L$ shows that the constrained minimum in (\ref{eq:S_ghpp-def}) is indeed $2 \Vert \bm{\beta} \Vert_{2,1}$ for all $\bbeta\in\Rd$.
\end{proof}
\vspace{-0.0cm}
%
\subsection{Proof of Lemma~\ref{lemma:openness-ghpp}}\label{app:proof-openness-ghpp}
\begin{proof}
We show that $\K:\mathbb{R}^{d}\times\mathbb{R}^{L}\to\Rd,\,(\bu,\bnu) \mapsto \bu \odotg \bnu$, is locally open at $(\bu,\bnu)$, with $\bu=(\bu_1,\ldots,\bu_L)^{\top}$ and $\bnu=(\nu_1,\ldots,\nu_L)^{\top}$, if the $(\bu_j,\nu_j)$ are such that $\nu_j=0$ implies $\norm{\bu_j}_2=0$ for all $j\in[L]$. Recall that $d=|\G_1|+\ldots+|\G_L|$. We proceed in two steps. First, we find the points of openness for the group-wise parametrizations $\K_j:\mathbb{R}^{|\G_j|}\times\mathbb{R}\to \mathbb{R}^{|\G_j|},\,(\bu_j,\nu_j)\mapsto  \bu_j\nu_j$. In a second step, we then show that local openness of $\K_j$ at $(\bu_j,\nu_j)$ for $j\in[L]$  implies local openness of the GHPP $$\K(\bu,\bnu)\triangleq \bu \odotg \bnu = (\bu_1\nu_1,\ldots,\bu_L\nu_L)^{\top} = (\K_1(\bu_1,\nu_1),\ldots,\K_L(\bu_L,\nu_L))^{\top}\,$$ at $\bu=(\bu_1,\ldots,\bu_L)^{\top}$ and $\bnu=(\nu_1,\ldots,\nu_L)^{\top}$. For the first step, we show that the $\K_j$ are open at all points  $(\bu_j,\nu_j)\in\mathbb{R}^{|\G_j|}\times\mathbb{R}$ except $(\bu_j,\nu_j)\in(\mathbb{R}^{|\G_j|}\times\{0\})\setminus\{(\bm{0},0)\}$. To do this, we use the following result on the local openness of matrix multiplication:
\begin{proposition}[Prop.~1 in \citet{nouiehed2022learning}, rephrased]
\hspace{0.1cm} Let $\mathcal{M} : \mathbb{R}^{m \times z}\times \mathbb{R}^{z \times n} \to \mathbb{R}^{m \times n},\,(\bm{M}_1,\bm{M}_2)\mapsto \bm{M}_1\bm{M}_2$, denote the bilinear matrix multiplication mapping such that $z\geq\min\{m,n\}$. Then $\mathcal{M}$ is locally open at $(\bm{M}_1,\bm{M}_2)$ if and only if 
\begin{align*}
\exists \tilde{\bm{M}}_1 \in \mathbb{R}^{m \times z}:\tilde{\bm{M}}_1\bm{M}_2=\bm{0}_{m \times n} \,&\land\, \tilde{\bm{M}}_1+\bm{M}_1\, \text{is full row-rank} \,\,\, \textbf{or} \, \\
\exists\tilde{\bm{M}}_2 \in \mathbb{R}^{z \times n}:\bm{M}_1\Tilde{\bm{M}}_2=\bm{0}_{m \times n} \,&\land\, \tilde{\bm{M}}_2+\bm{M}_2 \,\text{is full column-rank}\,.
\end{align*}
\end{proposition}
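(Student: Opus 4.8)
The plan is to prove the two implications separately, handling the \emph{sufficiency} of either rank condition for local openness and then the \emph{necessity}. Throughout I would exploit the invariance of both the map and the two conditions under the group action $(\bm{M}_1,\bm{M}_2)\mapsto(\bm{U}\bm{M}_1\bm{T}^{-1},\,\bm{T}\bm{M}_2\bm{V})$ with $\bm{U}\in\mathrm{GL}_m$, $\bm{V}\in\mathrm{GL}_n$, $\bm{T}\in\mathrm{GL}_z$: since $\mathcal{M}(\bm{U}\bm{M}_1\bm{T}^{-1},\bm{T}\bm{M}_2\bm{V})=\bm{U}(\bm{M}_1\bm{M}_2)\bm{V}$, the map $\mathcal{M}$ is conjugate to itself by these linear homeomorphisms, so local openness at $(\bm{M}_1,\bm{M}_2)$ is equivalent to local openness at the transformed point, while the two rank conditions are manifestly preserved. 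This lets me reduce to a convenient normal form for the pair $(\bm{M}_1,\bm{M}_2)$ whenever an explicit computation is needed.

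For sufficiency, suppose the first condition holds, with $\bm{A}\triangleq\bm{M}_1+\tilde{\bm{M}}_1$ of full row rank and $\tilde{\bm{M}}_1\bm{M}_2=\bm{0}$. The key observation is that the line $t\mapsto\bm{M}_1+t\tilde{\bm{M}}_1$ is product-preserving, since $(\bm{M}_1+t\tilde{\bm{M}}_1)\bm{M}_2=\bm{M}_1\bm{M}_2$, and that the polynomial $q(t)\triangleq\sum_{S}(\det(\bm{M}_1+t\tilde{\bm{M}}_1)_S)^2$, with $S$ ranging over $m\times m$ column-subsets, satisfies $q(1)\neq0$ and hence vanishes only finitely often. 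I would therefore pick an arbitrarily small $t_0>0$ with $q(t_0)\neq0$, so that $\bm{M}_1^{(t_0)}\triangleq\bm{M}_1+t_0\tilde{\bm{M}}_1$ is close to $\bm{M}_1$, has full row rank, and admits a bounded right inverse $(\bm{M}_1^{(t_0)})^{+}$. Given a target $\bm{P}$ near $\bm{M}_1\bm{M}_2$, setting $\bm{M}_2'\triangleq\bm{M}_2+(\bm{M}_1^{(t_0)})^{+}(\bm{P}-\bm{M}_1\bm{M}_2)$ yields $\bm{M}_1^{(t_0)}\bm{M}_2'=\bm{P}$ while keeping $\bm{M}_2'$ close to $\bm{M}_2$, exhibiting the required covering of a neighborhood of $\bm{M}_1\bm{M}_2$ by images of a small ball. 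The second condition is handled by the transpose of this argument, perturbing $\bm{M}_2$ along $\tilde{\bm{M}}_2$ and using a left inverse of the resulting full-column-rank factor.

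For necessity I would argue by contraposition, assuming both conditions fail and showing $\mathcal{M}$ is not locally open. The first step is to restate the conditions intrinsically: condition~1 is equivalent to the affine space $\bm{M}_1+\{\bm{H}:\bm{H}\bm{M}_2=\bm{0}\}$ containing a full-row-rank matrix, i.e. to $\dim(\mathrm{rowspace}(\bm{M}_1)+\ker(\bm{M}_2^{\top}))\ge m$, with the analogous column-space statement for condition~2. I would then invoke the normal form from the group reduction to place $\ker(\bm{M}_1)$ and $\mathrm{colspace}(\bm{M}_2)$ in standard relative position inside $\mathbb{R}^{z}$; in these coordinates the set of products reachable from a small ball becomes explicit, and the simultaneous failure of both rank conditions forces this set into a proper subvariety, equivalently forces a uniform lower bound on how far the factors must move to raise the rank of the product, so that it cannot contain any ball around $\bm{M}_1\bm{M}_2$. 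The main obstacle is precisely this necessity direction: unlike sufficiency, it cannot be read off the differential $D\mathcal{M}[\bm{H}_1,\bm{H}_2]=\bm{H}_1\bm{M}_2+\bm{M}_1\bm{H}_2$, because $\mathcal{M}$ can be open with non-surjective differential, so the argument must control the nonlinear image uniformly over a neighborhood, which is where the normal-form computation and a quantitative rank obstruction do the real work.
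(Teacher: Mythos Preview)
The paper does not prove this proposition at all: it is quoted verbatim (rephrased) from \citet{nouiehed2022learning} and used as a black box inside the proof of Lemma~\ref{lemma:openness-ghpp}. So there is no ``paper's own proof'' to compare your approach against; you are supplying an argument where the paper simply cites one.

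On the merits of your proposal: the sufficiency direction is correct and cleanly executed. The product-preserving line $t\mapsto\bm{M}_1+t\tilde{\bm{M}}_1$ combined with the polynomial nature of the full-row-rank condition lets you land on a nearby full-row-rank factor, and the right-inverse construction then gives the required covering. One point worth making explicit when you write it out: the norm of $(\bm{M}_1^{(t_0)})^{+}$ may blow up as $t_0\to0$, but since $t_0$ is chosen \emph{after} $\varepsilon$ and $\delta$ is chosen \emph{after} $t_0$, the dependence $\delta=\delta(\varepsilon)$ is still well-defined, which is all local openness requires.

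The necessity direction, as you yourself flag, is where the real content lies, and your sketch stops short of it. Saying that failure of both conditions ``forces this set into a proper subvariety'' via a ``quantitative rank obstruction'' names the target but does not hit it: the image of a small ball under $\mathcal{M}$ is not an algebraic set, and being contained in a proper subvariety is not by itself an obstruction to containing an open ball unless you control which subvariety uniformly over the neighborhood. Your reformulation of condition~1 as $\dim(\mathrm{rowspace}(\bm{M}_1)+\ker(\bm{M}_2^{\top}))\ge m$ is a good start, and the normal-form reduction is the right move, but you still need to exhibit, for every $\varepsilon>0$, an explicit target $\bm{P}$ arbitrarily close to $\bm{M}_1\bm{M}_2$ that cannot be factored within the $\varepsilon$-ball. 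In the original reference this is done by a careful case analysis on the ranks and kernels after normalization; your plan is compatible with that, but as written it is a plan rather than a proof.
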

Letting $m=|\G_j|>1,\,z=1$ and $n=1$, we can apply this result to the group-wise functions $\K_j$: $\K_j$ is open at $(\bm{0},0)\in\mathbb{R}^{|\G_j|}\times\mathbb{R}$ if $\exists \tilde{\nu}_j: \bm{0}\tilde{\nu}_j=\bm{0}$ and  $0+\tilde{\nu}_j$ has full column-rank, i.e., $\tilde{\nu}_j\neq 0$. This holds for all $\tilde{\nu}_j\neq 0$. Further, $\K_j$ is open at $(\bu_j,\nu_j)$, with $\norm{\bu_j}_2\geq0,\,\nu_j \neq 0$, if $\exists \tilde{\nu}_j: \bu_j \tilde{\nu}_j = \bm{0}$, and $\nu_j + \tilde{\nu}_j \neq 0$. This holds for $\tilde{\nu}_j=0$. Finally, $\K_j$ were to be open at $(\bu_j,0)$ with $\norm{\bu_j}_2>0$, if either $\exists \tilde{\nu}_j: \bu_j \tilde{\nu}_j=\bm{0}$ and $\nu_j+\tilde{\nu}_j \neq 0$, or $\exists\tilde{\bu}_j: \tilde{\bu}_j\nu_j=\bm{0}$ and $\bu_j+\tilde{\bu}_j$ has full row-rank. The first condition implies $\tilde{\nu}_j=0$, but then $0+\tilde{\nu}_j=0$, contradicting $\nu_j + \tilde{\nu}_j\neq0$. Also, there is no such $\tilde{\bu}_j$ as in the second condition, since $\bu_j+\tilde{\bu}_j \in \mathbb{R}^{|\G_j| \times 1}$ can not be full row-rank for $|\G_j|>1$. Therefore, we have shown that the $\K_j$ are locally open at all points in $\mathbb{R}^{|\G_j|}\times\mathbb{R}$ except $(\bu_j,\nu_j)\in(\mathbb{R}^{|\G_j|}\times\{0\})\setminus\{(\bm{0},0)\}$.
\vspace{0.15cm}

\noindent For the second step, let the Cartesian product of two Euclidean spaces be endowed with the norm $\norm{\norm{\cdot}_2,\norm{\cdot}_2}_2$. We now show that if $\K_j$ is open at $(\bu_j,\nu_j)$ for $j\in[L]$, then $\K$ is open at $(\bu,\bnu)$, i.e.,
$$\forall \varepsilon>0 \,\exists\, \tilde{\delta}>0:\, \mathcal{B}(\K(\bu,\bnu),\tilde{\delta}) \subseteq \K(\mathcal{B}((\bu,\bnu), \varepsilon))\,.$$
Let $\varepsilon>0$ be arbitrary. Define $\varepsilon_j \triangleq \varepsilon/\sqrt{L}$. By the local openness of the $\K_j$ at $(\bu_j,\nu_j)$, there are $\delta_j$ such that $\mathcal{B}(\K_j(\bu_j,\nu_j),\delta_j) \subseteq \K_j(\mathcal{B}((\bu_j,\nu_j),\varepsilon_j))$ for all $j\in[L]$. Let $\tilde{\delta}\triangleq \min_j\{\delta_j\}$ and let $\tilde{\bbeta}\in\mathcal{B}(\K(\bu,\bnu),\tilde{\delta})$ be arbitrary. Writing $\tilde{\bbeta}=(\tilde{\bbeta}_1,\ldots,\tilde{\bbeta}_L)^{\top}$, we then have
$$\Vert\tilde{\bbeta}-\K(\bu,\bnu)\Vert_2^2 = \textstyle\sum_{j=1}^{L} \Vert\tilde{\bbeta}_j-\K_j(\bu_j,\nu_j)\Vert_2^2 < \tilde{\delta}^2\,,$$
which implies $\Vert\tilde{\bbeta}_j-\K_j(\bu_j,\nu_j)\Vert_2<\tilde{\delta}\leq\delta_j$. By local openness of the $\K_j$, there then exist $(\tilde{\bu}_j,\tilde{\nu}_j)$ such that $\K_j(\tilde{\bu}_j,\tilde{\nu}_j)=\tilde{\bbeta}_j$, with $\Vert (\bu_j, \nu_j) - (\tilde{\bu}_j, \tilde{\nu}_j) \Vert = \Vert \Vert \bu_j -\tilde{\bu}_j \Vert_2,\, |\nu_j - \tilde{\nu}_j|\,\Vert_2 < \varepsilon_j = \varepsilon/\sqrt{L}$. Defining $\tilde{\bu}=(\tilde{\bu}_1,\ldots,\tilde{\bu}_L)^{\top}$ and $\tilde{\bnu}=(\tilde{\nu}_1,\ldots,\tilde{\nu}_L)^{\top}$, we find 
\begin{align*}
    \Vert (\bu,\bnu) - (\tilde{\bu},\tilde{\bnu}) \Vert^2 =  \Vert \, \Vert \bu-\tilde{\bu} \Vert_2,\, \Vert \bnu - \tilde{\bnu} \Vert_2\,\Vert_2^2 &= \textstyle\sum_{j=1}^{L} \Vert \bu_j - \tilde{\bu}_j \Vert_2^2 + \textstyle\sum_{j=1}^{L} |\nu_j - \tilde{\nu}_j|^2\\
    &= \textstyle\sum_{j=1}^{L} \Vert\,\Vert \bu_j-\tilde{\bu}_j \Vert_2,\,|\nu_j-\tilde{\nu}_j|\,\Vert_2^2\\
    &< \textstyle\sum_{j=1}^{L} \left(\frac{\varepsilon}{\sqrt{L}} \right)^2 = \varepsilon^2\,,
\end{align*}
and thus $ \Vert (\bu,\bnu) - (\tilde{\bu},\tilde{\bnu}) \Vert = \Vert \, \Vert \bu-\tilde{\bu} \Vert_2,\, \Vert \bnu - \tilde{\bnu} \Vert_2\,\Vert_2<\varepsilon$. By definition of $\K$, we have $$\K(\tilde{\bu},\tilde{\bnu})=(\K_1(\tilde{\bu}_1,\tilde{\nu}_1),\ldots, \K_L(\tilde{\bu}_L,\tilde{\nu}_L))^{\top} = (\tilde{\bbeta}_1,\ldots,\tilde{\bbeta}_L)^{\top}=\tilde{\bbeta} \in \Rd\,.$$ 
Taking both results together, we obtain $\tilde{\bbeta}\in \K(\mathcal{B}((\bu,\bnu),\varepsilon))$. Because $\tilde{\bbeta}$ was chosen without loss of generality, it follows that $\mathcal{B}(\K(\bu,\bnu),\tilde{\delta}) \subseteq \K(\mathcal{B}((\bu,\bnu), \varepsilon))$. As $\varepsilon>0$ was arbitrary, we have shown the second step, i.e., that local openness of $\K_j$ at $(\bu_j,\nu_j)$ for all $j\in[L]$ implies local openness of $\K$ at $(\bu,\bnu)$, with $\bu=(\bu_1,\ldots,\bu_L)^{\top}$ and $\bnu=(\nu_1,\ldots,\nu_L)^{\top}$.\\
Combining both steps completes the proof, and it is shown that $\K$ is locally open at $(\bu,\bnu)$, if for all $(\bu_j,\nu_j), j\in[L]$, it holds that $\nu_j$ is zero only if $\norm{\bu_j}_2 = 0$ as well.
\end{proof}
\vspace{-0.0cm}
%


\subsection{Derivation of group size-adjusted GHPP}
\label{app:adj-ghpp}

We can induce the group size-adjusted group lasso penalty $\mathcal{R}_{\bm{\beta}}(\bm{\beta})\triangleq\sum_{j=1}^{L} \sqrt{|\mathcal{G}_j|}\norm{\bm{\beta}_j}_2$ as a simple extension to the previous GHPP approach, by counting each entry in $\bv_j$ as its own parameter for the surrogate regularization, instead of subsuming all 
entries of the Hadamard factor 
under the scalar parameter $\nu_j$ as in~\ref{sec:group-lasso-vanilla}. 
In this setting, the surrogate $\ell_2$ regularization term counts $\nu_j$ not once, but $|\mathcal{G}_j| \triangleq p_j$ times, and is written as follows:
$\widetilde{\Rxi}(\bu,\bnu)=\sum_{j=1}^{L}\big(\Vert \bm{u}_{j} \Vert_{2}^{2} + p_j \nu_{j}^{2}\big)$. 
Applying the AM-GM inequality to $\Vert \bm{u}_{j} \Vert_{2}^{2}$ and $(\sqrt{p_j}\nu_j)^2$ for $j \in [L]$, it holds
{\small
\begin{align}
\textstyle \sum_{j=1}^{L}\left(\Vert \bm{u}_{j} \Vert_{2}^{2} + (\sqrt{p_j}\nu_j)^2\right)
& 
\geq 2 \textstyle \sum_{j=1}^{L} \sqrt{\Vert \bm{u}_{j} \Vert_{2}^{2} (\sqrt{p_j}\nu_j)^2} = 2 \textstyle \sum_{j=1}^{L} \sqrt{\left(\Vert \bm{u}_{j} \Vert_{2} (\sqrt{p_j}\nu_j)\right)^2} 
\nonumber \\
&= 
2 \textstyle\sum_{j=1}^{L} \left|\Vert \bm{u}_{j} \Vert_{2} \cdot (\sqrt{p_j}\nu_j)\right| \nonumber = 2 \textstyle\sum_{j=1}^{L} \sqrt{p_j} \cdot |\nu_j | \cdot \Vert \bm{u}_{j} \Vert_{2} 
\nonumber \\ 
&= 
2 \textstyle\sum_{j=1}^{L} \sqrt{p_j} \Vert \nu_j \bm{u}_{j} \Vert_{2} \nonumber = 2 \textstyle\sum_{j=1}^{L} \sqrt{p_j} \Vert \bm{\beta}_{j} \Vert_{2} 
\,,
\end{align}
}

\noindent with equality if and only if $\Vert \bm{u}_{j} \Vert_{2}^{2} = (\sqrt{p_j}\nu_{j})^{2} = \sqrt{p_j} \Vert \bm{\beta}_{j} \Vert_{2}$. The constrained minimizers $\hat{\bm{u}}_{j}$ and $\hat{\nu_j}$ corresponding to some $\bm{\beta}_j$ are obtained as
\begin{equation}\label{eq:minimizers-ghpp-adj}
\arg\hspace{-0.05cm}\min_{\hspace{-0.4cm}\substack{(\bu_j,\nu_j):\\ \bm{\beta}_{j} = \nu_j \bm{u}_{j}}}
   \Vert \bm{u}_{j} \Vert_{2}^{2} + (\sqrt{p_j}\nu_{j})^{2} = 
    \begin{cases}
       \pm \left(\frac{\bbeta_j}{\sqrt{\Vert \bm{\beta}_{j} \Vert_{2}/ \sqrt{p_j}}}, \sqrt{\Vert \bm{\beta}_{j} \Vert_{2}/ \sqrt{p_j}}\right) & \hspace{-0.25cm}\text{ $\norm{\bbeta_j}_2>0$} \\
       (\bm{0},0) & \hspace{-0.25cm}\text{ $\norm{\bbeta_j}_2=0$} 
    \end{cases} \nonumber
\end{equation}
for each $j \in [L]$. Using identical arguments as for the unadjusted GHPP in \ref{sec:group-lasso-vanilla}, we can construct the equivalent smooth surrogate $\Q$ in Equation (\ref{eq:q-adj-ghpp}) for the non-smooth objective $\P$ regularized with the adjusted $\ell_{2,1}$ penalty in Equation (\ref{eq:p-adj-ghpp}).
Minimizing $\Q$ over $(\bm{\psi}, \bm{u}, \bnu)$ yields (local) solutions to $\P$ in (\ref{eq:p-adj-ghpp}), which can be reconstructed using $(\hbpsi,\hat{\bm{\beta}}) = (\hbpsi,\hat{\bm{u}} \odotg \hat{\bm{\nu}})$ as defined above.
\vspace{-0.0cm}
%
%
\subsection{Proof of Lemma~\ref{lemma:S_hppk-def}}
\label{app:proof-s-hppk}
\begin{proof}
Applying the AM-GM inequality for each $j=1,\ldots,d$ to the squared parameters $u_{jl}^{2}$, $l=1,\ldots,k$, we obtain
\begin{align}
   \frac{u_{j1}^{2} + \ldots + u_{jk}^{2}}{k} &\geq \sqrt[k]{(u_{j1}^{2}) \cdot \ldots \cdot (u_{jk}^{2})} = \sqrt[k]{\left( u_{j1} \cdot \ldots \cdot u_{jk}\right)^{2}} = \sqrt[k]{|\beta_j|^2} = |\beta_j|^{2/k} \;, \nonumber
\end{align}
with equality holding if and only if $u_{j1}^{2} = \ldots = u_{jk}^{2} = |\beta_j|^{2/k}$. Summing over all $j\in [d]$ then shows the result.
\end{proof}
\vspace{-0.0cm}

\subsection{Proof of Lemma~\ref{lemma:proof-hppk}}
\label{app:proof-hppk}
\begin{proof}
To prove the global openness of the $k$-linear function $\mathcal{K}:\prod_{l=1}^{k} \mathbb{R}^{d} \to \mathbb{R}^{d}, (\bm{u}_1,\ldots,\bm{u}_{k}) \mapsto \bigodot_{l=1}^{k} \bm{u}_{l} = \bm{\beta}$, defining the $\text{HPP}_k$, we make use of an existing result for scalar-valued multilinear maps and then generalize it to the $d$-dimensional real-valued case.
\begin{proposition}[Theorem 1.2 in \citet{balcerzak2016certain}, rephrased]

Let $X_1, \ldots,$ $X_k$ be normed spaces over the scalar field $\mathbb{K} \in\{\mathbb{R}, \mathbb{C}\}$, and let $T$ from $X_1 \times \cdots \times X_k$ to $\mathbb{K}$ be a nontrivial $k$-linear functional. Then $T$ is globally open.
\end{proposition}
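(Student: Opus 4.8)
The plan is to prove openness of $T$ at an arbitrary point $x=(x_1,\dots,x_k)$ by restricting $T$ to a cleverly chosen finite-dimensional affine slice through $x$ on which $T$ collapses to a \emph{pure product} of scalars. The guiding observation is that restricting the domain can only shrink the image, so if some slice already maps a neighborhood of $x$ (within the slice) onto a set containing a neighborhood of $c:=T(x)$, then $T$ itself is open at $x$. For $S\subseteq\{1,\dots,k\}$ and vectors $v_i\in X_i$ ($i\in S$), write $T_S\big((v_i)_{i\in S}\big)$ for the scalar obtained by placing $v_i$ in slot $i\in S$ and the fixed $x_j$ in every slot $j\notin S$; by $k$-linearity each $T_S$ is $|S|$-linear. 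Since $T_{\{1,\dots,k\}}=T$ is nontrivial, the collection of $S$ admitting directions with $T_S\not\equiv 0$ is nonempty, so I let $r\ge 1$ be the \emph{minimal} cardinality of such an $S$, fix a minimal $S$ (WLOG $S=\{1,\dots,r\}$), and fix directions $b_1,\dots,b_r$ with $\mu:=T_S(b_1,\dots,b_r)\neq 0$.

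First I would form the affine slice $\iota(t)=(x_1+t_1 b_1,\dots,x_r+t_r b_r,x_{r+1},\dots,x_k)$ for $t\in\mathbb{K}^r$ and expand $Q:=T\circ\iota$ by multilinearity into monomials indexed by subsets $A\subseteq\{1,\dots,r\}$, the coefficient of $\prod_{i\in A}t_i$ being exactly $T_A$ evaluated at the directions $b_i$. Minimality of $r$ forces every coefficient with $|A|<r$ to vanish (in particular the constant term $T(x)=c=0$ once $r\ge 2$, since a vanishing size-one partial evaluated at $v_i=x_i$ returns $c$), leaving
\[
Q(t_1,\dots,t_r)=\mu\,t_1 t_2\cdots t_r
\]
when $r\ge 2$, and the affine map $Q(t_1)=c+\mu t_1$ when $r=1$. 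The case $r=1$ is immediate ($Q$ is a nonconstant affine function of one scalar, hence a homeomorphism of $\mathbb{K}$, open at $0$), so the crux is the openness at $0\in\mathbb{K}^r$ of the scaled pure product $t\mapsto\mu\prod_i t_i$.

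The remaining base lemma is that the product map $\pi:\mathbb{K}^r\to\mathbb{K}$, $\pi(t)=\prod_{i=1}^r t_i$, is open at the origin, which I would verify by an explicit root construction: given $\varepsilon>0$ take $\delta=\varepsilon^r$, and for a target $w$ with $|w|<\delta$ set $\rho=|w|^{1/r}<\varepsilon$; over $\mathbb{C}$ choose all $t_i$ equal to one fixed $r$-th root of $w$, while over $\mathbb{R}$ set $t_2=\dots=t_r=\rho$ and $t_1=\operatorname{sign}(w)\,\rho$, so that in both cases $\prod_i t_i=w$ with $\|t\|_\infty<\varepsilon$. Scaling by $\mu\neq 0$ is a homeomorphism of $\mathbb{K}$ and preserves openness; composing with the Lipschitz affine map $\iota$ (chosen so that $\iota$ sends small $\mathbb{K}^r$-balls into $\mathcal{B}(x,\varepsilon)$) then yields $\mathcal{B}(c,\delta')\subseteq T(\mathcal{B}(x,\varepsilon))$, proving $T$ open at $x$. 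I expect the main obstacle to be precisely the real case: a single-parameter slice $s\mapsto T(x+s e)$ need not be open (it may behave like $s\mapsto s^{2}$ and cover only one side of $c$), so one cannot naively restrict to a line. The minimality-of-$r$ device is what resolves this, guaranteeing the slice is a genuine \emph{product} of $r$ free scalars rather than a higher power of a single one, which is two-sidedly open by the construction above; verifying that all intermediate monomials vanish is the one step requiring care.
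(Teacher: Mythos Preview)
The paper does not supply a proof of this proposition; it is quoted verbatim as Theorem~1.2 of \citet{balcerzak2016certain} and used as a black box in the proof of Lemma~\ref{lemma:proof-hppk}. Your argument is therefore not to be compared against anything in the paper, but it is a correct self-contained proof. The minimality device is the right idea: by taking $r$ minimal with some $T_S\not\equiv 0$ (among nonempty $S$), all lower-order cross terms in the multilinear expansion of $Q(t)=T(\iota(t))$ vanish, leaving either an affine map ($r=1$) or a pure scalar product $\mu\,t_1\cdots t_r$ ($r\ge 2$, which forces $c=0$ since $T_{\{i\}}\equiv 0$ implies $T_{\{i\}}(x_i)=c=0$). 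Your explicit root construction shows openness of the product map at the origin over both $\mathbb{R}$ and $\mathbb{C}$, and Lipschitz continuity of the affine inclusion $\iota$ transfers this to openness of $T$ at $x$. The one place to be slightly more careful in a write-up is the passage from $\|t\|_\infty<\varepsilon'$ to $\iota(t)\in\mathcal{B}(x,\varepsilon)$: just choose $\varepsilon'$ so that $\varepsilon'\sum_i\|b_i\|<\varepsilon$ (or whatever the product norm demands), which is routine.
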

Using this result, the global openness of the $\text{HPP}_k$ for $d=1$ follows directly, or equivalently, for a single entry of the general $d$-dimensional $\text{HPP}_k$. We define the entry-wise parametrizations as $\mathcal{K}_j:\prod_{l=1}^{k} \mathbb{R} \to \mathbb{R}, (u_{j1},\ldots,u_{jk}) \mapsto \prod_{l=1}^{k} u_{jl} = \beta_j$ for $j\in[d]$, such that 
\begin{align*}
\K(\bu_1,\ldots,\bu_k)&=(\K_1(u_{11},\ldots,u_{1k}),\ldots,\K_{d}(u_{d1},\ldots,u_{dk}))^{\top}\\
\implies \, \bigodot_{l=1}^{k}\bu_l &= {\textstyle (\prod_{l=1}^{k} u_{1l},\ldots,\prod_{l=1}^{k} u_{dl})}^{\top}\,,
\end{align*}
where $\bu_l=(u_{1l},\ldots,u_{dl})^{\top} \in \Rd$ contains the parameters in each Hadamard factor $l\in[k]$. Let $\bu_j = (u_{j1},\ldots,u_{jk})^{\top} \triangleq (u_{jl})_{l=1}^{k} \in \mathbb{R}^{k}$ collect the parameters of the entry-wise parametrizations $\K_j$ for $j \in [d]$. For clarity, we also use $(\bu_l)_{l=1}^{k}$ to abbreviate $(\bu_1,\ldots,\bu_k)$, and further endow the $k$-times Cartesian product of Euclidean spaces with the norm $\Vert (\bu_l)_{l=1}^{k} \Vert$ $\triangleq \Vert \,\Vert\bu_1 \Vert_2,\ldots, \Vert\bu_k \Vert_2 \, \Vert_2$.
\vspace{0.15cm}

\noindent We now proceed to show that local openness of $\K_j$ at $\bu_j=(u_{jl})_{l=1}^{k}$, i.e.,
\begin{equation*} \label{eq:local-openness-kj} 
\forall \varepsilon_j>0 \exists \delta_j>0: \mathcal{B}(\K_j((u_{jl})_{l=1}^{k}),\delta_j) \subseteq \K_j(\mathcal{B}((u_{jl})_{l=1}^{k},\varepsilon_j)) \quad \forall\,j\in[d]\,,
\end{equation*}
implies local openness of $\K$ at $(\bu_l)_{l=1}^{k}$, i.e.,
\begin{equation}\label{eq:local-openness-k} \nonumber
\forall \varepsilon>0 \exists \tilde{\delta}>0:\, \mathcal{B}(\K((\bu_l)_{l=1}^{k}),\tilde{\delta}) \subseteq \K(\mathcal{B}((\bu_l)_{l=1}^{k},\varepsilon))\,,
\end{equation}
where each $\bu_l$ is constructed as $\bu_l=(u_{1l},\ldots,u_{dl})^{\top}$ from the points of openness $\bu_j=(u_{jl})_{l=1}^{k}$ of the $\K_j$. Let $\varepsilon>0$ be arbitrary and define $\varepsilon_j \triangleq \varepsilon/\sqrt{d}$. By our assumption, there are $\delta_j$ such that (\ref{eq:local-openness-kj}) holds for each $j\in[d]$ with $\varepsilon_j$. Let $\tilde{\delta}\triangleq\min_{j}\{\delta_j\}$ and pick any $\tilde{\bbeta}\in  \mathcal{B}(\K((\bu_l)_{l=1}^{k}),\tilde{\delta})$. It then holds by definition 
\vspace{-0.1cm}
\begin{equation*}\label{eq:loc-op}
\Vert \tilde{\bbeta} - \K((\bu_l)_{l=1}^{k}) \Vert_2^2 = {\textstyle \sum_{j=1}^{d} } | \tilde{\beta}_j - \K_j((u_{jl})_{l=1}^{k}) |^2 < \tilde{\delta}^2 \leq \delta_j^2,
\end{equation*}
implying $\tilde{\beta}_j \in \mathcal{B}(\K_j((u_{jl})_{l=1}^{k}),\tilde{\delta}) \subseteq \mathcal{B}(\K_j((u_{jl})_{l=1}^{k}),\delta_j)$. By local openness of the $\K_j$, it follows that $\tilde{\beta}_j \in \K_j(\mathcal{B}((u_{jl})_{l=1}^{k},\varepsilon_j))$. This means that $\forall \, j \in [d]$ we have
{\small
\begin{equation*}
    \exists (\tilde{u}_{jl})_{l=1}^{k}: \K_j((\tilde{u}_{jl})_{l=1}^{k})=\tilde{\beta}_j \,\,\text{and}\,\, \Vert (u_{jl})_{l=1}^{k} - (\tilde{u}_{jl})_{l=1}^{k} \Vert^2 = \Vert\,\vert u_{j1}-\tilde{u}_{j1}\vert,\ldots,\vert u_{jk}-\tilde{u}_{jk}\vert\,\Vert_2^2 < \varepsilon_j^2\,.
\end{equation*}
}
Collecting the $\tilde{u}_{jl}$ as $\tilde{\bu}_l = (\tilde{u}_{1l},\ldots,\tilde{u}_{dl})^{\top}$ for $l \in [k]$, and evaluating $\K$ at these arguments, we obtain 
$$\K((\tilde{\bu}_l)_{l=1}^{k})=(\K_1((\tilde{u}_{1l})_{l=1}^{k}),\ldots,\K_d((\tilde{u}_{dl})_{l=1}^{k}))^{\top}=(\tilde{\beta}_1,\ldots,\tilde{\beta}_d)^{\top}=\tilde{\bbeta} \in \Rd\,,$$
as well as 
\begin{align*}
  \Vert (\bu_l)_{l=1}^{k} - (\tilde{\bu}_l)_{l=1}^{k}\Vert^2 &= \Vert\,\Vert \bu_1 - \tilde{\bu}_1 \Vert_2,\ldots,\Vert \bu_k - \tilde{\bu}_k \Vert_2 \,\Vert_2^2 = \textstyle\sum_{l=1}^{k} \Vert \bu_l - \tilde{\bu}_l \Vert_2^2 \\
  &= \textstyle\sum_{l=1}^{k} \textstyle\sum_{j=1}^{d} \vert u_{jl}-\tilde{u}_{jl} \vert^2 = \textstyle\sum_{j=1}^{d} \Vert (u_{jl})_{l=1}^{k} - (\tilde{u}_{jl})_{l=1}^{k} \Vert_2^2\\ 
  &< \textstyle\sum_{j=1}^{d} \varepsilon_j^2 = d \left(\frac{\varepsilon}{\sqrt{d}} \right)^2 = \varepsilon^2\,,
\end{align*}
i.e., $\Vert (\bu_l)_{l=1}^{k} - (\tilde{\bu}_l)_{l=1}^{k}\Vert < \varepsilon$. Taking both findings together, it follows $\tilde{\bbeta} \in \K(\mathcal{B}((\bu_l)_{l=1}^{k},\varepsilon))$. Because $\tilde{\bbeta}$ was arbitrary, we have $\mathcal{B}(\K((\bu_l)_{l=1}^{k}),\tilde{\delta}) \subseteq \K(\mathcal{B}((\bu_l)_{l=1}^{k},\varepsilon))$. Finally, because $\varepsilon>0$ was arbitrary, local openness of $\K_j$ at $(u_{jl})_{l=1}^{k}$ for all $j=1,\ldots,L$ implies local openness of $\K$ at $(\bu_l)_{l=1}^{k}$. Since the $\K_j$ are globally open, it follows that $\K$ is also globally open, completing the proof.
\end{proof}
\vspace{-0.0cm}

\subsection{Proof of Lemma~\ref{lemma:S_ghppk-def}}\label{app:lemma:S_ghppk-def}
\begin{proof}
Using the AM-GM on the group-wise parameters $j\in [L]$, it holds
\scalebox{0.92}{
  \begin{minipage}{\linewidth}
\begin{align}
    \textstyle\sum_{j=1}^{L} \frac{\Vert \bm{u}_{{j}} \Vert_{2}^{2} + \textstyle\sum_{r=1}^{k-1} \nu_{jr}^{2}}{k} &\geq \textstyle\sum_{j=1}^{L} \normalsize( \Vert \bm{u}_{j} \Vert_{2}^{2} \cdot \nu_{j2}^{2}\cdot\ldots\cdot \nu_{jk}^{2} \normalsize)^{1/k} = \textstyle\sum_{j=1}^{L} \Big(\sqrt{\left(\Vert \bm{u}_{j} \Vert_{2} \cdot \nu_{j2}\cdot\ldots\cdot \nu_{jk} \right)^{2}} \Big)^{2/k} \nonumber \\
    &= \textstyle\sum_{j=1}^{L} \left| \Vert \bm{u}_{j} \Vert_{2} \cdot \nu_{j2}\cdot\ldots\cdot \nu_{jk} \right|^{2/k} \nonumber = \textstyle\sum_{j=1}^{L} \left(|\nu_{j2}\cdot\ldots\cdot \nu_{jk}| \cdot \Vert \bm{u}_{j} \Vert_{2} \right)^{2/k} \nonumber \\
    &= \textstyle\sum_{j=1}^{L} \Vert  \bm{u}_{j} \cdot \nu_{j2}\cdot\ldots\cdot \nu_{jk}  \Vert_{2}^{2/k} = \textstyle\sum_{j=1}^{L} \Vert \bm{\beta}_{{j}} \Vert_{2}^{2/k} = \Vert \bm{\beta} \Vert_{2,2/k}^{2/k} \nonumber
\end{align}
\end{minipage} }
with equality if and only if $\Vert \bm{u}_{{j}} \Vert_{2}^{2} = \nu_{j2}^{2} = \ldots = \nu_{jk}^{2} = \Vert \bm{\beta}_{{j}} \Vert_{2}^{2/k}$. The result follows.
\end{proof}
\vspace{-0.0cm}

\subsection{Proof of Corollary~\ref{cor:openness-ghppk}}\label{app:cor-openness-ghppk}

\begin{proof}
First, we note that local openness is preserved under composition. Given two maps $\K_1:\mathcal{M}\to\mathcal{N}$ and $\K_2:\mathcal{N}\to\mathcal{O}$ between (Cartesian products of) Euclidean spaces, if $\K_1$ is open at $m\in\mathcal{M}$, and $\K_2$ is open at $n=\K_1(m)\in\mathcal{N}$, then $\K_2 \circ \K_1$ is open at $m$.
\vspace{0.1cm}

\noindent To obtain points of local openness of $\K(\bm{u}, \bm{\nu}_1,\ldots,\bm{\nu}_{k-1})=\bu\odotg \bnu_{r}^{\odot (k-1)}$, we utilize the preservation of local openness under composition by reducing $\K$ to a composition involving two parametrizations of which we already know the points of openness, the $\text{HPP}_k$, and the $\text{GHPP}$. Specifically, we express $\K$ as the composition $\K=\text{GHPP}\circ \K_{\bu,\bnu}$, where $\K_{\bu,\bnu}$ is an auxiliary \textit{globally} open map constructed in the following.\\
We have that $\K_{\bnu}:\prod_{r=1}^{k-1}\mathbb{R}^{L}\to\mathbb{R}^{L},\, (\bnu_1,\ldots,\bnu_{k-1})\mapsto \bnu_{r}^{\odot (k-1)}$, is globally open due to Lemma~\ref{lemma:proof-hppk}, as it can be recognized to be the $\text{HPP}_{k-1}$ for vectors in $\mathbb{R}^L$. Then, we can define the identity-augmented map
\begin{equation*}
  \K_{\bu,\bnu}:\Rd \times \prod_{r=1}^{k-1}\mathbb{R}^{L}\to \Rd \times \mathbb{R}^{L}, (\bm{u}, \bm{\nu}_2,\ldots,\bm{\nu}_{k}) \mapsto(\bu,\K_{\bnu}(\bnu_{1},\ldots,\bnu_{k-1}))  
\end{equation*}
that is simply the Cartesian product function of $\K_{\bnu}$ and the identity function $\text{id}_{\bu}: \Rd \to \Rd,\bu \mapsto \bu$, and maps inputs for the $\text{GHPP}_k$ to the input domain of the $\text{GHPP}$ by adding an independent extra entry $\bu$ and multiplying the remaining $k-1$ inputs $\bnu_{r} \in \mathbb{R}^{L}$ element-wise to obtain a single $\bnu \in \mathbb{R}^L$, so the image of $(\bm{u}, \bm{\nu}_2,\ldots,\bm{\nu}_{k})$ under $\K_{\bu,\bnu}$ is $(\bu,\bnu) \in \Rd \times \mathbb{R}^L$. Since local openness is trivially preserved for an identity-augmented Cartesian product map, and $\K_{\bnu}$ is globally open, $\K_{\bu,\bnu}$ is also globally open. Due to the composability property, $\K$ is thus locally open at $(\bm{u}, \bm{\nu}_2,\ldots,\bm{\nu}_{k})$ if the GHPP is locally open at $(\bu,\K_{\bnu}(\bnu_{1},\ldots,\bnu_{k-1}))$ by Lemma~\ref{lemma:openness-ghpp}.
\end{proof}
\vspace{-0.0cm}

\begin{remark}[Points of Openness for the $\text{GHPP}_{k_1, k_1+k_2}$]
To establish preservation of local minima under smooth parametrization of $\bbeta$ using $\text{GHPP}_{k_1,k_1+k_2}$ (\ref{eq:ghppk1k-def}) in general objectives $\P(\bpsi,\bbeta)$ using Lemma~\ref{lemma:pk_to_p}, we can make essentially the same line of arguments as in the previous result regarding the points of openness for the $\text{GHPP}_k$.
First, we define nested parametrizations $\K_{\bu}$ and $\K_{\bnu}$, both of which are globally open maps since they correspond to a $\text{HPP}_k$ mapping 
with depths $k_1$ and $k_2$, respectively. These are combined in the globally open pre-composition $\K_{\bu,\bnu}\triangleq(\K_{\bu}(\bm{\mu}_1,\ldots,\bm{\mu}_{k_1}),\K_{\bnu}(\bnu_1,\ldots,\bnu_{k_2}))$. This allows us to express $\K$ as the composition $\K=\text{GHPP}\circ\K_{\bu,\bnu}$. By the preservation of local openness under composition, if points in the domain of $\K$ are such that the conditions for local openness of the GHPP in Lemma~\ref{lemma:openness-ghpp} apply for $\K_{\bu}$ and $\K_{\bnu}$,
then $\K$ is also locally open at that point. %
\end{remark}

\subsection{Proof of Lemma~\ref{lemma:S_ghppk1k-def}}\label{app:lemma:S_ghppk1k-def}
\begin{proof}
This proof requires a simple weighted generalization of the AM-GM inequality:
\begin{proposition}[Weighted AM-GM inequality] \label{prop-wamgm}
Let $n\in\mathbb{N}$, $x_1,\ldots,x_n$ non-negative real values, $w_1,\ldots,w_n$ non-negative real weights, and $w\triangleq\sum_{i=1}^{n} w_i$. Then\\
\begin{equation}
\frac{w_{1} x_{1}+w_{2} x_{2}+\cdots+w_{n} x_{n}}{w} \geq \sqrt[w]{x_{1}^{w_{1}} x_{2}^{w_{2}} \cdots x_{n}^{w_{n}}} \,, \nonumber
\end{equation}
with equality holding if and only if $x_1=\ldots=x_n$.
\end{proposition}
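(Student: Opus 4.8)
The plan is to prove the weighted AM--GM inequality by reducing it to the concavity of the logarithm, which delivers the inequality and its equality case in one stroke. First I would normalize: since dividing every weight by $w$ leaves both sides of the claim unchanged (the right-hand side is homogeneous of degree $0$ in the weight vector after taking the $w$-th root), I may set $p_i \triangleq w_i/w$, so that $p_i \geq 0$ and $\sum_{i=1}^n p_i = 1$. It then suffices to establish $\sum_{i=1}^n p_i x_i \geq \prod_{i=1}^n x_i^{p_i}$, which is Proposition~\ref{prop-wamgm} rewritten with convex weights.

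Next I would dispose of the degenerate cases before invoking any smoothness. If $x_i = 0$ for some index $i$ with $p_i > 0$, then $\prod_j x_j^{p_j} = 0$ while $\sum_j p_j x_j \geq 0$, so the inequality is immediate; equality in this situation forces every other positively weighted term to vanish as well. Indices with $p_i = 0$ contribute to neither side and can be discarded, leaving the case in which all relevant $x_i$ are strictly positive.

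The core step treats this nondegenerate case. On $(0,\infty)$ the logarithm is strictly concave, so Jensen's inequality yields $\log\!\big(\sum_i p_i x_i\big) \geq \sum_i p_i \log x_i = \log\!\big(\prod_i x_i^{p_i}\big)$; applying the increasing function $\exp$ to both ends gives the desired inequality. Strict concavity sharpens Jensen to an equality exactly when all the $x_i$ carrying positive weight coincide, which under the standing hypothesis of positive weights is precisely the stated condition $x_1 = \cdots = x_n$.

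I expect the main obstacle to be the equality characterization rather than the inequality itself: one must track the zero-weight and $x_i = 0$ corner cases carefully so that the ``if and only if'' matches the form asserted in the statement rather than the sharper ``all $x_i$ with positive weight are equal.'' As an elementary alternative to Jensen that stays within the results already recorded in the text, I could instead reduce to the unweighted AM--GM stated in Section~\ref{sec:hpp-vanilla}: for integer weights, replacing each $x_i$ by $w_i$ copies in the $N = \sum_i w_i$-term unweighted inequality reproduces both the inequality and its equality condition verbatim; rational weights follow by clearing denominators, and arbitrary real weights follow by continuity of both sides together with density of the rationals. This density route recovers the inequality cleanly, but the equality assertion for irrational weights still relies on the concavity argument above, so I would present the logarithmic proof as the primary line.
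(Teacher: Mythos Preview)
The paper does not actually supply a proof of Proposition~\ref{prop-wamgm}: it is stated as a known result inside the proof of Lemma~\ref{lemma:S_ghppk1k-def} and then applied directly. So there is no ``paper's proof'' to compare against; your task was really to fill in a standard fact.

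Your argument is correct. Normalizing the weights and applying Jensen's inequality to the strictly concave logarithm is the canonical route, and you correctly isolate the degenerate cases (some $x_i=0$, some $w_i=0$) before invoking smoothness. Your observation about the equality clause is also apt: as written, ``$x_1=\cdots=x_n$'' is only the exact characterization when every $w_i>0$; with zero weights the sharp condition is that all $x_i$ with $w_i>0$ coincide. Since every application of Proposition~\ref{prop-wamgm} in the paper uses strictly positive weights (e.g., $k_1$ and $k_2$ in Lemma~\ref{lemma:S_ghppk1k-def}), this imprecision is harmless in context, but it is good that you flagged it. The alternative reduction to the unweighted AM--GM via integer replication, rational clearing, and a density argument is also fine for the inequality, and you are right that the equality case for irrational weights still needs the concavity argument, so presenting Jensen as the primary line is the cleaner choice.
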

We proceed in two steps. First, the AM-GM inequality is applied to the $k_1$ squared parameters $\mu_{jti}$ present in the parametrization of a single scalar entry $\beta_{ji}$ of $\bbeta$, $i \in \G_j$, for each $j=1,\ldots,L$. From this, the minimum of $\sum_{t=1}^{k_1} \Vert \bm{\mu}_{jt} \Vert_{2}^{2}$ as a function of the auxiliary parameter $\bu_j$ can be inferred. In the second step, the weighted AM-GM inequality is used to obtain the minimum of the overall regularization term:
\scalebox{0.92}{
  \begin{minipage}{\linewidth}
\begin{align}
    \hspace{-0.2cm}\sum_{j=1}^{L} \Big(\sum_{t=1}^{k_1} \Vert \bm{\mu}_{jt} \Vert_{2}^{2} + \sum_{r=1}^{k_2}  \nu_{jr}^{2} \Big) &= \sum_{j=1}^{L} \big( \sum_{i \in \mathcal{G}_{j}} \sum_{t=1}^{k_1} \mu_{jti}^{2} + \sum_{r=1}^{k_2}  \nu_{jr}^{2} \big) \explainup{\geq}{\tiny{(i)}} \sum_{j=1}^{L} \Big( \sum_{i \in \mathcal{G}_{j}} k_1 \big( \prod_{t=1}^{k_1} \mu_{jti}^{2} \big)^{1/k_1} + \sum_{r=1}^{k_2}  \nu_{jr}^{2} \Big)  \nonumber\\
    &\hspace{-1.2cm}= \sum_{j=1}^{L} \Bigg(  k_1 \sum_{i \in \mathcal{G}_{j}} \big| \underbrace{\prod_{t=1}^{k_1} \mu_{jti}}_{u_{ji}} \big|^{2/k_1} + \sum_{r=1}^{k_2}  \nu_{jr}^{2} \Bigg)  \nonumber = \sum_{j=1}^{L} \Bigg(  k_1 \Vert \bm{u}_{{j}} \Vert_{2/k_1}^{2/k_1} + \sum_{r=1}^{k_2}  \nu_{jr}^{2} \Bigg)  \nonumber \\
    &\hspace{-1.2cm}\explainup{\geq}{\tiny{(ii)}} \sum_{j=1}^{L} (k_1+k_2) \left[ \left(\Vert \bm{u}_{{j}} \Vert_{2/k_1}^{2/k_1}\right)^{k_1} \cdot  \prod_{r=1}^{k_2}  \nu_{jr}^{2} \right]^{{\tiny 1/(k_1+k_2)}}  \nonumber \hspace{-0.35cm}= \sum_{j=1}^{L} k \bigl| \Vert \bm{u}_{{j}} \Vert_{2/k_1} \cdot  \prod_{r=1}^{k_2}  \nu_{jr} \bigr|^{2/k}  \nonumber \\
    &\hspace{-1.2cm}= \sum_{j=1}^{L} k  \big\Vert \bm{u}_{{j}} \cdot  \prod_{r=1}^{k_2}  \nu_{jr} \big\Vert_{2/k_1}^{2/k} = k \sum_{j=1}^{L} \Vert \bm{\beta}_{{j}} \Vert_{2/k_1}^{2/k} = k \Vert \bm{\beta} \Vert_{2/k_1,2/k}^{2/k}  \nonumber
\end{align}
\end{minipage}}

\noindent The first inequality $(i)$ using the AM-GM inequality holds with equality if and only if $\mu_{jti}^{2} = |u_{ji}|^{2/k_1}\; \forall \; t=1,\ldots,k_1,\; i \in \mathcal{G}_{j}$ and $j=1,\ldots,L$. The second inequality $(ii)$ applies Proposition~\ref{prop-wamgm} and reduces to equality if and only if $\Vert \bm{u}_{j} \Vert_{2/k_1}^{2/k_1} = \nu_{j1}^{2} = \ldots = \nu_{j k_2}^{2} = \Vert \bm{\beta}_{{j}} \Vert_{2/k_1}^{2/k} \;\forall\; j=1,\ldots,L$.   
\end{proof}
\vspace{-0.0cm}
\subsection{Proof of Lemma~\ref{lemma:S_hppk-shared-def}}\label{app:proof-hppk-shared}
\begin{proof} We apply the AM-GM inequality to each summand $j=1,\ldots,d$ of the surrogate penalty $\Rxi$:
{\small
\begin{align}\label{eq:had-shared-modified-l2}
    \frac{\norm{\bu}_2^2+(k-1)\norm{\bv}_2^2}{k} =\sum_{j=1}^{d} \frac{u_j^2 + (k-1) v_j^2}{k} &\geq \sum_{j=1}^{d} {\textstyle \sqrt[k]{u_j^2 \prod_{l=1}^{k-1} v_j^2}} = \sum_{j=1}^{d} \sqrt[k]{\left(u_j \cdot v_j^{k-1} \right)^2} \nonumber\\
    &= \sum_{j=1}^{d} |\beta_j|^{2/k} = \norm{\bbeta}_{2/k}^{2/k} \,, \nonumber
\end{align}
}
with equality holding if and only if $u_j^2 = v_j^2 = |\beta_j|^{2/k}$ for all $j=1,\ldots,d$.
\end{proof}
\vspace{-0.0cm}

\subsection{Parameter sharing and identical initialization}\label{app:ident-init}
In (S)GD, dynamics with shared Hadamard factors can be related to their fully overparametrized counterparts through identical initialization of the to-be-shared parameters. 
We define a differentiable surrogate $\Q$ based on the $\text{HPP}_k$, i.e., $\bbeta=\bu_{l}^{\odot k}$, with surrogate $\ell_2$ regularization for $\Rxi$ and no additional unregularized parameters $\bpsi$:
{\small \begin{align}\Q(\bu_1,\ldots,\bu_k)=\L(\K(\bu_1,\ldots,\bu_k))+\lambda\Rxi(\bu_1,\ldots,\bu_k)\,. \nonumber
\end{align}
}

Consider an updating scheme for the $\bu_l$, given by $\bu_{l}^{t+1}=\bu_{l}^{t}-\alpha \nabla_{\bu_{l}} \Q(\bu_1^{t},\ldots,\bu_k^{t})$, where $\alpha$ denotes the learning rate. Assume identical initialization for $k-1$ factors, i.e., $\bu_1^0=\tilde{\bu}$ and $\bu_2^0=\ldots=\bu_k^0=\tilde{\bv}$. Then we have for $l=1,\ldots,k$,
{\small \begin{equation}
\nabla_{\bu_{l}} \Q(\bu_1,\ldots,\bu_k) = \big(\partial \bbeta/ \partial \bu_l\big)^{\top} \nabla_{\bbeta} \L(\bbeta) + \lambda \nabla_{\bu_l} \Rxi(\cdot) = \textnormal{diag} \big( {\textstyle \bigodot_{l'\in[k]\setminus\{l\}}} \bu_{l'}\big) \nabla_{\bbeta} \L(\bbeta) + 2\lambda \bu_l\,, \nonumber
\end{equation}}

\noindent where $\big(\partial \bbeta/ \partial \bu_l\big)$ is a $d \times d$ matrix containing partial derivatives $(\partial \beta_i / \partial u_{jl})_{ij}$, $i,j \in [d]$. At initialization, the gradients of $\Q$ with respect to the $\bu_l$ are given by
%
$\nabla_{\bu_{1}} \Q(\bu_1^0,\ldots,\bu_k^0) =  \textnormal{diag}\big(\tilde{\bv}^{k-1}\big) \nabla_{\bbeta} \L(\bbeta) + 2\lambda \tilde{\bu}$ and 
$\nabla_{\bu_{l}} \Q(\bu_1^0,\ldots,\bu_k^0) = \textnormal{diag} \big(\tilde{\bu}\odot \tilde{\bv}^{k-2}\big) \nabla_{\bbeta} \L(\bbeta) + 2\lambda \tilde{\bv}$, 
where $l=2,\ldots,k$. Note that the gradient is constant over the identically initialized factors. It thus follows from the updating rule that 
$\bu_2^{t}=\ldots=\bu_k^{t}\,\forall t\in\mathbb{N}$.\\ Compare this to the gradient of an alternative surrogate $\tilde{\Q}$ based on the shared parametrization $\tilde{\K}(\bu,\bv)=\bu\odot\bv^{k-1}$, with initialization $(\bu^0,\bv^0)=(\tilde{\bu},\tilde{\bv})$ and penalty $\tilde{\Rxi}=\norm{\bu}_2^2+(k-1)\norm{\bv}_2^2$. It is easy to see that $\nabla_{\bu}\tilde{\Q}(\bu,\bv)=\nabla_{\bu_1}\Q(\bu_1,\ldots,\bu_k)$, and under identical initialization for $l=2,\ldots,k$, we have $\nabla_{\bv}\tilde{\Q}(\bu,\bv)=(k-1)\nabla_{\bu_l}\Q(\bu_1,\ldots,\bu_k)$. Therefore, updating $\bu^{t+1}=\bu^{t}-\alpha \nabla_{\bu}\tilde{\Q}(\bu^{t},\bv^{t})$ and $\bv^{t+1}=\bv^{t}-\frac{\alpha}{k-1}\nabla_{\bv}\tilde{\Q}(\bu^{t},\bv^{t})$, using a scaled learning rate $\frac{\alpha}{k-1}$ for $\bv$, results in identical updates compared to running gradient descent on $\Q$ with identical initialization for the $k-1$ (shared) factors.
\vspace{-0.25cm}
\subsection{Proof of Lemma~\ref{lemma:S_hpowp}}\label{app:proof-hpowp}
\begin{proof} We apply the weighted AM-GM inequality to each summand $j\in[d]$ of $\Rxi$:
{\small
\begin{align}
    \frac{\norm{\bu}_2^2+(k-1)\norm{|\bv|}_2^2}{k} &=\sum_{j=1}^{d} \frac{u_j^2 + (k-1) |v_j|^2}{k} \geq \sum_{j=1}^{d} \sqrt[k]{u_j^2 \left( |v_j|^{2} \right)^{k-1}} = \sum_{j=1}^{d} \sqrt[k]{\left(u_j \cdot |v_j|^{k-1} \right)^2} \nonumber \\
    &= \sum_{j=1}^{d} \big( |\smash[b]{\underbrace{u_j \cdot |v_j|^{k-1}}_{=\beta_j}}|\big)^{2/k} =  \sum_{j=1}^{d} |\beta_j|^{2/k} = \norm{\bbeta}_{2/k}^{2/k} \,, \nonumber
\end{align}
}
with equality holding if and only if $u_j^2 = |v_j|^2 = |\beta_j|^{2/k}$ for all $j=1,\ldots,d$.
\end{proof}
\vspace{-0.0cm}

\subsection{Proof of Lemma~\ref{lemma:S_ghpowp}}\label{app:lemma:S_ghpowp}
\begin{proof} We again apply the weighted AM-GM inequality on the group level for each $j\in[L]$ of the surrogate penalty $\Rxi$ and find
\scalebox{0.96}{
  \begin{minipage}{\linewidth}
\begin{align} \nonumber
    \frac{\Vert \bm{u} \Vert_{2}^{2} + (k-1) \Vert \bm{\nu} \Vert_{2}^{2}}{k} &= \textstyle\sum_{j=1}^{L} \frac{\Vert \bm{u}_{{j}} \Vert_{2}^{2} + (k-1)|\nu_j|^{2}}{k} \geq \textstyle\sum_{j=1}^{L} \left( \Vert \bm{u}_{j} \Vert_{2}^{2} \cdot \left( |\nu_j|^{2} \right)^{k-1} \right)^{1/k} \\
    &= \textstyle\sum_{j=1}^{L} \left(\Vert \bm{u}_{j} \Vert_{2} \cdot |\nu_j|^{k-1} \right)^{2/k} = \textstyle\sum_{j=1}^{L} \Vert |\nu_j|^{k-1} \cdot \bm{u}_{j} \Vert_{2}^{2/k} = \textstyle\sum_{j=1}^{L} \Vert \bm{\beta}_{{j}} \Vert_{2}^{2/k} 
    \,, \nonumber
\end{align}
\end{minipage}}

with equality holding if and only if $\Vert \bm{u}_{{j}} \Vert_{2}^{2} = |\nu_j|^{2} = \Vert \bm{\beta}_{{j}} \Vert_{2}^{2/k}\; \forall j = 1,\ldots,L$.
\end{proof}
\vspace{-0.05cm}

\subsection{Proof of Lemma~\ref{lemma:S_ghpowpk1}}\label{app:lemma:S_ghpowpk1}
\begin{proof} Applying the weighted AM-GM inequality to the surrogate regularizer $\Rxi(\bm{\mu},\bnu)$ on the group-level for each $j \in [L]$, we find
{\small
\begin{align} \nonumber
    \frac{k_1 \Vert \bm{\mu} \Vert_{2}^{2} + k_2 \Vert \bm{\nu} \Vert_{2}^{2}}{k} &= \sum_{j=1}^{L} \frac{k_1 \Vert \bm{\mu}_{j} \Vert_{2}^{2} + k_2|\nu_j|^{2}}{k} = \sum_{j=1}^{L} \frac{k_1 \Vert \bm{u}_{j} \Vert_{2/k_1}^{2/k_1} + k_2 |\nu_j|^{2}}{k}  \\ 
    &\geq \sum_{j=1}^{L} \left( \left( \Vert \bm{u}_{j} \Vert_{2/k_1}^{2/k_1} \right)^{k_1} \cdot \left( |\nu_j|^{2} \right)^{k_2} \right)^{1/(k_1+k_2)} = \sum_{j=1}^{L} \left( \Big| \Vert \bm{u}_{j} \Vert_{2/k_1} \cdot |\nu_{j}|^{k_2} \Big| \right)^{2/k} \nonumber \\ 
    &= \sum_{j=1}^{L} \Vert \bm{u}_{j} \cdot |\nu_{j}|^{k_2} \Vert_{2/k_1}^{2/k} = \sum_{j=1}^{L} \Vert \bm{\beta}_{{j}} \Vert_{2/k_1}^{2/k} = \Vert \bm{\beta} \Vert_{2/k_1,2/k}^{2/k} \nonumber \,,
\end{align}
}
with equality holding if and only if $\Vert \bm{\mu}_{j} \Vert_{2}^{2} = |\nu_j|^{2} = \Vert \bm{\beta}_{{j}} \Vert_{2/k_1}^{2/k}\; \forall j = 1,\ldots,L$.
\end{proof}
\vspace{-0.0cm}
%
%

\subsection{Fibers and structure of product  parametrizations}\label{app:connected-fibers}

The fibers or level sets $\K^{-1}(\bbeta)$ of the parametrizations considered in our work (Assumption~\ref{ass-parametrization-map}) are well-behaved and exhibit regularity properties worth discussing. Let $\bbeta \in \Rd$ be a parameter that can be partitioned into $(\bbeta_1,\ldots,\bbeta_L)$ for $j \in [L]$, $L \leq d$, so that $\bbeta_j \in \mathbb{R}^{|\mathcal{G}_j|}$ and $|\mathcal{G}_1|+\ldots+ |\mathcal{G}_L| = d$. Further, let $\K: \Rdxi \to \Rd$ be a $\mathcal{C}^1$-smooth surjective parametrization of $\bbeta$. For product and power-type structures such as the $\text{HPP}_k$, $\text{GHPP}$, or the $\text{GHPowP}_k$, the following holds: All $\bbeta_j \in \mathbb{R}^{|\mathcal{G}_j|} \setminus \{\bm{0}\}$ are \textit{regular} values of $\K_j(\bm{\xi}_j)$ for each $j \in [L]$, i.e., the Jacobian $\mathcal{J}_{\K_j}(\bm{\xi}_j)$ has full row rank $|\mathcal{G}_j|$ for all $\bm{\xi}_j \in \K_j^{-1}(\bbeta_j)$. In the following, we derive this for three exemplary parametrizations:

\begin{example}[Regularity of $\operatorname{HPP}_k$]
    As the canonical multiplicative parametrization, consider the $\text{HPP}_k$. The Jacobian $\mathcal{J}_{\K_j}(\bm{\xi}_j)$ has full row rank $|\mathcal{G}_j|=1$ at all $\bm{\xi}_j \in \K_j^{-1}(\beta_j)$ with $\beta_j \ne 0$. This follows from $\beta_{j} = \prod_{l=1}^k \xi_{jl}$: each entry $\beta_{j}$ depends only on the $k$ factors $(\xi_{jl})_{l \in [k]}$, and its gradient is nonzero whenever $\beta_{j} \ne 0$, implying $\xi_{jl}\neq0 \, \forall \, l \in [k]$. The partial derivatives $\partial \beta_j/ \partial \xi_{jl}$ are themselves non-zero products of the remaining factors, $\J_{\K_j}(\bxi_j)=[\prod_{l \neq 1} \xi_{jl}, \ldots, \prod_{l \neq k} \xi_{jl}] \in \mathbb{R}^{1 \times k}$. Hence, the Jacobian of each $\K_j$ for the $\text{HPP}_k$ has full row-rank for all $\bxi_j \notin \K_j^{-1}(0)$. 
\end{example}
\begin{example}[Regularity of $\text{GHPP}$]
    The $\operatorname{GHPP}$ parametrizes $\bbeta$ as $\K(\bxi) = \bu \odotg \bnu$, or group-wise $\K_j(\bxi_j) = \bu_j \cdot \nu_j$ with $\bu_j \in \mathbb{R}^{|\Gj|}$ and $\nu_j \in \mathbb{R}$. The Jacobian $\mathcal{J}_{\K_j}(\bxi_j) \in \mathbb{R}^{|\Gj| \times (|\Gj|+1)}$ with respect to $\bxi_j = (\bu_j, \nu_j)$ is given by
    \[
    \mathcal{J}_{\K_j}(\bxi_j) =
    \begin{bmatrix}
    \nu_j & 0 & \cdots & 0 & u_{j1} \\
    0 & \nu_j & \cdots & 0 & u_{j2} \\
    \vdots & \vdots & \ddots & \vdots & \vdots \\
    0 & 0 & \cdots & \nu_j & u_{j|\Gj|}
    \end{bmatrix}.
    \]
    $\J_{\K_j}(\bxi_j)$ has full row rank $|\Gj|$ whenever $\nu_j \neq 0$, since the diagonal entries $\nu_j$ are nonzero and span the row space. Thus, all $\bbeta_j \neq \bm{0}$ are regular values of $\K_j$, as $\bbeta_j = \bu_j \cdot \nu_j$ implies $\nu_j \neq 0$ whenever $\bbeta_j \neq \bm{0}$. Note that the diagonal part is always of full rank as long as $\nu_j \neq 0$, even if some entries $u_{ji} = 0$.
\end{example}

\begin{example}[Regularity of $\text{GHPP}_k$]
    The $\operatorname{GHPowP}_k$ parametrizes $\bbeta$ as $\K(\bxi) = \bu \odotg |\bnu|^{\circ(k-1)}$, or group-wise $\K_j(\bxi_j)= \bu_j \cdot |\nu_j|^{k-1}\,,\,k>2, \bu_j \in \mathbb{R}^{|\Gj|}, \nu_j \in \mathbb{R}$. The Jacobian $\mathcal{J}_{\K_j}(\bxi_j) \in \mathbb{R}^{|\Gj| \times (|\Gj|+1)}$ with respect to $\bxi_j = (\bu_j, \nu_j)$ is given by
    \[
    \mathcal{J}_{\K_j}(\bxi_j) =
    \begin{bmatrix}
    |\nu_j|^{k-1} & & & & u_{j1}(k-1)\operatorname{sign}(\nu_j)|\nu_j|^{k-2} \\
    & |\nu_j|^{k-1} & & & u_{j2}(k-1)\operatorname{sign}(\nu_j)|\nu_j|^{k-2} \\
    & & \ddots & & \vdots \\
    & & & |\nu_j|^{k-1} & u_{j|\Gj|}(k-1)\operatorname{sign}(\nu_j)|\nu_j|^{k-2}
    \end{bmatrix}.
    \]
    Hence $\J_{\K_j}(\bxi_j)$ has row rank $|\Gj|$ as long as $\nu_j \neq 0$, similar to the $\operatorname{GHPP}$, and all $\bbeta_j \neq \bm{0}$ are regular values of $\K_j$.
\end{example}

Further, the $\K_j(\bxi_j)$ considered in our work are positively homogeneous of degree $k\geq 2$, i.e., $\K_j(c\, \bxi_j)=c^k\, \K_j(\bxi_j)$ for all $\bxi_j$ and $c>0$. In the context of the fibers of $\K_j$, the positive homogeneity implies scale-invariance of the shape of each individual fiber at the regular values $\bbeta_j$. That is, a fiber associated with a regular value $\bbeta_j$ has the same ``shape'' as the fiber of any scaled version of $\bbeta_j$. Moreover, continuity of $\K_j$ ensures that the fibers $\K_j^{-1}(\bbeta_j)$ are closed sets. However, they are not compact in the case of overparametrization, since the product structure of the $\K_j$ implies unbounded fibers that extend to infinity. The following well-known result in differential geometry can be used to characterize the fibers of $\K_j$ at regular values, which is a consequence of the inverse function theorem and is also known as the preimage theorem \citep[e.g., Theorem 3.2 in][]{hirsch2012differential}:\\

\begin{proposition}[Regular value Theorem]\label{prop-preimage-theorem}
Let $\K: \Rdxi \to \Rd,\,\bxi \mapsto \bbeta$, be a $\mathcal{C}^r, r \geq 1,$ function and let $\bbeta \in \Rd$ be a regular value of $\K$. Then the fiber $\mathcal{K}^{-1}(\bbeta)$ is a $(\dxi - d)$-dimensional $\mathcal{C}^r$ submanifold of $\Rdxi$.
\end{proposition}

Thus, the fiber of $\K_j$ at a regular value $\bbeta_j$ forms a $(d_{\bxi_j} - |\mathcal{G}_{j}|)$-dimensional smooth manifold. For product-like parametrizations $\K$ with sign-flip symmetries, this manifold consists of disjoint connected components, each embedded within an orthant of suitable sign configuration. For any non-zero $\bbeta_j$,  each connected component of the fiber $\K_j^{-1}(\bbeta_j)$ contains a unique minimal-norm point. 
By the AM-GM inequality, this point is attained when the Euclidean norms of the factors are balanced. Figure~\ref{fig:tikz-contours-hpp} visualizes the disjoint components (branches of the hyperbola) and their minimal-norm points for the scalar HPP.\\
As we traverse the fiber away from the minimal-norm points, i.e., the more unbalanced the factorization of $\bbeta_j$ becomes, the fiber exhibits increasing curvature. Figure~\ref{fig:hpc-tangent-point} illustrates such a fiber for a scalar-valued $\text{HPP}_k$ parametrization of depth $k=3$ (\ref{eq:hppk}).\\
In addition, we can prove the local connectedness of the fibers of $\K$ even at the non-regular value $\bm{0}$, where the Jacobian degenerates to the null matrix for product-type parametrizations $\K$:
\begin{lemma}[Local Connectedness of fibers of $\K$ for product-type parametrizations]\label{lemma:app-connectedness-fibers}
Under Assumption~\ref{ass-parametrization-map} and assuming all $\bbeta_j \neq \bm{0}$ are regular values of the $\K_j, j\in [L]$, the fibers of $\K: \Rdxi \to \Rd$ are (locally) connected sets at every $\bbeta \in \Rd $. 
\end{lemma}
\begin{proof}
By Assumption~\ref{ass-parametrization-map}, $\K(\bxi)$ is block-separable into parametrizations $\K_j(\bxi_j)$ and further assume the $|\mathcal{G}_j| \times \dxi_j$-dimensional Jacobian of each $\K_j$ has full row rank for all $\bxi_j \not\in \K_j^{-1}(\bm{0})$.
Therefore, the fibers of $\K_j$ at non-zero $\bbeta_j \in \mathbb{R}^{|\mathcal{G}_j|}$ are $\mathcal{C}^r$ manifolds by Proposition~\ref{prop-preimage-theorem}, and thus locally connected sets whenever $\K_j$ is regular. Further, by the product-power structure assumed for $\K_j$, the parametrization $\K_j(\bxi_j)$ maps to the non-regular value $\bbeta_j = \bm{0}$ if at least one of its factors is zero. Thus, the fiber $\K_j^{-1}(\bm{0})$ contains all $\bxi_{j1},\ldots,\bxi_{jk}$ such that at least one $\bxi_{jl} = \bm{0}$. This fiber, while not a manifold, is thus a connected set, since all of the contained hyperplanes intersect at $\bxi_{j1}=\ldots=\bxi_{jk}=\bm{0}$.

\noindent Considering the separable Cartesian product structure of $\K$, the fiber of $\K$ at $\bbeta$ is a Cartesian product of the fibers of $\K_j$ at their respective values $\bbeta_j$. Each of these fibers is either locally connected (for regular values of $\bbeta_j$) or connected (for $\bbeta_j=\bm{0}$), and thus the fiber of $\K$ at $\bbeta$ is also locally connected.
\end{proof}
\vspace{-0.0cm}


\subsection{Differentiable SCAD, MCP, and TL1 via the HPP}\label{app:hpp-tl1-scad-mcp}

For the popular non-convex regularizers SCAD, MCP, and the transformed $\ell_1$ (TL1) penalty, we can derive smooth surrogates based on the HPP and a differentiable surrogate penalty $\Rxi$, replacing absolute value terms in $\Rbeta$ by their variational quadratic formulation. All three regularizers are defined as separable functions of the entry-wise absolute values, to each of which we can apply the surrogate $(u_j^2+v_j^2)/2 \geq |\beta_j|$. In the following, we derive only smooth variational forms of those regularizers; the equivalence of the resulting overparametrized optimization problems follows from the fact that the solution map $\hbxi$ coincides with that for the HPP with induced $\ell_1$ regularization. Thus, Theorem~\ref{theorem-general} can be applied directly.

\textbf{TL1 penalty}\, The transformed $\ell_1$ penalty is a non-convex regularizer defined as
\begin{equation*}\label{eq:def-tl1}
    \Rbeta^{TL1}(\bbeta) \triangleq \sum_{j=1}^d \frac{(a+1)|\beta_j|}{a+|\beta_j|}\,,\quad a>0\,,
\end{equation*}
where the hyperparameter $a$ steers the degree of non-convexity and interpolates between the $\ell_0$ penalty (for $a \to 0$) and the $\ell_1$ penalty (for $a \to \infty$). The following result provides an SVF of $\Rbeta^{TL1}$ using the HPP, thereby enabling the construction of differentiable equivalent surrogate objectives.

\begin{lemma}[SVF for the TL1 penalty]\label{lemma:svf-tl1}
Given the HPP $\K(\bu,\bv)=\bu \odot \bv = \bbeta$, the minimum of the surrogate penalty
\begin{equation*}
    \Rxi^{TL1}(\bu,\bv) = \sum_{j=1}^d \frac{(a+1)(u_j^2+v_j^2)}{2a+(u_j^2+v_j^2)}\,,\quad a>0,
\end{equation*}
subject to $\K(\bu,\bv)=\bbeta$ constitutes the following SVF for the TL1 penalty $\Rbeta^{TL1}(\bbeta)$:
\begin{equation*}
    \underset{\bu,\bv \in \Rd: \bu \odot \bv = \bbeta}{\min} \, \Rxi^{TL1}(\bu,\bv) = \Rbeta^{TL1}(\bbeta)
\end{equation*}
\end{lemma}

\begin{proof}
    Since both $\Rbeta^{TL1}$ and $\Rxi^{TL1}$ are coordinate-wise separable, it suffices to show the equality for a generic summand $j \in [d]$, and the result immediately follows for all summands. First, note that the scalar function $h(t)=\frac{(a+1)t}{2a+t}$ is strictly increasing on $[0,\infty)$ for all $a>0$. Therefore, minimizing the corresponding summand of $\Rxi^{TL1}$ subject to $u_j v_j = \beta_j$ is equivalent to minimizing $(u_j^2+v_j^2)$ subject to the same constraint. Using the AM-GM inequality, we obtain a constrained minimum of $2|\beta_j|$ attained if and only if $|u_j|=|v_j|=\sqrt{|\beta_j|}$ and $\operatorname{sign}(u_jv_j)=\operatorname{sign}(\beta_j)$. Thus,
    \begin{align*}
        \underset{u_j, v_j: u_j v_j = \beta_j}{\min} \frac{(a+1)(u_j^2+v_j^2)}{2a+(u_j^2+v_j^2)} = \frac{(a+1)2|\beta_j|}{2a+2|\beta_j|} = \frac{(a+1)|\beta_j|}{a+|\beta_j|} \,.
    \end{align*}
    Summing over $j \in [d]$, we obtain the SVF
    \begin{equation*}
    \underset{\bu,\bv \in \Rd: \bu \odot \bv = \bbeta}{\min} \, \Rxi^{TL1}(\bu,\bv) = \Rbeta^{TL1}(\bbeta).
\end{equation*}
\end{proof}

It follows that TL1 regularized objectives of the form $\P(\bpsi,\bbeta)=\L(\bpsi,\bbeta)+\lambda \Rbeta^{TL1}(\bbeta)$ are equivalent to the smooth surrogate $\Q(\bpsi,\bu,\bv)=\L(\bpsi,\bu \odot \bv) + \lambda \Rxi^{TL1}(\bu,\bv)$.

\textbf{Minimax Concave Penalty}\, The MCP is a non-convex separable regularizer with a piecewise definition, relaxing the penalization rate to 0 away from the origin. In its non-smooth formulation, it is given by
\begin{equation*}
\Rbeta^{MCP}(\bbeta) \triangleq \sum_{j=1}^{d} \rho^{MCP}_{\lambda, \gamma}(\beta_j)\,, \quad \rho^{MCP}_{\lambda, \gamma}(\beta_j)= \begin{cases}\lambda|\beta_j|-\frac{\beta_j^2}{2 \gamma}, & |\beta_j| \leq \gamma \lambda, \\ \frac{1}{2} \gamma \lambda^2, & |\beta_j|>\gamma \lambda ,\end{cases}
\end{equation*}
where $\gamma>1$ controls the degree of non-convexity. A fully differentiable surrogate of $\Rbeta^{MCP}$ can be obtained by replacing $|\beta_j|$ by $(u_j^2+v_j^2)/2$ and $\beta_j^2$ by $(u_j^2+v_j^2)^2/4$. This yields the following result:

\begin{lemma}[SVF for the MCP]\label{lemma:svf-mcp}
Given $\K(\bu,\bv)=\bu \odot \bv = \bbeta$ (HPP), the minimum of
\begin{equation*}
\resizebox{0.95\textwidth}{!}{$\displaystyle
\Rxi(\bu,\bv) \triangleq \sum_{j=1}^d \tilde{\rho}_{\lambda,\gamma}^{MCP}(u_j,v_j), \quad
\tilde{\rho}_{\lambda,\gamma}^{MCP}(u_j,v_j) \triangleq
\begin{cases}
\lambda (u_j^2+v_j^2)/2 - \dfrac{((u_j^2+v_j^2)/2)^2}{2\gamma},
& (u_j^2+v_j^2)/2 \le \gamma\lambda, \\[0.5em]
\dfrac{1}{2}\gamma\lambda^2,
& (u_j^2+v_j^2)/2 > \gamma\lambda .
\end{cases}
$}
\end{equation*}

subject to $\K(\bu,\bv)=\bbeta$ forms an SVF for the MCP regularizer:
\begin{equation*}
    \underset{\bu,\bv \in \Rd: \bu \odot \bv = \bbeta}{\min} \, \Rxi(\bu,\bv) = \Rbeta^{MCP}(\bbeta)
\end{equation*}
\end{lemma}

%

\begin{proof}
Fix $j\in[d]$ and set $s_j\triangleq (u_j^2+v_j^2)/2$. On the first, non-constant, branch of $\tilde{\rho}_{\lambda,\gamma}^{MCP}$,
\begin{equation*}
\tilde{\rho}_{\lambda,\gamma}^{MCP}(u_j,v_j)=\lambda s_j-\frac{s_j^2}{2\gamma}=h(s_j),
\qquad
h(t)\triangleq \lambda t-\frac{t^2}{2\gamma},
\end{equation*}
and $h'(t)=\lambda-t/\gamma>0$ for $t\in[0,\gamma\lambda]$, so $h$ is strictly increasing there. Thus, whenever the first branch is feasible under $u_jv_j=\beta_j$, minimizing $\tilde{\rho}_{\lambda,\gamma}^{MCP}$ reduces to minimizing $s_j$ on the fiber.

By AM-GM, for all $(u_j,v_j)$ with $u_jv_j=\beta_j$, $s_j=(u_j^2+v_j^2)/2\ge |u_jv_j|=|\beta_j|$, and equality is attained for $|u_j|=|v_j|$ and correct signs for $\beta_j$.\\
If $|\beta_j|>\gamma\lambda$, then $s_j\ge|\beta_j|>\gamma\lambda$ for all feasible $(u_j,v_j)$, so only the second branch of $\tilde{\rho}_{\lambda,\gamma}^{MCP}$ applies and $\min_{u_jv_j=\beta_j}\tilde{\rho}_{\lambda,\gamma}^{MCP}(u_j,v_j)=\frac12\gamma\lambda^2$. \\
If $|\beta_j|\le\gamma\lambda$, the constrained minimizer of $s_j=(u_j^2+v_j^2)/2$ over $u_j v_j = \beta_j$ satisfies $s_j=|\beta_j|\leq \gamma\lambda$, so the first branch is feasible and, by monotonicity of $h$ on $[0,\gamma\lambda]$,
\begin{equation*}
\min_{u_jv_j=\beta_j}\tilde{\rho}_{\lambda,\gamma}^{MCP}(u_j,v_j)
=
h\!\big(\min_{u_jv_j=\beta_j}s_j\big)
=
\lambda|\beta_j|-\frac{\beta_j^2}{2\gamma}.
\end{equation*}
Moreover, $\frac12\gamma\lambda^2-\bigl(\lambda|\beta_j|-\frac{\beta_j^2}{2\gamma}\bigr)\ge0$, so the constant branch cannot improve upon this value. Therefore, the entry-wise minima coincide with $\rho^{MCP}_{\lambda,\gamma}(\beta_j)$, and summing over $j \in [d]$  gives the desired SVF
\begin{equation*}
\min_{\bu,\bv \in \Rd:\,\bu\odot\bv=\bbeta}\Rxi(\bu,\bv)=\Rbeta^{MCP}(\bbeta).
\end{equation*}

\end{proof}

Continuity at the boundary $s_j=\gamma\lambda$ follows from $\lambda(\gamma\lambda)-\frac{(\gamma\lambda)^2}{2\gamma}=\frac12\gamma\lambda^2$, which coincides with the second branch. Differentiability of $\tilde{\rho}_{\lambda,\gamma}^{MCP}$ with respect to $(u_j,v_j)$ at the boundary $s_j=\gamma\lambda$ follows by observing that the gradient of the first branch is $\nabla_{(u_j,v_j)}\bigl(\lambda s_j-\frac{s_j^2}{2\gamma}\bigr)
    =\bigl(\lambda-\frac{s_j}{\gamma}\bigr)\nabla_{(u_j,v_j)} s_j
    =\bigl(\lambda-\frac{s_j}{\gamma}\bigr)(u_j,v_j)$, which vanishes on $s_j=\gamma\lambda$, matching the zero gradient of the constant second branch.

%

\textbf{Smoothly Clipped Absolute Deviations}\, 
The SCAD penalty is a non-convex, separable regularizer that transitions from $\ell_1$ penalization near the origin to an unpenalized regime away from $0$. In its original non-smooth form, it is defined as
\begin{equation*}
\Rbeta^{SCAD}(\bbeta) \triangleq \sum_{j=1}^{d} \rho^{SCAD}_{\lambda,a}(\beta_j),
\end{equation*}
where $a>2$ is a hyperparameter and
\begin{equation*}
\rho^{SCAD}_{\lambda,a}(\beta_j)
=
\begin{cases}
\lambda|\beta_j|, & |\beta_j|\le \lambda,\\[0.3em]
\dfrac{-\beta_j^2+2a\lambda|\beta_j|-\lambda^2}{2(a-1)}, 
& \lambda<|\beta_j|\le a\lambda,\\[0.5em]
\dfrac{(a+1)\lambda^2}{2}, & |\beta_j|>a\lambda.
\end{cases}
\end{equation*}
As the construction of a smooth surrogate is completely analogous to the previous MCP derivation, only a brief description is provided here. The variational equivalence and boundary reduction follow the same arguments as in Lemma~\ref{lemma:svf-mcp}. The fully differentiable surrogate for $\Rbeta^{SCAD}$ is constructed via replacing $|\beta_j|$ by $(u_j^2+v_j^2)/2$ and $\beta_j^2$ by $(u_j^2+v_j^2)^2/4$, yielding a smooth, piecewise surrogate penalty. As in the MCP construction, we abbreviate $s_j\triangleq(u_j^2+v_j^2)/2$ and obtain the following smooth surrogate for each $\beta_j$:
\begin{equation*}
\resizebox{0.95\textwidth}{!}{$\displaystyle
\Rxi^{SCAD}(\bu,\bv)\triangleq \sum_{j=1}^d \tilde{\rho}^{SCAD}_{\lambda,a}(u_j,v_j)\,, \quad
\tilde{\rho}^{SCAD}_{\lambda,a}(u_j,v_j)\triangleq
\begin{cases}
\lambda s_j, & s_j\le \lambda,\\[0.4em]
\dfrac{-s_j^2+2a\lambda s_j-\lambda^2}{2(a-1)}, & \lambda<s_j\le a\lambda,\\[0.6em]
\dfrac{(a+1)\lambda^2}{2}, & s_j>a\lambda.
\end{cases}
$}
\end{equation*}
This yields an SVF of the SCAD penalty under the HPP:
\begin{equation*}
\underset{\bu,\bv \in \Rd: \bu \odot \bv = \bbeta}{\min}\Rxi^{SCAD}(\bu,\bv) = \Rbeta^{SCAD}(\bbeta) \,\,\, \forall\,\bbeta \in \Rd\,, 
\end{equation*}
which is continuously differentiable in $(\bu,\bv)$ and separable across coordinates. Using the same AM-GM arguments as for the MCP, the feasible values of $s_j$ under the constraint $u_jv_j=\beta_j$ satisfy $s_j \in [|\beta_j|,\infty)$, with the lower bound attained for balanced factorizations. Consequently, the branch conditions of $\tilde{\rho}_{\lambda,a}^{SCAD}(u_j,v_j)$ in terms of $s_j$ reduce to thresholds in $|\beta_j|$: if $|\beta_j|>a \lambda$, all feasible points $s_j$ lie on the constant branch. If $\lambda < |\beta_j| \leq a \lambda$, the minimal feasible $s_j$ lies in the middle branch. Finally, if $|\beta_j| \leq \lambda$, the minimal feasible $s_j$ lies in the first branch. Since $\tilde{\rho}^{SCAD}_{\lambda,a}$ is non-decreasing in $s_j$ on each piecewise branch, the constrained minimum over $u_jv_j=\beta_j$ is attained at the smallest feasible value $s_j=|\beta_j|$, recovering exactly the scalar SCAD penalty $\rho^{SCAD}_{\lambda,a}(\beta_j)$.

%
%
%
%

\section{Details on Numerical Experiments} \label{app:experiments}


Table~\ref{tab:overview-hyperparam} provides a detailed overview of the optimization hyperparameters, simulation settings, and data/task-specific information for the numerical experiments in Section~\ref{sec:experiments}.

\begin{table}[ht]
\resizebox{0.95\textwidth}{!}{
{\renewcommand{\arraystretch}{1.3}%
\begin{tabular}{lccccc}
\hline \hline
\textbf{Config. / Experiment} & \textbf{Optim. $\ell_1$} & \textbf{Optim. $\ell_{2,1}$} & \textbf{Sparse Lin. Reg.} & \textbf{LeNet-300-100 Pruning} & \textbf{Filter-sparse  CNN} \\
\hline 
Optimizer & SGD & SGD & SGD & Adam & SGD \\
Learning rate & $0.18$ & $0.1$ & $0.005$ & $0.001$ & $0.01$ \\
LR scheduler & cosine & cosine & decay ({\footnotesize $10^{-6}$}) & cosine & decay ({\footnotesize $10^{-5}$}) \\
Momentum & $0.9$ & $0.9$ & 0 & \xmark & $0.9$ \\
Epochs & $3000$ & $2000$ & 2000 & $75$ & $100$ \\
Early stopping & \xmark & \xmark & \cmark\,(200) & \cmark\,(10) & \cmark\,(6) \\
Batch size & full batch & full batch & $32$ & $128$ & $32$ \\
Loss & MSE & MSE & MSE & cross-entropy & cross-entropy \\
Init. $1$st factor & He Normal & He Normal & He Normal & He Normal (adj.) & Glorot Unif. \\
Init. rem. factors & $\mathds{1}$ (ones) & $\mathds{1}$ (ones) & $\mathds{1}$ (ones) & He Normal (adj.) & $\mathds{1}$ (ones)\\
Threshold (0) & $10^{-6}$ & $10^{-6}$ & val. optimal & \texttt{float32.eps} & \texttt{float32.eps} \\ 
Repetitions & \xmark & \xmark & $30$ & $5$ & $10$ \\
\cline{1-6}
\multicolumn{5}{l}{\textbf{Data and tasks}} \\
Task type & regression & regression & regression & classif. & classif. \\ 
Sparsity type & unstruct. ($\ell_1$) & struct. ($\ell_{2,1}$) & unstruct. ($\ell_{2/k}$) & unstruct. ($\ell_{2/k}$) & struct. ($\ell_{2,2/k}$) \\
Train samples & $1000$ & $1000$ & $500$ & $60,000$ & $60,000$ \\
Test samples & \xmark & \xmark & $500$ & $10,000$ & $10,000$ \\
Input dim. & $100$ & $100$ & $\{100,1000\}$ & $784$ & $784$ \\
Output dim. & $1$ & $1$ & $1$ & $10$ & $10$ \\
\hline \hline
\end{tabular}%
}
}
\caption{\small Hyperparameters and details for experiments in Section~\ref{sec:experiments}. The parentheses for early stopping indicate the patience in epochs. $\texttt{float32.eps}\approx 1.19 \times 10^{-7}$. %
}
\label{tab:overview-hyperparam}
\vspace{-0.2cm}
\end{table}


\subsection{Comparison of (G)HPP vs SubGD Optimization}

Figure~\ref{fig:comparison-direct-gd-norms} shows the norm-based regularization paths for the first experiment. The plot confirms that the (group) lasso objective can be effectively optimized using our smooth surrogate method, matching the optimal trajectory and inducing numerically exact zeros, while direct GD struggles to even shrink parameters near zero.
\begin{figure}[ht!]
  \centering
    \subfloat[HPP vs direct GD for lasso objective]{%
        \raisebox{0.0cm}{\includegraphics[width=0.42\textwidth]{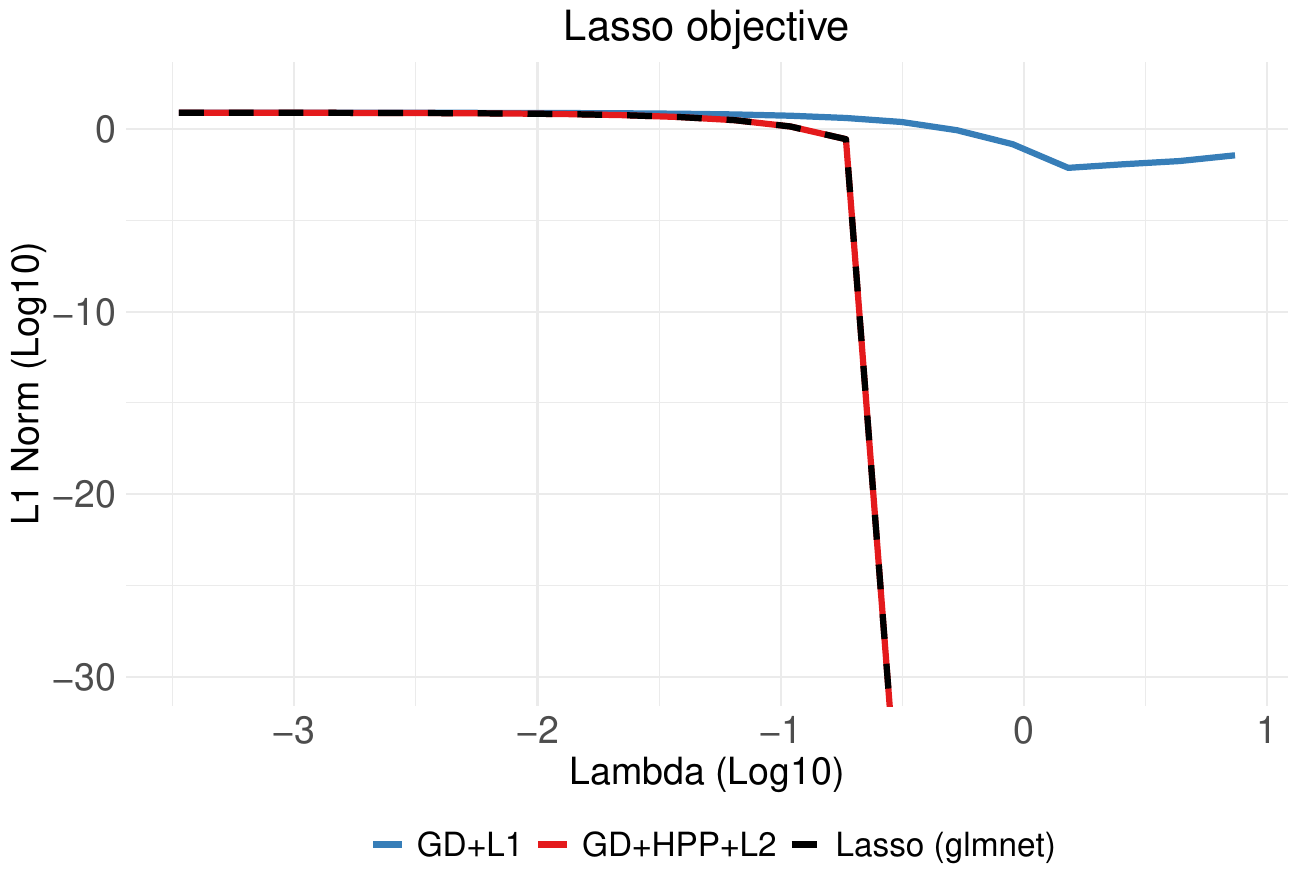}}%
        \label{fig:hpp-vs-gd-norm}%
        }%
    \hspace{0.1\textwidth}
    \subfloat[GHPP vs direct GD for group lasso]{%
        \raisebox{0.0cm}{\includegraphics[width=0.42\textwidth]{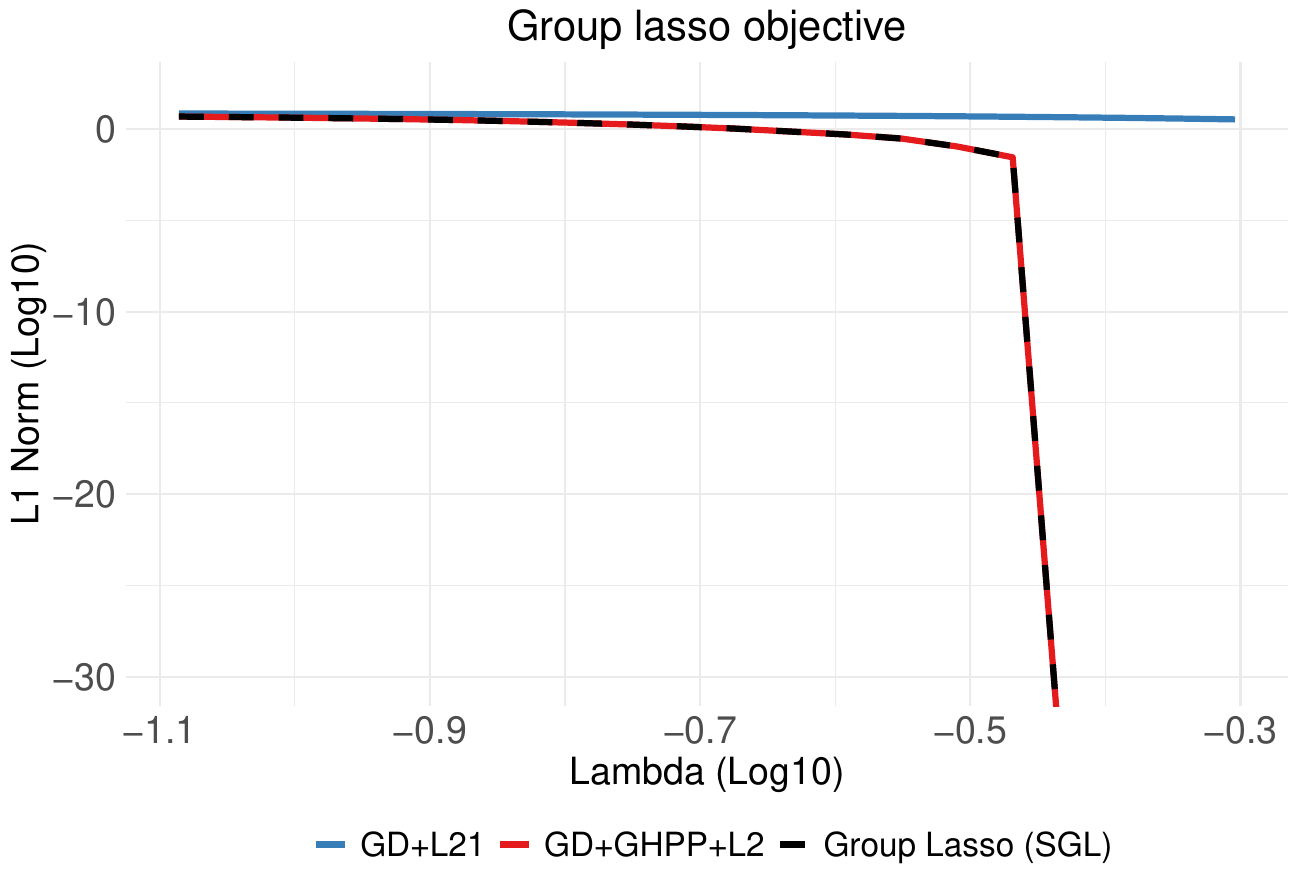}}%
        \label{fig:ghpp-vs-gd-norm}%
        }%
  \caption[Comparison of parameter norms with direct (Sub)GD optimization]{\small Comparison of parameter norms of (G)HPP-based GD and direct (Sub)GD optimization of the non-smooth $\ell_1$ regularized lasso (\textbf{a}) and $\ell_{2,1}$ regularized group lasso (\textbf{b}) objectives. Dashed lines indicate optimal solutions. %
  }
  \label{fig:comparison-direct-gd-norms}
  \vspace{-0.2cm}
\end{figure}


\subsection{Sparse Linear Regression}\label{app:subsec:sparselinear}

\noindent \textbf{Detailed Set-Up}\, In our experiments, we simulate 30 data sets $\mathcal{D}=\{(\bm{x}_i, y_i)\}_{i=1}^{n}$, where $\bm{x}_i \in \mathbb{R}^{d}$ is the feature vector and $y_i \in \mathbb{R}$ the scalar outcome. Each dataset contains $n=500$ samples each for training, validation, and testing. We focus on the 
high-dimensional $d>n$ setting, in which the number features, $d=1000$, exceeds the number of samples.
We set the number of informative features in the true parameter vector, $\bbeta^{\ast} \in \Rd$, to $s=\Vert \bbeta^{\ast} \Vert_0=10$. The magnitudes of the true signals range from the smallest signal value $|\beta^{\min}| \triangleq \frac{1}{2}\sigma\sqrt{(2/n)\log(d)}$ to a ``large'' signal $|\beta^{large}| \triangleq 2\log(d)\sigma\sqrt{(2/n)\log(d)}$, where $\sigma>0$ is the standard deviation of the additive noise in (\ref{eq:lm-dgp}). This range is based on the information-theoretic lower bound for recoverable signals \citep{wainwright2009information, zhang2010nearly}. %
Our data are simulated according to the following data-generating process:
\begin{align}\label{eq:lm-dgp}
   (\bm{X}, \bm{\varepsilon}) \sim P_{\bm{X}} \times P_{\bm{\varepsilon}},\quad  &\bm{X} \in \mathbb{R}^{n \times d},\, \bm{\varepsilon} \in \mathbb{R}^{n}\,,\quad\,\,
    \bm{Y} = \bm{X}\bbeta^{\ast} + \bm{\varepsilon},\quad \bm{Y} \in \mathbb{R}^{d} \,, 
\end{align}
\noindent where $P_{\bm{\varepsilon}}$ is a spherical Gaussian distribution $\mathcal{N}(\bm{0},\bm{I}_n)$, i.e., $\sigma=1$, and $P_{\bm{X}}$ corresponds to $\mathcal{N}(\bm{0},\bm{\Sigma})$. We consider two settings for the design matrix $\bm{X}$: in the first setting, independent features with $\bm{\Sigma} = \bm{I}_d$ are used,  whereas the second setting investigates correlated features drawn from a multivariate Gaussian with Toeplitz power covariance structure, i.e., $\bm{\Sigma}_{i,j} \triangleq \rho^{|i-j|}$ with correlation $\rho=0.5$. The $s$ informative signals' indices are randomly assigned in each simulation. To optimize the $\ell_1$, SCAD, and MCP regularized problems, we choose a specialized routine based on the Convex-Concave Procedure and the Modified Local Quadratic Approximation implemented in the \texttt{ncpen} \textsf{R} package \citep{kim2021unified}.\footnote{We also performed experiments with more widespread coordinate descent algorithms implemented in the \texttt{glmnet} \citep{friedman2010regularization} and \texttt{ncvreg} \citep{breheny2011coordinate} packages with qualitatively identical results.} 
\hphantom{0.1cm}%
\begin{figure}[ht]
  \centering
  \includegraphics[width=0.9\textwidth
  ]{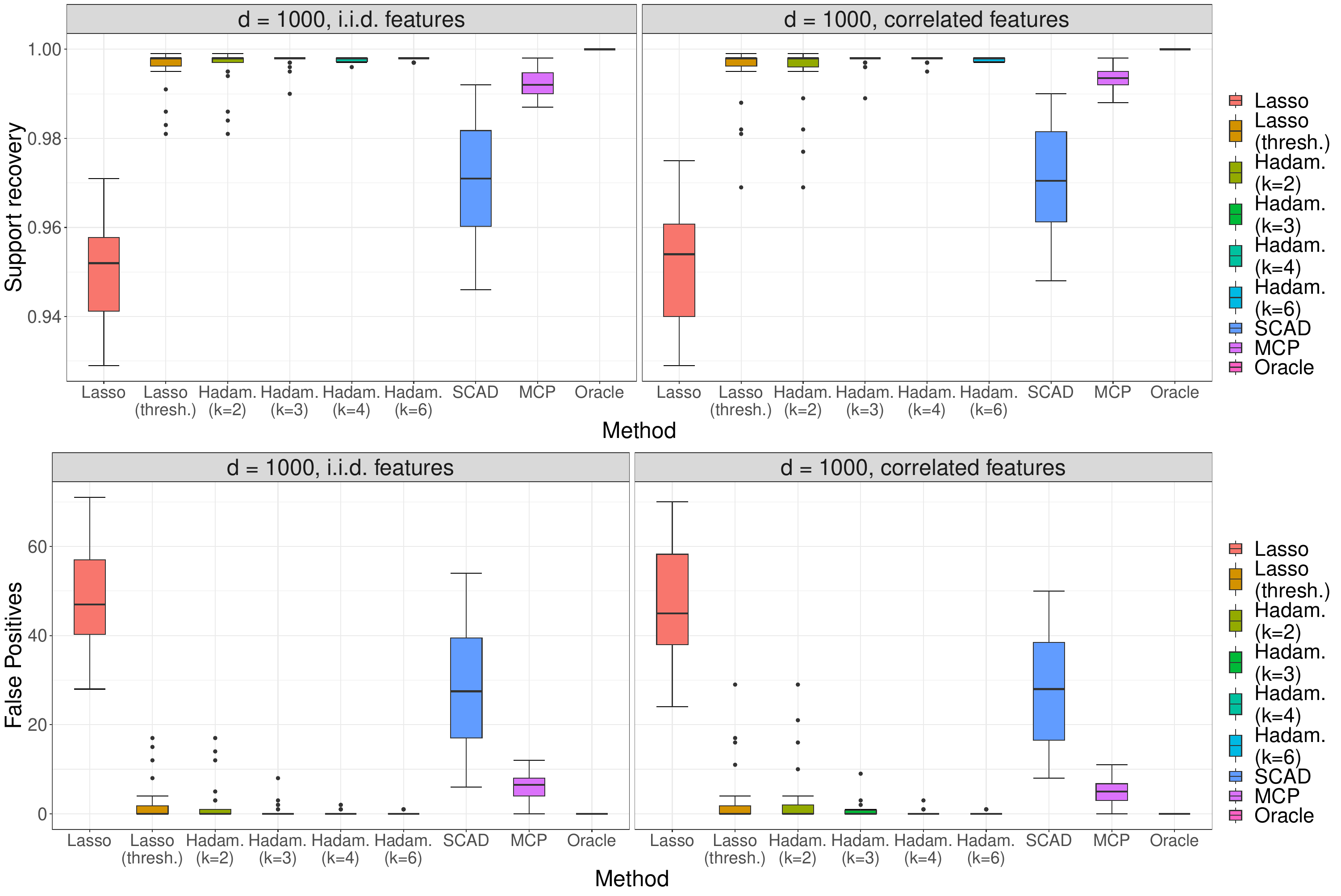}
\caption{\small Support recovery accuracy (top row) and false positives (bottom row) for different $\bm{\Sigma}$ settings (columns) and increasing factorization depths, compared against standard implementations of convex $\ell_1$ and non-convex SCAD and MCP penalties.}
\label{fig:simulation-varsel-d1000}
\vspace{-0.5cm}
\end{figure}

\noindent \textbf{Variable selection}\,  SGD does not have an in-built mechanism to produce (theoretically) exact zeros, although given a sufficiently large number of iterations, a zero floating-point representation can be obtained. To circumvent inefficient training times to obtain numerically zero parameters in our SGD-optimized models, we use early stopping combined with a post-thresholding step \citep[as suggested in][]{zhao2022high}, whose optimal cut-off for the reconstructed parameters is determined on the validation loss.
\noindent Figure~\ref{fig:simulation-varsel-d1000} shows the support recovery, defined as the classification accuracy w.r.t.~informative signals, as well as the number of false positives (FP) for the models in Section~\ref{sec:experiments}, after applying thresholding to our overparametrized models. To disentangle the effects of the optimization transfer and the thresholding step, we also apply the same operation to the conventional lasso. We can make two basic observations: first, support recovery, and FP both improve monotonically with increasing factorization depth $k$. Second, we observe that the thresholding step also significantly improves variable selection performance. 
\vspace{0.15cm}

\noindent \textbf{Low-dimensional simulation setting}\, Besides the $d>n$ setting we previously analyzed, we repeat the experiment for a low-dimensional $d<n$ setting while keeping the number of true signals constant. The magnitudes of the non-zero parameters are adjusted to the new setting using the provided definitions. Figure~\ref{fig:simulation-d100-full} shows the results for an identical simulation set-up as before, but in an alternative lower-dimensional setting with $d=100$ features, $s=10$ non-zero parameters, and $n=500$ samples in the training data. Qualitatively, the results are consistent with those for the high-dimensional setting, with minor instabilities at large factorization depths $k$, suggesting a trade-off between depth and stability.

\begin{figure}[ht]
  \centering
  \includegraphics[width=1.01\textwidth
  ]{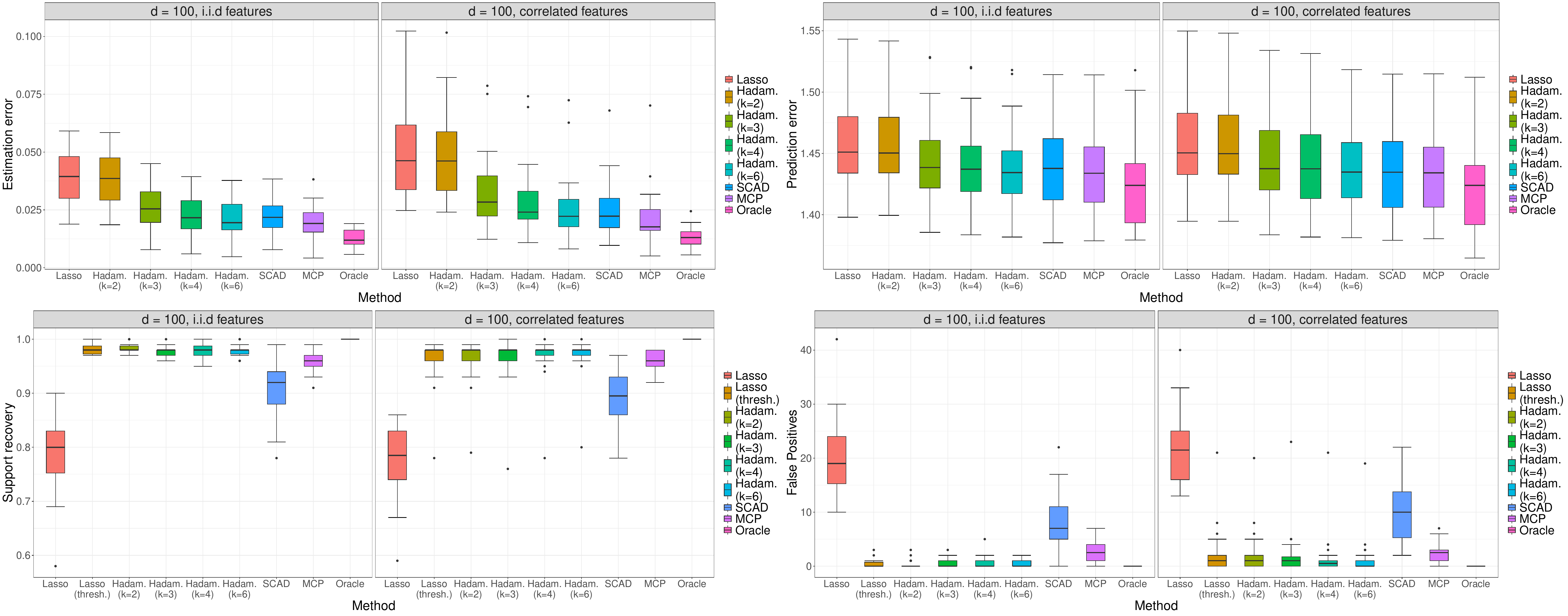}
  \caption[Numerical experiments for low-dimensional setting]{\small 
  \textbf{Left}:\, standardized estimation error (top row) and support recovery accuracy (bottom row) for different $\bm{\Sigma}$ settings (columns) of our approach for increasing factorization depths, compared against standard implementations of convex $\ell_1$ and non-convex SCAD and MCP penalties. \textbf{Right}:\, test prediction error (top row) and false positives (bottom row) for different settings of $\bm{\Sigma}$ (columns).}
  \label{fig:simulation-d100-full}
  \vspace{-0.4cm}
\end{figure}
%


\noindent \textbf{Further simulations with varying ground-truth parameter}\, Complementary to the previously analyzed low- and high-dimensional settings, we further repeat all simulations by varying the structure and sparsity of the ground-truth vector $\bbeta^\ast \in \Rd$. 

In our first additional setting, we keep the number of true signals at $s=\|\bbeta^\ast\|_0=10$, but modify their structure and scale by fixing their values to $(-0.75, -0.25, -2, -2, -2, 2, 2, 2, 0.25, 0.75)^{\top}$. Figure~\ref{fig:simulation-full-setup1} contains the full results for both low- and high-dimensional settings and independent vs correlated features, with 30 simulation repetitions for each model, feature dimension, and correlation structure. Qualitatively, the results are consistent with previous findings in Figures~\ref{fig:simulation-estpred}, \ref{fig:simulation-varsel-d1000}, \ref{fig:simulation-d100-full}.

\begin{figure}[ht]
  \centering
  \subfloat[Low-dimensional setting ($d = 100$)]{
    \includegraphics[width=1.0\textwidth]{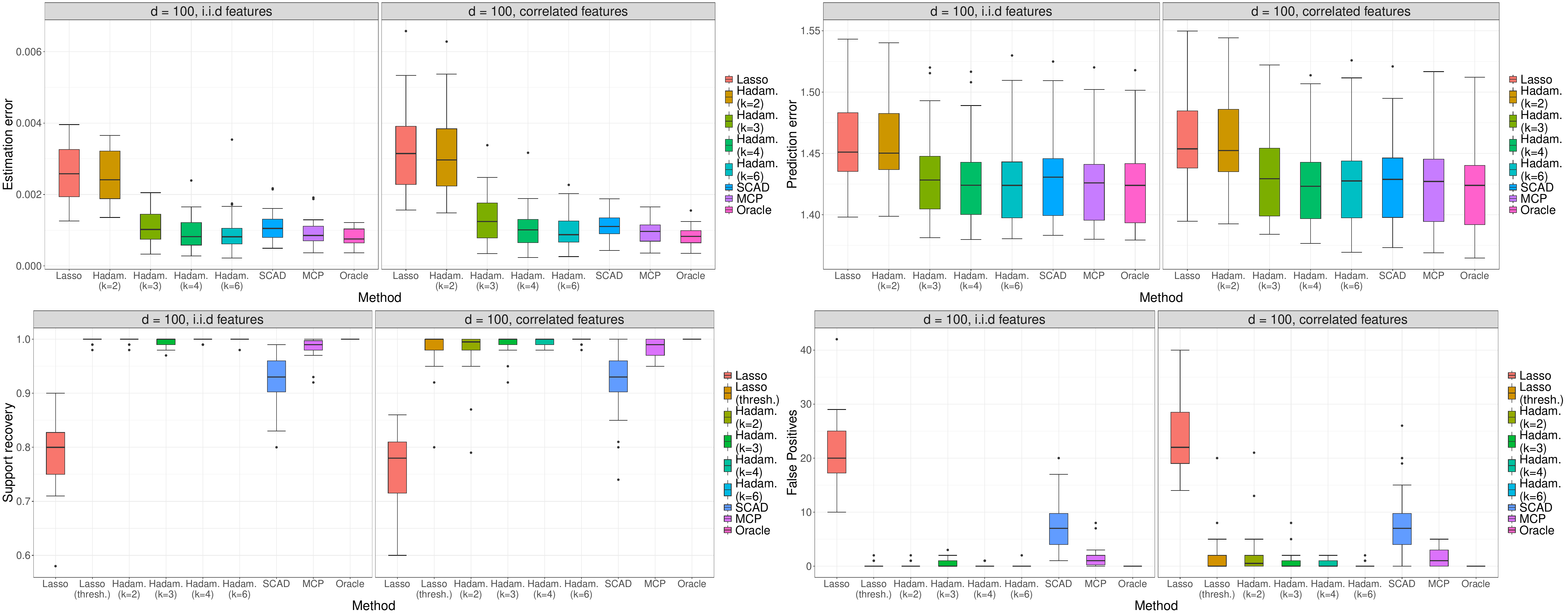}
    \label{fig:simulation-d100-full-setup1}
  }
  \vspace{0.2cm}
  \subfloat[High-dimensional setting ($d = 1000$)]{
    \includegraphics[width=1.0\textwidth]{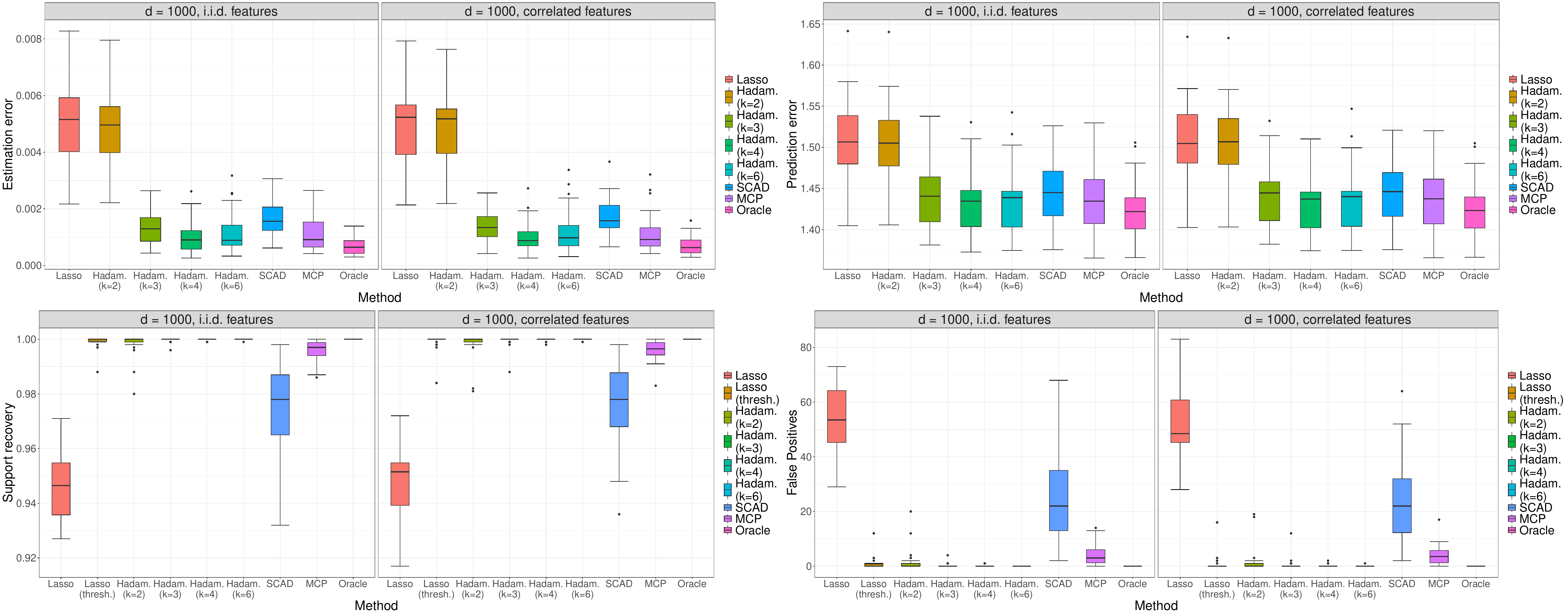}
    \label{fig:simulation-d1000-full-setup1}
  }
 \caption[Numerical experiments for signal variation setting]{\small \textbf{Simulation results for different true signal choice}. Each subfigure contains the following plots: \textbf{Left}:\, standardized estimation error (top row) and support recovery accuracy (bottom row) for different $\bm{\Sigma}$ settings (columns) of our approach for increasing factorization depths, compared against standard implementations of convex $\ell_1$ and non-convex SCAD and MCP penalties. \textbf{Right}:\, test prediction error (top row) and false positives (bottom row) for different settings of $\bm{\Sigma}$ (columns).}
  \label{fig:simulation-full-setup1}
  \vspace{-0.4cm}
\end{figure}

\vspace{0.1cm}

In our second additional setting, we vary the sparsity of the ground-truth vector and increase the number of non-zero signals to $s=\|\bbeta^\ast\|_0=40$. We specify half the coefficients to be ``small'' and logarithmically spaced between $\beta^{\min}/2$ and $\beta^{large}/2$ as defined in Appendix~\ref{app:subsec:sparselinear}. Note that this includes true signals below the recoverable threshold. The remaining 20 ``large'' signals are selected to have magnitudes between 2 and 5. Half of the signals have positive and half have negative signs. Figure~\ref{fig:simulation-full-setup3} contains the full results, again with 30 repetitions for each model, feature dimension, and correlation structure. Qualitatively, the results are largely consistent with previous findings and further highlight the trade-off between depth and stability, as the performance for deep factorizations with $k=6$ becomes brittle and degrades on average.

\begin{figure}[ht]
  \centering
  \subfloat[Low-dimensional setting ($d = 100$)]{
    \includegraphics[width=1.0\textwidth]{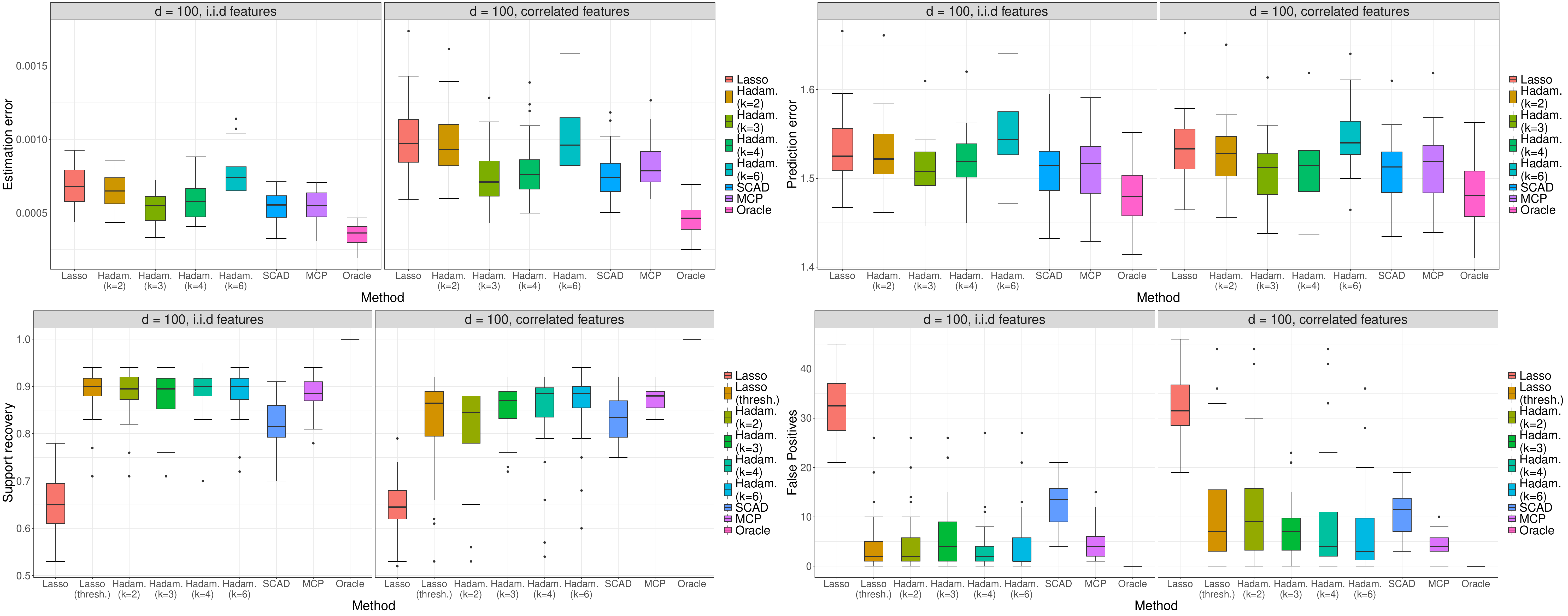}
    \label{fig:simulation-d100-full-setup3}
  }
  \vspace{0.2cm}
  \subfloat[High-dimensional setting ($d = 1000$)]{
    \includegraphics[width=1.0\textwidth]{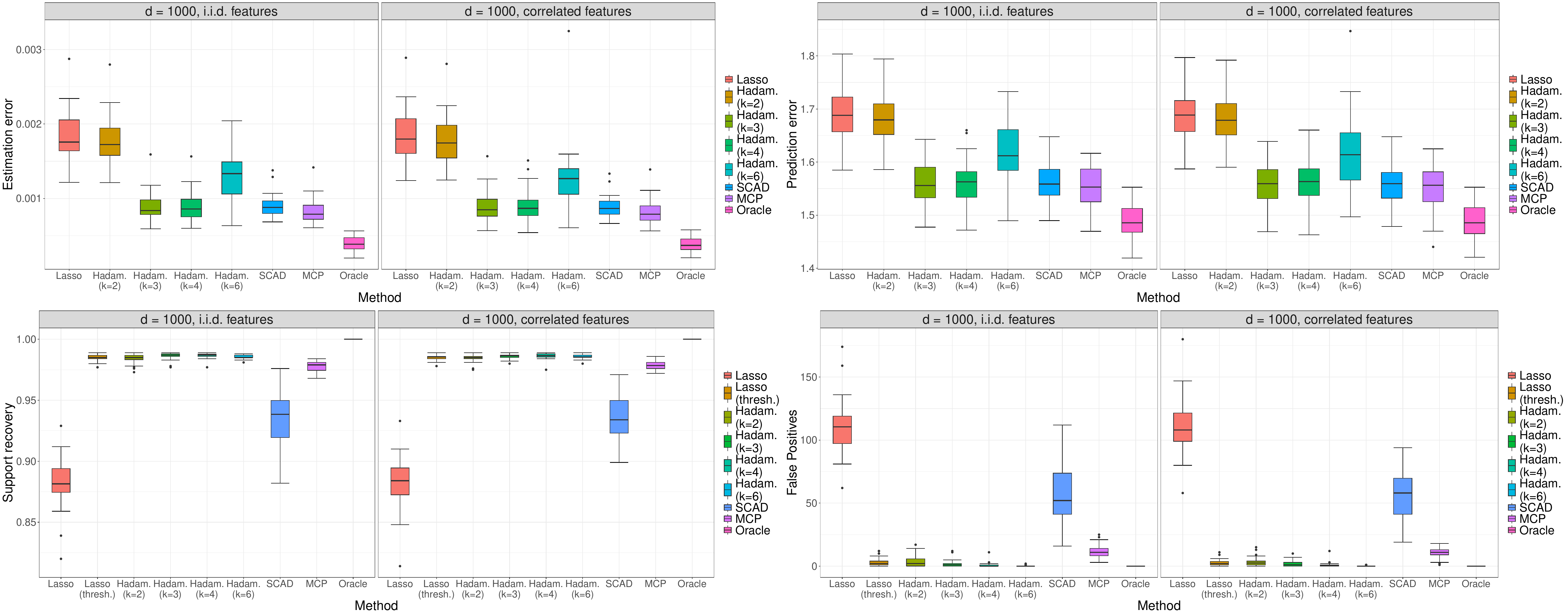}
    \label{fig:simulation-d1000-full-setup3}
  }
 \caption[Numerical experiments for different ground-truth sparsity]{\small \textbf{Simulation results for different ground-truth sparsity}. Each subfigure contains the following plots: \textbf{Left}:\, standardized estimation error (top row) and support recovery accuracy (bottom row) for different $\bm{\Sigma}$ settings (columns) of our approach for increasing factorization depths, compared against standard implementations of convex $\ell_1$ and non-convex SCAD and MCP penalties. \textbf{Right}:\, test prediction error (top row) and false positives (bottom row) for different settings of $\bm{\Sigma}$ (columns).}
  \label{fig:simulation-full-setup3}
  \vspace{-0.4cm}
\end{figure}


\subsection{Details on CNN Architecture}\label{app:details-architectures}

Our small VGG-style CNN implementation consists of two blocks of two convolutional layers after each of which max pooling is applied. The convolutional layers have a kernel size of $3$ and stride $1$. ReLU activation is used for all hidden layers. The classification head consists of two hidden layers followed by the softmax output. Dropout \citep{srivastava2014dropout} is applied after each convolutional block and dense layer. The full architecture is:
\vspace{0.1cm}
\noindent $\left[[\texttt{Input((28,28))},\texttt{Conv2D(32)},\texttt{Conv2D(32)},\texttt{MaxPool2D(2)},\texttt{Dropout(0.25)}\right]$,\\
$[\texttt{Conv2D(64)},\texttt{Conv2D(64)},\texttt{MaxPool2D(2)},\texttt{Dropout(0.25)}]$,\\
$\left[\texttt{Dense(32)},\texttt{Dropout(0.25)},\texttt{Dense(32)},\texttt{Dropout(0.25)},\texttt{Dense(10)}]\right]$
\vspace{-0.1cm}

%

\subsection{Additional Results on Computational Complexity}\label{app:subsec:computational-complexity}

The experiments on computational overhead are performed on a single 16GB RTX A4000 GPU using \texttt{TensorFlow 2.9}. The time per sample is the average wall-clock time for a single epoch normalized by sample size. The fully-connected network (MLP) has four hidden ReLU layers with 128 units each, containing $\approx 0.15$m parameters. The input is a flattened $(28,28)$ image and the softmax output has $10$ units. All trainable weights and biases are overparametrized. Figure~\ref{fig:complexity-resnet} shows additional experiments for the $\text{HPP}_k$ applied to a ResNet-20 ($\approx 0.27$m param.)~trained on CIFAR10 \citep{he2016deep}. As for the MLP, the computational overhead increases with depth and decreases with batch size. The recommended batch size of $256$ results in a $<5\%$ increase for $k=8$.
\begin{figure}[ht]
  \centering
  \includegraphics[width=0.9\textwidth]{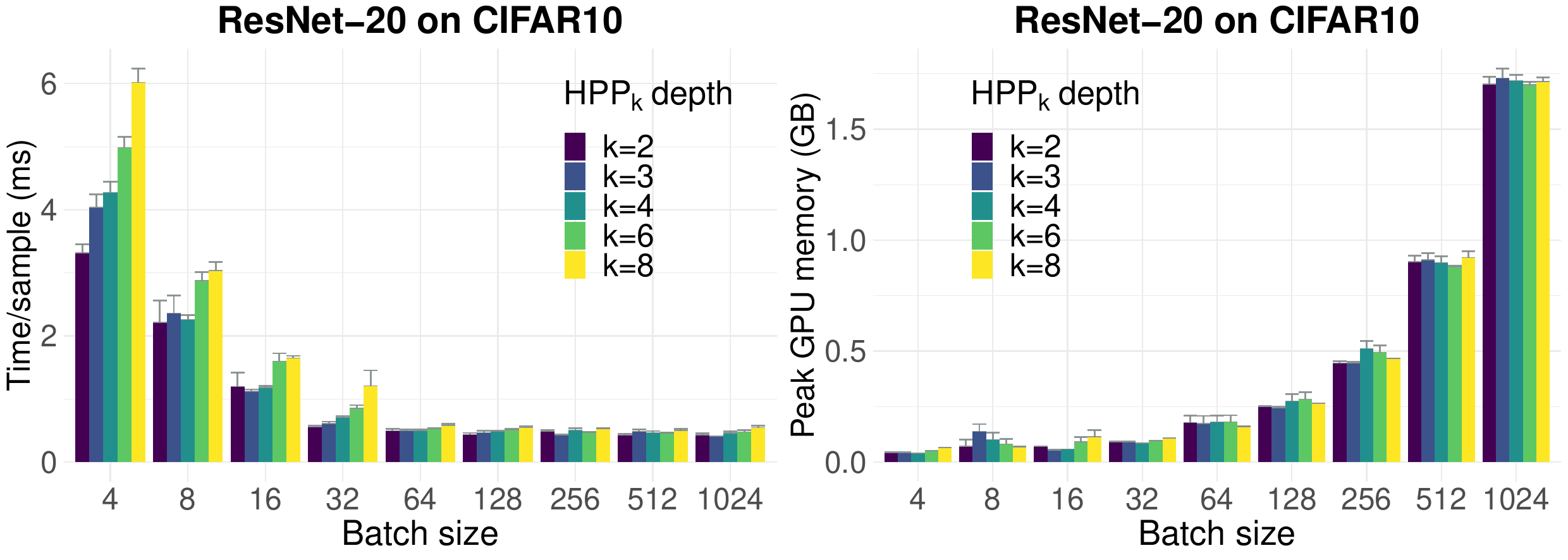}
  \caption[Time per sample (training) for ResNet20 on CIFAR10]{\small\textbf{Left}: training time per sample for different factorization depths $k$. \textbf{Right}: peak GPU memory utilization. Means and standard errors over four runs are displayed.
  }
\label{fig:complexity-resnet}
  \vspace{-0.3cm}
\end{figure}

\vspace{-0.0cm}
%

\section{Details on Geometric Intuition} \label{app:details-geometric-intuition}

%
%
%

\subsection[Difference between HPP and HDP]{Difference between HPP and HDP: $2\Vert\bm{\beta}\Vert_{1}$ vs $\Vert\bm{\beta}\Vert_{1}$ }\label{app:hpp-vs-hdp}

\begin{figure}[ht!]
  \centering
  \includegraphics[width=0.7\textwidth]{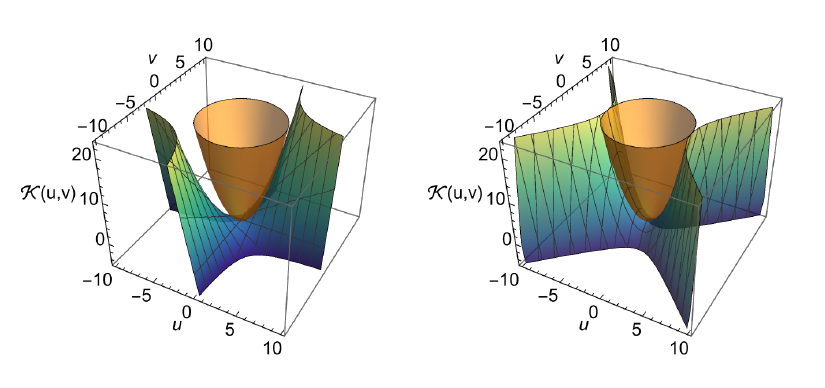}
  \caption[Hyperbola]{\small \textbf{Left}: HPP (blue/green) and the surrogate $\ell_2$ regularization $u_j^2+v_j^2$ (orange). \textbf{Right}: HDP (blue/green) and surrogate $\ell_2$ regularization.}
  \label{fig:hpp-hdp-compar}
  \vspace{-0.4cm}
\end{figure}
In the literature, both the HPP and HDP are used almost interchangeably to induce $\ell_1$ regularization in some way. However, there are subtle differences between the two, resulting in different regularization strengths. Both parametrizations define hyperbolic paraboloids for each $j=1,\ldots,d$ where the HPP can be transformed into the HDP via rotation and scaling operations. To see this, consider the scalar case $\beta\in \mathbb{R}$. Defining the $-45^{\circ}$ rotation of any point $(u,v)$ on the Cartesian plane as  $(u,v) \mapsto (\frac{u+v}{\sqrt{2}}, \frac{v-u}{\sqrt{2}})\triangleq \operatorname{rot(u,v)}$, and recalling that the coordinate change from HPP to HDP is $(\gamma,\delta)=(u+v, v-u)$, it follows that $\text{HDP}(\gamma,\delta)=\text{HPP}(\sqrt{2} \operatorname{rot}(u,v))=2 \cdot \text{HPP}(\operatorname{rot}(u,v))$. This means the HDP is the HPP but with all points rotated by $-45^{\circ}$ and its output scaled by $2$ (or equivalently, its arguments scaled by $\sqrt{2}$.) Since the surrogate $\ell_2$ regularizer is the same for both parametrizations, this scaling results in a decreased gap between $\K$ and the surrogate regularizer (cf.~Figures~\ref{fig:hpp-hdp-compar} and~\ref{fig:had-geom-intu-viewpoint}). Geometrically speaking, the fibers $\K^{-1}(\beta)$ of the HDP extend closer to the origin than for the HPP, allowing minimum-norm points with smaller norm. As derived in Sections~\ref{sec:hpp-vanilla} and~\ref{sec:hdp-subsec}, for $\beta>0$, the minimum-norm points for the HPP are $\pm (\sqrt{\beta},\sqrt{\beta})$, and $(\pm \sqrt{\beta},0)$ for the HDP. Evaluating the surrogate $\ell_2$ penalty at those points, we find an induced regularizer of $\sqrt{\beta}^2+ \sqrt{\beta}^2 = 2 |\beta|$ for the HPP, but only $0^2+\sqrt{\beta}^2=|\beta|$ for the HDP.

%

\subsection[Geometric Intuition for HPP in Three Dimensions]{Geometric Intuition for $\text{HPP}_k$ in Three Dimensions} \label{app:geom-intuition-3d}
With $\ell_2$ regularization, factorizing a scalar parameter $\beta\in\mathbb{R}$ using $\K(\bu)=\mathcal{K}(u_1, u_2, u_3)=u_1 u_2 u_3$ yields a minimal constrained $\ell_2$ penalty of $\mathcal{R}_{\bm{\beta}}(u_1, u_2, u_3) = 3 | u_1 u_2 u_3 |^{2/3}$ over $\K^{-1}(\beta)$, or $\Rbeta(\beta) = 3 |\beta|^{2/3}$ in terms of $\beta$, inducing differentiable sparse $\ell_{2/3}$ regularization (cf.~Fig.~\ref{fig:hpp3}). 
\begin{figure}[htp]
    \centering
    \subfloat[{\tiny Contours of $\K(\bu)$}]{%
        \includegraphics[width=0.18\linewidth]{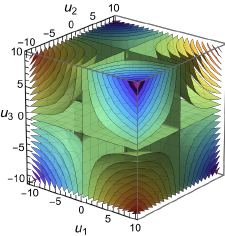}%
        \label{fig:hpc-k-contours}%
        }%
    \hspace{0.4cm}
    \subfloat[{\tiny Contours of $\Rbeta(\bu)$}]{%
        \includegraphics[width=0.18\linewidth]{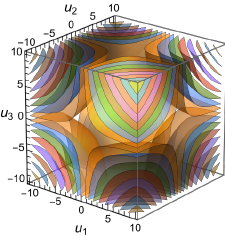}%
        \label{fig:hpc-penb-contours}%
        }%
    \hspace{0.4cm}
    \subfloat[{\tiny Fiber $\K^{-1}(5)$ and largest enclosed $\ell_2$ ball}]{%
        \includegraphics[width=0.18\linewidth]{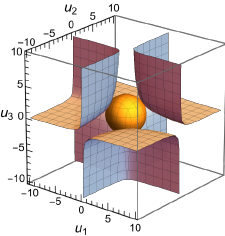}%
        \label{fig:hpc-tangent-point}%
        }%
    \hspace{0.4cm}
    \subfloat[{\tiny Majorization of $\Rbeta(\bu)$ by $\Rxi(\bu)$}]{%
        \includegraphics[width=0.18\linewidth]{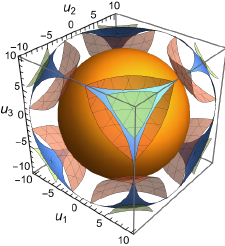}%
        \label{fig:hpc-majorization-3d}%
        }%
    \caption{\small \textbf{a)}\, Visualization of $\text{HPP}_k$, and \textbf{b)}\, minimum constrained $\ell_2$ penalty $\mathcal{R}_{\bm{\beta}}(u_1, u_2, u_3)$. \textbf{c)}\, $\text{HPP}_k$ with $\mathcal{K}(u_1, u_2, u_3)=u_1u_2u_3$ and minimum constrained $\ell_2$ regularization term $\mathcal{R}_{\bm{\beta}}(u_1, u_2, u_3)=3\cdot |u_1 u_2 u_3|^{2/3}$. \textbf{d)} fiber of $\K$ at $\beta =5$, illustrating the location of the $4$ points with minimal distance to the origin at the 
    vertices $(\hat{u}_1,\hat{u}_2,\hat{u}_3)$ of the hyperbolic smooth manifold. The vertices lie tangential to the largest enclosed $\ell_2$-ball having a radius of $\sqrt{3\cdot5^{2/3}}$. \textbf{Right}: two contours each of surrogate $\ell_2$ penalty $\Rxi$ (solid shapes) and $\mathcal{R}_{\bm{\beta}}$ at the same levels (opaque). 
    For each value $\beta=u_1 u_2 u_3\neq 0$, there are 8 points where the surrogate attains minimum $\ell_2$ distance; however, only half are solutions of the SVF by restriction to orthants that respect the sign of $\beta$ under $\K$.
    }
    \label{fig:hpp3}
\end{figure}
%
%
\subsection[Effects on Optimization Landscape]{Curvature-inducing Effects on Optimization Landscape}\label{app:effects-optim-landscape}

%
The parametrizations considered by us (cf.~Table~\ref{tab:overview}) have a significant impact on the loss landscape caused by a change in curvature induced by the multiplicative nature of the parametrization. 
Powerpropagation (\ref{eq:powerprop-param}), $\K(\bv)=\bv \odot |\bv|^{\circ(k-1)}$, as a bijective map, allows disentangling the curvature effect from overparametrization, i.e., the curvature is modified in the same base parameter space. The left panel of Figure~\ref{fig:landscape-powerprop} shows that for increasing factorization depths $k \in \{2,4,6\}$, increasingly sharp transitions at $v \in \{-1,1\}$ are induced. Given an unregularized base objective, 
the right panel of Figure~\ref{fig:landscape-powerprop} shows corresponding equivalent surrogate objectives applying Powerpropagation without surrogate regularization. The left panel of Figure~\ref{fig:landscape-powerprop-regu} displays the same base objective with non-smooth, and in parts non-convex, $\ell_q$ regularization where $q = 2/k$. The right plot depicts the corresponding equivalent surrogates obtained from our optimization transfer. 
\begin{figure}[h!]
\centering
\subfloat[ \label{fig:landscape-powerprop} \textbf{Left}: Powerpropagation $\K(v)=\beta$ (\ref{eq:powerprop-param}) for $k \in \{2,4,6\}$.\, \textbf{Right}: unregularized  objective $\P(\beta)=(1-\frac{1}{2}\beta)^2$ (black) and equivalent smooth surrogates $\Q(v)=(1-\frac{1}{2}(v|v|^{k-1}))^2$. Note the additional saddle at $v=0$.]{\includegraphics[width=0.99\textwidth]{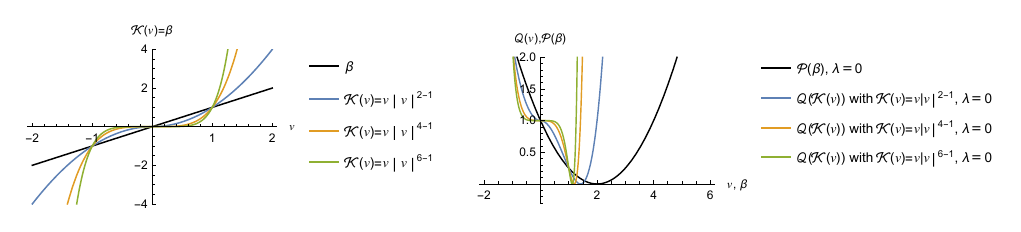}} \vspace{-0.0cm }\\
\subfloat[ \label{fig:landscape-powerprop-regu} \textbf{Left}: non-smooth $\ell_{2/k}$ regularized base objectives $\P(\beta)$ with $\lambda=\frac{1}{2}$.\, \textbf{Right}: smooth surrogates $\Q(v)=(1-\frac{1}{2}(v|v|^{k-1}))^2 + \lambda v^2$ equivalent to $\P(\beta)$ on left.]{\includegraphics[width=0.99\textwidth]{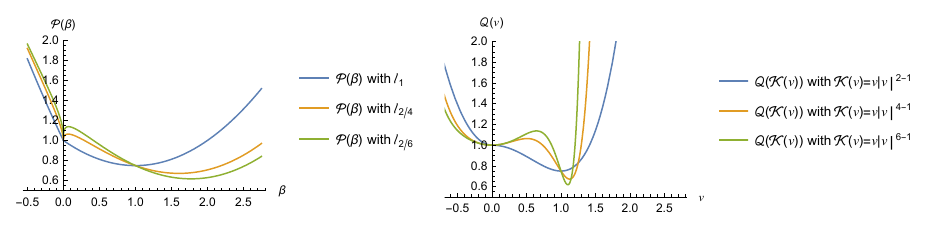}} \\ 
\caption[Loss Surfaces under Power Product Parametrizations]{\small Visualization of how smooth optimization transfer transforms the loss landscape using Powerprop.~(\ref{eq:powerprop-param}) on base objectives $\P(\beta) \triangleq (1-\frac{1}{2} \beta)^2+\lambda |\beta|^{2/k}$,  $k \in \{2,4,6\}$. 
} 
\label{fig:optim-landscapes-params}
\vspace{-0.35cm}
\end{figure}
%
%
%

%
%

%
\section[Gradient and Hessian of Surrogate Q]{Derivation of Gradient and Hessian of Smooth Surrogate $\Q$}\label{app:derivations-optim}
For simplicity, we assume no unregularized parameters $\bpsi$ so that the objective function $\Q: \Rdxi \rightarrow \mathbb{R}_0^{+}$ is defined as $\Q(\bm{\xi}) = \mathcal{L}(\mathcal{K}(\bm{\xi})) + \lambda \mathcal{R}_{\bm{\xi}}(\bxi)$,
where $\mathcal{L}: \mathbb{R}^d \rightarrow \mathbb{R}_0^{+}$ and $\mathcal{K}: \Rdxi \rightarrow \mathbb{R}^d$ are $\mathcal{C}^2$-smooth functions, $\mathcal{R}_{\bm{\xi}}: \Rdxi \rightarrow \mathbb{R}_{0}^{+}$ is a strongly convex $\ell_2$ regularization term, and $\lambda \geq 0$ a scalar. The gradient of $\Q$ with respect to $\bm{\xi}$ is given by
\begin{equation} \label{eq:gradient} \nonumber
\nabla_{\bxi} \Q(\bm{\xi}) = \mathcal{J}_{\mathcal{K}(\bm{\xi})}^{\top}(\bxi) \nabla_{\K} \mathcal{L}(\mathcal{K}(\bm{\xi})) + \lambda \nabla_{\bxi} \mathcal{R}_{\bm{\xi}}(\bxi)\,,
\end{equation}
where $\mathcal{J}_{\mathcal{K}(\bm{\xi})}(\bxi)$ is the $d \times \dxi$-dimensional Jacobian of $\mathcal{K}$ at $\bm{\xi}$, and the gradients $\nabla_{\K} \mathcal{L}(\mathcal{K}(\bm{\xi}))$ and $\nabla_{\bxi} \mathcal{R}_{\bm{\xi}}(\bxi)$ are vectors with $d$ and $\dxi$ entries. The Hessian of $\Q$ at $\bxi$ is then obtained as
%
\begin{equation} \label{eq:hessian} \nonumber
\mathcal{H}_{Q(\bm{\xi})}(\bxi) = \mathcal{H}_{\mathcal{K}(\bm{\xi})}(\bxi) \nabla_{\K} \mathcal{L}(\mathcal{K}(\bm{\xi})) + \mathcal{J}_{\mathcal{K}(\bm{\xi})}^{\top}(\bxi) \mathcal{H}_{\mathcal{L}(\mathcal{K}(\bm{\xi}))}(\bxi) \mathcal{J}_{\mathcal{K}(\bm{\xi})}(\bxi) + \lambda \mathcal{H}_{\mathcal{R}_{\bm{\xi}}}(\bxi)\,,
\end{equation}
where $\mathcal{H}_{\mathcal{K}(\bm{\xi})}(\bxi)$ is a third-order $\dxi \times \dxi \times d$-dimensional tensor, and 
$\mathcal{H}_{\mathcal{L}(\mathcal{K}(\bm{\xi}))}(\bxi)$ and $\mathcal{H}_{\mathcal{R}_{\bm{\xi}}}(\bxi)$ are Hessians of dimensions $d \times d$ and $\dxi \times \dxi$. 
From this representation, we can see that $\mathcal{J}_{\mathcal{K}(\bxi)}(\bxi)=\bm{0}$ and $\mathcal{H}_{\mathcal{K}(\bxi)}(\bxi)=\bm{0}$ imply $\mathcal{H}_{\mathcal{Q}(\bm{\xi})}(\bm{\xi})=\bm{0}$ if $\lambda=0$. For $\lambda>0$, $\mathcal{J}_{\mathcal{K}(\bxi)}(\bxi)=\bm{0}$ and $\mathcal{H}_{\mathcal{K}(\bxi)}(\bxi)=\bm{0}$ imply that $\bm{0}$ is a local minimizer of $\Q(\bxi)$ due to the positive definiteness of the $\mathcal{H}_{\mathcal{R}_{\bxi}}(\bxi)$ implied by the strong convexity of $\Rxi(\bxi)$.

%



\end{appendices}


\bibliography{sn-bibliography}

\end{document}